\documentclass[10pt,onecolumn,table]{article}
\usepackage{graphicx,amssymb,amsmath,amsthm,hyperref}
\usepackage{geometry}
\usepackage{natbib}
\usepackage{paralist}
\usepackage{pifont}
\usepackage{booktabs}
\usepackage{subcaption}
\usepackage[most]{tcolorbox}
\usepackage{enumitem,tabularx,array,colortbl,multirow}
\tcbuselibrary{skins,breakable,theorems}
\tcbuselibrary{hooks}
\usepackage{wrapfig}
\usepackage{fancyhdr}

\definecolor{assumptioncolor}{RGB}{70,130,180}    
\definecolor{theoremcolor}{RGB}{46,125,50}        
\definecolor{corollarycolor}{RGB}{156,39,176}     
\definecolor{assumpbg}{RGB}{242,248,254}          
\definecolor{theombg}{RGB}{245,251,246}           
\definecolor{corollbg}{RGB}{251,245,254}          
\definecolor{darktext}{RGB}{30,30,30}             

\newtcbtheorem[auto counter]{styledassumption}{Assumption}{
    enhanced,
    breakable,
    colback=assumpbg,
    colframe=assumptioncolor!80!black,
    colbacktitle=assumpbg,
    coltitle=darktext,
    fonttitle=\bfseries\large,
    attach boxed title to top left={yshift=-2mm, xshift=3mm},
    boxed title style={
        sharp corners,
        boxrule=1.2pt,
        colframe=assumptioncolor!80!black,
        colback=assumpbg,
        left=2mm, right=2mm, top=1.5mm, bottom=1.5mm
    },
    top=4mm, bottom=4mm, left=4mm, right=4mm,
    arc=1.5mm,
    before upper=\color{black}
}{ass}

\newtcbtheorem[auto counter]{styledtheorem}{Theorem}{
    enhanced,
    breakable,
    colback=theombg,
    colframe=theoremcolor!80!black,
    colbacktitle=theombg,
    coltitle=darktext,
    fonttitle=\bfseries\large,
    attach boxed title to top left={yshift=-2mm, xshift=3mm},
    boxed title style={
        sharp corners,
        boxrule=1.2pt,
        colframe=theoremcolor!80!black,
        colback=theombg,
        left=2mm, right=2mm, top=1.5mm, bottom=1.5mm
    },
    top=4mm, bottom=4mm, left=4mm, right=4mm,
    arc=1.5mm,
    before upper=\color{black}
}{thm}

\newtcbtheorem[auto counter]{styledcorollary}{Corollary}{
    enhanced,
    breakable,
    colback=corollbg,
    colframe=corollarycolor!80!black,
    colbacktitle=corollbg,
    coltitle=darktext,
    fonttitle=\bfseries\large,
    attach boxed title to top left={yshift=-2mm, xshift=3mm},
    boxed title style={
        sharp corners,
        boxrule=1.2pt,
        colframe=corollarycolor!80!black,
        colback=corollbg,
        left=2mm, right=2mm, top=1.5mm, bottom=1.5mm
    },
    top=4mm, bottom=4mm, left=4mm, right=4mm,
    arc=1.5mm,
    before upper=\color{black}
}{cor}

\title{The Human-AI Substitution Principle:\\ When will you be replaced by AI in your organization?}

\author{Bonny Banerjee\thanks{Corresponding author.}\\Independent Scholar and Consultant, USA\\ bonnybanerjee@yahoo.com
\and
Shreya Singh\\Chennai Mathematical Institute, Chennai, Tamil Nadu 603103, India\\ shreyawho@gmail.com
}

\date{} 

\begin{document}

\maketitle

\begin{abstract}
Artificial Intelligence (AI) is rapidly transforming organizations, raising a fundamental organizational and economic question: when will a human employee be replaced by AI? We present an analytical model for studying Human--AI Task Allocation (HAT) in hierarchical organizations. A central feature of the HAT model is that it formally encodes the economic asymmetry between human skill acquisition and AI capability scaling. The HAT model allows us to derive how risk-adjusted costs, skills, organizational depth, deployment scale, strategic adaptation, and risk jointly determine when, where, why, and under what structural conditions human--AI replacement occurs. A key result is the Human--AI Substitution Principle, which provides a precise condition --- grounded in the formal asymmetry assumption --- under which AI replaces human labor. Building on this result, we show that AI adoption can produce abrupt workforce transitions, hybrid human--AI organizations, including cases where risk heterogeneity sustains human and AI roles without requiring a minimum-human-fraction constraint, and flatter managerial hierarchies with wider spans of control. The HAT model identifies structural conditions under which middle-management roles exhibit elevated vulnerability to automation, and shows that the vulnerability of highly skilled workers depends on a skill threshold shaped by organizational depth, baseline costs, and risk differentials. More broadly, the paper connects automation economics, organizational design, AI governance, and workforce planning into a unified theory of AI-driven organizational transformation.
\end{abstract}

\textbf{Keywords:} Human--AI substitution, Human--AI task allocation model, Human--AI cost asymmetry, Risk-adjusted substitution, Hierarchical organization, AI-driven organizational design, Workforce planning for automation, Span of control.

\section{Introduction}
\label{sec:Introduction}

Artificial Intelligence (AI) is rapidly changing how organizations perform work. AI systems are increasingly capable of carrying out activities that were traditionally performed by human employees, such as software development, document preparation, financial analysis, customer support, medical decision support, logistics, and managerial decision making. As these capabilities continue to improve, a simple question is increasingly being asked by employees, managers, firms, policymakers, and society at large:

\begin{center}
\emph{When will a human employee be replaced by AI in an organization?}
\end{center}

The question is important because it directly affects organizational strategy, workforce planning, labor markets, education, regulation, and the future design of firms. Organizations must decide where AI should augment human workers, where it should replace them, and where humans should remain indispensable despite advances in AI capability. These decisions influence hiring, organizational structure, managerial responsibilities, investment in human capital, and long-term competitive strategy.

Although the question is straightforward, existing discussions rarely answer it analytically. Public discourse is dominated by technological demonstrations and anecdotal examples of automation. Empirical studies document where AI has been adopted and identify occupations that appear more exposed to AI than others. Forecasts attempt to predict future technological capabilities or estimate the number of jobs that may eventually disappear. These perspectives provide valuable evidence regarding AI development and adoption, but they do not specify the organizational conditions under which replacing a human employee with AI is economically optimal.

This distinction is fundamental. Organizations do not replace employees simply because AI becomes technically capable of performing a task. Replacement is an organizational decision that depends simultaneously on expected costs, risks, organizational structure, managerial coordination, and the relative economic characteristics of human and AI labor. Two AI systems with identical technical capabilities may lead to different organizational decisions if they differ in deployment cost, regulatory exposure, reliability, or scalability. Likewise, identical AI systems may be adopted differently by organizations facing different risk environments or organizational structures. Understanding human--AI substitution therefore requires an analytical framework that models organizational decision making rather than technological capability alone.

This paper develops such a framework. We introduce a mathematical model of Human--AI Task Allocation (HAT) for hierarchical organizations and derive the conditions under which replacing a human employee with an AI agent minimizes organizational cost after accounting for operational risk. The framework treats AI adoption as an organizational optimization problem in which human employees and AI agents compete for task allocation under heterogeneous costs, capabilities, organizational hierarchy, and risk. The resulting analysis characterizes not only whether substitution occurs, but also where within the organizational hierarchy it occurs, why it occurs, and how it affects organizational structure.

A central feature of the HAT model is the explicit distinction between the economics of human skill acquisition and AI capability deployment. Human expertise requires education, training, accumulated experience, and scarce talent, causing compensation to increase with skill. AI systems, by contrast, require substantial up-front development and training costs but can typically be deployed repeatedly at relatively low marginal cost once trained. The model formalizes this structural difference through the Human--AI Cost Asymmetry Assumption (\ref{ass:asymmetry}), which states that AI capability costs grow no faster than human skill costs. This assumption is maintained throughout the analysis and provides the economic foundation from which the paper's substitution results are derived. The assumption should be viewed as a maintained modeling premise rather than a universal empirical claim; alternative assumptions would naturally generate different substitution regimes and therefore define different organizational environments.

The HAT framework embeds this cost asymmetry within a hierarchical model of organizations in which task allocation, managerial spans of control, deployment scale, and operational risk jointly determine organizational outcomes. Risk is incorporated directly into the effective cost of both human and AI agents, allowing substitution decisions to reflect not only nominal operating costs but also reliability, compliance, and reputational considerations that frequently dominate organizational AI adoption decisions. Consequently, the framework treats AI adoption as a risk-adjusted organizational optimization problem rather than a purely technological comparison.

\paragraph{\textbf{Scope of this paper.}}
The objective of this paper is deliberately narrow. We study the organizational decision of whether a task should be assigned to a human employee or to an AI agent under explicit assumptions about costs, capabilities, organizational hierarchy, and operational risk. The analysis is normative rather than descriptive: the model characterizes the allocation that minimizes organizational risk-adjusted cost given its assumptions. It does not attempt to predict the rate of AI progress, forecast labor-market outcomes, explain technological innovation, or model behavioral, political, legal, or macroeconomic responses to AI adoption. Those factors undoubtedly influence real organizations, but they lie outside the scope of the present framework. Instead, the paper isolates one fundamental organizational question and develops a tractable analytical model for answering it.

More generally, the results should be interpreted as conditional statements. Every theorem in this paper follows from explicitly stated assumptions, including the maintained Human--AI Cost Asymmetry Assumption. Organizations operating under different technological, institutional, or economic conditions may satisfy different assumptions and therefore exhibit different substitution regimes. The purpose of the HAT framework is not to claim universal inevitability of AI substitution, but to characterize the organizational consequences that follow whenever its assumptions hold.

\paragraph{\textbf{Contributions.}}
The primary contribution of this paper is a general analytical framework for studying human--AI substitution in hierarchical organizations. The HAT model provides a unified optimization framework in which organizational hierarchy, task allocation, human skill, AI capability, deployment scale, and operational risk are modeled simultaneously. Within this framework, the central question of the paper---when a human employee should be replaced by AI---becomes a mathematically well-defined optimization problem whose solution can be analyzed rigorously.

Building on this framework, the paper derives a collection of structural results that characterize different aspects of organizational AI adoption. These results should be viewed as consequences of the general analytical framework rather than as independent models. In particular, the analysis establishes the Human--AI Substitution Principle, which provides a precise risk-adjusted condition under which AI replaces a human employee. It further characterizes threshold-based substitution behavior, conditions under which hybrid human--AI organizations emerge, the relationship between organizational hierarchy and substitution incentives, the effects of deployment scale arising from AI training-cost amortization, the role of operational risk in shaping substitution decisions, the implications of AI adoption for organizational flattening and managerial spans of control, and the strategic interaction between human adaptation and AI capability improvement.

Together, these results provide a unified theoretical account of how organizational structure influences AI substitution decisions and how AI adoption, in turn, reshapes organizational design. Rather than treating workforce allocation, hierarchy, risk, deployment economics, and strategic adaptation as separate problems, the HAT framework analyzes them within a single mathematical model.

An equally important contribution is methodological. Much of the current discussion surrounding AI and employment is necessarily empirical, computational, or speculative because a general analytical framework has been lacking. The objective of this paper is to complement those approaches by providing a tractable mathematical theory capable of generating explicit comparative statics, structural predictions, and hypotheses that can be evaluated using organizational and labor-market data. In this sense, the paper is intended to serve not only as a model of current organizational AI adoption but also as a theoretical foundation for future empirical research.

The analytical results developed in this paper should be interpreted as structural properties of the HAT framework rather than as claims about any particular organization or technology. The framework is intentionally parsimonious: by isolating a small number of organizational primitives, it identifies the mechanisms through which human costs, AI costs, organizational hierarchy, deployment scale, and operational risk jointly determine substitution decisions. Because the assumptions are stated explicitly, they can be relaxed, modified, or calibrated for particular organizational settings, allowing the framework to generate alternative substitution regimes under different technological or institutional environments.

The HAT framework also generates predictions at multiple levels of analysis. At the level of individual employees, it identifies conditions under which workers become more or less vulnerable to AI substitution. At the organizational level, it characterizes how AI adoption influences task allocation, managerial hierarchies, spans of control, and hybrid human--AI organizational structures. At the strategic level, it analyzes how improvements in AI capability and human adaptation jointly influence long-run substitution outcomes. Figure~\ref{fig:HATmodelfindings} summarizes the work presented in this paper, including the key analytical results.

More broadly, the objective of this paper is to move the discussion of AI-driven workforce transformation from qualitative speculation toward formal organizational analysis. Rather than asking whether AI will replace human workers in general, the HAT framework asks a more precise question: under what organizational conditions does replacing a human employee with an AI agent become the optimal decision? By answering this question analytically, the framework provides a foundation for developing testable hypotheses about AI adoption, organizational restructuring, and workforce evolution as AI capabilities continue to advance.

The remainder of the paper is organized as follows. Section~\ref{sec:Prior Work} reviews the literature most closely related to this research and positions the HAT framework within the broader literature on automation, organizational design, AI, and human--AI interaction. Section~\ref{sec:Human-AI Task Allocation Model} presents the Human--AI Task Allocation (HAT) model and introduces the assumptions underlying the analysis. Section~\ref{sec:Properties of this model} derives the theoretical properties of the model, including the Human--AI Substitution Principle and its organizational implications, with proofs provided in the Appendix. Section~\ref{sec:Discussions} discusses managerial implications, organizational mechanisms, empirical predictions, limitations, and directions for future research. Section~\ref{sec:Conclusions} concludes.

\begin{figure}[h]
\hspace{-5mm}
\includegraphics[width=1.05\textwidth,height=0.57\textwidth]{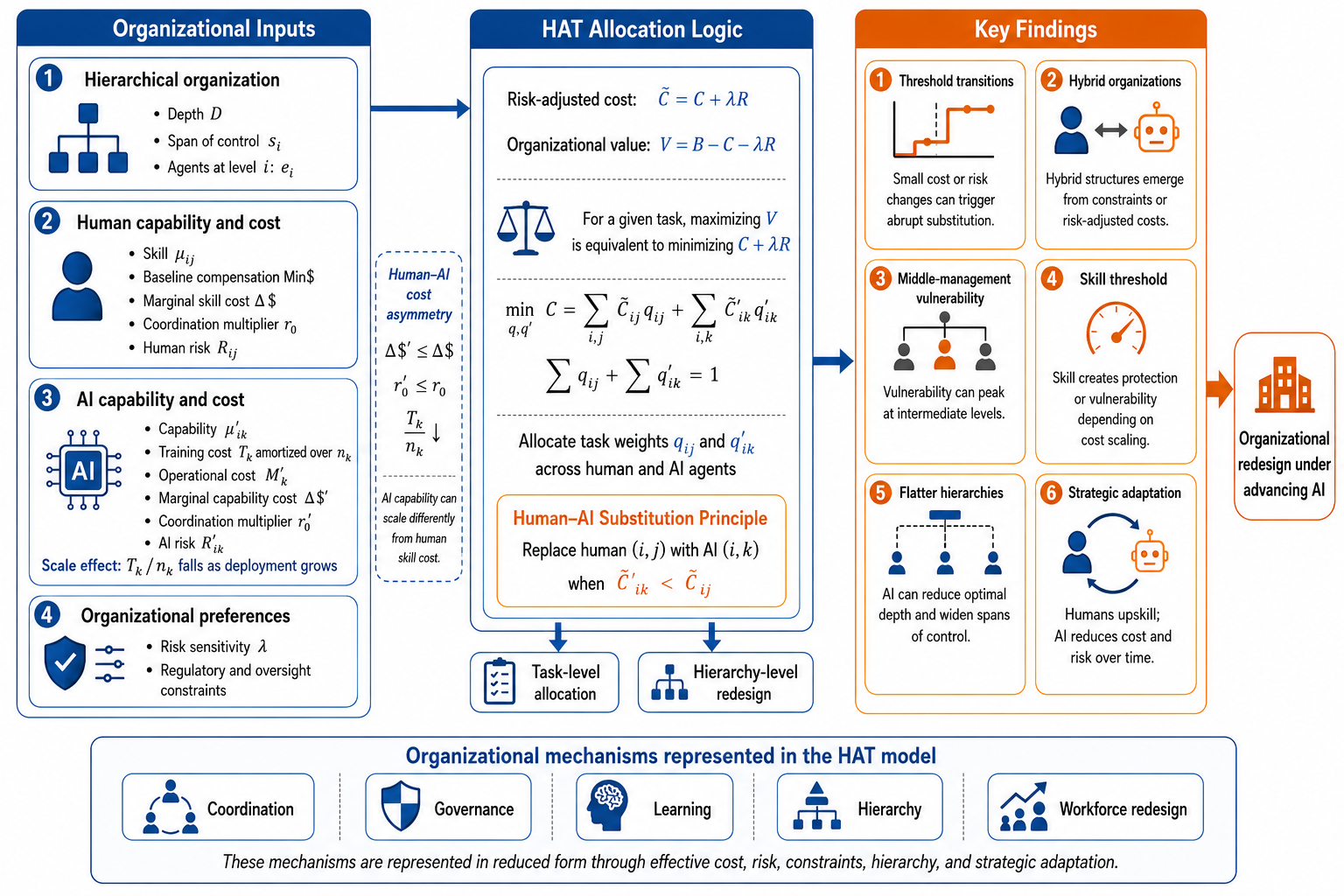}
\caption{The proposed HAT model and our key findings.}
\label{fig:HATmodelfindings}
\end{figure}

\section{Prior Work}
\label{sec:Prior Work}

Our analysis sits at the intersection of three research streams: formal models of task allocation and organizational design, the economics of automation and technological substitution, and empirical work on AI adoption and human--AI collaboration. Rather than surveying each tradition exhaustively, we focus on the specific contributions that motivate and position the HAT framework, organizing the discussion around the intellectual gaps that our model is designed to fill.

\subsection*{Formal Models of Task Allocation and Organizational Design}

The task-based approach to automation treats production as a collection of discrete tasks and studies how technological change reallocates these tasks between humans and machines \citep{acemoglu2018, acemoglu2019}. A central insight is that automation substitutes for human labor in codifiable tasks while simultaneously creating demand for new tasks complementary to technology. \citet{autor2003} established the foundational distinction between routine and non-routine tasks, showing that
computerization disproportionately displaces workers performing repetitive cognitive and manual activities while increasing demand for non-routine analytic and interpersonal tasks. \citet{autor2015} surveys the historical record and argues that automation creates as well as destroys work, a theme formalized in \citet{acemoglu2019}. Empirical work confirms significant polarization of occupations and wages consistent with routine-biased technological change \citep{goos2009, acemoglu2011, autor2013}.

Despite the power of this framework, the task-based literature is developed primarily in aggregate production settings and does not explicitly model the hierarchical organizational structures
through which tasks are actually allocated, supervised, and coordinated. Firms are not collections of isolated tasks: authority and decision rights are distributed across levels, and automation at one level propagates consequences upward through the managerial structure. Our model addresses this gap by
embedding task allocation within a formal hierarchy and deriving structural results on how substitution decisions propagate across levels and reshape organizational depth.

The economics of organizational design provides the second foundational pillar. \citet{williamson1967} establishes that hierarchical depth is constrained by the cumulative loss of control across managerial layers, providing an early rationale for why organizations have limited depth. \citet{radner1992}
surveys the theory of hierarchies as information-processing structures, characterizing the trade-off between the breadth and depth of decision-making. \citet{bolton1994} model the firm as a
communication network, showing how organizational structure emerges from information routing constraints.

The modern treatment of hierarchies as knowledge-processing structures originates with \citet{garicano2000}, who shows that hierarchical depth and span of control emerge endogenously from
the trade-off between the cost of acquiring knowledge and the cost of communicating problems upward. Our span-of-control derivation of level sizes (equation \ref{eq:level_size}) follows directly from
this tradition. \citet{garicano2006organization} extend this framework to a knowledge economy, characterizing how skill heterogeneity and knowledge complementarities shape organizational structure and inequality. \citet{caliendo2020} provide empirical support for these predictions using matched employer-employee data, documenting that firms add hierarchical layers as they grow and that wages
increase with organizational level at rates consistent with the cost-escalation structure in our model.

Recent work has begun to extend this knowledge-hierarchy tradition directly to generative AI. \citet{xu2025genai} develop a theoretical model of GenAI adoption in knowledge-based hierarchies, distinguishing both the mode of deployment (automation versus augmentation) and the organizational location of deployment (worker layer versus expert layer). Their analysis shows that GenAI can reshape workforce composition and organizational structure in different ways depending on this deployment architecture, and that span of control may evolve non-monotonically as GenAI capability improves. This provides a useful point of contrast with the HAT framework: whereas \citet{xu2025genai} focus on deployment mode and location in a Garicano-style knowledge hierarchy, the HAT model focuses on risk-adjusted human--AI cost comparisons, hierarchy depth, deployment scale, and substitution thresholds.

The superstar economics of managerial talent \citep{rosen1982, lucas1978} predicts that more productive managers are matched with larger spans, generating steep compensation gradients with organizational level. \citet{gabaix2008} show that equilibrium assignment of talent to firm size can explain the rise of executive compensation, consistent with the super-exponential cost escalation with hierarchy depth derived in Theorem~\ref{thm:Depth--Cost Growth and Dominance}.

The information-processing view of organizations \citep{galbraith1974, tushman1978} adds a complementary perspective: organizational structure is designed to match information-processing capacity to task uncertainty. Hierarchy, in this tradition, is not merely a cost structure but a coordination and exception-handling mechanism. When AI agents substitute for human managers, the information-processing capacity of the hierarchy changes in ways not fully captured by cost parameters alone: AI systems may process high volumes of routine information more efficiently than humans while simultaneously failing to handle novel exceptions outside their training distribution. The reliability risk component $R^{\mathrm{(rel)}}_{ik}$ in our model captures the cost consequences of this limitation, but the organizational behavior literature points to a deeper structural issue --- that the appropriate depth of a hierarchy depends not only on cost but on the distribution of exception types the organization faces \citep{simon1947, march1958}.

Our HAT framework builds directly on these traditions. We endogenize organizational depth as a function of the risk-adjusted cost advantage (Assumption~\ref{ass:asymmetry}), extending the knowledge-hierarchy models of \citet{garicano2000} and \citet{garicano2006organization} by incorporating AI agents as organizational participants with their own structurally distinct cost functions. Unlike prior work that treats technology as a parameter shift in a human-only model, we derive the resulting changes in hierarchy depth ($D^{(\mathrm{AI})*} \le D^*$) and span widening endogenously from the model's optimization, as shown in Theorem~\ref{thm:Optimal Depth under AI Availability} and Corollary~\ref{cor:Flattening of Organizations}.

\subsection*{Economics of Automation and Technological Substitution}

More recent work has begun to distinguish AI specifically from prior waves of automation. \citet{webb2020} shows that AI, unlike robots or software, most directly substitutes for cognitive tasks at all skill levels rather than concentrating on routine manual tasks, with implications for which occupations face the largest displacement risk. \citet{felten2021} develop occupation-level measures of AI exposure and find that higher-skill occupations face greater exposure than prior technological waves suggested, consistent with the counterintuitive high-skill vulnerability zone derived in Corollary~\ref{cor:Skill-Dependent Human Protection and Vulnerability Zone}. \citet{acemoglu2022} provide establishment-level evidence that AI-exposed establishments expand AI-related hiring while reducing hiring in non-AI positions and changing the skill requirements of remaining postings, consistent with the threshold-based workforce transitions predicted by Corollary~\ref{cor:Discontinuous Workforce Transitions}.

A literature on the economics of digital goods has established that information goods have near-zero marginal replication cost once created \citep{goldfarb2019}. This structural property underlies the business models of software platforms, streaming services, and increasingly, AI systems. Our model formally incorporates this through the decomposition of AI cost into a fixed training cost $T_k$ and a marginal operational cost $M'_k$ that is largely independent of the system's capability level, with the per-deployment training cost $T_k / n_k$ declining toward zero as deployments scale. This decomposition generates predictions absent from prior substitution frameworks: as the same AI system is deployed across more tasks or organizations, per-task AI cost falls without any improvement in AI capability, creating deployment-scale amortization effects and potential industry-level adoption cascades. The digital economics literature documents analogous scale effects for software \citep{bresnahan2002} and platform technologies \citep{goldfarb2019}, but has not previously been connected to organizational substitution theory. Our model provides this connection.

The effects of information and communication technology on organizational structure represent a related strand. \citet{bloom2014} provide causal evidence that improvements in information technology are associated with decentralization of decision-making within firms. \citet{bresnahan2002} document complementarities between information technology, organizational redesign, and skilled labor, suggesting that technology adoption reshapes hierarchies rather than simply substituting for workers. Our contribution extends this line by formally deriving how AI introduction alters optimal
organizational depth and span of control.

The strategic interaction between workers and technology has received limited formal treatment. \citet{acemoglu2020wrongai} argue that AI can either automate tasks previously performed by labor or create new tasks in which humans are productively employed, and that current innovation incentives may direct AI toward excessive automation rather than labor-complementary task creation. Our game-theoretic extension formalizes this dynamic within the HAT framework: human agents invest in upskilling to reduce their risk-adjusted effective cost $\tilde{C}_{ij}(u_j)$, while AI agents invest in capability and risk mitigation to reduce $\tilde{C}'_{ik}(u_k)$, and substitution outcomes emerge from a Nash equilibrium of this competition (Theorem~\ref{thm:Strategic Substitution Equilibrium}).

The contest literature \citep{lazear1981, dixit1987} provides the closest formal antecedent, modeling agents who compete for selection through effort choices. Our strategic extension adapts this structure to a human--AI competition in which the two parties have fundamentally asymmetric cost functions --- human effort reduces a rising skill-cost function, while AI investment reduces a cost function that scales more slowly with capability --- generating asymmetric equilibrium dynamics not present in standard contests.

The role of risk in technology adoption has been studied in the context of automation reliability \citep{doshi2017} and AI governance \citep{floridi2018}. These contributions identify the importance of reliability, compliance, and reputational risk as barriers to AI adoption but do not formally incorporate them into a substitution cost model. Our three-component risk decomposition ($R'_{ik} = \omega_1 R^{\mathrm{(rel)}}_{ik} + \omega_2 R^{\mathrm{(comp)}}_{ik} + \omega_3
R^{\mathrm{(rep)}}_{ik}$) formalizes this insight, embedding all three risk components directly in AI agent cost primitives so that governance decisions become analytically tractable levers in the substitution problem (Theorem~\ref{thm:Risk-Adjusted Substitution Principle}).

\subsection*{AI Adoption and Human--AI Collaboration}

A rapidly growing literature examines how AI reshapes decision-making, occupations, and organizational processes. \citet{agrawal2019} develop a framework in which machine learning primarily reduces the cost of prediction, complementing human judgment in high-uncertainty decisions while substituting for human prediction in routine ones. \citet{brynjolfsson2018} identify which occupational tasks are
most susceptible to machine learning, finding that susceptibility extends well beyond routine tasks because many occupations contain a large share of prediction-like subtasks. \citet{cockburn2018} characterize AI as a new general purpose technology whose primary impact is to reduce the cost of a key
input --- prediction and pattern recognition --- across many sectors simultaneously, consistent with our modeling of near-zero marginal AI replication cost.

Empirical work has documented the organizational effects of AI adoption at the firm level. \citet{babina2024} find that AI-investing firms experience higher growth in sales, employment, and market valuations, with growth operating primarily through product innovation. \citet{noy2023} provide experimental evidence that large language models substantially raise productivity in professional writing tasks, reducing completion time and improving output quality, with larger gains for lower-ability workers. \citet{eloundou2024} estimate that approximately 80\% of the U.S.\ workforce has at least 10\% of their tasks exposed to large language models, underscoring the breadth of potential substitution that the HAT framework is designed to characterize. \citet{peng2023impact} provide randomized experimental evidence that GitHub Copilot increases developer productivity by over 50\%, with effects concentrated among less experienced programmers, illustrating heterogeneous productivity effects across skill levels.

\citet{dellacqua2026} find heterogeneous effects of AI tools on knowledge worker quality and consistency, highlighting the reliability risk component ($R^{\mathrm{(rel)}}_{ik}$) that our model incorporates as a formal determinant of substitution decisions. \citet{autor2022} argues that restoring labor's share in an AI-driven economy requires deliberate creation of new human-centric tasks, a perspective consistent with the endogenous hybrid organizations identified in our constrained optimization results (Theorem~\ref{thm:Constrained Interior Optima}).

A distinct and important stream examines how humans perceive, trust, and behaviorally respond to AI systems. \citet{dietvorst2015} demonstrate that people lose confidence in algorithmic forecasts after observing errors, even when the algorithm continues to outperform human judgment --- a phenomenon they term algorithm aversion. This behavioral response raises the effective reputational risk $R^{\mathrm{(rep)}}_{ik}$ of AI deployment: organizations that substitute AI for human agents in visible decision roles face heightened stakeholder backlash following AI failures, independent of the objective frequency of those failures. Conversely, \citet{logg2019} show that under conditions of high task uncertainty, individuals exhibit algorithm appreciation, deferring to algorithmic advice more than to human advice of equivalent quality. This suggests that the reputational risk weight $\omega_3$ is not fixed but varies with task characteristics and the observability of AI errors, providing behavioral grounding for why the risk weights in our model should be treated as domain-specific rather than universal.

Research on human--AI complementarity identifies conditions under which human and AI agents jointly outperform either alone. \citet{bastani2021} study AI-generated interpretable tips in a sequential decision-making task and show that such tips improve human performance, while participants combine model guidance with their own experience rather than blindly following it. \citet{vaccaro2019} examine contestability in algorithmic systems, emphasizing user agency, legitimacy, and mechanisms through which affected users can question or shape algorithmic decisions. These behavioral and governance concerns are captured in the HAT framework through risk and coordination primitives: AI-mediated work may reduce some coordination costs while introducing contestability, autonomy, and legitimacy concerns that affect whether \(r'_0\) is genuinely lower than \(r_0\) in a given organizational context.

Trust calibration represents a central theme in this literature. \citet{lee2004} provides a foundational review establishing that appropriate reliance on automation requires trust that is neither excessive (automation bias) nor insufficient (disuse), and that trust is shaped by perceived system reliability, organizational context, and the transparency of AI reasoning. \citet{dietvorst2018} show that allowing humans to slightly modify algorithmic outputs substantially increases algorithm adoption, suggesting that the reliability risk component $R^{\mathrm{(rel)}}_{ik}$ is partly a function of how AI systems are integrated rather than solely of their objective error rates. This has direct implications for Theorem~\ref{thm:Risk-Adjusted Substitution Principle}: investments in AI explainability and human-in-the-loop design reduce the perceived value of $R^{\mathrm{(rel)}}_{ik}$, accelerating the point at which AI substitution becomes risk-adjusted-cost-optimal.

The sociology of professions \citep{abbott1988} adds a further important mechanism. Abbott's system of professions framework argues that occupational groups actively defend jurisdictional boundaries against competing agents, including technological ones, through credentialing, licensing, and normative claims about legitimate authority. In the HAT model, professional jurisdiction translates directly into elevated compliance risk $R^{\mathrm{(comp)}}_{ik}$: in medicine, law, and accounting, licensing regimes legally restrict the scope of AI decision-making, raising the effective cost of substitution beyond what nominal cost comparisons would suggest. The persistence of human professionals in these domains therefore reflects an institutionally constructed barrier that elevates $\omega_2 R^{\mathrm{(comp)}}_{ik}$ through regulatory and professional association pressure, connecting the HAT model's endogenous hybrid organization result (Theorem~\ref{thm:Constrained Interior Optima}) to a well-developed sociological account of why certain occupations resist automation more durably than others.

Institutional theory \citep{dimaggio1983, scott2008} offers a complementary mechanism, emphasizing that organizations adopt practices not only because they are efficient but because they conform to regulative, normative, and cognitive pressures. Coercive isomorphism maps onto the compliance risk weight $\omega_2$ in our model; normative isomorphism maps onto the reputational risk weight $\omega_3$, since deviating from professionally sanctioned practice carries social legitimacy costs. Mimetic isomorphism --- imitation of peer organizations under uncertainty --- provides a behavioral mechanism for the industry-level adoption cascades predicted by the deployment-scale amortization effect. Organizational identity and worker resistance add a further dimension: \citet{dutton1994} and subsequent work show that members resist changes threatening core organizational or occupational identity, raising the effective human risk term $R_{ij}$ in ways not anticipated by a purely economic account. Conversely, organizations that frame AI adoption as augmentation rather than substitution may lower identity threat, shifting the substitution threshold in ways that make hybrid human--AI structures more stable. The strategic equilibrium of Theorem~\ref{thm:Strategic Substitution Equilibrium} can be read through this lens: human upskilling investment is not only an economic response to substitution pressure but an identity-affirmation strategy through which workers reconstruct their organizational value in the presence of AI competition.

\subsection*{Positioning of This Paper}

Taken together, these three research streams provide the intellectual foundations for the HAT framework but leave a central gap unaddressed. The task-based models of \citet{acemoglu2018, acemoglu2019} and \citet{autor2003} operate at the aggregate level and abstract from organizational hierarchy. The hierarchical models of \citet{garicano2000} and \citet{garicano2006organization} do not incorporate AI agents or analyze the specific asymmetry between human and AI cost scaling. The AI and decision-making literature \citep{agrawal2019, brynjolfsson2018} is largely conceptual or empirical without formal derivations of substitution conditions within a structured organization. The digital economics literature \citep{goldfarb2019} documents near-zero marginal replication costs but has not connected this to organizational substitution theory. The behavioral and institutional literatures
\citep{dietvorst2015, logg2019, lee2004, abbott1988, dimaggio1983, galbraith1974} identify mechanisms shaping AI adoption but do not connect them to formal models of workforce allocation and hierarchy design.

This paper fills these gaps by developing a formal analytical framework that simultaneously characterizes task-level human--AI substitution (Theorem~\ref{thm:Human--AI Substitution Principle}), the propagation of substitution across hierarchical levels (Theorem~\ref{thm:Level-Dependent Substitution Likelihood}, Corollary~\ref{cor:Middle-Management Vulnerability}), the endogenous adjustment of organizational depth and span of control (Theorem~\ref{thm:Optimal Depth under AI Availability}, Corollary~\ref{cor:Flattening of Organizations}), the deployment-scale amortization effect, the endogenous emergence of hybrid human--AI structures through risk pricing (Theorem~\ref{thm:Constrained Interior Optima}, Theorem~\ref{thm:Risk-Adjusted Substitution Principle}), strategic adaptation by both humans and AI (Theorem~\ref{thm:Strategic Substitution Equilibrium}), and the resulting equilibrium organizational forms. To our knowledge, HAT is the first analytical framework to formally encode the human--AI cost asymmetry as a named structural assumption (Assumption~\ref{ass:asymmetry}), embed three-component risk in agent cost primitives from the outset, and derive substitution conditions, organizational design implications, and strategic dynamics within a single unified model.

\section{A Human--AI Task Allocation (HAT) Model for Hierarchical Organizations}
\label{sec:Human-AI Task Allocation Model}

We build on the hierarchical cost model developed in \citet{gibson2015mathematical}, which establishes
theoretically consistent relationships between organizational structure and economic performance. We extend this framework in four directions: (i) we derive level sizes from a span-of-control primitive rather than treating them as free parameters; (ii) we incorporate AI agents with an economically distinct cost structure that captures the asymmetry between human skill acquisition and AI capability scaling; (iii) we introduce a fixed training cost and near-zero marginal replication cost for AI, reflecting the economics of machine learning deployment; and (iv) we embed risk-adjusted effective costs at the level of individual agent primitives, so that all substitution results are stated on risk-adjusted grounds throughout. A preliminary version of this model appeared in our earlier work \citep{banerjee2023model}.


\begin{tcolorbox}[colback=gray!5, colframe=gray!60, title=\textbf{Notation Conventions}, fonttitle=\small, boxrule=0.5pt]
\small
\begin{itemize}[leftmargin=*, nosep]
    \item \textbf{Subscripts:} $i$ indexes hierarchical level ($1 = $ top, $D+1 = $ line workers); $j$ indexes a human agent within a level; $k$ indexes an AI agent within a level; $\ell$ is a summation index over levels.
    \item \textbf{Primes ($'$):} Distinguish AI quantities from their human counterparts (e.g., $C'_{ik}$ is AI cost vs.\ $C_{ij}$ for human cost; $\Delta\$'$ vs.\ $\Delta\$$; $r'_0$ vs.\ $r_0$).
    \item \textbf{Tildes ($\tilde{\phantom{x}}$):} Denote risk-adjusted effective costs (e.g., $\tilde{C}_{ij} = C_{ij} + \lambda R_{ij}$).
    \item \textbf{Calligraphic letters:} $\mathcal{H}$ = set of human-filled positions; $\mathcal{A}$ = set of AI-filled positions; $\mathcal{C}$ = total cost; $\mathcal{Q}$ = feasible allocation simplex.
\end{itemize}
\end{tcolorbox}

\subsection{Organizational Structure}
\label{sec:Organizational Structure}

Let $D$ denote the depth of the organization, defined as the number of managerial layers. Level $i = 1$ corresponds to the top of the hierarchy (e.g., the CEO), and level $i = D$ corresponds to the lowest managerial layer. Beneath these $D$ levels is a layer of line workers indexed as level $D+1$, who perform execution tasks and do not supervise others.

Each manager at level $i \in \{1, \dots, D\}$ supervises a span of $s_i \geq 1$ subordinates at level $i+1$. The number of agents at level $i$ is therefore determined by the product of spans above it:
\begin{equation}
\label{eq:level_size}
e_i = \prod_{\ell=1}^{i-1} s_\ell,
\quad i = 1, \dots, D+1,
\end{equation}
with $e_1 = 1$ by convention. This span-of-control derivation follows \citet{garicano2000} and \citet{caliendo2020}, and ensures that organizational depth and width are jointly determined by the spans $\{s_i\}$ rather than specified independently. Because all level sizes flow from~\eqref{eq:level_size}, any change in a single span $s_i$ cascades through the entire hierarchy below level $i$---a structural amplification that will drive the organizational flattening results in Theorems~\ref{thm:Optimal Depth under AI Availability} and~\ref{thm:Constrained Interior Optima}. Each position in the hierarchy can be filled by either a human agent or an AI agent. We denote the set of human-filled positions by $\mathcal{H}$ and AI-filled positions by $\mathcal{A}$.

\paragraph{Empirical context.}
The product-of-spans structure~\eqref{eq:level_size} is empirically confirmed by \citet{caliendo2020} using matched employer--employee data. The assumption that positions can be automated or augmented by AI is consistent with the automation--augmentation tension emphasized in the management literature \citep{raisch2021}.

\subsection{Human Cost Structure}

Let $\mu_{i,j} \geq 0$ denote the skill level of human agent $j$ deployed at organizational level $i$, denoted as human agent $(i,j)$ for brevity. Human expertise is costly to acquire: it requires scarce talent, education, accumulated experience, and sustained effort over long periods of time \citep{autor2003, acemoglu2011}. Consequently, human compensation increases with both skill level
and organizational depth. The wage and benefit cost per unit time of a human line worker $(D+1, j)$ is
\begin{equation}
\label{eq:line_worker_cost}
C_{D+1,j} = \textup{Min\$} + \frac{1}{2}\,\Delta\$
\cdot D \cdot \mu_{D+1,j},
\end{equation}
where $\textup{Min\$} > 0$ is the baseline compensation floor, $\Delta\$ > 0$ is the marginal cost of skill, and $D$ captures the empirical regularity that deeper organizations demand higher capability at every level and anchor their entire wage structure to greater complexity \citep{caliendo2020}.

The cost per unit time of a human manager at level $i$ ($i = 1, \dots, D$) is
\begin{equation}
\label{eq:manager_cost}
C_{i} = C_{D+1} \cdot (1 + r)^{D-i+1},
\end{equation}
where $r = r_0 \cdot D$ with $r_0 > 0$, and $C_{D+1}$ denotes the representative line-worker cost. The wage ratio between adjacent levels thus increases with organizational depth, capturing the escalating cost of higher-level managerial roles \citep{gabaix2008, rosen1982}. The exponential escalation in~\eqref{eq:manager_cost} is the primary channel through which organizational depth amplifies total cost; the comparative statics of depth reduction in Theorem~\ref{thm:Optimal Depth under AI Availability} exploit this structure directly. Note that managerial skill, \(\mu_{i,j}\) for $i\le D$, is not modeled explicitly; instead, managerial capability requirements are captured in reduced form through the level-dependent cost \(C_i\).

\paragraph{Risk-adjusted human effective cost.}
Human agents carry operational risks including errors in judgment, absenteeism, and turnover. Let $R_{ij} \geq 0$ denote the risk associated with human agent $(i,j)$, and let $\lambda \geq 0$ be an
organizational risk-sensitivity parameter. We define the \emph{risk-adjusted effective cost} of human
agent $(i,j)$ as
\begin{equation}
\label{eq:human_effective_cost}
\tilde{C}_{ij} = C_{ij} + \lambda R_{ij}.
\end{equation}
By embedding risk at the level of individual agent costs, every substitution decision in the model is
automatically made on risk-adjusted grounds.

\paragraph{Empirical context.}
The baseline floor $\textup{Min\$}$ and its dependence on $D$ are consistent with \citet{caliendo2020} and \citet{gabaix2008}; the linear skill--compensation link reflects the Mincer earnings equation \citep{mincer1974}; and the managerial escalation factor $(1+r_0 D)^{D-i+1}$ captures the superstar economics of managerial talent \citep{rosen1982, gabaix2008, caliendo2020}. The risk term \(\lambda R_{ij}\) captures idiosyncratic human operational risks such as error, absenteeism, turnover, and compliance failures.

\subsection{AI Cost Structure}

We now extend the model to incorporate AI agents. Unlike human agents, AI systems acquire capability
through computational training processes rather than through individual talent development. This creates a fundamentally different relationship between capability and cost.

Let $\mu'_{i,k} \geq 0$ denote the capability level of AI agent $k$ deployed at organizational level $i$. Just as human skill demands increase with organizational depth, AI capability requirements also
vary with the hierarchical level at which the agent is deployed: more complex coordination and
decision-making tasks at higher levels require more capable AI systems. For line workers ($i = D+1$) we
write $\mu'_{D+1,k}$.

Deploying an AI agent involves two economically distinct cost components:
\begin{itemize}
\item\textbf{Fixed training cost.}
A fixed training cost $T_k > 0$ is incurred once to develop and train the system to capability level
$\mu'_{D+1,k}$. This covers computational resources, data acquisition, model development, and validation. When amortized over $n_k \geq 1$ deployments, the per-deployment contribution is $T_k / n_k$.\\
\item\textbf{Marginal operational cost.}
Once trained, an AI system incurs a marginal operational cost $M'_k \geq 0$ per unit time of
deployment (inference cost, maintenance, API fees, etc.). Critically, $M'_k$ is \emph{weakly independent of capability level} $\mu'_{D+1,k}$: unlike humans, whose compensation rises with skill, a more capable AI model does not necessarily cost more to run per unit time once trained. This is the fundamental economic asymmetry that drives substitution dynamics in the model.
\end{itemize}

The total effective cost per unit time of an AI line worker $(D+1,k)$ before risk adjustment is
therefore
\begin{equation}
\label{eq:AI_lineworker_cost}
C'_{D+1,k} = \frac{T_k}{n_k} + M'_k
+ \frac{1}{2}\,\Delta\$' \cdot D \cdot \mu'_{D+1,k},
\end{equation}
where $\textup{Min\$}' = T_k/n_k + M'_k \geq 0$ is the AI baseline cost and $\Delta\$' \geq 0$ is the
marginal rate at which operational costs increase with AI capability. Equation~\eqref{eq:AI_lineworker_cost} can be written compactly as
\begin{equation}
\label{eq:AI_lineworker_parametric}
C'_{D+1,k} = \textup{Min\$}' + \frac{1}{2}\,\Delta\$'
\cdot D \cdot \mu'_{D+1,k}.
\end{equation}
Comparing~\eqref{eq:AI_lineworker_parametric} with~\eqref{eq:line_worker_cost} makes the human--AI cost asymmetry transparent: both are affine in their respective capability measures, but the slopes $\Delta\$'$ and $\Delta\$$ may differ---a gap that Assumption~\ref{ass:asymmetry} formalizes and that Theorem~\ref{thm:Human--AI Substitution Principle} translates into substitution predictions. Figure~\ref{fig:hat-mechanism-a} illustrates how this slope asymmetry generates a threshold separating human protection from AI substitution.

The cost of an AI agent at managerial level $i$ is
\begin{equation}
\label{eq:AI_manager_cost}
C'_i = C'_{D+1} \cdot (1 + r')^{D-i+1},
\end{equation}
where $r' = r'_0 \cdot D$ with $r'_0 \geq 0$, and $C'_{D+1}$ denotes the representative AI line-worker
cost. We allow $r'_0 \leq r_0$, reflecting the possibility that AI agents face lower coordination-cost escalation than humans at higher hierarchical levels \citep{bloom2014}.

\begin{figure}[htbp!]
  \centering
    \includegraphics[width=0.92\textwidth]{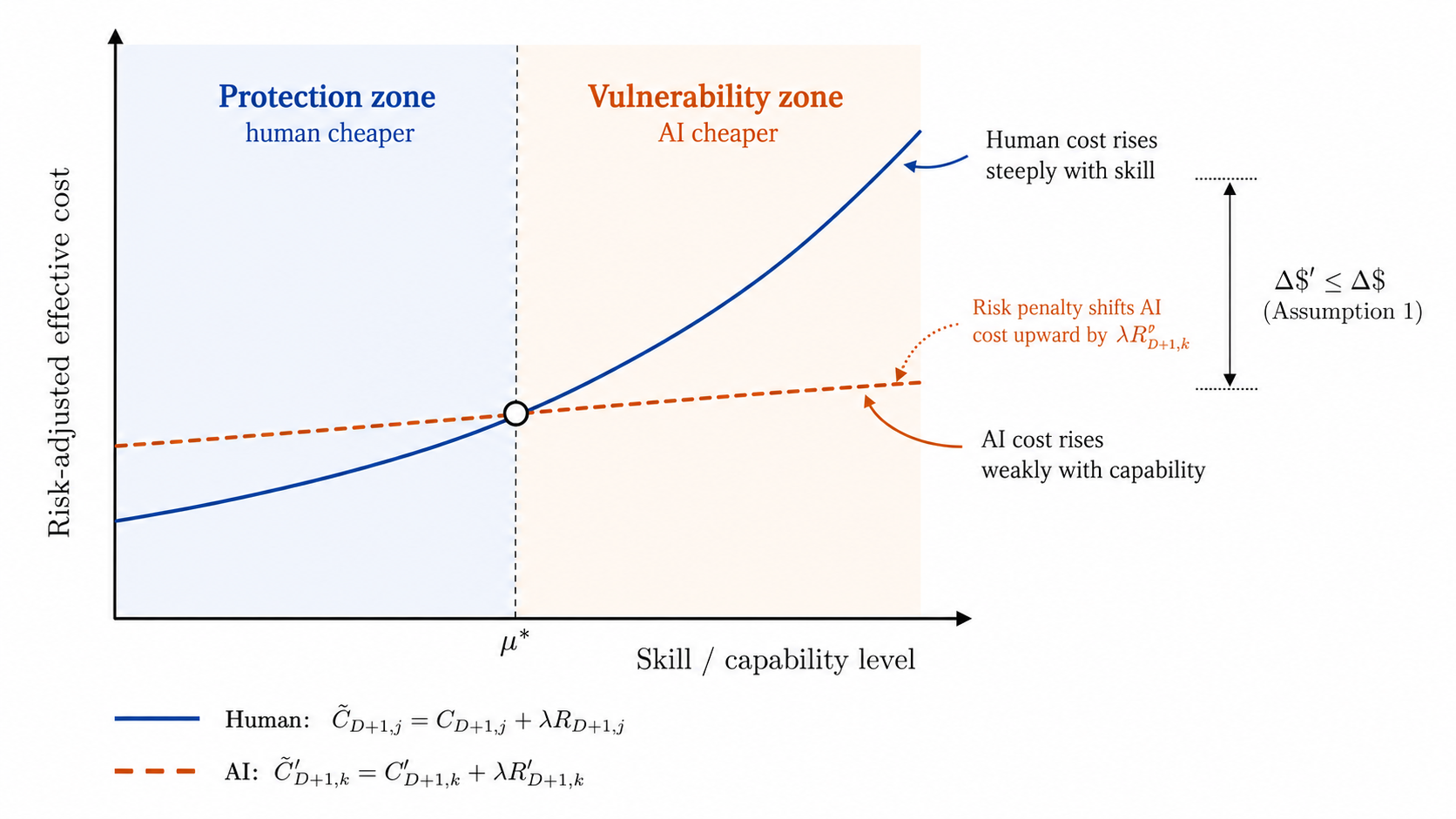}
    \caption{Risk-adjusted effective cost as a function of human skill or AI capability for a representative human line worker \((D+1,j)\) and AI agent \((D+1,k)\). The human curve represents \(\widetilde{C}_{D+1,j}=C_{D+1,j}+\lambda R_{D+1,j}\), while the AI curve represents \(\widetilde{C}'_{D+1,k}=C'_{D+1,k}+\lambda R'_{D+1,k}\). Human cost rises steeply with skill because expertise requires scarce talent, training, and accumulated experience, whereas AI cost rises more moderately with capability because a trained model can be deployed at relatively stable marginal cost. The risk penalty \(\lambda R'_{D+1,k}\) shifts the AI cost curve upward. The curves cross at the threshold \(\mu^*\) characterized in Corollary~\ref{cor:Skill-Dependent Human Protection and Vulnerability Zone}. For \(\mu_{D+1,j}<\mu^*\), the human agent is risk-adjusted-cost-efficient; for \(\mu_{D+1,j}>\mu^*\), the AI agent is risk-adjusted-cost-efficient.}
    \label{fig:hat-mechanism-a}
\end{figure}

\paragraph{Risk-adjusted AI effective cost.}
AI systems carry distinct operational risks. Following the AI governance literature \citep{agrawal2019,
dellacqua2026}, we decompose AI risk into three components: reliability risk $R^{\mathrm{(rel)}}_{ik}$ (errors, instability, distributional shift), compliance risk $R^{\mathrm{(comp)}}_{ik}$ (regulatory and legal exposure), and reputational risk $R^{\mathrm{(rep)}}_{ik}$ (public backlash, loss of trust). The aggregate risk of AI agent $(i,k)$ is
\begin{equation}
\label{eq:AI_risk}
R'_{ik} = \omega_1 R^{\mathrm{(rel)}}_{ik}
+ \omega_2 R^{\mathrm{(comp)}}_{ik}
+ \omega_3 R^{\mathrm{(rep)}}_{ik},
\end{equation}
where $\omega_1, \omega_2, \omega_3 \geq 0$ are organizational weights reflecting the risk profile
of the task domain (see Figure~\ref{fig:hat-mechanism-c}). The relative magnitudes of these weights vary substantially across industries: in healthcare or pharmaceutical settings, compliance weight $\omega_2$ may dominate due to FDA and regulatory exposure; in consumer-facing customer service, reputational risk weight $\omega_3$ may be paramount; while in autonomous systems or manufacturing, reliability weight $\omega_1$ takes precedence.

\begin{figure}[tb!]
  \centering
    \includegraphics[width=0.7\textwidth,height=0.15\textwidth]{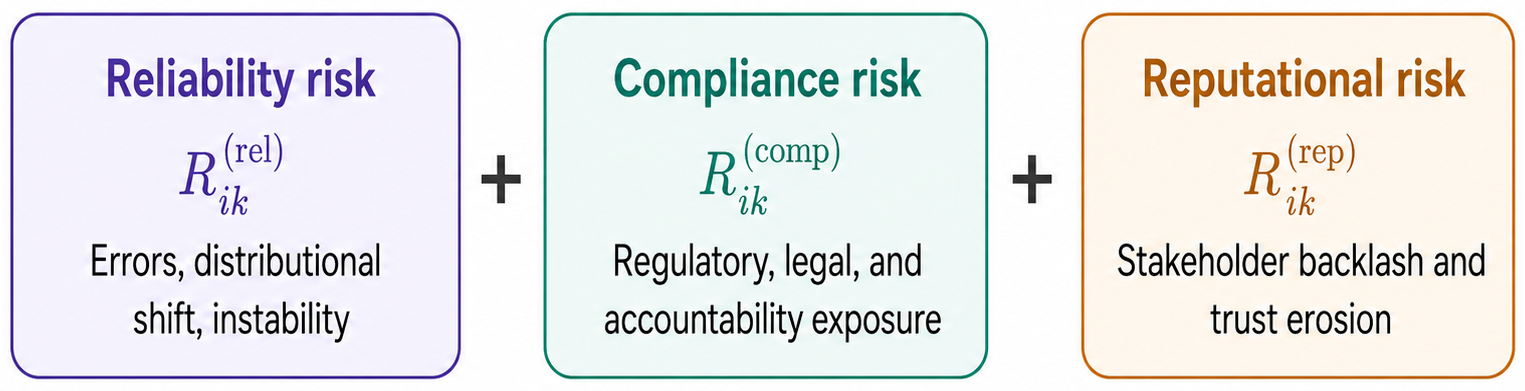}
    \caption{Three-component decomposition of AI risk. The aggregate risk for AI agent \((i,k)\) is
$R'_{ik} = \omega_1 R^{(\mathrm{rel})}_{ik} + \omega_2 R^{(\mathrm{comp})}_{ik} + \omega_3 R^{(\mathrm{rep})}_{ik}$. Each component is weighted by domain-specific organizational sensitivity parameters \((\omega_1,\omega_2,\omega_3)\), and the aggregate risk enters the risk-adjusted AI cost through $\widetilde C'_{ik}=C'_{ik}+\lambda R'_{ik}$. In regulated industries such as healthcare and finance, a large compliance weight \(\omega_2\) can make compliance risk dominate the AI risk term, thereby offsetting nominal AI cost advantages and helping sustain hybrid human--AI organizations.
}
    \label{fig:hat-mechanism-c}
\end{figure}

The \emph{risk-adjusted effective cost} of AI agent $(i,k)$ is
\begin{equation}
\label{eq:AI_effective_cost}
\tilde{C}'_{ik} = C'_{ik} + \lambda R'_{ik},
\end{equation}
using the same risk-sensitivity parameter $\lambda$ as in~\eqref{eq:human_effective_cost}. This risk-adjusted cost structure forms the basis for Theorem~\ref{thm:Constrained Interior Optima}: when $\lambda R'_{ik}$ is sufficiently large relative to the nominal cost advantage of AI, the optimal allocation retains human agents even in roles where AI has lower direct cost, providing an endogenous foundation for hybrid organizations without requiring exogenous constraints.

\paragraph{Empirical context.}
The decomposition into fixed training cost $T_k$ and marginal operational cost $M'_k$ mirrors the economics of digital goods \citep{goldfarb2019}. \citet{hoffmann2022} show that model performance depends systematically on training compute, model size, and dataset size, and that compute-optimal scaling can improve downstream performance without simply increasing model size. The three-component risk decomposition follows the AI governance literature \citep{floridi2018, doshi2017}: reliability risk corresponds to distributional shift failures \citep{dellacqua2026}; compliance risk corresponds to regulatory exposure (EU AI Act, sector-specific FDA/SEC/FINRA rules); and reputational risk corresponds to documented algorithm aversion \citep{dietvorst2015}. The domain-specificity of $(\omega_1,\omega_2,\omega_3)$ is supported by \citet{lee2004}, who show that trust in automation is calibrated differently across high- and low-stakes domains.

\subsection{The Human--AI Cost Asymmetry}

The key structural difference between human and AI costs is formalized in the following assumption, which we maintain throughout the paper and which gives substantive content to all substitution results.

\begin{styledassumption}{Human--AI Cost Asymmetry}{asymmetry}
\hfill
\begin{enumerate}[label=\roman*.] 
    \item The marginal cost of capability for AI agents grows no faster than the marginal cost of skill for human agents at any organizational level:
    \begin{equation}
    \label{eq:asymmetry_delta}
    \Delta\$' \;\leq\; \Delta\$.
    \end{equation}
    \item As the number of deployments $n_k \to \infty$, the amortized training cost $T_k/n_k \to 0$, so that the AI baseline cost $\textup{Min\$}'$ approaches the marginal operational cost $M'_k$, which is independent of capability level $\mu'_{D+1,k}$.
    \item The AI coordination cost multiplier satisfies $r'_0 \leq r_0$, so that managerial cost escalation with depth is no steeper for AI agents than for humans.
\end{enumerate}
\end{styledassumption}

\paragraph{Empirical grounding of Assumption~\ref{ass:asymmetry}.}
Assumption~\ref{ass:asymmetry} is a maintained modeling premise motivated by empirical patterns in human skill costs, digital deployment economics, and information-technology-enabled coordination. We ground each part in evidence while recognizing that the inequalities may vary across organizational contexts.

\textbf{Part (i): $\Delta\$' \le \Delta\$$.}
On the human side, the convexity of earnings in skill is well established: \citet{lemieux2006} shows increasingly convex returns at the top of the skill distribution; the Mincer equation \citep{mincer1974}, replicated across decades and countries, yields 8--12\% returns per year of schooling; and \citet{autor2003} document a near-doubling of the college wage premium between 1963 and 1987. On the AI side, \citet{hoffmann2022} show that downstream performance improves predictably with training compute, model size, and dataset size, and that better performance can be obtained through compute-optimal scaling rather than simply increasing model size. Inference-price data corroborate this pattern: LLM inference prices have fallen rapidly, though unevenly across tasks, while model capabilities have continued to improve \citep{epoch2025inference}. This motivates treating \(\Delta\$'\) as small relative to \(\Delta\$\), rather than assuming that it is literally zero.

\textbf{Part (ii): \(T_k/n_k \to 0\) and \(\textup{Min\$}' \to M'_k\) as \(n_k \to \infty\).}
This is a structural feature of software economics \citep{goldfarb2019}. Training costs for frontier models can reach tens of millions of dollars and have grown rapidly for the largest models \citep{cottier2024rising}, while marginal inference costs are captured separately by inference-price data \citep{epoch2025inference}. As models are deployed across thousands of enterprise use cases, the per-deployment training contribution approaches zero, leaving $M'_k$ as the binding cost floor.

\textbf{Part (iii): $r'_0 \le r_0$.}
Human coordination costs escalate with hierarchy depth due to communication overhead, information loss, and agency costs \citep{williamson1967, radner1992}. AI agents mitigate several of these: they do not distort information strategically, and they process large volumes of structured data without human bottlenecks. \citet{bloom2014} provide causal evidence that IT improvements reduce within-firm coordination costs and promote decentralization, consistent with $r'_0 < r_0$. For AI specifically, the assumption \(r'_0\le r_0\) should be interpreted as a maintained modeling premise motivated by the possibility that digital agents can process and transmit structured information with fewer human communication bottlenecks; whether the inequality holds in a given organization is an empirical question.

Taken together, Assumption~\ref{ass:asymmetry} is conservative: it requires only $\Delta\$' \le \Delta\$$, not $\Delta\$' = 0$, nesting the symmetric case as a special case. All substitution results in Section~\ref{sec:Properties of this model} hold under this weak inequality.

\subsection{Task Cost Aggregation and Joint Human--AI Allocation}

The preceding subsections defined cost and risk structures for individual human and AI agents at each hierarchical level. We now aggregate these into a joint allocation problem. The motivation is direct: an organization does not choose human and AI agents in isolation---it allocates a portfolio of tasks across both agent types simultaneously, subject to a feasibility constraint. The joint framework enables us to characterize optimal allocations (Theorem~\ref{thm:Global Optimal Allocation}), identify conditions under which hybrid organizations emerge endogenously (Theorem~\ref{thm:Constrained Interior Optima}), and derive comparative statics on organizational depth (Theorem~\ref{thm:Optimal Depth under AI Availability}).

Let \(q_{ij}\ge 0\) denote the fraction of a task accomplished by a human agent \((i,j)\), and \(q'_{ik}\ge 0\) denote the fraction of the same task accomplished by an AI agent \((i,k)\). The total risk-adjusted cost per unit time is the sum of risk-adjusted cost per unit time due to all employees, i.e. sum of risk-adjusted cost per unit time due to humans and risk-adjusted cost per unit time due to AI agents. Mathematically,
\begin{equation}
\label{eq:joint_cost}
\mathcal{C} = C^{(\text{humans})} + C^{(\text{AI})}
\end{equation}
\begin{equation}
\label{eq:joint_cost_expanded}
= \sum_{i=1}^{D+1} \sum_{j=1}^{e_i} \tilde{C}_{ij} \cdot q_{ij}
+ \sum_{i=1}^{D+1} \sum_{k=1}^{e'_i} \tilde{C}'_{ik} \cdot q'_{ik}
= \sum_{i=1}^{D+1} \left(
\sum_{j=1}^{e_i} \tilde{C}_{ij} \cdot q_{ij}
+ \sum_{k=1}^{e'_i} \tilde{C}'_{ik} \cdot q'_{ik}
\right),
\end{equation}
where
\begin{equation}
\label{eq:feasibility}
q_{ij},\; q'_{ik} \geq 0, \qquad
\sum_{i}\sum_{j} q_{ij} + \sum_{i}\sum_{k} q'_{ik}
= \sum_{i}\!\left(\sum_{j} q_{ij} + \sum_{k} q'_{ik}\right)
= 1.
\end{equation}
The feasible allocation space is
\begin{equation}
\label{eq:simplex}
\mathcal{Q} = \left\{(q,q') \;:\;
q_{ij}, q'_{ik} \geq 0,\;
\sum_{i,j} q_{ij} + \sum_{i,k} q'_{ik} = 1
\right\},
\end{equation}
which is a standard simplex (Theorem~\ref{thm:Normalization, Feasibility, and Simplex Structure}). A human-only organization corresponds to $q'_{ik} = 0$ for all $(i,k)$; an AI-only organization to $q_{ij} = 0$ for all $(i,j)$; and a hybrid organization to both sets of weights being strictly positive.

\subsection{Skill and Cost Decomposition at the Operational Level}

At the line-worker level ($i = D+1$), the cost expression admits a decomposition that separates the structural (hierarchical) component from the skill-dependent (operational) component and makes the human--AI asymmetry explicit:
\begin{align}
\label{eq:skill_decomposition}
&\sum_{j=1}^{e_{D+1}} \tilde{C}_{D+1,j}\cdot q_{D+1,j}
+ \sum_{k=1}^{e'_{D+1}} \tilde{C}'_{D+1,k}\cdot q'_{D+1,k}\nonumber\\
&=
\underbrace{\textup{Min\$} \sum_{j} q_{D+1,j}
+ \frac{1}{2}\Delta\$ \cdot D\sum_{j}(\mu_{D+1,j} \cdot q_{D+1,j})
+ \lambda\sum_{j}(R_{D+1,j} \cdot q_{D+1,j})
}_{\text{human operational cost: skill-increasing, risk-adjusted}}\nonumber\\
&\quad+
\underbrace{\textup{Min\$}' \sum_{k} q'_{D+1,k}
+ \frac{1}{2}\Delta\$' \cdot D\sum_{k}(\mu'_{D+1,k} \cdot q'_{D+1,k})
+ \lambda\sum_{k}(R'_{D+1,k} \cdot q'_{D+1,k})
}_{\text{AI operational cost: weakly capability-increasing, risk-adjusted}}\nonumber\\
&= \alpha
+ \beta \cdot D\!\sum_{j}(\mu_{D+1,j} \cdot q_{D+1,j})
+ \alpha'
+ \beta' \cdot D\!\sum_{k}(\mu'_{D+1,k} \cdot q'_{D+1,k})\nonumber\\
&\quad+ \lambda\!\left[\sum_{j}(R_{D+1,j} \cdot q_{D+1,j})
+ \sum_{k}(R'_{D+1,k} \cdot q'_{D+1,k})\right],
\end{align}
where
\begin{equation}
\alpha = \textup{Min\$}\sum_{j} q_{D+1,j},
\quad \beta = \tfrac{1}{2}\Delta\$,
\quad \alpha' = \textup{Min\$}'\sum_{k} q'_{D+1,k},
\quad \beta' = \tfrac{1}{2}\Delta\$'.
\end{equation}

The decomposition~\eqref{eq:skill_decomposition} makes four structural features explicit.

\textbf{First}, skill heterogeneity enters costs exclusively through the line-worker layer: managerial
costs at levels $i \leq D$ depend only on organizational depth and the representative base cost $C_{D+1}$, not directly on individual skill levels. Substitution dynamics driven by skill differences therefore originate at the bottom of the hierarchy and propagate upward through cost escalation (Theorem~\ref{thm:Skill-Weighted Cost Decomposition}).

\textbf{Second}, under Assumption~\ref{ass:asymmetry}(i), \(\beta'\le \beta\), so the AI capability-cost term grows no faster than the human skill-cost term at comparable levels. If \(\beta'<\beta\), the human skill-cost term grows strictly faster.

\textbf{Third}, under Assumption~\ref{ass:asymmetry}(ii), as $n_k \to \infty$, $\alpha' \to M'_k \sum_k
q'_{D+1,k}$, a capability-independent floor, while the human baseline $\alpha$ continues to grow with skill through $\textup{Min\$}$.

\textbf{Fourth}, risk enters additively for both humans and AI agents, but with structurally different
compositions: human risk $R_{D+1,j}$ reflects idiosyncratic behavioral risks, while AI risk $R'_{D+1,k}$ is governed by the domain-specific weights $(\omega_1, \omega_2, \omega_3)$ in
equation~\eqref{eq:AI_risk}. In regulated environments where $\omega_2$ is large, the AI risk term $\lambda R'_{D+1,k}$ may offset AI's nominal cost advantage, providing an endogenous mechanism through which hybrid human--AI organizations emerge at the optimum (Theorem~\ref{thm:Constrained Interior Optima}) without requiring constraints to be imposed exogenously.

Together, these four features establish the analytical machinery for the main results: the slope gap $\beta' \leq \beta$ drives substitution (Theorem~\ref{thm:Human--AI Substitution Principle}), risk-adjusted cost heterogeneity and additional feasibility constraints can produce constrained hybrid allocations (Theorem~\ref{thm:Constrained Interior Optima}), and the span-of-control structure~\eqref{eq:level_size} transmits these agent-level dynamics into organizational-level predictions about depth and shape (Theorem~\ref{thm:Optimal Depth under AI Availability}).

\subsection{Summary of Model Parameters}

Table~\ref{tab:parameters} summarizes the parameters of the HAT model together with their interpretations and corresponding observable organizational constructs.


\begin{table}[htbp!]
\centering
\caption{Summary of HAT model parameters and organizational interpretation.}
\label{tab:parameters}
\footnotesize 
\setlength{\tabcolsep}{4pt}
\renewcommand{\arraystretch}{1.3}
\begin{tabular}{@{}>{\raggedright\arraybackslash}p{1.7cm}>{\raggedright\arraybackslash}p{1.4cm}>{\raggedright\arraybackslash}p{4.2cm}>{\raggedright\arraybackslash}p{5.8cm}@{}}
\toprule
\textbf{Parameter} &
\textbf{Domain} &
\textbf{Model interpretation} &
\textbf{Observable organizational construct} \\
\midrule

\multicolumn{4}{@{}l}{\cellcolor{blue!15}\textbf{\textit{Organizational Structure}}} \\
\rowcolor{blue!3}
$D$ &
$\mathbb{Z}_{+}$ &
Organizational depth &
Number of managerial layers; reporting hierarchy \\

\rowcolor{blue!6}
$s_i$ &
$\mathbb{R}_{+}$ &
Span of control at level $i$ &
Average number of direct reports per manager \\

\rowcolor{blue!3}
$e_i$ &
$\mathbb{Z}_{+}$ &
Number of agents at level $i$ &
Organizational headcount by hierarchical level \\

\midrule[0.5pt]

\multicolumn{4}{@{}l}{\cellcolor{green!15}\textbf{\textit{Human Agent Parameters}}} \\
\rowcolor{green!3}
$\mu_{i,j}$ &
$\mathbb{R}_{+}$ &
Human skill level &
Employee experience, expertise, certifications, productivity, performance evaluations \\

\rowcolor{green!6}
$\textup{Min\$}$ &
$\mathbb{R}_{+}$ &
Baseline human compensation &
Salary floor, wages, employee benefits, hiring cost \\

\rowcolor{green!3}
$\Delta\$$ &
$\mathbb{R}_{+}$ &
Marginal cost of human skill &
Incremental compensation associated with higher expertise, training, or experience \\

\rowcolor{green!6}
$r_0$ &
$\mathbb{R}_{+}$ &
Human coordination cost multiplier &
Communication overhead, managerial supervision, coordination burden \\

\midrule[0.5pt]

\multicolumn{4}{@{}l}{\cellcolor{purple!15}\textbf{\textit{AI Agent Parameters}}} \\
\rowcolor{purple!3}
$\mu'_{i,k}$ &
$\mathbb{R}_{+}$ &
AI capability level &
Model accuracy, benchmark performance, reasoning capability, task success rate \\

\rowcolor{purple!6}
$T_k$ &
$\mathbb{R}_{+}$ &
Fixed AI training cost &
Model development, data collection, training, fine-tuning, deployment preparation \\

\rowcolor{purple!3}
$n_k$ &
$\mathbb{Z}_{+}$ &
Deployment scale &
Number of users, business units, or organizational deployments over which AI development cost is amortized \\

\rowcolor{purple!6}
$M'_k$ &
$\mathbb{R}_{\ge0}$ &
Marginal AI operational cost &
Inference cost, cloud compute, licensing, maintenance cost \\

\rowcolor{purple!3}
$\textup{Min\$}'$ &
$\mathbb{R}_{\ge0}$ &
Baseline AI cost &
Amortized development plus operational cost of deploying an AI agent \\

\rowcolor{purple!6}
$\Delta\$'$ &
$\mathbb{R}_{\ge0}$ &
Marginal AI capability cost &
Incremental computational or engineering cost required to improve AI capability \\

\rowcolor{purple!3}
$r'_0$ &
$\mathbb{R}_{\ge0}$ &
AI coordination cost multiplier &
Integration effort, workflow orchestration, monitoring, human oversight requirements \\

\midrule[0.5pt]

\multicolumn{4}{@{}l}{\cellcolor{red!15}\textbf{\textit{Risk Parameters}}} \\
\rowcolor{red!3}
$\lambda$ &
$\mathbb{R}_{\ge0}$ &
Organizational risk sensitivity &
Firm's tolerance for operational, regulatory, or strategic risk \\

\rowcolor{red!6}
$R_{ij}$ &
$\mathbb{R}_{\ge0}$ &
Human risk &
Human error, turnover, absenteeism, compliance violations, operational failures \\

\rowcolor{red!3}
$R'_{ik}$ &
$\mathbb{R}_{\ge0}$ &
AI risk &
Reliability failures, hallucinations, cybersecurity, compliance, reputational risk \\

\rowcolor{red!6}
$\omega_1,\omega_2,\omega_3$ &
$\mathbb{R}_{\ge0}$ &
AI risk weights &
Relative organizational importance of reliability, regulatory compliance, and reputational concerns \\

\midrule[0.5pt]

\multicolumn{4}{@{}l}{\cellcolor{orange!15}\textbf{\textit{Decision Variables}}} \\
\rowcolor{orange!3}
$q_{ij}$ &
$[0,1]$ &
Human task allocation weight &
Fraction of organizational workload assigned to a human employee \\

\rowcolor{orange!6}
$q'_{ik}$ &
$[0,1]$ &
AI task allocation weight &
Fraction of organizational workload assigned to an AI agent \\

\midrule[0.5pt]

\multicolumn{4}{@{}l}{\cellcolor{teal!15}\textbf{\textit{Cost Functions}}} \\
\rowcolor{teal!3}
$\tilde{C}_{ij}$ &
$\mathbb{R}_{+}$ &
Risk-adjusted human cost &
Overall organizational cost of assigning a task to a human employee \\

\rowcolor{teal!6}
$\tilde{C}'_{ik}$ &
$\mathbb{R}_{+}$ &
Risk-adjusted AI cost &
Overall organizational cost of assigning a task to an AI agent \\

\bottomrule
\end{tabular}
\end{table}

The effective cost and risk terms used in the HAT framework should be interpreted as \emph{reduced-form organizational quantities} rather than narrow financial measures. Specifically, the effective cost of assigning a task to a human or AI agent may incorporate compensation, supervision, coordination, communication, implementation, and operational costs, while the associated risk may capture reliability, compliance, cybersecurity, governance, and reputational considerations. This reduced-form representation allows the HAT model to remain analytically tractable while encompassing a broad class of organizational mechanisms within a unified optimization framework.

\section{Properties of the HAT Model}
\label{sec:Properties of this model}

We now derive the structural properties of the HAT model, characterizing the geometry of the feasible allocation space, the behavior of risk-adjusted costs, and the implications for human--AI substitution. The analysis proceeds progressively: we establish baseline feasibility and cost structure (\S\ref{sec:Baseline Properties Without AI}), examine how AI availability reshapes organizational design (\S\ref{sec:Impact of AI on Organizational Structure}), identify middle-management vulnerability as a distinctive prediction (\S\ref{sec:Middle-Management Vulnerability}), analyze the role of risk in limiting substitution (\S\ref{sec:Risk, Coordination, and the Limits of Substitution}), and characterize dynamics and robustness (\S\ref{sec:Comparative Statics and Robustness}).

\subsection{Baseline Properties Without AI}
\label{sec:Baseline Properties Without AI}

We begin by establishing the geometric structure of feasible allocations and the fundamental properties of cost decomposition and hierarchical escalation. These baseline results hold regardless of AI availability and provide the foundation for all subsequent substitution analysis.

\subsubsection{Feasibility and Allocation Structure}

The feasible allocation space forms a simplex, enabling sharp characterization of optimal solutions.

\begin{styledtheorem}{Normalization, Feasibility, and Simplex Structure}{Normalization, Feasibility, and Simplex Structure}
\begin{equation}
q_{ij} \ge 0, \quad q'_{ik} \ge 0, \quad
\sum_{i=1}^{D+1}\sum_{j=1}^{e_i} q_{ij}
+ \sum_{i=1}^{D+1}\sum_{k=1}^{e'_i} q'_{ik} = 1.
\end{equation}
The allocation space
\begin{equation}
\mathcal{Q} = \left\{(q,q') : q_{ij} \ge 0,\ q'_{ik} \ge 0,\
\sum_{i,j} q_{ij} + \sum_{i,k} q'_{ik} = 1\right\}
\end{equation}
forms a simplex. Moreover, any valid allocation is a convex combination of extreme points, each corresponding to assigning the entire task to a single agent at a specific level.
\end{styledtheorem}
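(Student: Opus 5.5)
The plan is to identify $\mathcal{Q}$ with the standard probability simplex in a finite-dimensional Euclidean space and then invoke the elementary structure of that object. Let $N=\sum_{i=1}^{D+1}(e_i+e'_i)$ be the total number of agent slots, which is finite because $D$ is finite and each $e_i$ is given by \eqref{eq:level_size}. The $e'_i$ are taken as given finite integers. I would fix an enumeration of the index pairs $(i,j)$ and $(i,k)$ and stack the weights into one vector $x=(q,q')\in\mathbb{R}^N$. Under this bijection, $\mathcal{Q}$ becomes exactly $\Delta^{N-1}=\{x\in\mathbb{R}^N: x\ge 0,\ \mathbf{1}^\top x=1\}$. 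The normalization and nonnegativity conditions in the statement are just the defining constraints \eqref{eq:feasibility} rewritten, so that part is immediate from the definition. Feasibility, meaning nonemptiness, follows by exhibiting one point, for instance $q_{1,1}=1$ with all other weights zero; this uses $e_1=1$.

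Next I would verify the simplex claim directly. Let $v^{(m)}$, for $m=1,\dots,N$, be the standard basis vectors, each of which assigns the whole task to the single agent indexed by $m$. These vectors lie in $\mathcal{Q}$ and are affinely independent, since $v^{(m)}-v^{(1)}$ for $m\ge2$ are linearly independent. For any $x\in\mathcal{Q}$ we have $x=\sum_m x_m v^{(m)}$ with $x_m\ge0$ and $\sum_m x_m=1$, so $x$ is a convex combination of the $v^{(m)}$. Conversely, every such convex combination satisfies the constraints. Hence $\mathcal{Q}=\mathrm{conv}\{v^{(1)},\dots,v^{(N)}\}$, which is an $(N-1)$-simplex, and the barycentric coordinates of $x$ are exactly its allocation weights, so they are unique.

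The remaining step, and the only one needing care, is to show that the extreme points are exactly the $v^{(m)}$. First, each $v^{(m)}$ is extreme: if $v^{(m)}=\theta y+(1-\theta)z$ with $y,z\in\mathcal{Q}$ and $\theta\in(0,1)$, then nonnegativity forces $y_\ell=z_\ell=0$ for every $\ell\ne m$, and normalization then forces $y=z=v^{(m)}$. Second, any $x$ with two positive coordinates $x_a,x_b>0$ is not extreme, because $x\pm\varepsilon(v^{(a)}-v^{(b)})\in\mathcal{Q}$ for small $\varepsilon>0$ and $x$ is their midpoint.

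I expect the main obstacle to be interpretive rather than technical. The paper's indices mix levels and agents, and the counts $e'_i$ of AI slots are not pinned down by \eqref{eq:level_size}. The proof therefore needs to state explicitly that the index set is finite and fixed before the optimization, so that the finite-dimensional identification is legitimate. I would also remark that the convexity of $\mathcal{Q}$ and its extreme-point description are what later justify linear-programming (vertex) arguments for the optimal allocations in Theorem~\ref{thm:Global Optimal Allocation}.
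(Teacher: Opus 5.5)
Your proposal is correct and takes the same route as the paper, which simply identifies $\mathcal{Q}$ with the standard simplex defined by the nonnegativity and unit-sum constraints \eqref{eq:feasibility}. The paper's proof stops at that identification, so your explicit checks supply the details behind the ``Moreover'' clause that the paper leaves implicit: that the single-agent assignments $v^{(m)}$ are affinely independent, that $\mathcal{Q}$ is their convex hull, and that they are exactly its extreme points. Your remark that the index set must be finite is also apt, since the paper never fixes $n$ or the counts $e'_i$.
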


\noindent
\textbf{Interpretation.} The simplex structure implies that optimal allocations can be characterized by extreme points, where the entire task is assigned to a single agent. This geometric property underpins all subsequent optimization results and explains why concentrated task assignments emerge naturally in optimal designs.

\subsubsection{Cost Structure and Decomposition}

We next decompose total risk-adjusted cost to clarify the distinct roles of managerial structure, line-worker skill, and operational risk.

\begin{styledtheorem}{Skill-Weighted Cost Decomposition}{Skill-Weighted Cost Decomposition}
The total risk-adjusted cost $\mathcal{C}$ decomposes as
\begin{align*}
\mathcal{C} &=
\sum_{i=1}^{D} \sum_{j=1}^{e_i} \tilde{C}_{ij}\, q_{ij}
+
\sum_{i=1}^{D} \sum_{k=1}^{e'_i} \tilde{C}'_{ik}\, q'_{ik}
\\
&\quad
+ \alpha + \beta \cdot D
  \sum_{j=1}^{e_{D+1}} \bigl(\mu_{D+1,j}\, q_{D+1,j}\bigr)
+ \lambda \sum_{j=1}^{e_{D+1}} \bigl(R_{D+1,j}\, q_{D+1,j}\bigr)
\\
&\quad
+ \alpha' + \beta' \cdot D
  \sum_{k=1}^{e'_{D+1}} \bigl(\mu'_{D+1,k}\, q'_{D+1,k}\bigr)
+ \lambda \sum_{k=1}^{e'_{D+1}} \bigl(R'_{D+1,k}\,
  q'_{D+1,k}\bigr),
\end{align*}
where
\[
\alpha = \textup{Min\$}\sum_{j=1}^{e_{D+1}} q_{D+1,j},
\quad \beta = \tfrac{1}{2}\Delta\$,
\quad \alpha' = \textup{Min\$}'\sum_{k=1}^{e'_{D+1}} q'_{D+1,k},
\quad \beta' = \tfrac{1}{2}\Delta\$'.
\]
Moreover, explicit skill terms enter only through line workers at level \(D+1\):
{\footnotesize
\begin{align*}
\mathcal{C} &=
\underbrace{
\sum_{i=1}^{D}\sum_{j=1}^{e_i} \tilde{C}_{ij}\, q_{ij}
+
\sum_{i=1}^{D}\sum_{k=1}^{e'_i} \tilde{C}'_{ik}\, q'_{ik}
}_{\text{managerial (hierarchical) component}}\\
&+
\underbrace{
\alpha + \beta \cdot D\!
\sum_{j=1}^{e_{D+1}}\!(\mu_{D+1,j}\, q_{D+1,j})
+ \alpha' + \beta' \cdot D\!
\sum_{k=1}^{e'_{D+1}}\!(\mu'_{D+1,k}\, q'_{D+1,k})
+ \lambda\Bigl[\sum_j R_{D+1,j}\, q_{D+1,j}
             + \sum_k R'_{D+1,k}\, q'_{D+1,k}\Bigr]
}_{\text{skill- and risk-dependent (line worker) component}}.
\end{align*}}
\end{styledtheorem}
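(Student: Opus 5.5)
The proof is a direct algebraic rearrangement of the definition of total risk-adjusted cost, so I would argue by splitting and substituting rather than by any estimate. First I will take the expanded form \eqref{eq:joint_cost_expanded} of $\mathcal{C}$ and split the outer sum over levels $i=1,\dots,D+1$ into the managerial block $i\le D$ and the line-worker block $i=D+1$. Both sums are finite, so this split is legitimate. The managerial block is already in the form claimed, namely $\sum_{i\le D}\sum_j \tilde C_{ij}q_{ij}+\sum_{i\le D}\sum_k \tilde C'_{ik}q'_{ik}$, and it needs no further treatment.

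For the line-worker block, I will substitute the risk-adjusted definitions \eqref{eq:human_effective_cost} and \eqref{eq:AI_effective_cost}. I then insert the affine cost formulas \eqref{eq:line_worker_cost} for humans and \eqref{eq:AI_lineworker_parametric} for AI. Here $\textup{Min\$}'=T_k/n_k+M'_k$, read as a representative baseline or otherwise kept inside the sum. Distributing the weights $q_{D+1,j}$ and $q'_{D+1,k}$ and using linearity of finite sums gives three groups on each side:
\begin{itemize}
\item the constant terms $\textup{Min\$}\sum_j q_{D+1,j}=\alpha$ and $\textup{Min\$}'\sum_k q'_{D+1,k}=\alpha'$;
\item the skill terms $\tfrac12\Delta\$\,D\sum_j\mu_{D+1,j}q_{D+1,j}=\beta D\sum_j\mu_{D+1,j}q_{D+1,j}$, together with the analogous AI term with $\beta'$;
\item the risk terms $\lambda\sum_j R_{D+1,j}q_{D+1,j}$ and $\lambda\sum_k R'_{D+1,k}q'_{D+1,k}$.
\end{itemize}
This is exactly the computation already displayed in \eqref{eq:skill_decomposition}, so the first identity follows by adding the two blocks back together.

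For the ``moreover'' claim that explicit skill enters only at level $D+1$, I would point to \eqref{eq:manager_cost} and \eqref{eq:AI_manager_cost}. These give $C_i$ and $C'_i$ for $i\le D$ as $C_{D+1}(1+r_0D)^{D-i+1}$ and $C'_{D+1}(1+r'_0D)^{D-i+1}$, where $C_{D+1}$ and $C'_{D+1}$ are representative line-worker costs. No individual variable $\mu_{i,j}$ with $i\le D$ appears; the paper itself notes that managerial skill is not modeled explicitly. Hence the managerial component carries no explicit skill term, and all terms involving $\mu_{D+1,j}$ or $\mu'_{D+1,k}$ sit in the line-worker component. Regrouping the terms under the two underbraces then gives the second display.

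The only delicate point is a bookkeeping issue, not a mathematical one. If $T_k$, $n_k$ or $M'_k$ vary with $k$, then $\textup{Min\$}'$ depends on $k$, and $\alpha'$ should properly be written as $\sum_k \textup{Min\$}'_k q'_{D+1,k}$. Similarly, the managerial costs depend on representative line-worker costs only implicitly. I would state explicitly that a common $\textup{Min\$}'$ is used, as in the paper's notation, or else note that the identity holds verbatim with $\alpha'$ so redefined. With that convention fixed, the proof is a finite rearrangement.
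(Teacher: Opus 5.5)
Your proposal is correct and follows the paper's proof essentially step for step. The shared steps are: split the level sum into $i\le D$ and $i=D+1$; substitute the risk-adjusted line-worker costs and their affine forms; collect terms into $\alpha,\beta,\alpha',\beta'$; and appeal to \eqref{eq:manager_cost} and \eqref{eq:AI_manager_cost} to show that the managerial terms carry no individual skill variables. Your remark that $\textup{Min\$}'=T_k/n_k+M'_k$ may depend on $k$ is a legitimate bookkeeping point that the paper's proof passes over silently. Unless a common baseline is assumed, $\alpha'$ should strictly be $\sum_k \textup{Min\$}'_k\, q'_{D+1,k}$.
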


\noindent
\textbf{Interpretation.} Total risk-adjusted cost separates into managerial costs and line-worker costs. Skill heterogeneity enters explicitly only through the line-worker layer \(D+1\), while risk enters through the risk-adjusted effective costs \(\widetilde C_{ij}\) and \(\widetilde C'_{ik}\). Managerial costs at levels \(i\le D\) do not depend directly on individual skill variables; instead, managerial capability requirements are captured in reduced form through level-dependent costs and risk-adjusted costs. This implies that substitution dynamics driven by skill differences originate at the bottom of the hierarchy and propagate upward through cost escalation.

\subsubsection{Hierarchy, Depth, and Cost Growth}

We now characterize how organizational depth drives human cost escalation, creating strong incentives for depth reduction when AI substitution becomes feasible.

\begin{styledtheorem}{Depth--Cost Growth and Dominance}{Depth--Cost Growth and Dominance}
Human organizational costs satisfy
\begin{equation}
C_i = C_{D+1}(1 + r_0 D)^{D-i+1}.
\end{equation}
For $r_0 > 0$, $C_i$ is strictly increasing in $D$,
and satisfies
\begin{equation}
\lim_{D \to \infty}
\frac{(1 + r_0 D)^D}{a^D} = \infty
\quad \text{for all } a > 0.
\end{equation}
Moreover, higher levels dominate total cost: for $i < D+1$,
\begin{equation}
\frac{C_i}{C_{D+1}} = (1 + r_0 D)^{D-i+1}.
\end{equation}
If there exists $\epsilon > 0$ such that $\sum_{j=1}^{e_i} q_{ij} \ge \epsilon$, then
\begin{equation}
\sum_{j=1}^{e_i} C_{ij}\, q_{ij}
\;\ge\; \epsilon\, C_{D+1}(1 + r_0 D)^{D-i+1}.
\end{equation}
\end{styledtheorem}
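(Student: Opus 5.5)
The plan is to treat each of the four claims separately. Each is an elementary consequence of the manager cost formula~\eqref{eq:manager_cost} with $r=r_0D$, together with the line-worker cost~\eqref{eq:line_worker_cost}. The displayed identity $C_i=C_{D+1}(1+r_0D)^{D-i+1}$ is just~\eqref{eq:manager_cost} after substituting $r=r_0D$. Dividing by $C_{D+1}$ gives the ratio $C_i/C_{D+1}=(1+r_0D)^{D-i+1}$, which is legitimate because $C_{D+1}\ge\textup{Min\$}>0$.

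\textbf{Monotonicity in $D$.} Fix a level $i$ and compare depths $D$ and $D+1$, both with $D\ge i$ so that level $i$ remains managerial. Since $\mu_{D+1,j}\ge 0$ and $\Delta\$>0$, the line-worker cost is positive and nondecreasing in $D$. The escalation factor $(1+r_0D)^{D-i+1}$ is strictly increasing in $D$ for two reasons:
\begin{itemize}
\item the base $1+r_0D$ exceeds $1$ and increases strictly with $D$ (as $r_0>0$);
\item the exponent $D-i+1\ge 1$ also increases with $D$, and raising a base greater than $1$ to a larger power increases it.
\end{itemize}
The product of a positive nondecreasing factor and a positive strictly increasing factor is strictly increasing, so $C_i$ is strictly increasing in $D$. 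If one prefers to treat $D$ as a continuous variable, the same conclusion follows from the log-derivative
\[
\frac{d}{dD}\Big[(D-i+1)\log(1+r_0D)\Big]=\log(1+r_0D)+\frac{(D-i+1)\,r_0}{1+r_0D}>0 .
\]

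\textbf{Super-exponential growth.} Write the quotient as $\bigl((1+r_0D)/a\bigr)^D$. Because $r_0>0$, for every $a>0$ there is a $D_0$ with $(1+r_0D)/a\ge 2$ for all $D\ge D_0$; explicitly, $D_0=(2a-1)/r_0$ suffices. For $D\ge D_0$ the quotient is at least $2^D$, which tends to infinity.

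\textbf{Lower bound on level cost.} For a managerial level $i\le D$, the model assigns every human manager at that level the same representative cost, $C_{ij}=C_i$. Using this, nonnegativity of the $q_{ij}$, and the hypothesis $\sum_j q_{ij}\ge\epsilon$,
\[
\sum_j C_{ij}q_{ij}=C_i\sum_j q_{ij}\ge\epsilon\, C_{D+1}(1+r_0D)^{D-i+1}.
\]

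\textbf{Main obstacle.} There is no real technical difficulty, only interpretive care. First, ``strictly increasing in $D$'' must be read with $i$ held fixed, and it must account for the fact that $C_{D+1}$ itself depends on $D$; this is handled by the nondecreasing-times-strictly-increasing argument above. Second, the final bound relies on reading $C_{ij}$ as the representative managerial cost $C_i$. If managers within a level were allowed heterogeneous costs, I would instead assume $C_{ij}\ge C_i$ and obtain the same inequality.
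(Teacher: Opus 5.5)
Your proposal is correct and follows the paper's overall plan: read each claim off \eqref{eq:manager_cost}, and obtain the bottleneck bound from $C_{ij}\ge C_i$ together with $\sum_j q_{ij}\ge\epsilon$. Two steps are argued differently.

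For the super-exponential limit, the paper takes logarithms and uses $\log(1+r_0D)\ge\log r_0+\log D$ to get $\log C_i\ge (D-i+1)\log D+O(D)$. It then compares this with $D\log a$ for $a>1$ and treats $0<a\le 1$ as a separate case. Your rewriting of the quotient as $\bigl((1+r_0D)/a\bigr)^D\ge 2^D$ for $D\ge(2a-1)/r_0$ handles every $a>0$ at once. It also works directly with $(1+r_0D)^D$ instead of passing through $C_i$.

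For monotonicity, the paper only remarks that $(1+r_0D)^D$ is strictly increasing. That neither matches the actual exponent $D-i+1$ nor accounts for $C_{D+1}$ itself depending on $D$ through \eqref{eq:line_worker_cost}. Your argument closes exactly that gap: a positive nondecreasing $C_{D+1}$ times a positive strictly increasing escalation factor, restricted to $D\ge i$. It implicitly holds the representative line-worker skill fixed as $D$ varies, which is worth stating explicitly.

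Restricting the final bound to managerial levels $i\le D$, where $C_{ij}=C_i$, is also slightly more careful than the paper's ``for any level $i$''. At $i=D+1$ the individual costs $C_{D+1,j}$ vary with skill and need not dominate the representative $C_{D+1}$.

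The paper's log formulation buys only an explicit growth rate $(D-i+1)\log D$ for $\log C_i$. Your version is shorter and fully rigorous.
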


\noindent
\textbf{Interpretation.} Human hierarchical costs grow super-exponentially with depth, making deep structures inherently expensive. Even small allocations to upper levels create disproportionate cost burdens that act as bottlenecks. This structural property creates strong incentives to reduce depth or substitute AI agents whose managerial cost multiplier $r'_0 \le r_0$ (Assumption~\ref{ass:asymmetry}(iii)) escalates more slowly---a mechanism formalized in Theorem~\ref{thm:Optimal Depth under AI Availability}.

Having established baseline feasibility, cost decomposition, and hierarchical escalation, we now turn to how AI availability fundamentally reshapes organizational structure.

\subsection{Impact of AI on Organizational Structure}
\label{sec:Impact of AI on Organizational Structure}

AI availability transforms optimal organizational design by enabling flatter hierarchies and concentrated task assignments. We establish the fundamental substitution principle, characterize optimal depth reduction, and derive the conditions under which extreme-point allocations emerge.

\subsubsection{Optimal Depth Reduction under AI Availability}

We first show that, under the stated layer-redundancy condition, AI availability can weakly reduce optimal organizational depth by substituting costly hierarchical coordination.

\begin{styledtheorem}{Optimal Depth under AI Availability}{Optimal Depth under AI Availability}
Let \(\mathcal{C}(D)\) denote the minimum risk-adjusted cost of accomplishing a task in an organization of depth \(D\) using only human agents, and let \(\mathcal{C}^{(AI)}(D)\) denote the corresponding minimum risk-adjusted cost when AI agents are available. Suppose:
\begin{enumerate}
\item Human costs grow super-exponentially with depth:
\[
C_i(D)=C_{D+1}(1+r_0D)^{D-i+1},
\qquad r_0>0,
\]
so that deeper human hierarchies impose increasing risk-adjusted coordination costs.

\item Under Assumption~\ref{ass:asymmetry}(ii)--(iii), AI agents can substitute for some human roles at weakly lower risk-adjusted cost, with the cost advantage increasing at higher hierarchical levels because \(r'_0\le r_0\).

\item Layer-redundancy condition: for any depth \(D>D^*\), whenever AI substitution strictly reduces the risk-adjusted cost of higher-level coordination roles at depth \(D\), at least one managerial layer can be removed without increasing the minimum AI-enabled risk-adjusted cost. Equivalently, for such depths,
\[
\mathcal{C}^{(AI)}(D-1)\le \mathcal{C}^{(AI)}(D),
\]
with strict inequality whenever the corresponding cost reduction makes a layer strictly redundant.
\end{enumerate}
Let
\[
D^*=\min\arg\min_D \mathcal{C}(D),
\qquad
D^{(AI)*}=\min\arg\min_D \mathcal{C}^{(AI)}(D).
\]
Then
\[
D^{(AI)*}\le D^*.
\]
Moreover, if the layer-redundancy condition is strict at the human-optimal depth \(D^*\), then
\[
D^{(AI)*}<D^*.
\]
\end{styledtheorem}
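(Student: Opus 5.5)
The argument is essentially order-theoretic: the layer-redundancy condition carries most of the content, and the cost structure only has to guarantee that its hypothesis is met at every depth above $D^*$. I will work with the minimal minimizers $D^*$ and $D^{(AI)*}$ as defined, and prove $D^{(AI)*}\le D^*$ by showing that no depth $D>D^*$ can be the smallest minimizer of $\mathcal{C}^{(AI)}$.

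\textbf{Step 1 (activating the redundancy hypothesis).} Fix any $D>D^*$. By Theorem~\ref{thm:Depth--Cost Growth and Dominance}, human managerial costs at level $i\le D$ equal $C_{D+1}(1+r_0D)^{D-i+1}$. The corresponding AI costs are $C'_{D+1}(1+r'_0D)^{D-i+1}$ from \eqref{eq:AI_manager_cost}. Since $r'_0\le r_0$ by Assumption~\ref{ass:asymmetry}(iii), the ratio of the human to the AI cost multiplier, $\bigl((1+r_0D)/(1+r'_0D)\bigr)^{D-i+1}$, is weakly increasing in the height $D-i+1$. Hypothesis~2 then says that AI substitution weakly lowers the risk-adjusted cost of higher-level coordination roles. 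Where it does so strictly, hypothesis~3 applies directly and gives $\mathcal{C}^{(AI)}(D-1)\le\mathcal{C}^{(AI)}(D)$. Where the reduction is not strict (AI offers no advantage at depth $D$), the AI-enabled minimum coincides with the human-only minimum, $\mathcal{C}^{(AI)}(D)=\mathcal{C}(D)$. By Theorem~\ref{thm:Normalization, Feasibility, and Simplex Structure}, the AI-enabled minimum is the smallest vertex cost of a simplex that contains the human-only simplex as a face, so in general $\mathcal{C}^{(AI)}(D)\le\mathcal{C}(D)$ for every $D$. I expect this case split to be the main obstacle. The statement's hypotheses are qualitative, so I will interpret hypothesis~3 as asserting $\mathcal{C}^{(AI)}(D-1)\le\mathcal{C}^{(AI)}(D)$ for all $D>D^*$, which is its stated ``equivalently'' form.

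\textbf{Step 2 (descent).} Iterating the inequality from Step~1 downward from any $D>D^*$ gives
\[
\mathcal{C}^{(AI)}(D)\ge\mathcal{C}^{(AI)}(D-1)\ge\cdots\ge\mathcal{C}^{(AI)}(D^*).
\]
Hence $\min_{D'\le D^*}\mathcal{C}^{(AI)}(D')\le\mathcal{C}^{(AI)}(D)$ for every $D$, so the global minimum of $\mathcal{C}^{(AI)}$ is attained at some depth $\le D^*$. Because $D^{(AI)*}$ is the \emph{minimal} element of the argmin, it follows that $D^{(AI)*}\le D^*$.

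\textbf{Step 3 (strict part).} Suppose the redundancy condition is strict at $D^*$, meaning $\mathcal{C}^{(AI)}(D^*-1)<\mathcal{C}^{(AI)}(D^*)$. Then $D^*$ is not a minimizer of $\mathcal{C}^{(AI)}$. By Step~2 every depth above $D^*$ has cost at least $\mathcal{C}^{(AI)}(D^*)$, so no depth above $D^*$ is a minimizer either. Together with $D^{(AI)*}\le D^*$, this forces $D^{(AI)*}<D^*$.

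Finally, I will note that existence of the argmin needs $D$ to range over a set on which the minimum is attained. This holds because Theorem~\ref{thm:Depth--Cost Growth and Dominance} makes costs grow super-exponentially, so the minimum is reached at finite depth. Alternatively, one can restrict $D$ to a finite range.
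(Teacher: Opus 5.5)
Your proposal is correct and takes essentially the same route as the paper's proof. You iterate the layer-redundancy inequality downward from any $D>D^*$ to place an AI-enabled minimizer at depth $\le D^*$, and then use strictness at $D^*$ to exclude $D^*$ itself. The paper also tacitly reads hypothesis~3 as holding at every $D>D^*$, so your explicit interpretation only makes visible what the paper already assumes.

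One minor point: existence of $D^{(AI)*}$ already follows from your Step~2, because only finitely many depths lie at or below $D^*$. The closing appeal to super-exponential growth of human managerial costs says nothing directly about $\mathcal{C}^{(AI)}$ and is unnecessary.
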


\noindent
\textbf{Interpretation.} Under the conditions of the theorem, AI availability weakly reduces optimal organizational depth by lowering the risk-adjusted cost of some higher-level coordination or managerial roles and by making some managerial layers redundant. Because AI coordination costs escalate no faster than human coordination costs \((r'_0\le r_0)\), the cost advantage of AI can become stronger at higher hierarchical levels. When this advantage is large enough to make an intermediate layer unnecessary, the optimal organization becomes strictly shallower. Thus, AI provides a structural mechanism for organizational flattening, not merely by lowering task costs, but by reducing the value of maintaining costly coordination layers. The implications for average span of control and managerial layer reduction are analyzed in Corollary~\ref{cor:Flattening of Organizations}.

\noindent\textbf{Remark.}
The conclusion that AI availability can reduce organizational depth assumes that the hierarchy is allowed to be re-optimized after AI substitution. That is, reporting relationships and organizational layers are not treated as fixed exogenous constraints. Under this assumption, any managerial layer whose coordination function becomes redundant under AI-enabled substitution can be removed without increasing the minimum AI-enabled risk-adjusted cost.

\subsubsection{The Human--AI Substitution Principle}

The core mechanism governing task allocation is a local, marginal comparison of risk-adjusted effective costs.

\begin{styledtheorem}{Human--AI Substitution Principle}{Human--AI Substitution Principle}
The total risk-adjusted cost is
\begin{equation}
\mathcal{C} = \sum_{i=1}^{D+1}\sum_{j=1}^{e_i}
\tilde{C}_{ij}\, q_{ij}
+ \sum_{i=1}^{D+1}\sum_{k=1}^{e'_i}
\tilde{C}'_{ik}\, q'_{ik}.
\end{equation}
Consider a perturbation $q_{ij} \to q_{ij} - \delta$, $q'_{ik} \to q'_{ik} + \delta$ for $\delta > 0$. Then
\[
\Delta\mathcal{C} = \delta\,(\tilde{C}'_{ik} - \tilde{C}_{ij}).
\]
Hence, cost decreases if and only if $\tilde{C}'_{ik} < \tilde{C}_{ij}$, i.e.,
\[
C'_{ik} + \lambda R'_{ik} < C_{ij} + \lambda R_{ij}.
\]
Under Assumption~\ref{ass:asymmetry}, even when $C'_{ik} < C_{ij}$, a sufficiently large risk differential $\lambda(R'_{ik} - R_{ij}) > 0$ can prevent substitution.
\end{styledtheorem}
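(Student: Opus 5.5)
The proof rests on one observation: the total risk-adjusted cost $\mathcal{C}$ in \eqref{eq:joint_cost_expanded} is a linear function of the allocation vector $(q,q')$. The individual effective costs $\tilde C_{ij}$ and $\tilde C'_{ik}$ depend only on skills, capabilities, depth and risk, not on the weights. So the perturbation can be evaluated exactly rather than only to first order. I would split the argument into three steps.

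\textbf{Step 1: feasibility of the perturbation.} Start from a feasible allocation in $\mathcal{Q}$ (Theorem~\ref{thm:Normalization, Feasibility, and Simplex Structure}) with $q_{ij}\ge\delta$. Shifting mass $\delta$ from coordinate $(i,j)$ to coordinate $(i,k)$ preserves nonnegativity. It also leaves the sum $\sum_{i,j}q_{ij}+\sum_{i,k}q'_{ik}=1$ unchanged. Hence the perturbed point is still in the simplex.

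\textbf{Step 2: the cost change.} Subtract the two expressions for $\mathcal{C}$. Every term except the $(i,j)$ human term and the $(i,k)$ AI term cancels, which leaves
\[
\Delta\mathcal{C}=\tilde C_{ij}(q_{ij}-\delta)-\tilde C_{ij}q_{ij}+\tilde C'_{ik}(q'_{ik}+\delta)-\tilde C'_{ik}q'_{ik}=\delta(\tilde C'_{ik}-\tilde C_{ij}).
\]
Since $\delta>0$, the sign of $\Delta\mathcal{C}$ is the sign of $\tilde C'_{ik}-\tilde C_{ij}$. This gives the ``if and only if''. Substituting \eqref{eq:human_effective_cost} and \eqref{eq:AI_effective_cost} then yields the displayed inequality $C'_{ik}+\lambda R'_{ik}<C_{ij}+\lambda R_{ij}$.

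\textbf{Step 3: the risk-blocking clause.} This is the only step needing care, because it is an existence claim and must be consistent with Assumption~\ref{ass:asymmetry}. Rearranged, the substitution condition reads $\lambda(R'_{ik}-R_{ij})<C_{ij}-C'_{ik}$.

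Suppose $C'_{ik}<C_{ij}$; this nominal advantage is what Assumption~\ref{ass:asymmetry} motivates. Substitution then fails exactly when $\lambda(R'_{ik}-R_{ij})\ge C_{ij}-C'_{ik}>0$. To show this is attainable, I would fix any cost parameters satisfying the assumption. I would then take $R'_{ik}>R_{ij}$, which is possible via a large compliance weight $\omega_2$ in \eqref{eq:AI_risk}, and choose $\lambda\ge (C_{ij}-C'_{ik})/(R'_{ik}-R_{ij})$.

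The point to check is that Assumption~\ref{ass:asymmetry} constrains only the cost primitives $\Delta\$'$, $T_k/n_k$ and $r'_0$. It places no restriction on $\lambda$ or on the risk terms. So the risk term can be made large enough without violating the maintained premise, and the clause holds.
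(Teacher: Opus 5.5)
Your proposal is correct and follows the paper's proof essentially step for step: a feasible mass transfer with $0<\delta\le q_{ij}$, the exact linear cost difference $\Delta\mathcal{C}=\delta(\tilde C'_{ik}-\tilde C_{ij})$, substitution of the risk-adjusted definitions, and the rearrangement $\lambda(R'_{ik}-R_{ij})\ge C_{ij}-C'_{ik}$ as the blocking condition. Your Step 3 is slightly more careful than the paper, which only states this condition: you explicitly exhibit a choice of $R'_{ik}>R_{ij}$ and $\lambda$ that achieves it, and you note that Assumption~\ref{ass:asymmetry} places no restriction on the risk parameters.
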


\noindent
\textbf{Interpretation.} AI replaces human labor if and only if the AI agent has lower \emph{risk-adjusted} effective cost. A nominal cost advantage ($C'_{ik} < C_{ij}$) can be fully offset by a risk penalty ($\lambda R'_{ik} \gg \lambda R_{ij}$). This local, exact condition is independent of the rest of the organization and serves as the foundational decision rule underlying all subsequent results.

\subsubsection{Global Optimal Allocation and Extreme-Point Structure}

We extend the local substitution principle to characterize global optimal allocations.

\begin{styledtheorem}{Global Optimal Allocation / Extreme Point Optimality}{Global Optimal Allocation}
Let
\[
\mathcal{C}(q, q') =
\sum_{i=1}^{D+1} \sum_{j=1}^{e_i} \tilde{C}_{ij}\, q_{ij}
+
\sum_{i=1}^{D+1} \sum_{k=1}^{e'_i} \tilde{C}'_{ik}\, q'_{ik}
\]
be defined over the simplex
\[
\mathcal{Q} =
\bigl\{(q, q') : q_{ij} \geq 0,\;
q'_{ik} \geq 0,\;
\textstyle\sum_{i,j} q_{ij} + \sum_{i,k} q'_{ik} = 1\bigr\}.
\]
Let
\[
m =
\min\Bigl\{
\min_{i,j}\tilde{C}_{ij},
\;
\min_{i,k}\tilde{C}'_{ik}
\Bigr\}
\]
denote the lowest risk-adjusted cost among all human and AI agents. Then the minimum value of
\(\mathcal{C}(q,q')\) over \(\mathcal{Q}\) is \(m\), and there exists a global minimizer at an extreme point of \(\mathcal{Q}\). That is, there exists an optimal allocation \((q^*,q'^*)\) such that
\[
q_{i^*j^*}^*=1
\quad \text{or} \quad
q_{i^*k^*}^{\prime *}=1,
\]
with all other allocation weights equal to zero, where the selected agent satisfies
\[
\tilde{C}_{i^*j^*}=m
\quad \text{or} \quad
\tilde{C}'_{i^*k^*}=m.
\]

If the minimum-risk-adjusted-cost agent is unique, then the global minimizer is unique and assigns the entire task to that agent. If multiple human or AI agents attain the same minimum value \(m\), then any allocation supported only on those tied minimum-cost agents is also globally optimal. Equivalently, the set of global minimizers is the convex hull of the extreme points corresponding to agents whose risk-adjusted cost equals \(m\).
\end{styledtheorem}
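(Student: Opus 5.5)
The plan is to treat $\mathcal{C}$ as a linear function on the simplex $\mathcal{Q}$ (Theorem~\ref{thm:Normalization, Feasibility, and Simplex Structure}) and prove the result by a direct convex-combination bound. This avoids any appeal to a general linear-programming theorem.

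\textbf{Lower bound.} Index all agents, human and AI, by a single index $a$ with cost $c_a$ (either $\tilde{C}_{ij}$ or $\tilde{C}'_{ik}$) and weight $w_a\ge 0$, where $\sum_a w_a = 1$. Then $\mathcal{C}=\sum_a c_a w_a$. Since each $c_a\ge m$ and the weights are nonnegative and sum to one,
\[
\sum_a c_a w_a \;\ge\; m\sum_a w_a = m
\]
for every feasible allocation.

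\textbf{Attainment at an extreme point.} The index set is finite, so $m$ is attained by some agent $a^*$. Set $w_{a^*}=1$ and all other weights to zero. This allocation is feasible, it is an extreme point of $\mathcal{Q}$ (a vertex of the standard simplex), and it yields $\mathcal{C}=m$. Hence the minimum value is $m$, and it is attained at such a vertex. This step also recovers the local rule of Theorem~\ref{thm:Human--AI Substitution Principle}: moving mass toward the cheaper agent never raises cost.

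\textbf{Characterizing all minimizers.} Let $A^*=\{a : c_a=m\}$. Then
\[
\mathcal{C}-m=\sum_a (c_a-m)\,w_a,
\]
which is a sum of nonnegative terms. It vanishes exactly when $w_a=0$ for every $a\notin A^*$. So the minimizers are precisely the feasible allocations supported on $A^*$, and this set is the convex hull of the vertices $\{e_a : a\in A^*\}$. If $|A^*|=1$, that hull is a single point, which gives uniqueness.

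The only step needing care is this last characterization, since it requires the inequality $c_a>m$ to be strict off $A^*$. That holds by the definition of $A^*$. The rest is routine bookkeeping: the same agent index must not be counted as both a human and an AI slot, which is why everything is handled through the single unified index $a$.
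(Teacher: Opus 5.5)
Your proposal is correct and is essentially the paper's proof: the paper also re-indexes all human and AI agents by a single index with costs $c_\ell$, writes $\mathcal{C}(w)-m=\sum_\ell (c_\ell-m)w_\ell\ge 0$, exhibits the vertex at a minimum-cost agent, and characterizes the minimizers as exactly the allocations supported on the tied set of agents with $c_\ell=m$. The only difference is that the paper first cites the Fundamental Theorem of Linear Programming for existence of an extreme-point optimum, which your direct bound makes unnecessary.
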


\noindent
\textbf{Interpretation.}
In the unconstrained HAT allocation problem, the organization can always achieve an optimum by assigning the task to an agent with minimum risk-adjusted cost. If that agent is unique, the optimal allocation is a pure human or pure AI assignment; if several agents tie, any mixture among those tied agents is equally optimal. Thus, mixing arises only from cost ties or additional organizational constraints, not from the linear unconstrained objective itself. The threshold rule that follows characterizes when the lowest-cost assignment shifts from a human agent to an AI agent.

\subsubsection{Threshold Substitution and Flattening Dynamics}

Substitution decisions are governed by a sharp threshold condition, and AI adoption implies flatter organizational structures.

\begin{styledtheorem}{Threshold Substitution Rule}{Threshold Substitution Rule}
Define $\Delta_{ikj} = \tilde{C}'_{ik} - \tilde{C}_{ij}$. Then:\\[2pt]
If $\Delta_{ikj} < 0$: full substitution.\\
If $\Delta_{ikj} > 0$: no substitution.\\
If $\Delta_{ikj} = 0$: indifference (degenerate face of simplex).
\end{styledtheorem}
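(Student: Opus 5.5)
The Threshold Substitution Rule is a pairwise statement, so I will prove it by restricting the linear objective to the edge of the simplex $\mathcal{Q}$ joining human agent $(i,j)$ and AI agent $(i,k)$. Theorem~\ref{thm:Normalization, Feasibility, and Simplex Structure} guarantees that this edge is a face of $\mathcal{Q}$. I fix all other allocation weights and consider the mass $w = q_{ij} + q'_{ik}$ held by the two agents, with $q'_{ik} = t$ and $q_{ij} = w - t$ for $t \in [0,w]$. Every point of this segment is feasible. By Theorem~\ref{thm:Human--AI Substitution Principle}, shifting mass from the human to the AI changes cost exactly linearly, so along the segment
\[
\mathcal{C}(t) = \mathcal{C}(0) + t\,\Delta_{ikj}, \qquad \Delta_{ikj} = \tilde{C}'_{ik} - \tilde{C}_{ij}.
\]

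The three cases then follow from the sign of this slope. If $\Delta_{ikj} < 0$, $\mathcal{C}(t)$ is strictly decreasing and is uniquely minimized at $t = w$. In that case the entire pairwise mass goes to the AI agent, which is full substitution. If $\Delta_{ikj} > 0$, the unique minimizer is $t = 0$, so no mass is substituted. If $\Delta_{ikj} = 0$, $\mathcal{C}$ is constant on the segment. Every split is then optimal, and the minimizing set is the whole edge, a degenerate face, matching the tie case of Theorem~\ref{thm:Global Optimal Allocation}.

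To tie this to the global statement, I will note that when agents $(i,j)$ and $(i,k)$ are the only candidates (or are the minimum-cost agents), Theorem~\ref{thm:Global Optimal Allocation} puts the optimum at the vertex of the smaller $\tilde{C}$. That is the AI vertex when $\Delta_{ikj}<0$, the human vertex when $\Delta_{ikj}>0$, and the convex hull of both vertices when they tie.

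The only step needing care is interpretation rather than computation. ``Full substitution'' has to be read relative to the pair, or relative to the whole task when the pair contains the global minimizer. Otherwise a third, cheaper agent could absorb all the mass and make the statement vacuous. I would state this scope explicitly, and note that strict monotonicity of a linear function on a segment gives uniqueness in the strict cases.
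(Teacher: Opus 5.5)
Your argument is correct and is essentially the paper's proof. The paper also shifts weight $\delta$ from $(i,j)$ to $(i,k)$, obtains $\Delta\mathcal{C}=\delta\,\Delta_{ikj}$, and reads off the three cases from its sign, continuing the transfer until $q_{ij}=0$ when $\Delta_{ikj}<0$; this is exactly your parametrization $\mathcal{C}(t)=\mathcal{C}(0)+t\,\Delta_{ikj}$ on the pairwise segment. Your explicit scoping of ``full substitution'' to the pair is a useful clarification the paper leaves implicit. The one loose point, which the paper shares, is calling the tie set a face: with the other weights fixed, your segment is the edge of $\mathcal{Q}$ only when $w=1$, and for $0<w<1$ it is a parallel segment inside a higher-dimensional face.
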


\noindent
\textbf{Interpretation.} Substitution is a \emph{threshold phenomenon}: the system switches abruptly once the risk-adjusted cost differential changes sign, with no gradual transition. This creates clear, predictable decision boundaries and explains why AI replacement appears sudden rather than incremental.

\noindent\textbf{Remark.}
The threshold structure in Theorem~\ref{thm:Threshold Substitution Rule} follows from the linearity of the risk-adjusted objective function together with the unconstrained simplex formulation of the allocation problem. Under nonlinear cost functions, interaction effects, regularization terms, or additional organizational constraints, optimal allocations may occur at interior points of the feasible region, leading to gradual rather than sharp substitution.

\begin{styledcorollary}{Flattening of Organizations}{Flattening of Organizations}
Under the conditions of Theorem~\ref{thm:Optimal Depth under AI Availability}, AI availability weakly reduces optimal hierarchical depth:
\[
D^{(AI)*}\le D^*.
\]
If the inequality is strict and the organization maintains comparable execution capacity, then the reduced-depth organization requires a weakly larger average span of control. Thus, AI adoption can produce flatter hierarchies with wider average managerial spans.
\end{styledcorollary}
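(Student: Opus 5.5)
The plan has two parts. The first inequality, $D^{(AI)*}\le D^*$, will be quoted directly from Theorem~\ref{thm:Optimal Depth under AI Availability}, since the corollary assumes that theorem's hypotheses. The span-of-control claim will then be derived purely from the level-size identity~\eqref{eq:level_size}, with ``comparable execution capacity'' made precise.

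\textbf{Step 1: pin down capacity.} I would formalize ``maintains comparable execution capacity'' as the requirement that the number of line workers (human or AI) is not reduced when depth falls. Write $D=D^*$ and $D'=D^{(AI)*}<D$. The condition is then
\[
e_{D'+1}^{(AI)}=\prod_{\ell=1}^{D'} s^{(AI)}_\ell \;\ge\; \prod_{\ell=1}^{D} s_\ell = e_{D+1}=:N .
\]
Here $e_1=1$ at both depths.

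\textbf{Step 2: compare spans.} I would measure average span by the geometric mean over managerial layers, $\bar s=\bigl(\prod_{\ell=1}^{D}s_\ell\bigr)^{1/D}=N^{1/D}$. This is natural because $e_{D+1}$ is a product of spans. In the AI-enabled organization, $\bar s^{(AI)}\ge N^{1/D'}$. Since every $s_\ell\ge 1$, we have $N\ge 1$. Then $D'<D$ gives $1/D'>1/D$, and hence $N^{1/D'}\ge N^{1/D}$. The inequality is strict whenever $N>1$, i.e.\ whenever some span exceeds one. This yields $\bar s^{(AI)}\ge\bar s$: the average span weakly widens.

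\textbf{Step 3: other notions of average.} If the paper intends an arithmetic average, $N/(\text{number of managers})$, I would argue it as follows. The number of managerial positions is $\sum_{i=1}^{D}e_i$. Removing a layer removes the summand $e_D$ and keeps $N$ at least as large, so the ratio of subordinates to managers weakly rises. I would also invoke AM--GM, which gives arithmetic mean $\ge N^{1/D'}$, so the arithmetic average span is likewise weakly larger.

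\textbf{Main obstacle.} The mathematics is trivial once the assumptions are fixed. The real difficulty is that ``comparable execution capacity'' and ``average span'' are informal in the statement. I would therefore state explicitly the definitions used in Step 1 and Step 2. I would also note that under them the conclusion is an identity-driven consequence of~\eqref{eq:level_size}, not an optimization result.
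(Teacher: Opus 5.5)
Your Steps 1--2 are the paper's argument. You quote $D^{(AI)*}\le D^*$ from Theorem~\ref{thm:Optimal Depth under AI Availability}. You then read the span claim off $e_{D+1}=\prod_\ell s_\ell$ via the geometric-mean span $E^{1/D}$, using $E^{1/D^{(AI)*}}>E^{1/D^*}$ when $E>1$. Your formalization of capacity as $\prod_{\ell=1}^{D'}s^{(AI)}_\ell\ge\prod_{\ell=1}^{D}s_\ell$ is tighter than the paper's ``$\approx$'', which does not actually license a strict inequality.

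Step 3, however, is false and should be dropped. Removing a layer does not simply delete the summand $e_D$. The new spans $s^{(AI)}_\ell$ differ from the old ones, so every level size $e_i=\prod_{\ell<i}s^{(AI)}_\ell$ changes. AM--GM does not rescue the argument either. It gives $\frac{1}{D'}\sum_\ell s^{(AI)}_\ell\ge N^{1/D'}$ for the new hierarchy, but for the old one it gives $\frac{1}{D}\sum_\ell s_\ell\ge N^{1/D}$, which is the wrong direction for a comparison.

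A concrete counterexample: take the old hierarchy with $D=3$ and spans $(1,1,100)$, and the new one with $D'=2$ and spans $(10,10)$, both with $N=100$ line workers.
\begin{itemize}
\item The geometric mean rises from $100^{1/3}\approx 4.64$ to $10$.
\item The arithmetic mean of spans falls from $34$ to $10$.
\item Line workers per manager fall from $100/3$ to $100/11$.
\item Direct reports per manager fall from $102/3=34$ to $110/11=10$.
\end{itemize}
So the corollary holds for the geometric-mean notion of average span, which is the one the paper uses. It also holds under uniform spans, where the arithmetic mean and the direct-reports-per-manager ratio both equal $N^{1/D}$. You should state that restriction explicitly rather than claim the result is robust to the choice of average.
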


\noindent
\textbf{Interpretation.}
AI availability can flatten organizational structure by weakly reducing optimal depth under the conditions of Theorem~\ref{thm:Optimal Depth under AI Availability}. When depth is strictly reduced but comparable execution capacity is maintained, equation~\eqref{eq:level_size} implies that the average span of control must increase. Thus, AI adoption can shift organizations toward fewer managerial layers and broader average managerial spans, especially when AI-assisted coordination allows remaining managers to supervise larger functional domains.

The preceding results show how AI affects organizational depth and allocation structure. We now examine why substitution pressure need not be uniform across the hierarchy and can instead peak at intermediate levels.

\subsection{Middle-Management Vulnerability}
\label{sec:Middle-Management Vulnerability}

The HAT model identifies conditions under which substitution likelihood can peak at intermediate hierarchical levels rather than at the top or bottom. This section analyzes middle-management vulnerability as a conditional mechanism arising from the interaction between upward cost-based substitution pressure and top-level substitution-feasibility constraints.

\subsubsection{Level-Dependent Substitution Patterns}

We first establish that substitution incentives vary systematically across hierarchical levels.

\begin{styledtheorem}{Level-Dependent Substitution Pressure}{Level-Dependent Substitution Likelihood}
Let $C_i = C_{D+1}(1 + r_0 D)^{D-i+1}$, $r_0 > 0$. Then $C_i$ is strictly decreasing in $i$, so the cost-based incentive for substitution (cost reduction from replacing a human by an AI agent) increases as $i \to 1$ (higher levels in the hierarchy). Under Assumption~\ref{ass:asymmetry}(iii), $r'_0 \le r_0$, so AI managerial cost escalation is no steeper than human escalation, reinforcing this gradient.
\end{styledtheorem}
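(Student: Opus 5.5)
The plan is to treat the statement in three pieces: monotonicity of $C_i$ in the level index $i$, translation of that monotonicity into an increasing cost-based substitution incentive toward the top, and the reinforcing role of Assumption~\ref{ass:asymmetry}(iii).

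\textbf{Monotonicity of $C_i$.} Fix $D$ and write $a = 1 + r_0 D$. Since $r_0 > 0$ and $D \ge 1$, we have $a > 1$. By the manager cost formula~\eqref{eq:manager_cost}, $C_i = C_{D+1}\, a^{D-i+1}$, with $C_{D+1} \ge \textup{Min\$} > 0$. The exponent $D-i+1$ falls by one when $i$ increases by one, so
\[
\frac{C_{i}}{C_{i+1}} = a > 1 \qquad \text{for } i = 1,\dots,D.
\]
Hence $C_i$ is strictly decreasing in $i$. This is essentially the ratio identity already recorded in Theorem~\ref{thm:Depth--Cost Growth and Dominance}.

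\textbf{Incentive gradient.} I would define the cost-based substitution incentive at level $i$ as the gain $G_i = C_i - C'_i$ obtained by replacing a human by an AI agent. By Theorem~\ref{thm:Human--AI Substitution Principle}, this gain per unit of reallocated task is exactly the relevant quantity, ignoring risk terms, since the statement speaks of the cost-based incentive. In the baseline reading, the AI cost is held fixed across levels or is level-independent. Then $G_i$ inherits the monotonicity of $C_i$, and the incentive increases as $i \to 1$.

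\textbf{Using Assumption~\ref{ass:asymmetry}(iii).} In the general case, write $a' = 1 + r'_0 D$. Assumption~\ref{ass:asymmetry}(iii) gives $1 \le a' \le a$. Then
\[
G_i = C_{D+1} a^{m} - C'_{D+1} a'^{m}, \qquad m = D-i+1,
\]
and
\[
G_{i} - G_{i+1} = C_{D+1} a^{m-1}(a-1) - C'_{D+1} a'^{m-1}(a'-1).
\]
Since $a' \le a$, both $a'^{m-1} \le a^{m-1}$ and $a'-1 \le a-1$ hold. So the difference is nonnegative whenever $C'_{D+1} \le C_{D+1}$. It is strictly positive if, in addition, $r_0 > 0$ and either $C'_{D+1} < C_{D+1}$ or $r'_0 < r_0$.

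\textbf{Main obstacle.} Without the condition $C'_{D+1} \le C_{D+1}$ at the line level, a large AI base cost could reverse the gradient of absolute gains. I would therefore phrase the reinforcement claim through the ratio
\[
\frac{C'_i}{C_i} = \frac{C'_{D+1}}{C_{D+1}} \left(\frac{a'}{a}\right)^{D-i+1}.
\]
This ratio is weakly decreasing as $i \to 1$ with no further hypothesis, because $a'/a \le 1$. The relative AI cost advantage therefore weakly grows toward the top, which is the precise sense in which Assumption~\ref{ass:asymmetry}(iii) reinforces the gradient.
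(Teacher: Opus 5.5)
Your proposal is correct and follows essentially the same route as the paper's proof. That route has three steps: strict monotonicity from the base $1+r_0D>1$; the substitution gain read off from the Human--AI Substitution Principle with AI-side terms held fixed; and the multiplier comparison $1+r'_0D\le 1+r_0D$ under Assumption~\ref{ass:asymmetry}(iii), with the same caveat that the absolute gap is amplified toward the top only when AI is no more expensive at the line-worker base. Your explicit computation of $G_i-G_{i+1}$ actually proves the ``weakly amplified at higher levels'' claim that the paper only asserts, and your ratio $C'_i/C_i$ gives a clean version of ``escalation no steeper'' that needs no condition on the base costs.
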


\noindent
\textbf{Interpretation.} From a cost perspective, higher levels are more attractive substitution targets because human cost $C_i$ grows rapidly toward the top. However, this substitution incentive is moderated by two competing forces:

\begin{itemize}
    \item \textbf{Cost Effect (Favors Substitution).} Since $C_i$ is largest for small $i$ (higher levels), the risk-adjusted gain $\tilde{C}_{ij} - \tilde{C}'_{ik}$ is typically larger at upper levels, making higher-level human agents more attractive candidates for replacement.

    \item \textbf{Risk Effect (Limits Substitution).} Higher-level roles involve strategic decision-making and non-routine judgment, attracting elevated AI risk penalties---large $\omega_1 R^{\mathrm{(rel)}}_{ik}$ (reliability) and $\omega_2 R^{\mathrm{(comp)}}_{ik}$ (compliance) from~\eqref{eq:AI_risk}---that raise $\tilde{C}'_{ik}$ and restrict AI substitution.
\end{itemize}

The observed pattern reflects the trade-off between rising cost-based substitution pressure and declining substitution feasibility near the top of the hierarchy. Figure~\ref{fig:hat-mechanism-b} illustrates how \(S(i)\) and \(F(i)\) can satisfy the single-crossing pattern that generates an intermediate-level peak in substitution likelihood.

\begin{figure}[htbp!]
  \centering
    \includegraphics[width=0.92\textwidth]{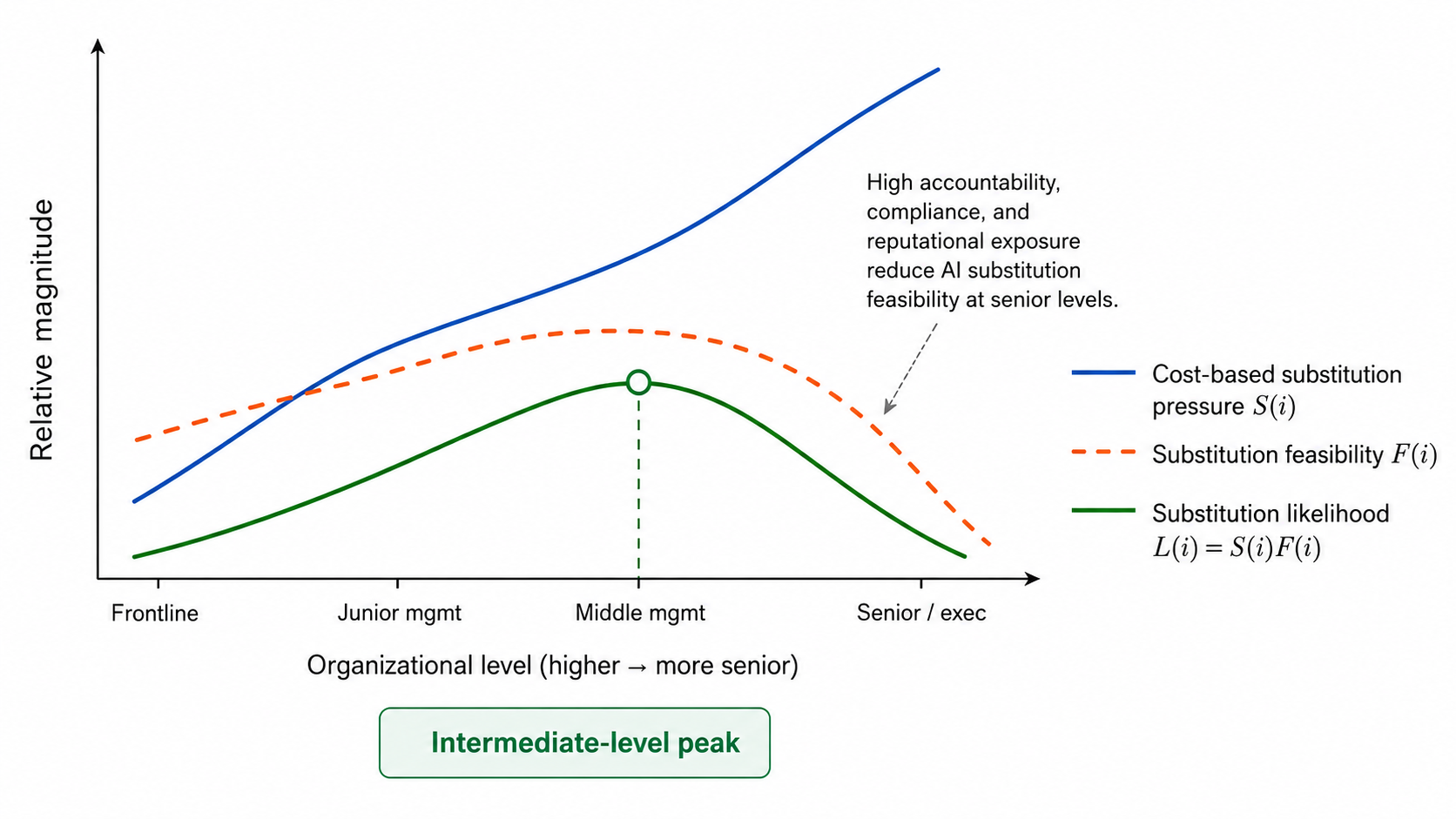}
    \caption{Illustration of the single-crossing mechanism behind middle-management vulnerability. The figure plots cost-based substitution pressure \(S(i)\), substitution feasibility \(F(i)\), and their product \(L(i)=S(i)F(i)\) across hierarchical levels. In the depicted case, \(L(i)\) reaches an intermediate-level peak, corresponding to the condition characterized in Corollary~\ref{cor:Middle-Management Vulnerability}.}
    \label{fig:hat-mechanism-b}
\end{figure}

\begin{styledcorollary}{Middle-Management Vulnerability}{Middle-Management Vulnerability}
Suppose \(D\ge 2\). Under the HAT cost specification, if the upward increase in cost-based substitution pressure is moderated by substitution-feasibility constraints that are strongest near the top of the hierarchy, then the likelihood of AI substitution need not be maximized at either the executive level or the line-worker level. Instead, under a single-crossing relation between cost-incentive decay and feasibility gains, there exists an intermediate level
\[
i^*\in\{2,\ldots,D\}
\]
at which substitution likelihood is maximized.
\end{styledcorollary}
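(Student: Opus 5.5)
I will formalize the corollary through the product decomposition shown in Figure~\ref{fig:hat-mechanism-b}. The substitution likelihood at level $i$ is $L(i)=S(i)\,F(i)$. Here $S(i)>0$ is the cost-based substitution pressure, for instance $S(i)=\max\{\tilde C_{i}-\tilde C'_{i},0\}$ or any increasing transform of it. $F(i)\in(0,1]$ is substitution feasibility, decreasing in the AI risk penalty $\lambda R'_{ik}$. The first step is to pin down the monotonicity of $S$. Theorem~\ref{thm:Level-Dependent Substitution Likelihood} shows that $C_i=C_{D+1}(1+r_0D)^{D-i+1}$ is strictly decreasing in $i$. Together with $r'_0\le r_0$ from Assumption~\ref{ass:asymmetry}(iii), the human--AI gap $C_{D+1}(1+r_0D)^{D-i+1}-C'_{D+1}(1+r'_0D)^{D-i+1}$ is then nonincreasing in $i$ whenever $C_{D+1}\ge C'_{D+1}$. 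So $S$ is maximized at $i=1$ and its growth toward the top is geometric. The hypothesis on feasibility is that $F$ is increasing in $i$ near the top, i.e.\ the risk penalties $\omega_1R^{(\mathrm{rel})}_{ik}$ and $\omega_2R^{(\mathrm{comp})}_{ik}$ are largest at $i=1$.

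Next I make the single-crossing condition precise in discrete form. I will work with the ratios $\rho_S(i)=S(i)/S(i+1)\ge 1$, the per-level decay of cost incentive going down, and $\rho_F(i)=F(i+1)/F(i)$, the per-level feasibility gain going down. The sign of $L(i+1)-L(i)$ equals the sign of $\rho_F(i)-\rho_S(i)$. Single crossing means the sequence $\rho_F(i)-\rho_S(i)$ changes sign at most once, from positive to negative, as $i$ runs from $1$ to $D$. Under that condition $L$ is unimodal on $\{1,\dots,D+1\}$, and its argmax $i^*$ is the first index with $\rho_F(i^*)\le\rho_S(i^*)$.

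The remaining step, which I expect to be the main obstacle, is ensuring that $i^*$ is interior, i.e.\ $i^*\in\{2,\dots,D\}$. This needs two boundary conditions that have to be read into the corollary's hypothesis. The first is $\rho_F(1)>\rho_S(1)$: feasibility constraints are ``strongest near the top'', so moving from level 1 to level 2 gains more feasibility than it loses cost incentive. This gives $L(2)>L(1)$. The second is $\rho_F(D)<\rho_S(D)$: near the bottom, feasibility is already nearly saturated, with $F(D+1)/F(D)$ close to $1$, while $S$ still decays by a factor of order $(1+r_0D)$. This gives $L(D)>L(D+1)$. The requirement $D\ge2$ is what makes the set $\{2,\dots,D\}$ nonempty. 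My plan is to state these two endpoint inequalities explicitly as the content of the moderation hypothesis, and to check the second using the explicit geometric factor from Theorem~\ref{thm:Depth--Cost Growth and Dominance}. Unimodality together with the two strict endpoint inequalities then places the maximizer strictly inside, which completes the argument. Existence of a maximizer is automatic because the domain is finite.
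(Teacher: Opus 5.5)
Your proposal is correct and follows essentially the same route as the paper's proof: $L(i)=S(i)F(i)$, the adjacent ratios $S(i)/S(i+1)$ and $F(i+1)/F(i)$ (the paper's $\alpha_i,\beta_i$), a single crossing of their quotient through $1$, and the endpoint conditions $\gamma_1>1$, $\gamma_D<1$, which force the maximizer into $\{2,\ldots,D\}$. The paper simply posits both endpoint inequalities as part of the hypothesis, so you do not need to check the bottom one via the factor $(1+r_0D)$. That check is also not a derivation: it relies on an assumption about $F$ near the bottom, and the geometric factor bounds only the nominal gap. Once AI risk penalties rise with level, the risk-adjusted ratio $S(D)/S(D+1)$ can fall below $1+r_0D$.
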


\noindent
\textbf{Interpretation.}
This corollary does not imply that middle-management roles are always the most vulnerable to AI substitution. Rather, it identifies the structural condition under which middle-management vulnerability emerges in the HAT framework. The cost-based incentive to substitute AI tends to become stronger as one moves upward in the hierarchy because human organizational costs escalate with hierarchical level. However, actual substitution feasibility can be lowest near the top because senior roles carry greater accountability, compliance exposure, reputational consequences, and strategic decision authority. Middle-management vulnerability arises when moving downward from the top initially produces a sufficiently large gain in substitution feasibility to offset the decline in cost-based substitution pressure, but beyond some intermediate level the additional feasibility gain is no longer large enough to compensate for the weaker cost-based incentive. Under this single-crossing pattern, substitution likelihood peaks at an intermediate level \(i^*\), rather than at the executive or line-worker level. Thus, the HAT model characterizes middle-management vulnerability as a conditional organizational mechanism: intermediate roles are most exposed when they combine substantial cost-based automation incentives with relatively fewer top-level feasibility constraints.

\subsubsection{Skill-Dependent Vulnerability and Protection Zones}

Vulnerability also varies with skill heterogeneity at the line-worker level, creating protection and vulnerability zones. Figure~\ref{fig:hat-mechanism-a} provides a visual representation of this threshold logic by plotting the risk-adjusted human and AI cost curves as functions of skill or capability.

\begin{styledtheorem}{Skill-Adjusted Substitution Condition}{Skill-Adjusted Substitution Condition}
Using the line-worker cost structures~\eqref{eq:line_worker_cost} and~\eqref{eq:AI_lineworker_parametric}, an AI agent $(D+1,k)$ replaces a human line worker $(D+1,j)$ if and only if
\[
\textup{Min\$}' + \tfrac{1}{2}\Delta\$' \cdot D
\cdot \mu'_{D+1,k} + \lambda R'_{D+1,k}
\;<\;
\textup{Min\$} + \tfrac{1}{2}\Delta\$ \cdot D
\cdot \mu_{D+1,j} + \lambda R_{D+1,j}.
\]
Under Assumption~\ref{ass:asymmetry}(i), $\Delta\$' \le \Delta\$$, so AI capability costs grow no faster than human skill costs.
\end{styledtheorem}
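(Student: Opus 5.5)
The plan is to reduce the statement to the Human--AI Substitution Principle (Theorem~\ref{thm:Human--AI Substitution Principle}) by specializing it to the line-worker level $i=D+1$. Take the pair consisting of human agent $(D+1,j)$ and AI agent $(D+1,k)$, and consider the perturbation $q_{D+1,j}\to q_{D+1,j}-\delta$, $q'_{D+1,k}\to q'_{D+1,k}+\delta$. That theorem already gives
\[
\Delta\mathcal{C}=\delta\bigl(\tilde{C}'_{D+1,k}-\tilde{C}_{D+1,j}\bigr).
\]
So replacement lowers cost, and by Theorem~\ref{thm:Threshold Substitution Rule} full substitution occurs, if and only if $\tilde{C}'_{D+1,k}<\tilde{C}_{D+1,j}$. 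Here ``replaces'' means that shifting weight from the human to the AI agent strictly lowers total risk-adjusted cost. In the two-agent comparison this is exactly the strict-sign case $\Delta_{D+1,k,j}<0$ of the threshold rule, and ties are excluded by the strict inequality.

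Next, expand both effective costs using the primitives. By \eqref{eq:human_effective_cost} and \eqref{eq:line_worker_cost},
\[
\tilde{C}_{D+1,j}=\textup{Min\$}+\tfrac12\Delta\$\cdot D\cdot\mu_{D+1,j}+\lambda R_{D+1,j}.
\]
By \eqref{eq:AI_effective_cost} and \eqref{eq:AI_lineworker_parametric}, with $\textup{Min\$}'=T_k/n_k+M'_k$,
\[
\tilde{C}'_{D+1,k}=\textup{Min\$}'+\tfrac12\Delta\$'\cdot D\cdot\mu'_{D+1,k}+\lambda R'_{D+1,k}.
\]
Substituting these into $\tilde{C}'_{D+1,k}<\tilde{C}_{D+1,j}$ yields exactly the displayed inequality. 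Both directions of the ``if and only if'' follow because the expansion is an identity.

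The final sentence of the statement is just Assumption~\ref{ass:asymmetry}(i) restated; equivalently $\beta'\le\beta$ in the notation of Theorem~\ref{thm:Skill-Weighted Cost Decomposition}. I would add one remark on its role. Fix $\mu'_{D+1,k}$ and view both sides as affine functions of capability or skill. The human side then has slope $\tfrac12\Delta\$ D$ in $\mu_{D+1,j}$, and the AI side has slope $\tfrac12\Delta\$' D\le\tfrac12\Delta\$ D$ in $\mu'_{D+1,k}$. This slope comparison sets up the threshold $\mu^*$ used afterward, but it is not needed for the equivalence itself.

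There is no real obstacle here. The only point needing care is interpretive: ``replaces'' must be read as the strict cost-decrease condition, not as a claim about the global optimum over all agents. The global optimum requires Theorem~\ref{thm:Global Optimal Allocation} and a minimum over every agent. I would state explicitly that the claim is the pairwise, local substitution condition.
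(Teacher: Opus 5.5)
Your proposal is correct and follows essentially the same route as the paper. The paper's proof also writes out the two risk-adjusted line-worker costs, applies the feasible transfer $q_{D+1,j}\to q_{D+1,j}-\delta$, $q'_{D+1,k}\to q'_{D+1,k}+\delta$ with $0<\delta\le q_{D+1,j}$ to get $\Delta\mathcal{C}=\delta(\tilde{C}'_{D+1,k}-\tilde{C}_{D+1,j})$, and substitutes to obtain the inequality, treating the final sentence as a restatement of Assumption (i); the only cosmetic difference is that you cite the Human--AI Substitution Principle instead of redoing that perturbation inline.
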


\noindent
\textbf{Interpretation.} Substitution at the line-worker level incorporates baseline costs ($\textup{Min\$}'$ vs.\ $\textup{Min\$}$), depth-scaled skill costs ($\tfrac{1}{2}\Delta\$' \cdot D \cdot \mu'_{D+1,k}$ vs.\ $\tfrac{1}{2}\Delta\$ \cdot D \cdot \mu_{D+1,j}$), and risk differentials. The $D$ factor reflects that deeper organizations demand higher capability at every level. An AI agent must outperform on the combined risk-adjusted metric, not merely on raw cost.

\begin{styledcorollary}{Skill-Dependent Human Protection and Vulnerability Zone}{Skill-Dependent Human Protection and Vulnerability Zone}
Fix an AI agent $(D+1,k)$ with risk-adjusted effective cost
\[
\tilde{C}'_{D+1,k}
= \textup{Min\$}' + \tfrac{1}{2}\Delta\$' \cdot D
\cdot \mu'_{D+1,k} + \lambda R'_{D+1,k}.
\]
Holding \(R_{D+1,j}\), \(R'_{D+1,k}\), and the AI capability level \(\mu'_{D+1,k}\) fixed, suppose the human line-worker risk-adjusted cost
\[
\tilde{C}_{D+1,j}
= \textup{Min\$} + \tfrac{1}{2}\Delta\$ \cdot D
\cdot \mu_{D+1,j} + \lambda R_{D+1,j}
\]
is strictly increasing in \(\mu_{D+1,j}\), which holds when \(\Delta\$>0\) and \(D\ge 1\). Then there exists a threshold, possibly outside the feasible skill range,
\[
\mu^* =
\frac{2\bigl[\textup{Min\$}' - \textup{Min\$}
+ \lambda(R'_{D+1,k} - R_{D+1,j})\bigr]}
{\Delta\$ \cdot D}
+
\frac{\Delta\$'}{\Delta\$}\cdot \mu'_{D+1,k}
\]
such that:
\[
\mu_{D+1,j} < \mu^*
\;\Longrightarrow\;
\tilde{C}_{D+1,j} < \tilde{C}'_{D+1,k}
\qquad(\text{protection zone}),
\]
\[
\mu_{D+1,j} > \mu^*
\;\Longrightarrow\;
\tilde{C}_{D+1,j} > \tilde{C}'_{D+1,k}
\qquad(\text{vulnerability zone}).
\]
Equivalently, human vulnerability to substitution depends on how rapidly risk-adjusted human compensation grows relative to risk-adjusted AI capability cost as skill increases.
\end{styledcorollary}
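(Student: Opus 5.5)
The plan is to reduce the corollary to the sign of a single affine function of the human skill variable. Define
\[
g(\mu) \;=\; \textup{Min\$} + \tfrac{1}{2}\Delta\$\cdot D\cdot \mu + \lambda R_{D+1,j} \;-\; \tilde{C}'_{D+1,k},
\]
where $\mu$ stands for $\mu_{D+1,j}$. The right-hand term $\tilde{C}'_{D+1,k}$ is a constant, because $R_{D+1,j}$, $R'_{D+1,k}$ and $\mu'_{D+1,k}$ are held fixed. By construction $g(\mu_{D+1,j}) = \tilde{C}_{D+1,j} - \tilde{C}'_{D+1,k}$, so the protection and vulnerability zones are exactly the sets $\{g<0\}$ and $\{g>0\}$. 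This is the same comparison that drives the Skill-Adjusted Substitution Condition (Theorem~\ref{thm:Skill-Adjusted Substitution Condition}) and the Threshold Substitution Rule (Theorem~\ref{thm:Threshold Substitution Rule}).

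Next I use the hypothesis $\Delta\$>0$, $D\ge 1$. It gives $g$ the positive slope $\tfrac12\Delta\$\cdot D$, so $g$ is strictly increasing and affine on all of $\mathbb{R}$. It therefore has a unique root $\mu^*$, with $g<0$ to the left of $\mu^*$ and $g>0$ to the right. To compute the root, I expand $\tilde{C}'_{D+1,k}$ using~\eqref{eq:AI_lineworker_parametric} and~\eqref{eq:AI_effective_cost}, set $g(\mu^*)=0$, and solve:
\[
\tfrac12\Delta\$\, D\,\mu^* \;=\; \textup{Min\$}'-\textup{Min\$} + \tfrac12\Delta\$'\, D\,\mu'_{D+1,k} + \lambda\bigl(R'_{D+1,k}-R_{D+1,j}\bigr).
\]
Dividing by $\tfrac12\Delta\$\, D$ gives the stated expression for $\mu^*$. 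In the middle term the factor $D$ cancels, which leaves $\frac{\Delta\$'}{\Delta\$}\,\mu'_{D+1,k}$.

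Two remarks complete the argument. First, $\mu^*$ is defined as a real number with no restriction to the feasible range $\mu\ge 0$. If $\mu^*\le 0$, the protection zone is empty among feasible skills, and if $\mu^*$ exceeds the largest realistic skill level, the vulnerability zone is empty. This is the meaning of ``possibly outside the feasible skill range,'' and the two implications hold vacuously where appropriate. Second, the ``equivalently'' sentence is interpretive. The sign of $\tilde{C}_{D+1,j}-\tilde{C}'_{D+1,k}$ as skill rises is governed by the human slope $\tfrac12\Delta\$ D$ against the fixed AI cost level. Assumption~\ref{ass:asymmetry}(i) ($\Delta\$'\le\Delta\$$) additionally ensures that the coefficient $\Delta\$'/\Delta\$\le 1$, so improvements in AI capability shift $\mu^*$ by at most one-for-one.

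I do not expect a genuine obstacle, since this is a one-variable linear inequality. The only point needing care is the algebra that cancels $D$ in the capability term and keeps the signs of the risk differential straight, together with stating clearly that strict monotonicity, which needs $\Delta\$>0$, is what rules out a degenerate case in which $g$ is constant and no threshold exists.
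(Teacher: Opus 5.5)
Your proposal is correct and follows essentially the same route as the paper's proof. Both set the two affine risk-adjusted costs equal and solve for the skill level, with $D$ cancelling in the AI capability term, and both use the strict monotonicity of the human cost ($\Delta\$>0$, $D\ge 1$) to sign the cost difference on either side of $\mu^*$. Packaging this through the difference function $g$ is only a cosmetic change.
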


\noindent
\textbf{Interpretation.} The relationship between skill and substitution is fundamentally asymmetric. For humans, higher skill requires scarce talent and accumulated experience, so risk-adjusted cost rises with skill. For AI, per-deployment cost grows more weakly with capability under Assumption~\ref{ass:asymmetry}(i), since \(\Delta\$'\le \Delta\$\). The threshold \(\mu^*\) depends on baseline cost differences, organizational depth \(D\), AI capability \(\mu'_{D+1,k}\), and the risk differential \(\lambda(R'_{D+1,k}-R_{D+1,j})\). Larger AI risk penalties shift \(\mu^*\) upward, expanding the human protection zone, while larger human risk shifts \(\mu^*\) downward, expanding the vulnerability zone. Organizational depth affects the threshold through the slope of human skill cost and the scaling of the baseline and risk-differential terms, so its effect on \(\mu^*\) depends on parameter values. Workers below \(\mu^*\) remain risk-adjusted-cost-efficient relative to the fixed AI agent, while workers above \(\mu^*\) face substitution pressure. Whether highly skilled workers are protected or vulnerable is therefore task-dependent and cannot be resolved without domain-specific parameters.

The middle-management vulnerability and skill-dependent protection zones establish heterogeneous substitution patterns across hierarchical levels and skill distributions. However, substitution is further constrained by risk considerations and organizational requirements, which we analyze next.

\subsection{Risk, Coordination, and the Limits of Substitution}
\label{sec:Risk, Coordination, and the Limits of Substitution}

While the substitution principle establishes conditions under which AI replaces human labor, real-world organizations face risk constraints, regulatory requirements, and coordination demands that limit the extent of automation. This section characterizes how risk penalties and explicit constraints create hybrid human--AI structures and bound the scope of full automation.

\subsubsection{Risk-Adjusted Substitution and Endogenous Constraints}

Since risk is embedded in agent cost primitives $\tilde{C}_{ij} = C_{ij} + \lambda R_{ij}$ and $\tilde{C}'_{ik} = C'_{ik} + \lambda R'_{ik}$, all substitution decisions inherently operate on risk-adjusted grounds. We now make the structure of AI risk explicit.

\begin{styledtheorem}{Risk-Adjusted Substitution Principle}{Risk-Adjusted Substitution Principle}
Since risk is embedded in individual agent cost primitives via~\eqref{eq:human_effective_cost}
and~\eqref{eq:AI_effective_cost}, the substitution condition from Theorem~\ref{thm:Human--AI Substitution Principle} already operates on risk-adjusted grounds:
\[
\tilde{C}'_{ik} < \tilde{C}_{ij}
\;\iff\;
C'_{ik} + \lambda R'_{ik} < C_{ij} + \lambda R_{ij}.
\]
Expanding $R'_{ik}$ via~\eqref{eq:AI_risk}, the AI risk-adjusted cost is
\[
\tilde{C}'_{ik}
= C'_{ik}
+ \lambda\!\left(\omega_1 R^{\mathrm{(rel)}}_{ik}
+ \omega_2 R^{\mathrm{(comp)}}_{ik}
+ \omega_3 R^{\mathrm{(rep)}}_{ik}\right).
\]
Even when $C'_{ik} < C_{ij}$, so AI is cheaper on a nominal basis, substitution fails to reduce risk-adjusted cost whenever
\[
\lambda\bigl(\omega_1 R^{\mathrm{(rel)}}_{ik}
+ \omega_2 R^{\mathrm{(comp)}}_{ik}
+ \omega_3 R^{\mathrm{(rep)}}_{ik}
- R_{ij}\bigr)
\;\ge\; C_{ij} - C'_{ik}.
\]
\end{styledtheorem}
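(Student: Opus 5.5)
The proof is a direct unpacking of definitions combined with the perturbation identity of Theorem~\ref{thm:Human--AI Substitution Principle}. First, recall that the risk-adjusted costs are defined at the agent level by \eqref{eq:human_effective_cost} and \eqref{eq:AI_effective_cost}. The total cost $\mathcal{C}$ in \eqref{eq:joint_cost_expanded} is linear in these primitives. The perturbation $q_{ij}\to q_{ij}-\delta$, $q'_{ik}\to q'_{ik}+\delta$ then yields $\Delta\mathcal{C}=\delta(\tilde{C}'_{ik}-\tilde{C}_{ij})$ exactly as before. No separate risk-handling step is needed. Substituting the definitions of $\tilde{C}'_{ik}$ and $\tilde{C}_{ij}$ into the sign condition gives the stated equivalence $\tilde{C}'_{ik}<\tilde{C}_{ij}\iff C'_{ik}+\lambda R'_{ik}<C_{ij}+\lambda R_{ij}$.

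Second, I will expand $R'_{ik}$ using the three-component decomposition \eqref{eq:AI_risk} and distribute $\lambda$. This is purely algebraic and produces the displayed formula for $\tilde{C}'_{ik}$.

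Third, for the failure condition I take the negation of the strict inequality, namely $\tilde{C}'_{ik}\ge\tilde{C}_{ij}$. In that case $\Delta\mathcal{C}=\delta(\tilde{C}'_{ik}-\tilde{C}_{ij})\ge 0$ for every $\delta>0$, so the perturbation does not reduce risk-adjusted cost. Rearranging $C'_{ik}+\lambda(\omega_1R^{\mathrm{(rel)}}_{ik}+\omega_2R^{\mathrm{(comp)}}_{ik}+\omega_3R^{\mathrm{(rep)}}_{ik})\ge C_{ij}+\lambda R_{ij}$ by moving $C'_{ik}$ right and $\lambda R_{ij}$ left gives exactly the displayed inequality. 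That inequality is equivalent to, and not merely sufficient for, non-reduction of cost. The hypothesis $C'_{ik}<C_{ij}$ only ensures that the right-hand side $C_{ij}-C'_{ik}$ is positive. Consequently the condition is nontrivial: it requires a strictly positive risk differential, so $\lambda>0$ and $R'_{ik}>R_{ij}$. I will note this explicitly to tie the statement back to the last sentence of Theorem~\ref{thm:Human--AI Substitution Principle}.

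The only point requiring care is the equality case of the failure condition. There substitution yields $\Delta\mathcal{C}=0$, which corresponds to the indifference face in Theorem~\ref{thm:Threshold Substitution Rule}. I will state that ``fails to reduce'' is read as ``does not strictly decrease,'' which makes the weak inequality $\ge$ correct. Otherwise there is no real obstacle; the argument is bookkeeping on top of linearity of $\mathcal{C}$.
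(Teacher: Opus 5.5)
Your proposal is correct and follows essentially the same route as the paper: invoke the perturbation identity $\Delta\mathcal{C}=\delta(\tilde{C}'_{ik}-\tilde{C}_{ij})$ from Theorem~\ref{thm:Human--AI Substitution Principle}, substitute the definitions of the risk-adjusted costs, negate the strict inequality and rearrange, and expand $R'_{ik}$ via \eqref{eq:AI_risk}. Your two extra remarks go slightly beyond what the paper states explicitly, and both are right. First, the condition forces $\lambda>0$ and $R'_{ik}>R_{ij}$ when $C'_{ik}<C_{ij}$. Second, the weak inequality corresponds to reading ``fails to reduce'' as ``does not strictly decrease''; here you should note that feasibility of the perturbation requires $0<\delta\le q_{ij}$.
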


\noindent
\textbf{Interpretation.} Substitution requires AI to be cheaper after accounting for three risk penalties weighted by $\lambda \ge 0$: \textbf{reliability risk} ($R^{\mathrm{(rel)}}_{ik}$, weighted by $\omega_1$) from errors or distributional shift; \textbf{compliance risk} ($R^{\mathrm{(comp)}}_{ik}$, weighted by $\omega_2$) from regulatory constraints; and \textbf{reputational risk} ($R^{\mathrm{(rep)}}_{ik}$, weighted by $\omega_3$) from trust loss or brand damage (see Figure~\ref{fig:hat-mechanism-c}). In regulated industries, large $\omega_2$ can reverse the cost ordering and prevent substitution. Risk considerations create a \emph{buffer region} where humans are retained despite nominal AI cost advantages, providing an endogenous explanation for cautious AI adoption in high-stakes environments.

\subsubsection{Constrained Optimization and Hybrid Organizations}

We now incorporate explicit constraints and show how they—together with endogenous risk heterogeneity—can produce constrained hybrid allocations.

\begin{styledtheorem}{Constrained Hybrid Optima}{Constrained Interior Optima}
Let \(\mathcal{C}\) be defined over the simplex \(\mathcal{Q}\) as in Theorem~\ref{thm:Global Optimal Allocation}. Suppose additional linear constraints are imposed, such as minimum human allocation, regulatory bounds, capacity constraints, or skill requirements, so that the feasible set becomes
\[
\mathcal{Q}_c
=
\{(q,q')\in\mathcal{Q}: A(q,q')\le b,\; E(q,q')=d\},
\]
where \(\mathcal{Q}_c\) is nonempty. Then the constrained problem
\[
\min_{(q,q')\in\mathcal{Q}_c}\mathcal{C}(q,q')
\]
has a global solution, and at least one global minimizer is an extreme point of \(\mathcal{Q}_c\). However, an extreme point of \(\mathcal{Q}_c\) need not be an extreme point of the original simplex \(\mathcal{Q}\). Consequently, when the additional constraints exclude the unconstrained minimum-cost vertex or impose positive lower or upper bounds across agent types, an optimal constrained allocation
may assign positive weight to multiple agents, including both human and AI agents.

\noindent\textbf{Remark.}
Under the AI risk decomposition \eqref{eq:AI_risk}, a large compliance, reliability, or reputational risk component can eliminate a nominal AI cost advantage:
\[
\tilde{C}'_{ik}=C'_{ik}+\lambda R'_{ik}
\ge
\tilde{C}_{ij}=C_{ij}+\lambda R_{ij}
\]
even when \(C'_{ik}<C_{ij}\). In the unconstrained single-task problem, this risk adjustment changes which vertex is optimal; by itself it does not generally create an interior mixture unless there is a tie. In constrained or heterogeneous multi-task settings, however, risk-adjusted cost heterogeneity can sustain hybrid human--AI organizational structures without imposing an exogenous minimum-human-fraction constraint.
\end{styledtheorem}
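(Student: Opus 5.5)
The plan is to treat the constrained problem as a finite-dimensional linear program and apply standard polyhedral theory. First, $\mathcal{Q}_c$ is the intersection of the simplex $\mathcal{Q}$ with finitely many closed half-spaces and hyperplanes. It is therefore a polyhedron, and because $\mathcal{Q}_c\subseteq\mathcal{Q}$ and $\mathcal{Q}$ is bounded (Theorem~\ref{thm:Normalization, Feasibility, and Simplex Structure}), it is a polytope. By hypothesis it is nonempty, so it is compact. The objective $\mathcal{C}(q,q')$ is linear in $(q,q')$, with coefficients the risk-adjusted costs $\tilde{C}_{ij},\tilde{C}'_{ik}$, and hence continuous. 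Weierstrass then gives existence of a global minimizer.

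Second, to locate a minimizer at an extreme point, I will use the Minkowski (Krein--Milman) representation of a nonempty polytope as the convex hull of its finitely many vertices; vertices exist because the polytope is bounded and so contains no lines. Write any minimizer $x^*=\sum_v \theta_v v$ with $\theta_v\ge 0$ and $\sum_v\theta_v=1$. Linearity gives $\mathcal{C}(x^*)=\sum_v\theta_v\mathcal{C}(v)\ge\min_v\mathcal{C}(v)$. Hence some vertex attains the optimum. This mirrors the argument behind Theorem~\ref{thm:Global Optimal Allocation}, with the simplex replaced by $\mathcal{Q}_c$.

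Third, for the claim that vertices of $\mathcal{Q}_c$ need not be vertices of $\mathcal{Q}$, one explicit counterexample suffices. Take one human agent and one AI agent with $\tilde{C}'_{ik}<\tilde{C}_{ij}$, and impose a minimum-human-fraction constraint $q_{ij}\ge\eta$ with $\eta\in(0,1)$. Then $\mathcal{Q}_c$ is the segment $\{(q_{ij},q'_{ik}) : q_{ij}+q'_{ik}=1,\ q_{ij}\ge\eta\}$. Its endpoint $(\eta,1-\eta)$ is a vertex of $\mathcal{Q}_c$ but not of $\mathcal{Q}$. It is also the unique minimizer, since the cost $\tilde{C}'_{ik}+q_{ij}(\tilde{C}_{ij}-\tilde{C}'_{ik})$ is strictly increasing in $q_{ij}$. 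This minimizer assigns positive weight to both a human and an AI agent.

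More generally, if the constraints exclude the unconstrained minimum-cost vertex, every feasible point avoids that vertex. An optimal vertex of $\mathcal{Q}_c$ is then determined by active constraints from $A,E$ together with nonnegativity. Such a vertex can have up to $1+\mathrm{rank}$ of the active added constraints positive coordinates, by the basic-feasible-solution count. This justifies the wording ``may assign positive weight to multiple agents.''

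The Remark then follows directly from Theorem~\ref{thm:Risk-Adjusted Substitution Principle}. Choosing $\lambda R'_{ik}-\lambda R_{ij}\ge C_{ij}-C'_{ik}$ reverses the ordering of the risk-adjusted costs. In the unconstrained problem, Theorem~\ref{thm:Global Optimal Allocation} shows this only changes which vertex is optimal, unless there is a tie. For the multi-task hybrid claim, I would exhibit two tasks with different risk weights $\omega$ so that AI is optimal for one task and a human for the other; the organization is then hybrid without any minimum-human constraint.

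I expect the main obstacle to be interpretive rather than technical. The statement is somewhat informal (``may assign,'' ``multi-task settings''), and the single-task model in the paper has no multi-task structure. I would handle the Remark by explicitly defining the multi-task cost as a sum of independent single-task problems and arguing separability.
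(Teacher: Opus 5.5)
Your proposal is correct and follows essentially the same route as the paper. Both arguments use compactness of $\mathcal{Q}_c$ and linearity of $\mathcal{C}$ to get an extreme-point minimizer; you prove the extreme-point fact from the vertex (Minkowski) representation, while the paper cites the fundamental theorem of linear programming. Both then use a minimum-human-fraction constraint to exhibit a hybrid vertex of $\mathcal{Q}_c$ that is not a vertex of $\mathcal{Q}$, and both use the risk-reversal inequality for the Remark. Your version is slightly sharper than the paper's in two places: your two-agent example checks that the constraint actually binds at the unique optimum, where the paper only argues ``if this constraint binds,'' and you make the multi-task claim precise as a separable sum of single-task problems.
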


\noindent
\textbf{Interpretation.}
The unconstrained HAT problem selects a minimum-risk-adjusted-cost agent over the original simplex. Additional feasibility constraints change the geometry of the allocation problem: the optimum is still attained at an extreme point of the constrained feasible set, but that point need not be a vertex of the original simplex. As a result, binding constraints such as minimum human presence, regulatory limits, capacity bounds, or skill requirements can make hybrid allocations optimal, with both human and AI agents receiving positive task weight. Risk adjustment plays a complementary role: high reliability, compliance, or reputational risk can eliminate AI's nominal cost advantage for some roles. Thus, constrained and risk-heterogeneous environments provide a structural mechanism for persistent partial substitution and hybrid human--AI organizations.

\begin{styledcorollary}{Minimum Human Fraction Constraint}{Minimum Human Fraction Constraint}
Let $\sum_{(i,j)\in\mathcal H} q_{ij}\ge \gamma$, $\gamma\in(0,1]$, where \(\mathcal H\) indexes all human agents. If every human agent has risk-adjusted cost strictly greater than at least one feasible AI agent, then every optimal allocation assigns human weight only up to the required boundary:
\[
\sum_{(i,j)\in\mathcal H} q_{ij}=\gamma.
\]
If the corresponding AI and human agents are tied in risk-adjusted cost, then there exists an optimal allocation satisfying the boundary condition, but non-boundary tied allocations may also be optimal.
\end{styledcorollary}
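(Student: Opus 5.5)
The plan is an exchange argument on the linear objective over the constrained set $\mathcal{Q}_\gamma=\{(q,q')\in\mathcal{Q}:\sum_{(i,j)\in\mathcal H}q_{ij}\ge\gamma\}$. This set is a nonempty polytope, since $\gamma\le 1$ allows the allocation that puts all weight on one human. By Theorem~\ref{thm:Constrained Interior Optima} (or simply by compactness), a minimizer exists. Write $H(q)=\sum_{(i,j)\in\mathcal H}q_{ij}$. For the first claim, I will suppose an optimal allocation $(q,q')$ has $H(q)>\gamma$ and derive a contradiction.

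The key step is the perturbation from Theorem~\ref{thm:Human--AI Substitution Principle}. Because $H(q)>\gamma\ge 0$ with $\gamma>0$, some human $(i,j)$ has $q_{ij}>0$. By hypothesis, this human has an AI agent $(i',k)$ with $\tilde C'_{i'k}<\tilde C_{ij}$. I will move $\delta=\min\{q_{ij},\,H(q)-\gamma\}>0$ of weight from $q_{ij}$ to $q'_{i'k}$. The perturbation keeps nonnegativity and the simplex sum, and it leaves $H\ge\gamma$, so it stays in $\mathcal{Q}_\gamma$. It changes cost by $\delta(\tilde C'_{i'k}-\tilde C_{ij})<0$, contradicting optimality. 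Hence $H(q)=\gamma$ at every optimum.

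I expect the main obstacle to be the reading of the hypothesis. I will interpret ``every human agent has risk-adjusted cost strictly greater than at least one feasible AI agent'' per human: for each $(i,j)$ there is some AI agent with $\tilde C'_{i'k}<\tilde C_{ij}$. The natural sufficient case is that some single AI agent undercuts every human. Either way, the exchange only needs a cheaper AI for the particular human carrying positive weight, so both readings suffice. The paper's Theorem~\ref{thm:Human--AI Substitution Principle} is stated for same-level pairs, but its computation is level-independent, since the objective is linear and separable. I will note this explicitly. ``Feasible'' means the AI weight is unconstrained apart from the simplex, which holds in $\mathcal{Q}_\gamma$.

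For the tie case, suppose the relevant AI and human agents have equal risk-adjusted cost, so $\tilde C'_{i'k}\le\tilde C_{ij}$ for the humans carrying weight. I will take any optimum and apply the same transfer repeatedly until $H=\gamma$. Each step changes cost by $\delta(\tilde C'_{i'k}-\tilde C_{ij})\le 0$, so optimality is preserved, and finitely many steps reach the boundary. This gives an optimal allocation with $H=\gamma$. To show non-boundary optima may exist, I will exhibit a simple instance: one human and one AI with equal $\tilde C$, the global minimum cost. Then the allocation $q_{ij}=1$ is optimal with $H=1>\gamma$ whenever $\gamma<1$. This confirms the ``may'' clause.
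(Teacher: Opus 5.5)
Your proposal is correct and follows essentially the same route as the paper. It uses the same exchange argument, shifting $\delta=\min\{q_{ij},\sum_{(a,b)\in\mathcal H}q_{ab}-\gamma\}$ from a weighted human to a strictly cheaper AI agent to contradict optimality whenever human weight exceeds $\gamma$, and the same repeated zero-cost transfer in the tie case. Your explicit one-human/one-AI instance for the non-boundary clause is a small but useful addition, since the paper only asserts that such tied allocations may remain optimal.
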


\noindent
\textbf{Interpretation.} When every human agent is strictly more expensive in risk-adjusted cost than a feasible AI alternative, the minimum-human-fraction constraint binds: human participation is reduced to the lowest allowable level \(\gamma\). Thus, regulatory, policy, or organizational constraints set a floor below which automation cannot proceed, even when AI has a strict risk-adjusted cost advantage. If humans and AI tie in risk-adjusted cost, a boundary allocation with human weight \(\gamma\) remains optimal, but additional human participation may also be optimal because it does not increase total risk-adjusted cost.

\subsubsection{Global Condition for Full Automation}

\textbf{Definition.}
\textit{Full automation} occurs when, for every modeled organizational task \(\tau\in\mathcal{T}\), there exists an optimal unconstrained HAT allocation that assigns the task entirely to an AI agent. A stronger form, \textit{strict full automation}, occurs when every cost-minimizing allocation assigns each task only to AI agents.

\begin{styledtheorem}{Condition for AI-Dominant Extreme-Point Allocation}{AI-Dominant Extreme-Point Allocation}
In the unconstrained HAT allocation problem for a given task \(\tau\in\mathcal{T}\), let
\[
m_H(\tau)=\min_{i,j}\widetilde C_{ij}(\tau),
\qquad
m_A(\tau)=\min_{i,k}\widetilde C'_{ik}(\tau).
\]
Then an AI-only extreme-point allocation is optimal if and only if
\[
m_A(\tau)\le m_H(\tau).
\]
Moreover, the optimum is AI-dominant, in the sense that no human agent is optimal, if and only if 
$m_A(\tau)<m_H(\tau)$. If \(m_A(\tau)=m_H(\tau)\), then an AI-only extreme-point allocation is optimal but not unique, since at least one human allocation also attains the same minimum risk-adjusted cost.
\end{styledtheorem}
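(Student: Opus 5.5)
The plan is to reduce everything to Theorem~\ref{thm:Global Optimal Allocation} applied to the single task \(\tau\). The objective \(\mathcal{C}\) is linear over the simplex \(\mathcal{Q}\), so its minimum value is
\[
m(\tau)=\min\{m_H(\tau),m_A(\tau)\}.
\]
The set of minimizers is the convex hull of the vertices whose risk-adjusted cost equals \(m(\tau)\). I will also use the elementary identity \(\mathcal{C}(q,q')=\sum_{i,j}\widetilde C_{ij}q_{ij}+\sum_{i,k}\widetilde C'_{ik}q'_{ik}\ge m(\tau)\). Equality holds exactly when all positive weight sits on agents of cost \(m(\tau)\), and this gives a self-contained characterization of the optimal set.

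\textbf{First claim (AI-only vertex optimal iff \(m_A\le m_H\)).} The AI-only vertex \(q'_{ik}=1\) has cost \(\widetilde C'_{ik}\). The best such vertex has cost \(m_A(\tau)\), attained because the index set is finite.

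If \(m_A\le m_H\), then \(m_A=m(\tau)\), so that vertex attains the global minimum and is optimal. Conversely, if some AI-only extreme point is optimal, its cost \(\widetilde C'_{ik}\) equals \(m(\tau)\le m_H\), so \(m_A\le \widetilde C'_{ik}\le m_H\).

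I will read ``AI-only extreme-point allocation'' as a vertex \(e'_{ik}\). If the intended reading is any allocation supported on AI agents only, the same identity shows that such an allocation is optimal iff it is supported on AI agents of cost \(m(\tau)\). That again requires \(m_A=m(\tau)\le m_H\).

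\textbf{Second claim (no human optimal iff \(m_A<m_H\)).} I interpret ``no human agent is optimal'' as: every optimal allocation has \(q_{ij}=0\) for all humans, or equivalently no human vertex is optimal. These two are equivalent by the equality characterization.

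If \(m_A<m_H\), then every human has \(\widetilde C_{ij}\ge m_H>m(\tau)\). Any positive human weight \(q_{ij}>0\) then gives the strict inequality \(\mathcal{C}>m(\tau)\), so no optimum uses humans.

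Conversely, if \(m_A\ge m_H\), then \(m_H=m(\tau)\). The human vertex attaining \(m_H\) is then optimal, which contradicts AI-dominance. So AI-dominance forces \(m_A<m_H\).

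\textbf{Tie case.} If \(m_A=m_H\), the first claim gives an optimal AI vertex. The human vertex achieving \(m_H\) also has cost \(m(\tau)\), so it is optimal too. These are distinct points of \(\mathcal{Q}\), so the optimum is not unique; indeed the whole segment between them is optimal by convexity.

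\textbf{Main obstacle.} There is no hard analysis here. The delicate step is fixing the interpretation of ``AI-only extreme-point allocation'' and ``no human agent is optimal'' so that the iff statements are exactly true; I will state those readings explicitly before the argument. I will also note that finiteness of the agent sets (\(e_i,e'_i<\infty\)) guarantees the minima \(m_H\) and \(m_A\) are attained. If one agent set is empty, I will adopt the convention that its minimum is \(+\infty\), and the statement still holds.
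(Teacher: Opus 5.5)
Your proposal is correct and follows essentially the same route as the paper. Both arguments reduce to Theorem~\ref{thm:Global Optimal Allocation} with minimum value $m(\tau)=\min\{m_H(\tau),m_A(\tau)\}$, characterize optimality of an AI vertex by $m_A(\tau)=m(\tau)$, and obtain the converse of strict AI-dominance by noting that $m_A(\tau)\ge m_H(\tau)$ makes the best human vertex optimal; your explicit choice of interpretations and your convention for empty agent sets are harmless refinements that the paper leaves implicit.
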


\noindent
\textbf{Interpretation.}
For a given task, AI dominance occurs when the best AI agent has lower risk-adjusted effective cost than the best human agent. This condition is stronger than nominal cost advantage because AI risk penalties \(\lambda R'_{ik}\) may offset lower direct AI cost. Thus, high organizational risk sensitivity \(\lambda\) or large reliability, compliance, and reputational weights \((\omega_1,\omega_2,\omega_3)\) can prevent AI dominance even when \(C'_{ik}<C_{ij}\).

The theorem also clarifies the organization-wide limiting case. If AI agents are replicable across a set of separable organizational tasks \(\mathcal{T}\), then full automation is optimal whenever
\[
m_A(\tau)\le m_H(\tau)
\qquad
\text{for every } \tau\in\mathcal{T}.
\]
Strict full automation occurs when
\[
m_A(\tau)<m_H(\tau)
\qquad
\text{for every } \tau\in\mathcal{T}.
\]
Human work can remain optimal when at least one task has a strictly lower human risk-adjusted cost or when humans tie AI at the minimum; outside the unconstrained setting, human participation may also persist because external constraints require it. Thus, full automation is not assumed by the HAT framework; it is the limiting case in which AI is cost-minimizing task by task under risk-adjusted optimization.

We now examine how these structures evolve dynamically and respond to parameter perturbations.

\subsection{Comparative Statics and Robustness}
\label{sec:Comparative Statics and Robustness}

We extend the analysis to dynamic settings and examine the stability of organizational structures under parameter perturbations. These results characterize the trajectory of automation under technological progress and reveal that organizational change occurs through discrete transitions rather than smooth adjustment.

\subsubsection{Dynamics of Automation}

We first analyze how declining AI risk-adjusted costs drive organizational transformation over time.

\begin{styledtheorem}{Monotonic Automation Path}{Monotonic Automation Path}
Holding human risk-adjusted costs, organizational constraints, and switching costs fixed, let \(\tilde{C}'_{ik}(t)\) denote the risk-adjusted cost of AI agents at time \(t\), and suppose \(\tfrac{d}{dt}\tilde{C}'_{ik}(t) \le 0\) for all \((i,k)\), with strict inequality for at least one \((i,k)\). Then the optimal allocation \((q^*(t), q'^*(t))\) evolves weakly monotonically toward AI-dominated extreme points of \(\mathcal{Q}\).
\end{styledtheorem}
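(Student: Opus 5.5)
Since the objective is linear and the feasible set is the simplex, I would reduce the statement to Theorem~\ref{thm:Global Optimal Allocation} applied pointwise in time. For each $t$, define $m_H=\min_{i,j}\tilde C_{ij}$, which is constant in $t$ because human risk-adjusted costs are held fixed, and $m_A(t)=\min_{i,k}\tilde C'_{ik}(t)$. By Theorem~\ref{thm:Global Optimal Allocation}, the optimal set at time $t$ is the convex hull of the vertices whose agents attain $\min\{m_H,m_A(t)\}$. Each $\tilde C'_{ik}(t)$ is weakly decreasing, and a pointwise minimum of weakly decreasing functions is weakly decreasing. Hence $m_A(t)$ is weakly decreasing, and the gap $m_A(t)-m_H$ is weakly decreasing in $t$.

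The key step is to make ``evolves weakly monotonically toward AI-dominated extreme points'' precise, and I will do this in two ways.

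First, I will prove a sign-preservation property using Theorem~\ref{thm:AI-Dominant Extreme-Point Allocation}. If an AI-only vertex is optimal at $t_0$, then $m_A(t_0)\le m_H$. This gives $m_A(t)\le m_H$ for all $t\ge t_0$, so an AI-only vertex stays optimal. Likewise, strict AI dominance at $t_0$ persists. It follows that the set of times at which AI-only allocations are optimal is an up-set, namely an interval $[t^\*,\infty)$ or $(t^\*,\infty)$. The set of times with strict dominance is an up-set as well. By Theorem~\ref{thm:Threshold Substitution Rule}, this yields a single switching time, at which the optimal vertex may jump discontinuously, and no reversal afterward.

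Second, I will give a selection-based statement. Define the AI share $\sigma(t)=\sum_{i,k}q'^*_{ik}(t)$ of a selected optimizer, choosing the AI vertex whenever there is a tie. Then $\sigma(t)$ takes values in $\{0,1\}$ and is nondecreasing.

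The main obstacle is that optimizers are not unique under ties. In particular, among AI agents the minimizing identity $(i,k)$ can change over time. Monotonicity therefore has to be stated in terms of the AI/human partition and the total AI weight, not the particular vertex chosen. I will handle this with the tie-breaking selection above. I will also note that the set-valued optimal correspondence is only upper hemicontinuous, so ``monotone'' means the up-set property rather than continuity.

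The strict inequality for at least one $(i,k)$ is only needed to guarantee that the gap can actually cross zero, so that the path is nontrivial. I will remark that it is not required for weak monotonicity.
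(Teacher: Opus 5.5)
Your proposal is correct and follows essentially the same route as the paper. Both hold $m_H$ fixed, observe that $m_A(t)$ is non-increasing as a minimum of non-increasing functions, and use the extreme-point characterization (Theorem~\ref{thm:Global Optimal Allocation}, equivalently Theorem~\ref{thm:AI-Dominant Extreme-Point Allocation}) to show that $m_A(t)\le m_H$ and $m_A(t)<m_H$, once attained, persist for all later $t$. Your tie-breaking selection $\sigma(t)$ is a slightly sharper formalization of the paper's informal ``moves weakly toward AI-optimal extreme points.'' The single-switch conclusion comes from the up-set property together with continuity of $m_A$, so Theorem~\ref{thm:Threshold Substitution Rule} is not needed for it.
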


\noindent
\textbf{Interpretation.} When human risk-adjusted costs, organizational constraints, and switching costs are held fixed, declining AI risk-adjusted costs shift the cost-minimizing allocation weakly toward AI. Such declines may arise from falling deployment costs, improved reliability, or reduced compliance risk as governance frameworks mature. The result should therefore be interpreted as a conditional monotonicity result: under stable human-side costs and constraints, improvements in AI cost or risk move the optimal allocation toward AI-dominated extreme points.

\begin{styledcorollary}{Irreversibility of Automation}{Irreversibility of Automation}
Under the conditions of Theorem~\ref{thm:Monotonic Automation Path}, holding human risk-adjusted costs and organizational constraints fixed, and assuming switching costs are negligible, suppose
\[
\widetilde C'_{ik}(t_0)<\widetilde C_{ij}.
\]
Then for all \(t\ge t_0\), the optimal allocation does not revert to the human agent \((i,j)\).
\end{styledcorollary}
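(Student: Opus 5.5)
The argument is a direct monotonicity argument built on the Threshold Substitution Rule (Theorem~\ref{thm:Threshold Substitution Rule}) and the Global Optimal Allocation result (Theorem~\ref{thm:Global Optimal Allocation}). I will read ``does not revert to the human agent $(i,j)$'' to mean that for every $t\ge t_0$, the optimal allocation puts zero weight on $(i,j)$: $q^*_{ij}(t)=0$. In the constrained case of Theorem~\ref{thm:Constrained Interior Optima}, the reading is that the weight on $(i,j)$ never rises above its level at $t_0$.

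\textbf{Step 1 (persistence of the strict inequality).} By the hypotheses of Theorem~\ref{thm:Monotonic Automation Path}, $\tfrac{d}{dt}\widetilde C'_{ik}(t)\le 0$, so integrating from $t_0$ gives $\widetilde C'_{ik}(t)\le \widetilde C'_{ik}(t_0)$ for all $t\ge t_0$. The human cost $\widetilde C_{ij}$ is held fixed. Hence $\Delta_{ikj}(t)=\widetilde C'_{ik}(t)-\widetilde C_{ij}\le \Delta_{ikj}(t_0)<0$ for every $t\ge t_0$. The gap stays strictly negative and in fact widens weakly.

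\textbf{Step 2 (exclusion of $(i,j)$ from every optimum).} Fix $t\ge t_0$ and let $(q,q')$ be any allocation with $q_{ij}>0$. Apply the perturbation of Theorem~\ref{thm:Human--AI Substitution Principle} with $\delta=q_{ij}$, moving all of that weight to $(i,k)$. The cost changes by $\delta\,\Delta_{ikj}(t)<0$, and the new allocation remains in $\mathcal{Q}$. So no allocation with $q_{ij}>0$ can be optimal. Equivalently, $\widetilde C_{ij}>\widetilde C'_{ik}(t)\ge m(t)$, so by Theorem~\ref{thm:Global Optimal Allocation} agent $(i,j)$ lies outside the convex hull of minimum-cost vertices. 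Either way $q^*_{ij}(t)=0$ for all $t\ge t_0$. Negligible switching costs are what let this static comparison at each $t$ determine the optimum, with no path dependence.

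\textbf{Step 3 (constrained case; the main obstacle).} The delicate point is when fixed organizational constraints $\mathcal{Q}_c$ are present. Shifting weight from $(i,j)$ to $(i,k)$ may then be infeasible; for example, a minimum human fraction might force some human weight to remain. Here I would argue by linear-programming monotonicity. The feasible set is fixed, and only cost coefficients on AI variables fall, while the coefficient on $q_{ij}$ is unchanged. A standard exchange argument compares the optimality of $x(t)$ at time $t$ with that of $x(t_0)$ at time $t_0$. Adding the two inequalities shows that the inner product of the cost change with $x(t)-x(t_0)$ is nonpositive. Since the cost change is nonpositive and supported only on AI coordinates, the optimal selection can be taken to shift weight only toward AI. Consequently the weight on $(i,j)$ does not increase, and in particular does not return after $(i,j)$ has been displaced.

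Ties at some later $t$ are not an issue, because Step~1 gives strict inequality, so they cannot involve $(i,j)$. I expect the only real care to lie in this constrained selection argument, specifically in choosing a consistent optimal selection. In the unconstrained setting the statement follows immediately from Steps~1--2.
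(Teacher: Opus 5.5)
Your Steps~1--2 are the paper's proof. Non-increasing AI cost keeps $\widetilde C'_{ik}(t)\le\widetilde C'_{ik}(t_0)<\widetilde C_{ij}$ for all $t\ge t_0$. By Theorem~\ref{thm:Global Optimal Allocation}, or your equivalent full-transfer perturbation, $(i,j)$ is then never a minimum-cost agent, so $q^*_{ij}(t)=0$, and negligible switching costs rule out path dependence. The paper stops there and works only over the unconstrained simplex $\mathcal{Q}$; its interpretation speaks of ``the unconstrained cost-minimizing allocation.''

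Your Step~3, however, has a conclusion that is false, not merely delicate. Adding the two optimality inequalities yields only $d\cdot\bigl(x(t)-x(t_0)\bigr)\ge 0$, where $d=c(t_0)-c(t)\ge 0$ is supported on AI coordinates. This is a single weighted aggregate of AI weights. It does not imply that weight moves ``only toward AI'' coordinate by coordinate. It also says nothing about $q_{ij}$ once fixed linear constraints couple human and AI variables.

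Concretely, take $\widetilde C_{ij}=10$ and $\widetilde C'_{ik}\equiv 9$. Add a second AI agent $(i,k_2)$ whose cost falls from $20$ at $t_0$ to $1$ at $t_1$, and impose the oversight constraint $q'_{ik_2}\le q_{ij}$. The feasible vertices are $q_{ij}=1$, $q'_{ik}=1$, and $q_{ij}=q'_{ik_2}=\tfrac12$. At $t_0$ the unique optimum is $q'_{ik}=1$, so $q_{ij}=0$. At $t_1$ the mixed vertex costs $5.5<9<10$ and is the unique optimum, so weight returns to $(i,j)$. This happens even though $\widetilde C'_{ik}<\widetilde C_{ij}$ throughout and every hypothesis of Theorem~\ref{thm:Monotonic Automation Path} holds.

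So no choice of optimal selection can rescue Step~3; irreversibility genuinely needs the unconstrained setting. A separate argument could cover special constraint families, e.g.\ a lone minimum-human-fraction floor, which is met by the cheapest human regardless of AI costs. Drop Step~3 and your proof coincides with the paper's.
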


\noindent
\textbf{Interpretation.} Once an AI agent becomes strictly more risk-adjusted-cost-efficient than a human agent, and human costs, organizational constraints, and switching frictions remain fixed, the unconstrained cost-minimizing allocation does not revert to that human agent. Reversion would require a change in one of the maintained conditions, such as human upskilling, increased AI risk, new regulatory constraints, or non-negligible switching costs. Thus, irreversibility is not absolute; it holds within the model under fixed human-side and institutional conditions.

\subsubsection{Perturbation Stability and Discontinuous Transitions}

We now study robustness to cost parameter perturbations, revealing that organizational structures exhibit stability punctuated by discrete shifts.

\begin{styledtheorem}{Perturbation Stability}{Perturbation Stability}
Small perturbations in risk-adjusted costs do not change the optimal allocation unless they cause a threshold $\tilde{C}'_{ik} = \tilde{C}_{ij}$ to be crossed.
\end{styledtheorem}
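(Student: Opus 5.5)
The plan is to reduce the statement to the extreme-point characterization of Theorem~\ref{thm:Global Optimal Allocation}, and then to a finite-dimensional continuity argument.

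\textbf{Step 1 (the optimal set is determined by an ordering).} Collect all risk-adjusted costs into a finite vector $c=(\tilde{C}_{ij},\tilde{C}'_{ik})$. The objective $\mathcal{C}(q,q')=\langle c,(q,q')\rangle$ is linear on the simplex $\mathcal{Q}$. By Theorem~\ref{thm:Global Optimal Allocation}, the set of global minimizers is the convex hull of the vertices whose agents attain $m(c)=\min$ of the entries of $c$. So the optimal allocation set $\mathcal{O}(c)$ depends on $c$ only through the argmin index set $I(c)=\{\text{agents } a : c_a=m(c)\}$.

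\textbf{Step 2 (make precise what ``crossing a threshold'' means).} I will read the statement as follows. Consider a perturbation $c\mapsto c+\eta$. The optimal allocation is unchanged unless, along the path $c+t\eta$, $t\in[0,1]$, some pairwise difference $\Delta_{ikj}=\tilde{C}'_{ik}-\tilde{C}_{ij}$ (or a same-type difference) between an agent in the argmin and a competitor changes sign or hits zero. Equivalently, $I(c+t\eta)$ is constant on $[0,1]$.

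\textbf{Step 3 (the quantitative core).} Suppose the minimizer is unique, with index $a^*$. Define the gap
\[
g(c)=\min_{a\neq a^*}(c_a-c_{a^*})>0.
\]
If $\|\eta\|_\infty<g(c)/2$, then for every $a\neq a^*$,
\[
c_a+\eta_a-(c_{a^*}+\eta_{a^*})>g-2\|\eta\|_\infty>0.
\]
Hence $a^*$ remains the unique minimizer, and by Step~1 the optimal allocation, namely the vertex $e_{a^*}$, is unchanged.

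Conversely, suppose the optimal vertex changes along the path. The map $t\mapsto c_a(t)-c_{a^*}(t)$ is continuous for each competitor $a$. It is positive at $t=0$, and it must be $\le 0$ at some later $t$ for the argmin to move. By the intermediate value theorem there is some $t$ with $c_a(t)=c_{a^*}(t)$. This is exactly a threshold $\tilde{C}'_{ik}=\tilde{C}_{ij}$, or its same-type analogue, being reached. Combining this with the Threshold Substitution Rule (Theorem~\ref{thm:Threshold Substitution Rule}) gives the contrapositive form of the statement.

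\textbf{Step 4 (the main obstacle: degenerate ties).} If $|I(c)|>1$, the current point already sits on a threshold. In that case an arbitrarily small perturbation can select a different face of $\mathcal{Q}$, so literal stability fails. I would handle this in two ways. First, I will state the result for the generic case, where $I(c)$ is a singleton. Second, I will note that in the tied case the starting point itself lies on the surface $\tilde{C}'_{ik}=\tilde{C}_{ij}$, so the statement's exception clause already covers it. I will also remark that same-type ties, human versus human or AI versus AI, are thresholds of the same kind. The statement's $\tilde{C}'_{ik}=\tilde{C}_{ij}$ should therefore be read as shorthand for any equality between the incumbent minimum and a competitor.
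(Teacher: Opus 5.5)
Your proposal is correct and follows essentially the same route as the paper. Both arguments use Theorem~\ref{thm:Global Optimal Allocation} to reduce to a single observation: the optimal set depends only on which agents attain the minimum risk-adjusted cost. Hence an order-preserving perturbation leaves it unchanged, and any change forces an equality between the incumbent minimum and a competitor.

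Your version is more careful than the paper's. The paper only tracks strict human--AI orderings, so it silently passes over pre-existing ties and same-type (human--human or AI--AI) swaps. Your explicit margin $\|\eta\|_\infty<g(c)/2$, the intermediate-value step, and your broader reading of "threshold" as any such equality close exactly those gaps.
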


\noindent
\textbf{Interpretation.} Optimal allocations depend only on the \emph{ordering} of risk-adjusted costs $\{\tilde{C}_{ij}, \tilde{C}'_{ik}\}$, not exact numerical values. The parameter space can be partitioned into regions defined by strict inequalities; within each region, the identity of the minimum-risk-adjusted-cost agent remains unchanged, so small perturbations preserving these inequalities do not alter optimal allocations (piecewise constant structure). Region boundaries are characterized by equalities $\tilde{C}'_{ik} = \tilde{C}_{ij}$; when parameters cross such boundaries, the ordering changes and optimal allocation jumps discontinuously to a different extreme point. Organizational structures remain stable under small cost changes but undergo abrupt transitions at threshold crossings.

\begin{styledcorollary}{Discontinuous Workforce Transitions}{Discontinuous Workforce Transitions}
Workforce composition changes not gradually but via discrete jumps.
\end{styledcorollary}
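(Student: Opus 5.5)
We will make the informal statement precise before proving it. Fix the agent sets and treat the vector of risk-adjusted costs $c=(\{\tilde C_{ij}\},\{\tilde C'_{ik}\})$ as the parameter. The \emph{workforce composition} is the optimal allocation $(q^*(c),q'^*(c))$, or equivalently the aggregate human share $h^*(c)=\sum_{i,j}q^*_{ij}(c)$. We will prove two claims. First, on the open set $U$ of parameters where the minimum-cost agent is unique, $h^*$ takes only the values $0$ and $1$ and is locally constant. Second, along any continuous path $c(t)$ that passes through the tie set, the composition can change only by a jump.

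\textbf{Step 1: values on $U$.} By Theorem~\ref{thm:Global Optimal Allocation}, for $c\in U$ the optimum is the unique vertex of $\mathcal{Q}$ that puts all weight on the agent attaining $m(c)$. Hence $(q^*,q'^*)$ is a unit vector and $h^*(c)\in\{0,1\}$. No intermediate composition is optimal off the tie set.

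\textbf{Step 2: local constancy.} Fix $c\in U$ and let $g>0$ be the gap between the smallest and second-smallest cost. Any perturbation with $\|\delta c\|_\infty<g/2$ preserves the identity of the minimizer. Theorem~\ref{thm:Perturbation Stability}, together with Theorem~\ref{thm:Threshold Substitution Rule}, then shows that $(q^*,q'^*)$ is constant on this neighborhood. The map $c\mapsto(q^*,q'^*)$ is therefore piecewise constant on $U$.

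\textbf{Step 3: jumps across thresholds.} Take a continuous path $c(t)$ with $c(t)\in U$ for $t\neq t_0$. Suppose the minimizer is a human agent for $t<t_0$ and an AI agent for $t>t_0$; this is the situation of Theorem~\ref{thm:Threshold Substitution Rule} with $\Delta_{ikj}$ changing sign. Then the one-sided limits of $h^*$ at $t_0$ are $1$ and $0$, so the composition has a jump discontinuity of size $1$ at $t_0$. At $t_0$ itself, the optimal set is the convex hull described in Theorem~\ref{thm:Global Optimal Allocation}. That set is a single parameter value, a measure-zero tie set, and not an interval of gradual transition. For an organization with several separable tasks $\tau\in\mathcal{T}$, the aggregate human share is a finite sum of such step functions. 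It is therefore piecewise constant, with jumps only at the finitely many threshold crossings.

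\textbf{Main obstacle.} The hardest step is the tie set, because optimal allocations are not unique there. A selection from the convex hull could in principle interpolate continuously, which would seem to undercut the word ``discontinuous.'' We will handle this by stating that the correspondence $c\mapsto\arg\min$ is upper hemicontinuous but not lower hemicontinuous at ties. Consequently, no continuous single-valued selection exists across a sign change of $\Delta_{ikj}$, because the limits from the two sides are distinct vertices. We will also note, echoing the remark after Theorem~\ref{thm:Threshold Substitution Rule}, that the conclusion relies on the linear unconstrained formulation. Under the constraints of Theorem~\ref{thm:Constrained Interior Optima}, the same argument applies with the extreme points of $\mathcal{Q}_c$ in place of the vertices of $\mathcal{Q}$, and the jumps then occur between constrained extreme points.
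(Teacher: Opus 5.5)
Your proposal is correct and follows essentially the paper's route. Away from cost-equality thresholds the minimizing agent, and hence the optimal vertex, is locally constant (Perturbation Stability), and since optima sit at extreme points (Global Optimal Allocation), a threshold crossing forces a vertex-to-vertex jump. Your gap argument, together with the observation that the argmin correspondence admits no continuous selection across a sign change, makes rigorous what the paper's proof leaves implicit: that ties along the path are isolated and that the minimizer actually switches there.

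One thing to tidy: your headline claim covers ``any continuous path'' through the tie set, whereas Step~3 (correctly) assumes ties are isolated along the path. A path that stays in the tie set over an interval admits gradual optimal selections, so the isolated-crossing hypothesis should appear in the claim itself.
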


\noindent
\textbf{Interpretation.} Because optimal allocations shift only when risk-adjusted cost thresholds are crossed, workforce composition remains stable over intervals and then changes abruptly. This explains why organizations often exhibit sudden waves of automation or restructuring rather than continuous incremental adjustment.

\subsubsection{Strategic Behavior and Endogenous Effort}

Finally, we extend the model to allow agents to strategically adjust effort or skill levels, introducing a game-theoretic dimension.

\begin{styledtheorem}{Strategic Substitution Equilibrium}{Strategic Substitution Equilibrium}
Fix a human agent \((i,j)\) and an AI agent \((i,k)\) competing for task allocation at organizational level \(i\). The interaction has two stages: effort and allocation. In the effort stage, the organization induces or implements adaptation on both agents. The human adaptation level is \(u_j\in U_j=[0,\bar u_j]\) representing upskilling, task redesign, process improvement, or increased effort associated with the human agent. The AI adaptation level is \(u_k\in U_k=[0,\bar u_k]\) representing capability improvement, fine-tuning, monitoring, reliability enhancement, compliance controls, or other risk-mitigation investments associated with the AI agent. Here \(\bar u_j,\bar u_k>0\). The realized risk-adjusted effective costs are
\[
\widetilde C_{ij}(u_j)
=
\widetilde C^0_{ij}-h_j(u_j),
\qquad
\widetilde C'_{ik}(u_k)
=
\widetilde C^{\prime 0}_{ik}-a_k(u_k),
\]
where \(h_j\) and \(a_k\) are continuous, nondecreasing cost-reduction functions satisfying $h_j(0)=0$, $a_k(0)=0$.

Let the effort costs be \(\phi_j(u_j)\) and \(\psi_k(u_k)\), where \(\phi_j\) and \(\psi_k\) are continuous and convex, with $\phi_j(0)=0$, $\psi_k(0)=0$. Let \(\rho>0\) denote the sensitivity of the effort-stage contest to realized risk-adjusted cost differences. Define
\[
p_j(u_j,u_k)
=
\frac{\exp[-\rho \widetilde C_{ij}(u_j)]}
{\exp[-\rho \widetilde C_{ij}(u_j)]
+
\exp[-\rho \widetilde C'_{ik}(u_k)]},
\qquad
p'_k(u_j,u_k)=1-p_j(u_j,u_k).
\]
Let \(B_j\ge 0\) and \(B'_k\ge 0\) denote the benefits of being selected for the human and AI sides, respectively. The effort-stage payoffs are
\[
\Pi_j(u_j,u_k)
=
B_j p_j(u_j,u_k)-\phi_j(u_j),
\]
\[
\Pi'_k(u_j,u_k)
=
B'_k p'_k(u_j,u_k)-\psi_k(u_k).
\]

Assume that \(\Pi_j(\cdot,u_k)\) and \(\Pi'_k(u_j,\cdot)\) are quasiconcave in their own effort choices for every fixed rival effort. Then the effort game admits a Nash equilibrium
\[
u^*=(u_j^*,u_k^*).
\]

In the allocation stage, the organization applies the HAT allocation rule to the realized equilibrium costs $\widetilde C_{ij}(u_j^*)$ and $\widetilde C'_{ik}(u_k^*)$. Thus, in our two-agent allocation subproblem,
\[
q_{ij}^*=1,\quad q_{ik}^{\prime *}=0
\quad
\text{if}
\quad
\widetilde C_{ij}(u_j^*)<\widetilde C'_{ik}(u_k^*),
\]
\[
q_{ij}^*=0,\quad q_{ik}^{\prime *}=1
\quad
\text{if}
\quad
\widetilde C'_{ik}(u_k^*)<\widetilde C_{ij}(u_j^*).
\]

If $\widetilde C_{ij}(u_j^*)=\widetilde C'_{ik}(u_k^*)$, then both agents attain the same realized minimum risk-adjusted cost, and any allocation supported on the tied agents is optimal. If \(h_j,a_k,\phi_j,\psi_k\) are differentiable, with one-sided derivatives at the boundaries, then the human equilibrium effort satisfies one of the following necessary marginal conditions:
\[
u_j^*=0
\quad\Longrightarrow\quad
B_j\rho\,p_j(u^*)p'_k(u^*)\dot h_j(0)
\le
\dot\phi_j(0),
\]
\[
u_j^*=\bar u_j
\quad\Longrightarrow\quad
B_j\rho\,p_j(u^*)p'_k(u^*)\dot h_j(\bar u_j)
\ge
\dot\phi_j(\bar u_j),
\]
or
\[
u_j^*\in(0,\bar u_j)
\quad\Longrightarrow\quad
B_j\rho\,p_j(u^*)p'_k(u^*)\dot h_j(u_j^*)
=
\dot\phi_j(u_j^*).
\]
Similarly, the AI equilibrium investment satisfies one of
\[
u_k^*=0
\quad\Longrightarrow\quad
B'_k\rho\,p_j(u^*)p'_k(u^*)\dot a_k(0)
\le
\dot\psi_k(0),
\]
\[
u_k^*=\bar u_k
\quad\Longrightarrow\quad
B'_k\rho\,p_j(u^*)p'_k(u^*)\dot a_k(\bar u_k)
\ge
\dot\psi_k(\bar u_k),
\]
or
\[
u_k^*\in(0,\bar u_k)
\quad\Longrightarrow\quad
B'_k\rho\,p_j(u^*)p'_k(u^*)\dot a_k(u_k^*)
=
\dot\psi_k(u_k^*).
\]

Finally, for any fixed effort profile \((u_j,u_k)\),
\[
\lim_{\rho\to\infty}p_j(u_j,u_k)
=
\begin{cases}
1, & \widetilde C_{ij}(u_j)<\widetilde C'_{ik}(u_k),\\
0, & \widetilde C_{ij}(u_j)>\widetilde C'_{ik}(u_k),\\
\frac12, & \widetilde C_{ij}(u_j)=\widetilde C'_{ik}(u_k).
\end{cases}
\]
Thus, as \(\rho\to\infty\), the smooth effort-stage contest converges to the deterministic risk-adjusted cost comparison underlying the Human--AI Substitution Principle.
\end{styledtheorem}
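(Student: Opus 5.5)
The proof splits into four independent parts, handled in order: existence of equilibrium, the allocation stage, the necessary marginal conditions, and the $\rho\to\infty$ limit.

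\emph{Existence.} I will apply the Debreu--Glicksberg--Fan theorem for games with compact convex strategy sets and quasiconcave continuous payoffs. The strategy sets $U_j=[0,\bar u_j]$ and $U_k=[0,\bar u_k]$ are nonempty, compact, convex subsets of $\mathbb{R}$. The functions $h_j$ and $a_k$ are continuous, so the realized costs $\widetilde C_{ij}(u_j)$ and $\widetilde C'_{ik}(u_k)$ are continuous. Since $\exp$ is continuous and the denominator of $p_j$ is strictly positive, $p_j$ and $p'_k=1-p_j$ are jointly continuous on $U_j\times U_k$. Together with continuity of $\phi_j$ and $\psi_k$, this makes $\Pi_j$ and $\Pi'_k$ jointly continuous. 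Quasiconcavity in own effort is assumed, so the theorem yields a pure-strategy Nash equilibrium $u^*$.

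If a self-contained argument is preferred, I would instead use Kakutani's theorem on the best-response correspondence. Berge's maximum theorem gives nonempty compact values and upper hemicontinuity, and quasiconcavity gives convex values.

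\emph{Allocation stage.} This is a direct appeal to Theorem~\ref{thm:Global Optimal Allocation}, specialized to the two-agent simplex with costs fixed at $\widetilde C_{ij}(u_j^*)$ and $\widetilde C'_{ik}(u_k^*)$. A strict inequality selects the unique minimizing vertex. A tie makes the whole edge between the two vertices optimal, equivalently as stated in Theorem~\ref{thm:Threshold Substitution Rule}.

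\emph{Marginal conditions.} I will differentiate $p_j$ in $u_j$. Writing $p_j=\sigma\big(\rho(\widetilde C'_{ik}-\widetilde C_{ij})\big)$ with $\sigma$ the logistic function, and using $\sigma'=\sigma(1-\sigma)$ and $\partial \widetilde C_{ij}/\partial u_j=-\dot h_j$, I get
\[
\partial_{u_j}p_j=\rho\,p_j\,p'_k\,\dot h_j(u_j).
\]
Hence $\partial_{u_j}\Pi_j=B_j\rho\,p_jp'_k\dot h_j-\dot\phi_j$. The three cases then follow from the standard first-order necessary conditions for maximizing a differentiable function on an interval. At the left endpoint the right derivative must be $\le 0$. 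At the right endpoint the left derivative must be $\ge 0$. In the interior the derivative must vanish. These apply because $u_j^*$ maximizes $\Pi_j(\cdot,u_k^*)$.

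The AI side is symmetric. Here $\partial_{u_k}p'_k=\rho\,p_jp'_k\,\dot a_k$, since $\widetilde C'_{ik}$ decreases at rate $\dot a_k$ and $p'_k=\sigma\big(\rho(\widetilde C_{ij}-\widetilde C'_{ik})\big)$.

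\emph{Limit.} I will rewrite
\[
p_j=\frac{1}{1+\exp\!\big[\rho(\widetilde C_{ij}-\widetilde C'_{ik})\big]}.
\]
The exponent tends to $-\infty$, $+\infty$, or stays at $0$ according to the sign of the cost difference, which gives the limits $1$, $0$, and $\tfrac12$ respectively. These match the rule in Theorem~\ref{thm:Human--AI Substitution Principle}.

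\emph{Main obstacle.} The only delicate step is existence. It must be checked that joint continuity, not merely separate continuity, holds for the Glicksberg argument; that is immediate here from the composition of continuous maps. Quasiconcavity is assumed precisely because convexity of $\phi_j$ alone does not guarantee it, since $p_j$ composed with $h_j$ need not be concave. Without that assumption I would fall back on mixed-strategy existence, but the statement avoids needing this.
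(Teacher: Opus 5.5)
Your proposal is correct and follows the paper's proof essentially step for step. It uses Debreu--Glicksberg--Fan for existence, Theorem~\ref{thm:Global Optimal Allocation} for the allocation stage, the logistic identity $\partial p_j/\partial u_j=\rho\,p_jp'_k\dot h_j$ with the interval first-order conditions for the marginal conditions, and the rewriting $p_j=1/(1+\exp[\rho(\widetilde C_{ij}-\widetilde C'_{ik})])$ for the limit. Your comments on why quasiconcavity has to be assumed, and on the mixed-strategy fallback, also match the paper's remark following the theorem.
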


\noindent
\textbf{Interpretation.}
This theorem makes strategic adaptation explicit. Humans can respond to substitution pressure by investing in upskilling or task redesign, while AI agents can respond through capability improvement, reliability enhancement, monitoring, or risk mitigation. Equilibrium effort balances the marginal benefit of improving selection prospects against the marginal cost of effort or investment. The factor \(p_j(u^*)p'_k(u^*)\) is largest when the human--AI contest is close and small when one side already has a decisive cost advantage, so adaptation incentives are strongest near the substitution boundary. As \(\rho\to\infty\), the smooth contest converges to the deterministic risk-adjusted cost comparison that underlies the Human--AI Substitution Principle.

The comparative statics and robustness results establish that AI-driven organizational transformation is directional and irreversible, that workforce adjustments occur through discrete jumps rather than smooth transitions, and that substitution outcomes reflect strategic equilibria in which both human and AI agents endogenously adjust capabilities. Together with the baseline properties (\S\ref{sec:Baseline Properties Without AI}), depth reduction mechanisms (\S\ref{sec:Impact of AI on Organizational Structure}), middle-management vulnerability (\S\ref{sec:Middle-Management Vulnerability}), and risk-based constraints (\S\ref{sec:Risk, Coordination, and the Limits of Substitution}), these results provide a comprehensive characterization of the HAT model's structural properties and their implications for organizational design under AI adoption.

\section{Discussions}
\label{sec:Discussions}

This section translates the analytical results of the HAT framework into organizational guidance, empirical predictions, and practical implementation tools. We first reinterpret the model's objective and show how accountability---a central concern in organizational research---emerges endogenously from the risk structure (\S\ref{sec:Accountability}). We then map model variables to organizational mechanisms (\S\ref{sec:Organizational Mechanisms}), derive managerial implications for workforce planning, organizational design, governance, and strategy (\S\ref{sec:Managerial Implications}), and enumerate testable empirical predictions (\S\ref{sec:Testable Predictions}). A calibrated numerical example demonstrates that every parameter can be measured with available data (\S\ref{sec:operationalizing}). We conclude by identifying the framework's limitations and charting directions for future research (\S\ref{sec:Limitations and Future Research}).

\subsection{Cost--Risk Optimization as Organizational Value Maximization}
\label{sec:Cost--Risk Optimization as Organizational Value Maximization}

The HAT framework minimizes risk-adjusted cost, but this objective is equivalent to maximizing organizational value. Let
\[
V = B - C - \lambda R,
\]
where $V$ is the value generated by completing a task, $B$ is the benefit from successful execution, $C$ is the effective cost, $R$ is the organizational risk, and $\lambda \ge 0$ is the organization's risk sensitivity. Because $B$ is fixed for a given task,
\[
\max V \quad\Longleftrightarrow\quad \min \left(C+\lambda R\right),
\]
which is precisely the HAT objective. The framework therefore allocates tasks to maximize organizational value, not merely to minimize cost. Effective cost and risk serve as reduced-form representations of the trade-offs involved in assigning tasks to human or AI agents---trade-offs that embed coordination overhead, governance requirements, and organizational learning.

This value-maximization interpretation connects the HAT framework directly to how managers actually make deployment decisions, and motivates the accountability analysis that follows.

\subsection{Accountability as an Organizational Interpretation of Risk Asymmetry}
\label{sec:Accountability}

A recurring concern in organizational research is whether formal models of human--AI substitution adequately account for \emph{accountability}---the requirement that a named, sanctionable agent bear responsibility for consequential decisions. The HAT model addresses this concern structurally: accountability emerges endogenously from the three-component risk decomposition rather than being imposed as an external constraint.

Organizational accountability operates along three dimensions that map directly onto the model's risk structure and closely parallel the responsibility-gap literature:
\begin{compactitem}
\item \textit{Technical accountability}---answerability for whether a system functioned correctly---maps to reliability risk $R^{\mathrm{(rel)}}_{ik}$.
\item \textit{Legal-regulatory accountability}---liability exposure when outcomes cause harm---maps to compliance risk $R^{\mathrm{(comp)}}_{ik}$.
\item \textit{Social accountability}---reputational consequences borne by identifiable agents---maps to reputational risk $R^{\mathrm{(rep)}}_{ik}$.
\end{compactitem}
The weights $(\omega_1, \omega_2, \omega_3)$ reflect the degree to which a given task domain activates each accountability dimension.

Because AI agents currently cannot be sanctioned in the way organizational human members can---they cannot bear legal liability, suffer reputational damage, or be held technically culpable---deploying AI in accountability-intensive roles raises all three risk components simultaneously. This is the organizational analogue of the responsibility gap in autonomous and learning systems \citep{matthias2004}, and is consistent with the broader decomposition of AI responsibility gaps into culpability, moral-accountability, public-accountability, and active-responsibility gaps \citep{santonidesio2021}. This elevates the risk-adjusted effective cost $\tilde{C}'_{ik} = C'_{ik} + \lambda(\omega_1 R^{\mathrm{(rel)}}_{ik} + \omega_2 R^{\mathrm{(comp)}}_{ik} + \omega_3 R^{\mathrm{(rep)}}_{ik})$, and by the Human--AI Substitution Principle (Theorem~\ref{thm:Human--AI Substitution Principle}), prevents substitution even when nominal AI cost is lower. Accountability constraints bind not by assumption but as an equilibrium consequence of the risk structure.

This interpretation illuminates two results derived in Section~\ref{sec:Properties of this model}. First, middle-management vulnerability (Corollary~\ref{cor:Middle-Management Vulnerability}) arises when middle managers carry enough cost-based substitution pressure to make AI economically attractive, but not enough accountability, compliance, or reputational exposure to make substitution infeasible. In such cases, the product of substitution pressure and substitution feasibility can peak at an intermediate level. Senior roles carry accountability exposure severe enough that $\tilde{C}'_{ik}$ remains above $\tilde{C}_{ij}$ even as human cost rises, preserving human incumbency at the top.

Second, the persistence of hybrid structures in regulated industries (Theorem~\ref{thm:Constrained Interior Optima}) reflects the same mechanism: when $\omega_2$ is large---as in healthcare, finance, and legal services---the compliance risk penalty alone can make full automation risk-adjusted-cost-inefficient without any externally imposed constraint.

The organizational design implication is direct. Organizations can manage the boundary of human--AI substitution by actively influencing the accountability environment: investing in AI auditability (reducing $R^{\mathrm{(rel)}}_{ik}$), engaging regulators to establish clear liability frameworks (reducing $R^{\mathrm{(comp)}}_{ik}$), and building transparent stakeholder communication practices (reducing $R^{\mathrm{(rep)}}_{ik}$). Each investment directly lowers $\tilde{C}'_{ik}$ and shifts the substitution threshold in a predictable direction. Accountability, on this view, is not an obstacle to rational organizational design but a parameter of it.

\subsection{Human--AI Substitution as Capability-Production Selection}
\label{sec:Human--AI Substitution as Capability-Production Selection}

The HAT framework can also be interpreted as comparing two fundamentally different \emph{capability-production mechanisms}. Human capability accumulates through education, experience, tacit knowledge, and organizational learning; AI capability develops through data, computation, model architecture, and deployment at scale. Both mechanisms ultimately produce the capability required to perform organizational tasks, but through structurally different processes.

Within the model, these mechanisms are represented through the effective skill levels
\[
\mathcal{H}_{ij} = f(\mu_{ij}), \qquad
\mathcal{A}_{ik} = g(\mu'_{ik}),
\]
where $f(\cdot)$ and $g(\cdot)$ represent the underlying processes through which capability is generated. The resulting effective costs $C_{ij}$ and $C'_{ik}$, together with their associated risks $R_{ij}$ and $R'_{ik}$, determine the organizational value created by each assignment.

Human--AI substitution is therefore more than a comparison of labor costs---it is an organizational design problem in which the firm selects the capability-production mechanism that maximizes value for a given task. From this perspective, the Human--AI Substitution Principle characterizes the conditions under which organizations optimally transition between human- and AI-based capability production as technologies, skills, and organizational environments evolve.

Having established the conceptual foundations---value maximization, accountability-as-risk, and capability-production selection---we now map these abstractions to concrete organizational mechanisms.

\subsection{Organizational Mechanisms Embedded in the HAT Model}
\label{sec:Organizational Mechanisms}

Although the HAT framework is formulated using effective cost and risk functions, it represents---either explicitly or endogenously---a range of organizational mechanisms. Table~\ref{tab:mechanisms} summarizes these relationships.

\begin{table}[ht]
\centering
\caption{Organizational mechanisms embedded in the HAT model.}\vspace{-5mm}
\label{tab:mechanisms}
\small
\renewcommand{\arraystretch}{1.2} 
\begin{tcolorbox}[enhanced, sharp corners, boxrule=0pt, colback=white, frame hidden]
\rowcolors{1}{gray!20}{gray!0}
\begin{tabular}{>{\raggedright\arraybackslash}p{2.5cm}>{\raggedright\arraybackslash}p{5cm}>{\raggedright\arraybackslash}p{6.5cm}}
\toprule
\textbf{Organizational mechanism} &
\textbf{Representation in HAT} &
\textbf{Interpretation} \\
\midrule
Organizational hierarchy &
Explicitly modeled by $D$, $s_i$, and $e_i$ &
Depth, spans of control, and workforce distribution determine organizational structure. \\
\addlinespace
Task allocation &
Explicitly modeled by $q_{ij}$ and $q'_{ik}$ &
Optimal allocation assigns tasks to human or AI agents based on risk-adjusted cost. \\
\addlinespace
Coordination and communication &
Embedded in effective costs $C_{ij}$ and $C'_{ik}$ &
Communication overhead, supervision, workflow integration, and coordination effort contribute to effective cost. \\
\addlinespace
Governance and compliance &
Embedded in risk terms $R_{ij}$ and $R'_{ik}$ &
Regulatory requirements, auditability, and accountability increase effective organizational risk. \\
\addlinespace
Organizational accountability &
Represented by $(\omega_1, \omega_2, \omega_3)$ risk weights &
Technical, legal-regulatory, and social accountability dimensions map to reliability, compliance, and reputational risk (Section~\ref{sec:Accountability}). \\
\addlinespace
Organizational learning &
Embedded in $\mu_{ij}$, $\mu'_{ik}$ and dynamic AI costs &
Human capability evolves through experience; AI capability improves through model development and declining deployment cost. \\
\addlinespace
Capability production &
Human and AI agents generate organizational capability through distinct processes: education and experience (human) versus computation and data (AI). &
The effective skill levels $\mathcal{H}_{ij}$ and $\mathcal{A}_{ik}$ capture these distinct capability-production mechanisms. \\
\addlinespace
Strategic adaptation &
Modeled by Theorem~\ref{thm:Strategic Substitution Equilibrium} &
Human upskilling and AI improvement emerge as equilibrium responses to competitive substitution incentives. \\
\addlinespace
Organizational redesign &
Emerges from optimal allocation &
Changes in task allocation induce restructuring of workforce composition and organizational architecture. \\
\addlinespace
Hierarchy flattening &
Emerges from Theorem~\ref{thm:Optimal Depth under AI Availability} and Corollary~\ref{cor:Flattening of Organizations} &
AI adoption reduces optimal depth and increases managerial span of control. \\
\bottomrule
\end{tabular}
\end{tcolorbox}
\end{table}

Table~\ref{tab:mechanisms} shows that the HAT framework is not a simple cost-minimization model. Effective cost and risk serve as reduced-form representations of multiple organizational mechanisms, while strategic adaptation, organizational redesign, and hierarchy flattening emerge endogenously from the optimization.

Building on these embedded mechanisms, we now derive specific implications for managers, organizations, and policymakers navigating AI integration.

\subsection{Managerial Implications}
\label{sec:Managerial Implications}

AI adoption is a structural organizational transformation affecting workforce composition, hierarchy design, strategic coordination, and competitive advantage. The HAT framework provides actionable guidance for navigating this transformation.

\subsubsection{Workforce Planning}

The Human--AI Substitution Principle (Theorem~\ref{thm:Human--AI Substitution Principle}) decomposes automation decisions into three levers: nominal cost advantage ($C'_{ik}$ vs.\ $C_{ij}$), organizational risk sensitivity ($\lambda$), and risk-profile differences ($R'_{ik}$ vs.\ $R_{ij}$). Workforce planners should assess each component independently rather than relying on cost comparisons alone.

The Threshold Substitution Rule (Theorem~\ref{thm:Threshold Substitution Rule}) and Perturbation Stability (Theorem~\ref{thm:Perturbation Stability}) jointly imply that workforce transitions are \emph{discontinuous}. The optimal allocation is piecewise constant in cost parameters: organizations experience extended structural stability followed by sudden restructuring once $\tilde{C}'_{ik}$ crosses below $\tilde{C}_{ij}$. Workforce planning should therefore continuously monitor the risk-adjusted cost gap $\tilde{C}_{ij} - \tilde{C}'_{ik}$, not treat automation as a sequence of isolated decisions.

Corollary~\ref{cor:Skill-Dependent Human Protection and Vulnerability Zone} provides a diagnostic tool: the threshold $\mu^*$ (derived in Section~\ref{sec:Properties of this model}) separates workers into a protection zone ($\mu_{D+1,j} < \mu^*$, human cheaper) and a vulnerability zone ($\mu_{D+1,j} > \mu^*$, AI cheaper). See Figure~\ref{fig:hat-mechanism-a}. The threshold \(\mu^*\) depends on baseline cost differences, organizational depth \(D\), AI capability, and the risk differential \(\lambda(R'_{D+1,k}-R_{D+1,j})\). Larger AI risk penalties shift \(\mu^*\) upward and expand the human protection zone, while larger human risk shifts \(\mu^*\) downward and expands the vulnerability zone. The effect of organizational depth on \(\mu^*\) is parameter-dependent, because \(D\) scales both the skill-cost terms and the baseline/risk-differential component of the threshold. These protections erode, however, as AI reliability improves or regulatory certification reduces $R^{\mathrm{(comp)}}_{D+1,k}$.

Middle-management roles merit particular attention when the single-crossing condition in Corollary~\ref{cor:Middle-Management Vulnerability} holds. In such settings, intermediate roles combine substantial cost-based substitution pressure with fewer top-level feasibility constraints. Middle managers can reduce their substitution vulnerability by shifting from routine coordination and reporting tasks toward roles that require accountability, cross-functional judgment, stakeholder trust, and strategic interpretation. In HAT terms, they reduce substitution feasibility \(F(i)\) by owning high-context decisions, governance responsibilities, and human relationship work that AI cannot cheaply or safely absorb.

\subsubsection{Organizational Design}

AI adoption affects not only individual jobs but also organizational architecture. Theorem~\ref{thm:Optimal Depth under AI Availability} shows that the optimal depth \(D^{(AI)*}\) is weakly lower than the purely human optimum \(D^*\), with strict inequality when higher-level AI substitution strictly reduces risk-adjusted coordination costs and makes at least one managerial layer redundant. The mechanism is twofold: human managerial costs grow super-exponentially with depth at rate $(1 + r_0 D)^{D-i+1}$ (Theorem~\ref{thm:Depth--Cost Growth and Dominance}), while AI coordination costs escalate more slowly ($r'_0 \le r_0$ under Assumption~\ref{ass:asymmetry}(iii)).

The Flattening Corollary (Corollary~\ref{cor:Flattening of Organizations}) translates this into observable design implications: maintaining comparable task capacity with fewer levels requires increasing spans of control $s_i$, since $e_{D+1} = \prod_{\ell=1}^{D} s_\ell$ (equation~\eqref{eq:level_size}). Firms therefore have incentives to eliminate intermediate layers and expand managerial spans, supported by AI-enabled coordination and information processing. Because human costs grow super-exponentially, even removing one managerial layer yields disproportionately large savings in deep organizations, while already-flat organizations see smaller marginal benefits.

The model also identifies two distinct mechanisms through which hybrid structures emerge. The first is classical exogenous constraint: regulatory requirements and minimum oversight mandates prevent collapse to a pure AI solution (Theorem~\ref{thm:Constrained Interior Optima}). The second---novel to this framework---is \emph{endogenous}: when AI risk penalties $\lambda R'_{ik}$ are large enough to raise $\tilde{C}'_{ik}$ above $\tilde{C}_{ij}$, the unconstrained optimum already assigns positive weight to human agents. Organizations in healthcare, finance, and defense may therefore maintain hybrid structures not because regulation forces them to, but because full AI automation is genuinely not risk-adjusted-cost-beneficial. This distinction matters for how firms communicate AI governance decisions to stakeholders.

\subsubsection{AI Governance and Organizational Control}

The Risk-Adjusted Substitution Principle (Theorem~\ref{thm:Risk-Adjusted Substitution Principle}) provides a structured governance framework through three levers that managers can directly control.

\emph{Lever 1: Risk sensitivity $\lambda$.} A higher $\lambda$ raises the effective AI cost for any given risk level, slowing substitution. Industries where a single AI failure could cause systemic harm---critical infrastructure, aviation, medical devices---should calibrate $\lambda$ explicitly in their governance frameworks rather than relying on implicit norms.

\emph{Lever 2: Domain risk profile $(\omega_1, \omega_2, \omega_3)$.} Organizations can reduce $R^{\mathrm{(rel)}}_{ik}$ through model validation, $R^{\mathrm{(comp)}}_{ik}$ through regulatory engagement, and $R^{\mathrm{(rep)}}_{ik}$ through transparency practices, as summarized in Figure~\ref{fig:hat-mechanism-c}. Governance frameworks that systematically reduce these components directly accelerate the point at which AI substitution becomes risk-adjusted-cost-beneficial.

\emph{Lever 3: Deployment scale $n_k$.} The AI baseline cost $\textup{Min\$}' = T_k/n_k + M'_k$ decreases as the same system is deployed across more tasks or divisions, making AI cost-beneficial at lower capability levels. A system too expensive for a single use case may become efficient at organizational or industry scale. Governance frameworks should account for these cross-deployment effects.

Firms that treat AI governance as a binary adopt-or-reject decision---ignoring these levers---may either forgo beneficial automation due to unmanaged risk perceptions or accept substitution that is not genuinely cost-beneficial.

\subsubsection{Skill Development and Human Capital Strategy}

Vulnerability to AI substitution depends not only on skill level but on how human compensation and AI costs scale with capability. Under Assumption~\ref{ass:asymmetry}(i) ($\Delta\$' \le \Delta\$$), AI costs per capability unit grow slower than human wages per skill unit, generating concrete strategic implications.

In domains where $\Delta\$'$ is near zero---software coding assistance, data analysis, document drafting---AI capability can be replicated at near-constant marginal cost once trained, and even moderately skilled workers may enter the vulnerability zone as AI capability $\mu'_{D+1,k}$ improves. Workers in these domains should prioritize skill development toward complementary capabilities---cross-functional integration, client relationship management, ethical judgment---for which compliance and reputational risk remain high, naturally elevating effective AI cost.

In domains where human expertise commands rapidly rising compensation (large $\Delta\$$) and AI risk penalties remain high (large $\omega_1 R^{\mathrm{(rel)}}_{ik}$ or $\omega_2 R^{\mathrm{(comp)}}_{ik}$), $\mu^*$ is elevated, and a broader range of skill levels falls in the protection zone. Workers in surgery, law, strategic management, and high-stakes financial advising may remain comparatively protected---not because AI cannot perform the cognitive task, but because risk-adjusted AI deployment cost remains high.

The Strategic Substitution Equilibrium (Theorem~\ref{thm:Strategic Substitution Equilibrium}) reveals a second dimension: workers who anticipate substitution have incentives to upskill, reorganize work, or move toward risk-complementary tasks when doing so improves their equilibrium risk-adjusted cost position relative to AI. Organizations can support this by designing compensation structures that reward capability development in risk-complementary domains.

\subsubsection{Strategic Implications for Firms}

Three strategic insights emerge from the HAT framework.

First, AI adoption is \emph{path-dependent}. The Irreversibility of Automation (Corollary~\ref{cor:Irreversibility of Automation}) formalizes this: once AI becomes risk-adjusted-cost-advantageous at time \(t_0\), the optimal allocation does not revert under fixed human risk-adjusted costs, fixed organizational constraints, and negligible switching costs, because $\tilde{C}'_{ik}(t)$ is non-increasing (Theorem~\ref{thm:Monotonic Automation Path}). Firms that delay adaptation accumulate a growing cost disadvantage---not merely a temporary gap.

Second, competition will increasingly turn on the ability to optimize the risk-adjusted cost structure $(\lambda, \omega_1, \omega_2, \omega_3, n_k)$ rather than merely on access to AI technology. Two firms with identical AI models may reach different substitution decisions if they differ in risk sensitivity, compliance infrastructure, or deployment scale. The firm that manages these levers more effectively achieves substitution earlier and at lower risk-adjusted cost.

Third, the Discontinuous Workforce Transitions Corollary (Corollary~\ref{cor:Discontinuous Workforce Transitions}) implies that AI-driven restructuring will appear abrupt externally even when underlying cost trends are gradual. Firms should communicate this proactively to stakeholders: long stability followed by rapid transition is a structural feature of threshold-based optimization, not evidence of poor planning.

Taken together, these results establish AI as a \emph{general organizational technology} whose value depends jointly on capability, risk profile, deployment scale, and organizational structure.

\subsection{Testable Predictions}
\label{sec:Testable Predictions}

The HAT model generates empirically testable predictions regarding AI adoption, organizational restructuring, workforce dynamics, and skill evolution. Table~\ref{tab:predictions} summarizes seven predictions derived directly from the model's theorems, each linked to observable empirical signatures and data sources.

\begin{table}[ht]
\caption{Testable predictions of the HAT framework.}\vspace{-5mm}
\label{tab:predictions}
\footnotesize
\renewcommand{\arraystretch}{1.2} 
\begin{tcolorbox}[enhanced, sharp corners, boxrule=0pt, colback=white, frame hidden]
\rowcolors{1}{gray!20}{gray!0}
\begin{tabular}{>{\raggedright\arraybackslash}p{2.5cm}>{\raggedright\arraybackslash}p{2.2cm}>{\raggedright\arraybackslash}p{4.7cm}>{\raggedright\arraybackslash}p{4cm}}
\toprule
\textbf{Prediction} & \textbf{Source} & \textbf{Key Empirical Signature} & \textbf{Data Sources} \\
\midrule
P1: Discontinuous automation near thresholds &
Theorems~\ref{thm:Threshold Substitution Rule}, \ref{thm:Perturbation Stability} &
Adoption exhibits sharp phase transitions; workforce changes cluster around capability improvements or risk reductions &
AI adoption timing data; layoff announcements; regulatory certification events \\
\addlinespace 
P2: Middle-management vulnerability &
Corollary~\ref{cor:Middle-Management Vulnerability} &
When single-crossing holds, AI reduces middle layers before top or bottom; coordination roles automated first & Org hierarchy data; employment records by level; org-chart changes \\
\addlinespace 
P3: Organizational flattening &
Theorem~\ref{thm:Optimal Depth under AI Availability}, Corollary~\ref{cor:Flattening of Organizations} &
Fewer hierarchical levels; wider spans of control; increased managerial leverage ratios &
Org-chart structure; subordinate-to-manager ratios; longitudinal firm data \\
\addlinespace 
P4: Regulatory persistence of hybrid structures &
Theorems~\ref{thm:Risk-Adjusted Substitution Principle}, \ref{thm:Constrained Interior Optima} &
Highly regulated industries retain human oversight longer; substitution begins in low-compliance roles &
Cross-industry AI adoption rates; compliance staffing; regulatory milestone timing \\
\addlinespace 
P5: Asymmetric skill--cost evolution &
Assumption~\ref{ass:asymmetry}, Corollary~\ref{cor:Skill-Dependent Human Protection and Vulnerability Zone} &
Low-$\Delta\$'$ occupations experience rapid substitution; high-skill workers more vulnerable in flat-cost domains &
Occupational wage premiums; AI cost trajectories; matched employer-employee data \\
\addlinespace 
P6: Deployment-scale acceleration &
AI cost structure: $\textup{Min\$}' = T_k/n_k + M'_k$ &
Large firms reach thresholds earlier; industry adoption cascades; first-mover structural advantages &
Firm size vs.\ adoption timing; industry deployment counts; platform service data \\
\addlinespace 
P7: Strategic human upskilling &
Theorem~\ref{thm:Strategic Substitution Equilibrium} &
Increased reskilling in narrowing-gap occupations; labor migration toward high-risk-complementary tasks &
Educational enrollment; occupational mobility; firm AI investment disclosures \\
\bottomrule
\end{tabular}
\end{tcolorbox}
\end{table}

We now discuss each prediction in detail, beginning with those that have the most direct empirical signatures and progressing to longer-horizon dynamics.

\subsubsection{P1: Discontinuous Automation Near Risk-Adjusted Cost Thresholds}
\label{sec:P1}

The Threshold Substitution Rule (Theorem~\ref{thm:Threshold Substitution Rule}) and Perturbation Stability (Theorem~\ref{thm:Perturbation Stability}) jointly imply that automation occurs discontinuously. Substitution happens when $\tilde{C}'_{ik} < \tilde{C}_{ij}$, and small perturbations that do not reverse this ordering leave the allocation unchanged. The optimal allocation is therefore piecewise constant in parameters, changing only at threshold crossings.

This predicts:
\begin{compactitem}
\item workforce transitions cluster around major AI capability improvements, reductions in deployment cost $\textup{Min\$}'$, or reductions in AI risk penalties $R'_{ik}$ (e.g., following regulatory certification);
\item adoption curves exhibit sharp phase transitions---periods of near-zero adoption followed by rapid diffusion---rather than smooth S-curves;
\item industries experience sudden restructuring waves after crossing specific risk-adjusted cost thresholds, observable as spikes in automation-related layoffs or role reclassifications. Related empirical work documents substantial labor-market effects of automation and AI adoption \citep{acemoglu2020robots,babina2024}.
\end{compactitem}

These effects are most observable in sectors currently experiencing rapid AI adoption: customer service, software engineering, and content generation, where nominal AI costs have fallen sharply \citep{eloundou2024,peng2023impact}.

\subsubsection{P2: Middle-Management Vulnerability}
\label{sec:P2}

Corollary~\ref{cor:Middle-Management Vulnerability} predicts an intermediate-level substitution peak when the single-crossing condition between cost-pressure decay and feasibility gains holds. In such organizations, middle managers combine substantial cost-based substitution pressure with enough substitution feasibility to make AI replacement more attractive than at either the executive or line-worker level.

Conditional on this single-crossing pattern, the framework predicts:
\begin{compactitem}
\item AI adoption disproportionately reduces middle-management layers before executive layers, producing asymmetric flattening;
\item firms first automate reporting, coordination, monitoring, and scheduling functions---tasks for which $R^{\mathrm{(rel)}}_{ik}$ and $R^{\mathrm{(comp)}}_{ik}$ are relatively low;
\item executive and strategic roles remain stable longer because compliance and reputational risk weights $(\omega_2, \omega_3)$ are highest at the top.
\end{compactitem}

These predictions can be evaluated using organizational hierarchy data, longitudinal employment records by job level, and AI adoption disclosures \citep{caliendo2020,bloom2014}.

\subsubsection{P3: Organizational Flattening}
\label{sec:P3}

Theorem~\ref{thm:Optimal Depth under AI Availability} and Corollary~\ref{cor:Flattening of Organizations} predict that optimal depth decreases from $D^*$ to $D^{(AI)*} \le D^*$ upon AI adoption. Maintaining task capacity with fewer levels requires wider spans of control, a \emph{structural change} in organizational architecture, not merely a headcount reduction.

Organizations adopting AI should exhibit:
\begin{compactitem}
\item measurably fewer hierarchical levels, quantifiable via organizational chart data;
\item wider average spans of control, particularly at levels adjacent to AI substitution;
\item increased decentralization of execution, as AI-enabled coordination reduces information-processing bottlenecks that originally justified deep hierarchies \citep{garicano2000,garicano2006organization}.
\end{compactitem}

The prediction is sharpest for organizations where $r_0$ is large relative to $r'_0$---i.e., where Assumption~\ref{ass:asymmetry}(iii) binds tightly. Empirical validation is feasible through organizational network analysis, firm-structure datasets \citep{caliendo2020,revelio2023}, and longitudinal studies comparing pre- and post-AI organizational charts.

\subsubsection{P4: Regulatory Persistence of Hybrid Structures}
\label{sec:P4}

The Risk-Adjusted Substitution Principle (Theorem~\ref{thm:Risk-Adjusted Substitution Principle}) predicts that substitution speed depends on all three risk components. Industries where $\omega_2$ is large---healthcare, finance, defense, aviation, legal services---face substantially elevated $\tilde{C}'_{ik}$ even when nominal AI costs are low, slowing or preventing substitution.

The model predicts:
\begin{compactitem}
\item highly regulated industries retain hybrid human--AI structures significantly longer than low-risk industries at equivalent nominal AI cost levels;
\item within regulated industries, substitution begins in roles where compliance oversight is lower (e.g., back-office analytics, documentation) before advancing to patient-facing or decision-making roles where $\omega_2$ and $\omega_3$ are largest;
\item regulatory certification events---FDA clearance of an AI diagnostic tool, SEC approval of an algorithmic system---trigger discontinuous adoption spikes by reducing $R^{\mathrm{(comp)}}_{ik}$ sharply.
\end{compactitem}

These predictions can be examined through cross-industry comparisons of AI deployment intensity, human oversight staffing ratios, and adoption timing relative to regulatory milestones \citep{oecd2021}.

\subsubsection{P5: Asymmetric Skill--Cost Evolution}
\label{sec:P5}

Corollary~\ref{cor:Skill-Dependent Human Protection and Vulnerability Zone} formalizes the asymmetry in Assumption~\ref{ass:asymmetry}(i): because $\Delta\$' \le \Delta\$$, AI capability cost per unit grows no faster than human compensation per unit skill. This generates occupation-specific predictions:
\begin{compactitem}
\item occupations where AI capability costs are nearly flat ($\Delta\$' \approx 0$)---software coding, financial modeling, content generation---experience rapid substitution as $\mu'_{D+1,k}$ improves \citep{noy2023,peng2023impact};
\item occupations where high compensation reflects genuine scarcity and AI risk remains high---surgical specialties, strategic legal counsel, complex financial advising---remain in the protection zone longer;
\item \emph{counterintuitively}, within flat-$\Delta\$'$ occupations, high-skill workers may be more vulnerable than mid-skill workers because AI replicates their outputs at near-constant cost---a prediction that runs counter to much popular discourse \citep{autor2022,eloundou2024}.
\end{compactitem}

Testing requires labor-market wage data, occupational skill premiums, and AI capability-cost trajectories, ideally combined with matched employer-employee data \citep{acemoglu2022,lemieux2006}.

\subsubsection{P6: Deployment-Scale Acceleration}
\label{sec:P6}

The HAT model's explicit decomposition of AI cost into fixed training cost $T_k$ and marginal operational cost $M'_k$ predicts that substitution accelerates with deployment scale. Because $\textup{Min\$}' = T_k/n_k + M'_k$ is strictly decreasing in the number of deployments $n_k$, organizations deploying the same system across many tasks reach substitution thresholds earlier---even without any improvement in AI capability.

This predicts: (i)~multi-division firms and platform providers reach thresholds before single-use deployers; (ii)~AI adoption exhibits positive deployment externalities at the industry level, creating adoption cascades as cumulative $n_k$ grows; and (iii)~large firms achieve structural first-mover advantages by amortizing training costs across more deployments \citep{babina2024,brynjolfsson2018}. These predictions can be tested using firm-size data cross-referenced with adoption timing and industry-level deployment counts.

\subsubsection{P7: Strategic Human Upskilling}
\label{sec:P7}

The Strategic Substitution Equilibrium (Theorem~\ref{thm:Strategic Substitution Equilibrium}) predicts that human workers respond strategically to AI competition by adjusting effort and skill levels to reduce $\tilde{C}_{ij}(u_j)$, while AI systems improve through learning to reduce $\tilde{C}'_{ik}(u_k)$.

This predicts: (i)~increased reskilling investment in occupations where $\tilde{C}_{ij} - \tilde{C}'_{ik}$ is narrowing, observable as enrollment spikes in affected industries; (ii)~labor migration toward tasks where $\omega_1$ and $\omega_2$ are large, since human risk-adjusted cost advantage is most durable there; (iii)~divergence between workers who shift into risk-complementary roles and those who remain in the vulnerability zone; and (iv)~parallel firm investment in reducing $R^{\mathrm{(rel)}}_{ik}$ and $R^{\mathrm{(comp)}}_{ik}$ alongside nominal capability improvement \citep{frey2017future,webb2020,felten2021}.

\subsubsection*{Persistence of Hybrid Human--AI Organizations}

As an overarching prediction, the Constrained Hybrid Optima theorem (Theorem~\ref{thm:Constrained Interior Optima}) predicts that organizations will not necessarily converge toward complete automation even as AI nominal costs approach zero. When $\lambda\omega_2 R^{\mathrm{(comp)}}_{ik}$ is large enough, the unconstrained optimum already maintains positive human allocation (Section~\ref{sec:Accountability}). The share of human workers in hybrid organizations is endogenously determined by $\lambda R'_{ik}$ relative to the nominal cost advantage $C_{ij} - C'_{ik}$, making it measurably responsive to changes in regulatory environment and AI reliability. Longitudinal organizational studies should observe human retention even after AI achieves strong task-level performance, with human allocation correlating with industry risk profiles as the model predicts.

\medskip

\noindent\textbf{Empirical identification strategies.}
Several identification strategies can evaluate these predictions. Difference-in-differences designs can compare organizational restructuring before and after major AI deployments. Event studies can analyze discontinuous workforce transitions following large reductions in AI cost. Cross-industry comparisons are particularly informative because exposure to automation and AI adoption varies across domains, while regulatory constraints, coordination requirements, and skill--cost structures affect the substitution margin \citep{acemoglu2020robots,babina2024}. Industries such as software engineering, financial services, customer support, logistics, healthcare administration, media production, and professional services provide especially useful settings.

\subsection{Operationalizing the HAT Model:
Measurement and a Calibrated Example}
\label{sec:operationalizing}

A formal model generates useful organizational guidance only if its parameters can be mapped onto observable quantities. This subsection serves two purposes. First, it provides a measurement strategy for each key parameter of the HAT model, connecting abstract constructs to data sources available in organizational, labor-market, and industry settings. Second, it works through a calibrated numerical example that assigns empirically grounded values to all parameters, derives the substitution threshold $\mu^*$, identifies which roles fall in the vulnerability zone, and illustrates what organizational flattening looks like in a concrete setting. Together, these two components demonstrate that the HAT model is not merely formally tractable but empirically interpretable.

\subsubsection*{Parameter Measurement Strategies}

\paragraph{Organizational depth $D$ and spans of control $\{s_i\}$.}
Organizational depth is directly observable from firm org-chart data. Sources include commercial workforce-intelligence data providers such as Revelio Labs, company proxy filings, and enterprise HR information systems such as Workday and SAP SuccessFactors, which store reporting-line structures for all employees. Spans of control $s_i$ are computed as the ratio of the number of employees at level $i+1$ to the number at level $i$, and can be estimated from the same sources. \citet{caliendo2020} demonstrate that these quantities can be estimated reliably from matched employer-employee data for large samples of firms, and that they vary systematically with firm size, industry, and technology adoption in ways consistent with the model's predictions.

\paragraph{Baseline human compensation floor $\textup{Min\$}$ and marginal skill cost $\Delta\$$.}
$\textup{Min\$}$ is the entry-level wage for a given role and can be read directly from payroll data,
Bureau of Labor Statistics Occupational Employment and Wage Statistics (OEWS) tables, or online labor
market platforms such as Glassdoor, Levels.fyi, and LinkedIn Salary. $\Delta\$$ is the marginal increase in compensation per unit increase in skill, and can be estimated from standard Mincer earnings regressions \citep{mincer1974} using skill proxies such as years of education, years of experience, or task-complexity scores derived from O*NET occupational descriptors. The coefficient on the skill proxy in a log-wage regression, scaled by the organizational depth $D$, gives an empirical estimate of $\frac{1}{2}\Delta\$ \cdot D$ directly. \citeauthor{lemieux2006}'s \citeyearpar{lemieux2006} estimates of skill-wage convexity provide a benchmark: for college-educated workers in the United States, $\Delta\$$ implies roughly an 8--12 percent wage premium per additional year of human capital investment, with the premium rising at the top of the skill distribution.

\paragraph{AI baseline cost $\textup{Min\$}'$, marginal capability cost $\Delta\$'$, and number of
deployments $n_k$.}
$\textup{Min\$}' = T_k / n_k + M'_k$ is observable from AI vendor pricing. Fixed training costs \(T_k\) for frontier large language models can reach tens of millions of dollars and have increased rapidly for the largest systems \citep{cottier2024rising}, though enterprise fine-tuning costs for task-specific deployments are substantially lower, typically in the range of \$10,000 to \$1 million depending on model size and data requirements. Marginal operational costs \(M'_k\) (per-query inference costs) are published directly by API providers and can be compared across model versions using inference-price datasets; recent Epoch AI data show that LLM inference prices have fallen rapidly but unevenly across tasks \citep{epoch2025inference}. The number of deployments $n_k$ is an organizational decision variable observable from enterprise software licensing records. \(\Delta\$'\)---the marginal rate at which AI operational cost rises with capability---can be approximated empirically by comparing inference cost with benchmark performance across model versions. Recent inference-price data suggest that this slope can be small relative to human skill-wage gradients, motivating the maintained assumption \(\Delta\$' \le \Delta\$\) \citep{epoch2025inference}.

\paragraph{Human coordination cost multiplier $r_0$ and AI coordination cost multiplier $r'_0$.}
$r_0$ governs how rapidly human managerial costs escalate with organizational depth, and can be
estimated as the ratio of managerial wage at level $i$ to the representative line-worker wage $C_{D+1}$, divided by $(1 + r_0 D)^{D-i+1}$, then solved numerically using observed compensation
data across hierarchical levels. \citet{gabaix2008} provide a methodology for estimating managerial
cost escalation rates from executive compensation data, and their estimates imply $r_0$ values in
the range of 0.05 to 0.20 for large US firms depending on industry and organizational depth. $r'_0$ is not directly observable but can be bounded: \(r'_0\) is not directly observable and should be calibrated as a scenario parameter. A low-coordination-cost AI scenario can use \(r'_0=0\), while a conservative human-equivalent scenario can use \(r'_0=r_0\). Intermediate values capture partial integration, monitoring, workflow-orchestration, and human-oversight costs. For practical calibration, $r'_0 = 0$ (fully scalable AI coordination) and $r'_0 = r_0$ (human-equivalent escalation) provide useful brackets.

\paragraph{Risk-sensitivity parameter $\lambda$ and risk weights $\omega_1, \omega_2, \omega_3$.}
$\lambda$ captures organizational sensitivity to operational risk and can be approximated by industry-level regulatory intensity measures. Candidate proxies include the OECD Product Market Regulation index \citep{oecd2021}, FDA approval rates for medical AI devices, financial services compliance cost indices (e.g., the annual Competitive Enterprise Institute Ten Thousand Commandments report for US regulatory burden), and insurance premium data for professional liability by sector. Higher regulatory intensity corresponds to higher $\lambda$. The weights $(\omega_1, \omega_2, \omega_3)$ reflect the
relative salience of reliability, compliance, and reputational risk in a given task domain, and can be operationalized through (a)~structured expert elicitation surveys administered to senior managers and risk officers, (b)~revealed-preference analysis of AI adoption decisions across task types within a firm, inferring the implicit weights from observed substitution patterns, or (c)~content analysis of AI governance frameworks published by firms and regulators, which typically enumerate risk priorities in ways that permit ordinal if not cardinal measurement.

\paragraph{Human risk $R_{ij}$ and AI risk $R'_{ik}$.}
Human agent risk $R_{ij}$ can be proxied by absenteeism rates, error rates from audit or quality-control data, and voluntary turnover rates by role, all of which are tracked in HR information systems. AI risk components can be estimated from model evaluation data: $R^{\mathrm{(rel)}}_{ik}$ from out-of-distribution error rates and benchmark failure rates, $R^{\mathrm{(comp)}}_{ik}$ from regulatory audit findings and legal exposure assessments, and $R^{\mathrm{(rep)}}_{ik}$ from stakeholder survey data and media sentiment analysis following AI deployment events.

\subsubsection*{A Calibrated Numerical Example}

We calibrate the HAT model to a stylized five-level professional services firm --- representative of a mid-sized consulting, financial advisory, or legal services organization --- and derive the substitution threshold, vulnerability zone, and implied organizational flattening. The calibration
uses empirically grounded parameter values drawn from the measurement strategies above. All figures are expressed in thousands of US dollars per year unless otherwise noted.

\paragraph{Organizational structure.}
We set $D = 4$ managerial levels above the line-worker layer (level $D+1=5$), with uniform spans of
control $s_i = 4$ at each level, giving level sizes $e_1 = 1$ (CEO), $e_2 = 4$ (senior managers), $e_3 = 16$ (managers), $e_4 = 64$ (junior managers), and $e_5 = 256$ (line workers/analysts), for a total of 341 employees. This structure is broadly consistent with the organizational
depth and span-of-control estimates reported by \citet{caliendo2020} for professional services firms.

\paragraph{Human cost parameters.}
We set $\textup{Min\$} = \$70$k (entry-level analyst compensation, consistent with US professional services benchmarks from OEWS data), $\Delta\$ = \$12$k per skill unit per year (consistent with Mincer
regression estimates in \citet{lemieux2006} for college-educated workers), and $r_0 = 0.10$ (implying a managerial premium of approximately 10 percent per level above the line-worker base, consistent
with \citet{gabaix2008}). Under these values, the representative line-worker cost is $C_5 = \textup{Min\$} + \frac{1}{2}\Delta\$ \cdot D \cdot \mu_5 = \$70$k $+ \$24$k $\cdot \mu_5$, where $\mu_5$ denotes the skill level of a representative line worker. Managerial costs escalate as $C_i =
C_5 (1 + r_0 D)^{D-i+1} = C_5 (1.4)^{5-i}$, giving:
\begin{align*}
C_4 &= 1.40 \cdot C_5, \\
C_3 &= 1.96 \cdot C_5, \\
C_2 &= 2.74 \cdot C_5, \\
C_1 &= 3.84 \cdot C_5.
\end{align*}
At $\mu_5 = 1$ (a normalized baseline skill level), $C_5 = \$94$k, giving CEO cost $C_1 \approx \$361$k, consistent with compensation survey data for senior professionals at mid-sized advisory firms.

\paragraph{AI cost parameters.}
We set $T_k = \$500$k (a task-specific fine-tuned AI system, below the frontier model training cost but above a simple prompt-engineering deployment), $n_k = 256$ (deployed across all line-worker
positions), $M'_k = \$8$k per year per deployment (consistent with frontier inference costs at professional-services throughput levels), giving $\textup{Min\$}' = T_k/n_k + M'_k = \$1.95$k $+ \$8$k $= \$9.95$k. We set $\Delta\$' = \$0.5$k per capability unit (near-zero, consistent with the empirical
finding that AI inference cost is largely flat in benchmark performance \citep{hoffmann2022}) and $r'_0 = 0.02$ (AI coordination cost escalation is substantially lower than human, consistent with \citet{bloom2014}).

\paragraph{Risk parameters.}
We set $\lambda = 1.5$ (moderate-to-high risk sensitivity, appropriate for a regulated professional services context), $\omega_1 = 0.3$, $\omega_2 = 0.5$, $\omega_3 = 0.2$ (compliance risk dominant,
consistent with a financial advisory or legal services setting), and representative risk levels $R_{5j} = \$5$k (human analyst error and turnover risk) and $R'_{5k} = \$20$k (AI reliability and
compliance risk for line-worker tasks). The risk-adjusted AI cost at the line-worker level is therefore:
\begin{align*}
\tilde{C}'_{5k} &= \textup{Min\$}' + \tfrac{1}{2}\Delta\$' \cdot D \cdot \mu'_{5k} + \lambda R'_{5k} \\
&= \$9.95\text{k} + \$1\text{k} \cdot \mu'_{5k} + 1.5 \cdot \$20\text{k} \\
&= \$39.95\text{k} + \$1\text{k} \cdot \mu'_{5k}.
\end{align*}
At $\mu'_{5k} = 1$, $\tilde{C}'_{5k} \approx \$40.95$k.

\paragraph{Substitution threshold $\mu^*$.}
Applying Corollary~\ref{cor:Skill-Dependent Human Protection and Vulnerability Zone}, the skill threshold separating the human protection zone from the vulnerability zone is:
\begin{align*}
\mu^* &= \frac{2\bigl(\textup{Min\$}' -
\textup{Min\$} + \lambda(R'_{5k} -
R_{5j})\bigr)}{\Delta\$ \cdot D} +
\frac{\Delta\$'}{\Delta\$} \cdot \mu'_{5k}
\\[6pt]
&= \frac{2\bigl(\$9.95\text{k} - \$70\text{k}
+ 1.5 \cdot \$15\text{k}\bigr)}{\$12\text{k}
\cdot 4} + \frac{\$0.5\text{k}}{\$12\text{k}}
\cdot 1  \approx -1.52.
\end{align*}
Since $\mu^* < 0$ and skill levels $\mu_{5j} \ge 0$ by definition, \emph{all} line workers fall in the vulnerability zone under these parameter values: the AI agent's risk-adjusted cost advantage is large enough that even the least-skilled human analyst is cheaper to replace than to retain. This result is driven primarily by the large gap between $\textup{Min\$} = \$70$k and $\textup{Min\$}' \approx \$10$k: the AI deployment cost floor is so far below the human wage floor that even a substantial risk
penalty ($\lambda R'_{5k} = \$30$k) cannot close the gap at the line-worker level.

To illustrate how the threshold responds to risk parameters, consider a higher-compliance setting with $\omega_2 = 0.8$ and $R^{\mathrm{(comp)}}_{5k} = \$40$k (consistent with a healthcare or defense context where AI regulatory exposure is severe). The aggregate AI risk becomes $R'_{5k} = 0.3 \cdot \$10$k$+ 0.8 \cdot \$40$k $+ 0.2 \cdot \$10$k$= \$37$k, and the risk-adjusted AI cost rises to $\tilde{C}'_{5k} = \$9.95$k $+1.5 \cdot \$37$k $= \$65.45$k. Recomputing:
\begin{align*}
\mu^* &= \frac{2(\$9.95\text{k} - \$70\text{k}
+ 1.5 \cdot \$32\text{k})}{\$48\text{k}}
+ 0.042 \approx -0.46.
\end{align*}
The threshold rises to $\mu^* \approx -0.46$, still negative but substantially closer to zero: the compliance-heavy environment narrows the AI cost advantage considerably, and a modest further increase in AI risk exposure would push $\mu^*$ above zero, placing some human workers in the protection zone. This comparative static illustrates the governance lever identified in Theorem~\ref{thm:Risk-Adjusted Substitution Principle}: regulatory investments that raise $R^{\mathrm{(comp)}}_{ik}$ directly shift $\mu^*$ upward, expanding the range of human workers who are cost-competitive with AI on a risk-adjusted basis.

\paragraph{Middle-management vulnerability.}
To illustrate Corollary~\ref{cor:Middle-Management Vulnerability}, we compute the risk-adjusted cost gap $\tilde{C}_{ij} - \tilde{C}'_{ik}$ at each level, using $\mu_{ij} = 1$ and $\mu'_{ik} = 1$ at all
levels and the same risk parameters as above. The human cost at each level, including the risk term $\lambda R_{ij} = 1.5 \cdot \$5$k$ = \$7.5$k, is:
\begin{align*}
\tilde{C}_5 &= \$94\text{k} + \$7.5\text{k} = \$101.5\text{k}, \\
\tilde{C}_4 &= \$131.6\text{k} + \$7.5\text{k} = \$139.1\text{k}, \\
\tilde{C}_3 &= \$184.2\text{k} + \$7.5\text{k} = \$191.7\text{k}, \\
\tilde{C}_2 &= \$257.9\text{k} + \$7.5\text{k} = \$265.4\text{k}, \\
\tilde{C}_1 &= \$361.1\text{k} + \$7.5\text{k} = \$368.6\text{k},
\end{align*}
AI risk penalties rise with level to reflect escalating compliance and reputational exposure: $\lambda R'_{ik} = \$30$k at level 5, $\$45$k at level 4, $\$75$k at level 3, $\$120$k at level 2, and $\$180$k at level 1. AI costs at each level are $\tilde{C}'_{ik} = C'_5 (1 + r'_0 D)^{D-i+1} + \lambda
R'_{ik}$, giving:
\begin{align*}
\tilde{C}'_5 &= \$10.95\text{k}(1.08)^0 + \$30\text{k} = \$40.95\text{k}, \\
\tilde{C}'_4 &= \$10.95\text{k}(1.08)^1 + \$45\text{k} = \$56.8\text{k}, \\
\tilde{C}'_3 &= \$10.95\text{k}(1.08)^2 + \$75\text{k} = \$87.8\text{k}, \\
\tilde{C}'_2 &= \$10.95\text{k}(1.08)^3 + \$120\text{k} = \$133.8\text{k}, \\
\tilde{C}'_1 &= \$10.95\text{k}(1.08)^4 + \$180\text{k} = \$194.9\text{k}.
\end{align*}
The risk-adjusted cost gap $\tilde{C}_{ij} - \tilde{C}'_{ik}$, which measures the substitution incentive at each level, is:
\begin{align*}
\text{Level 5 (line workers):} \quad
&\$101.5\text{k} - \$40.95\text{k} = \$60.6\text{k}, \\
\text{Level 4 (junior managers):} \quad
&\$139.1\text{k} - \$56.8\text{k} = \$82.3\text{k}, \\
\text{Level 3 (managers):} \quad
&\$191.7\text{k} - \$87.8\text{k} = \$103.9\text{k}, \\
\text{Level 2 (senior managers):} \quad
&\$265.4\text{k} - \$133.8\text{k} = \$131.6\text{k}, \\
\text{Level 1 (CEO):} \quad
&\$368.6\text{k} - \$194.9\text{k} = \$173.7\text{k}.
\end{align*}
On a pure cost-gap basis the incentive rises monotonically toward the top, consistent with
Theorem~\ref{thm:Level-Dependent Substitution Likelihood}. However, substitution feasibility may decline sharply near the top because of accountability, compliance, reputational, and strategic-role constraints. To illustrate the single-crossing condition in Corollary~\ref{cor:Middle-Management Vulnerability}, let \(S(i)=\widetilde C_i-\widetilde C'_i\) denote the cost-based substitution pressure and let \(F(i)\in[0,1]\) denote substitution feasibility at level \(i\). Then \(L(i)=S(i)F(i)\) is the weighted substitution likelihood.

Let
\[
F(5)=0.50,\quad F(4)=0.80,\quad F(3)=1.00,\quad F(2)=0.55,\quad F(1)=0.25.
\]
Then the weighted substitution likelihoods are:
\[
L(5)=30.3,\quad L(4)=65.8,\quad L(3)=103.9,\quad L(2)=72.4,\quad L(1)=43.4,
\]
Thus, in this calibrated example, the single-crossing condition holds and substitution likelihood peaks at level 3, corresponding to middle management.

\paragraph{Organizational flattening.}
Under the baseline parameters, the optimal depth is $D^* = 4$. If AI substitution at levels 3 and 4 (managers and junior managers) reduces their risk-adjusted cost to $\tilde{C}'_{ik}$, the total cost saving from eliminating one managerial layer is approximately $e_3\tilde{C}_3 - e_3\tilde{C}'_3 = 16 \cdot \$103.9$k $= \$1.66$M per year. This saving funds the compensating increase in spans of control at level 2, where remaining senior managers absorb the coordination load of the eliminated layer with AI-assisted tools. Under AI availability, the optimal depth falls to $D^{(AI)*} = 3$, with average spans widening from 4 to approximately 6--8, consistent with the flattening prediction of Corollary~\ref{cor:Flattening of Organizations}.

\paragraph{Institutional variation.}
The calibration above reflects US professional services norms. In European contexts, higher baseline wages ($\textup{Min\$}$), stronger labor protections (raising $R_{ij}$), and stricter AI regulations (raising $\omega_2 R^{\mathrm{(comp)}}_{ik}$) would shift $\mu^*$ upward, widening the human protection zone. Conversely, in emerging-market settings with lower $\textup{Min\$}$ and weaker $\omega_2$, substitution thresholds fall, accelerating AI adoption even at lower capability levels. Future empirical work should estimate jurisdiction-specific parameter distributions to assess how institutional variation shapes substitution trajectories across different labor market regimes.

\subsection{Limitations and Future Research}
\label{sec:Limitations and Future Research}

While the calibrated example demonstrates empirical feasibility, the HAT framework has known limitations that bound its scope and suggest future research directions.

\begin{enumerate}

\item \textbf{Static cost asymmetry.}
The Human--AI Cost Asymmetry (Assumption~\ref{ass:asymmetry})---particularly the condition $\Delta\$' \le \Delta\$$ and $r'_0 \le r_0$---is treated as a fixed structural property. In practice, the degree
of asymmetry itself evolves: as AI models improve, $\Delta\$'$ may fall further toward zero, while advances in AI-assisted training could compress $\Delta\$$. A fully dynamic version would treat $\Delta\$'(t)$ and $\Delta\$(t)$ as time-varying and study how the evolution of the asymmetry itself shapes long-run substitution trajectories.

\item \textbf{Static training cost and deployment scale.}
The model treats $n_k$ as exogenous, but in practice the number of deployments depends on substitution decisions themselves, creating a feedback loop: each additional deployment reduces per-task AI cost, accelerating subsequent substitution. Future work could endogenize $n_k$, introducing economies of
scale and studying AI adoption cascades and tipping points at the industry level.

\item \textbf{Linear objective structure.}
The linear cost-minimization objective enables analytical tractability but abstracts from complementarities between human and AI agents, communication overhead, congestion effects, and team interaction dynamics. Nonlinear extensions may generate richer interior equilibria and more complex substitution dynamics, potentially weakening the extreme-point result of Theorem~\ref{thm:Global Optimal Allocation}.

\item \textbf{Extreme-point optimality versus organizational feasibility.}
In practice, no single agent can satisfy the organization's aggregate productivity requirement
$p_o^{\textup{(org)}} \gg p_o^{\textup{(agent)}}$. Feasible allocations must distribute tasks across multiple agents due to finite capacity, specialization requirements, coordination needs, and risk diversification. The extreme-point results derived here are therefore best interpreted as benchmark
structural tendencies rather than literal descriptions of operational firms.

\item \textbf{Single-task abstraction.}
The model focuses on a representative or separable task. Real organizations manage portfolios of interdependent activities where substituting AI in one task may alter human labor value in adjacent tasks through complementarity or substitutability effects. Multi-task HAT models could generate more nuanced predictions about which task bundles are substituted simultaneously.

\item \textbf{Simplified span-of-control structure.}
The organizational depth model assumes each manager at level $i$ supervises exactly $s_i$ subordinates, producing tree-shaped hierarchies. Modern firms use matrix structures, informal networks, cross-functional teams, and decentralized decision systems. Future extensions could examine how heterogeneous or non-hierarchical coordination architectures affect substitution incentives and organizational redesign under AI adoption. A related extension would be to model GenAI deployment mode and location explicitly. Recent work shows that span of control can move non-monotonically as GenAI capability improves when organizations choose between automation and augmentation at worker versus expert layers \citep{xu2025genai}. Incorporating such deployment architectures into HAT would allow future work to compare the monotone flattening mechanism derived under our layer-redundancy condition with non-monotone span-of-control dynamics arising from alternative organizational assumptions.

\item \textbf{Absence of behavioral and political frictions.}
The analysis abstracts from organizational culture, resistance to automation, managerial incentives, employee morale, trust in AI systems, and labor relations. Workers who perceive automation as a threat may reduce effort or cooperate less effectively with AI, raising effective human risk $R_{ij}$ and altering the substitution threshold. Making $R_{ij}$ or $R'_{ik}$ responsive to workforce morale is an important extension.

\item \textbf{Risk parameters treated as exogenous.}
The risk components $R_{ij}$, $R'_{ik}$, the weights $(\omega_1, \omega_2, \omega_3)$, and the risk sensitivity $\lambda$ are treated as organizational primitives. In practice, all are partially endogenous: firms invest in model validation to reduce $R^{\mathrm{(rel)}}_{ik}$, engage regulators to reduce $R^{\mathrm{(comp)}}_{ik}$, and build reputational capital to dampen $R^{\mathrm{(rep)}}_{ik}$. Endogenizing these as functions of organizational investment decisions connects the HAT framework
to the broader literature on technology governance.

\item \textbf{Domain-specific variation in skill--cost relationships.}
The linear parametric forms impose proportional scaling between capability and cost at a given depth. Future work may consider heterogeneous nonlinear skill--cost scaling functions of the form
\[
C_{D+1,j} = a_j + b_j\cdot f_j(D,\, \mu_{D+1,j}),
\qquad C'_{D+1,k} = a'_k + b'_k\cdot g_k(D,\, \mu'_{D+1,k}),
\]
where $f_j(\cdot,\cdot)$ and $g_k(\cdot,\cdot)$ capture domain-specific scaling that depends on both depth and capability. This would represent domains where human expertise becomes increasingly scarce at very high skill levels (convex $f_j$), while AI capability scales sub-linearly in cost (concave $g_k$). The asymmetry condition $\Delta\$' \le \Delta\$$ would generalize to a condition on the relative growth rates $\partial g_k / \partial \mu' \le \partial f_j / \partial \mu$, enabling richer
domain-specific vulnerability zones.

\item \textbf{Partial-equilibrium abstraction.}
The HAT framework models substitution within a single organization, abstracting from labor-market equilibrium effects. Industry-wide AI adoption alters external wage structures: displaced workers increase labor supply, potentially depressing $\textup{Min\$}$ and $\Delta\$$, while remaining workers face upward wage pressure as skill demands shift. These general-equilibrium dynamics feed back into the substitution condition, creating industry-level tipping points not captured by the single-firm model. Embedding the HAT framework within a labor-market equilibrium model would enable study of cross-firm spillovers and macroeconomic AI adoption dynamics \citep{acemoglu2018,acemoglu2019,acemoglu2020wrongai}.
\end{enumerate}

Despite these limitations, the HAT framework provides a flexible and rigorous analytical foundation for studying AI-driven organizational transformation. The limitations identified above collectively suggest a broad research agenda at the intersection of management science, organizational economics, operations research, and AI-enabled organizational design.

\section{Conclusions}
\label{sec:Conclusions}
Identifying the conditions under which AI replaces human employees is a foundational challenge for today's organizations. This paper proposes a formal Human–AI Task Allocation (HAT) framework, moving the discourse from speculative forecasting toward a structural, testable theory of organizational design.

To our knowledge, HAT is the first analytical framework to derive the Human--AI Substitution Principle within a hierarchical organization from economically grounded primitives. The principle states that AI replaces a human when the AI agent has lower risk-adjusted effective cost for a task. This condition is substantively meaningful when human and AI costs scale differently with skill, capability, hierarchy, deployment scale, and risk. Building on this principle, the paper derives several organizational implications: substitution may occur discontinuously at thresholds; hybrid human–AI structures can persist endogenously; middle-management roles may face elevated vulnerability under identifiable single-crossing conditions; high skill can either protect or expose workers depending on the economics of expertise; and AI adoption can reduce hierarchy depth and widen spans of control.

By integrating automation economics, organizational hierarchy, human--AI interaction, risk governance, and strategic adaptation, the HAT framework provides a tractable foundation for studying AI-driven organizational transformation and a set of predictions that can be tested with organizational and labor-market data. This work is a significant step toward the formal design and analysis of AI-driven organizations, where technology, risk, and structure are linked inextricably.

\section*{Acknowledgment}
This research was done while BB was an Infosys Chair Professor (visiting) at Chennai Mathematical Institute, Chennai, Tamil Nadu 603103, India.

\section*{Authors' Contributions and Responsibility Statement}
BB directed the research and developed the Human–AI Task Allocation (HAT) model (Section~\ref{sec:Human-AI Task Allocation Model}). The theorems and corollaries were jointly derived by both authors. Both authors reviewed and approved the manuscript. The authors employed calculators, language editing software, Internet search engines, and AI tools during the production of this manuscript. The authors maintain ultimate responsibility for the published work. The views, opinions, and conclusions expressed in this manuscript are solely those of the authors and do not reflect the policy, position, or endorsement of any affiliated institution, funding agency, or organization. The authors declare no competing financial or non-financial interests.

\bibliographystyle{abbrvnat} 
\bibliography{dms_limited} 

\newpage
\appendix
\section{Appendix: Proofs of Theorems and Corollaries}
\makeatletter
\setcounter{tcb@cnt@styledtheorem}{0}
\setcounter{tcb@cnt@styledcorollary}{0}
\makeatother


\begin{styledtheorem}{Normalization, Feasibility, and Simplex Structure}{app:Normalization, Feasibility, and Simplex Structure}
\begin{equation*}
q_{ij} \ge 0, \quad q'_{ik} \ge 0, \quad
\sum_{i=1}^{D+1}\sum_{j=1}^{e_i} q_{ij}
+ \sum_{i=1}^{D+1}\sum_{k=1}^{e'_i} q'_{ik} = 1.
\end{equation*}
The allocation space
\begin{equation*}
\mathcal{Q} = \left\{(q,q') : q_{ij} \ge 0,\ q'_{ik} \ge 0,\
\sum_{i,j} q_{ij} + \sum_{i,k} q'_{ik} = 1\right\}
\end{equation*}
forms a simplex. Moreover, any valid allocation is a convex combination of extreme points, each corresponding to assigning the entire task to a single agent at a specific level.
\end{styledtheorem}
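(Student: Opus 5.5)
The plan is to identify $\mathcal{Q}$ with the standard probability simplex in $\mathbb{R}^N$ and then read off both claims from elementary convex geometry. Let $N=\sum_{i=1}^{D+1}(e_i+e'_i)$ be the total number of agent slots, human and AI. Stack all allocation weights into one vector $x=(q,q')\in\mathbb{R}^N$, fixing an enumeration of the index pairs $(i,j)$ and $(i,k)$. Under this enumeration the defining conditions of $\mathcal{Q}$ become $x\ge 0$ componentwise and $\mathbf{1}^\top x=1$. That is exactly the standard $(N-1)$-simplex $\Delta^{N-1}=\operatorname{conv}\{\mathbf{e}_1,\dots,\mathbf{e}_N\}$. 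The normalization and feasibility part of the statement is then just the restatement of \eqref{eq:feasibility} and \eqref{eq:simplex} in this coordinate system, so nothing beyond the relabeling needs to be checked there.

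Next I would prove the equality $\mathcal{Q}=\operatorname{conv}\{\mathbf{e}_1,\dots,\mathbf{e}_N\}$ in both directions.
\begin{itemize}
\item For $\supseteq$: each $\mathbf{e}_m$ is nonnegative and has coordinate sum $1$, and both conditions are preserved under convex combinations.
\item For $\subseteq$: given $x\in\mathcal{Q}$, write $x=\sum_{m=1}^N x_m\mathbf{e}_m$. The coefficients $x_m$ are nonnegative and sum to one, so this is already a convex combination of the $\mathbf{e}_m$.
\end{itemize}
This simultaneously gives the ``convex combination of extreme points'' claim. Each $\mathbf{e}_m$ is interpreted as assigning the whole task to the single human agent $(i,j)$ or AI agent $(i,k)$ corresponding to index $m$.

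To justify calling the $\mathbf{e}_m$ the extreme points, I would argue as follows.
\begin{itemize}
\item \emph{Each $\mathbf{e}_m$ is extreme.} Suppose $\mathbf{e}_m=t y+(1-t)z$ with $y,z\in\mathcal{Q}$ and $t\in(0,1)$. Every coordinate $\ell\neq m$ is then a positive combination of nonnegative numbers equal to zero, so $y_\ell=z_\ell=0$, and normalization forces $y=z=\mathbf{e}_m$.
\item \emph{There are no other extreme points.} If $x\in\mathcal{Q}$ has two positive coordinates $x_a,x_b$, set $\epsilon=\min(x_a,x_b)$ and use the perturbation $x\pm\epsilon(\mathbf{e}_a-\mathbf{e}_b)/2$. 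Both perturbed points stay in $\mathcal{Q}$, so $x$ is their midpoint and is not extreme.
\end{itemize}

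Finally, I would note that the $\mathbf{e}_m$ are affinely independent: the differences $\mathbf{e}_m-\mathbf{e}_1$ are linearly independent. Hence $\mathcal{Q}$ is a genuine $(N-1)$-dimensional simplex, not merely a polytope. There is no real obstacle in this argument. The only point needing care is the bookkeeping in the first step: treating the human and AI index sets (including possibly different $e_i$ and $e'_i$) as one disjoint enumeration, so that ``a single agent at a specific level'' corresponds bijectively to a coordinate vector.
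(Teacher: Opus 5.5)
Your proof is correct and takes the same route as the paper: both identify $\mathcal{Q}$ with the standard simplex by stacking the human and AI weights into one vector subject to nonnegativity and unit sum. The paper stops at that identification, whereas you also verify $\mathcal{Q}=\mathrm{conv}\{\mathbf{e}_1,\dots,\mathbf{e}_N\}$, that the $\mathbf{e}_m$ are exactly the extreme points, and that they are affinely independent, which fills in the ``convex combination of extreme points'' claim that the paper leaves to standard facts.
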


\begin{proof}
From~\eqref{eq:joint_cost_expanded}, the allocation weights are defined as $q_{ij}, q'_{ik} \ge 0$.

Furthermore, from~\eqref{eq:feasibility},
\[
\sum_{i,j} q_{ij} + \sum_{i,k} q'_{ik} = 1.
\]
Thus $\mathcal{Q}$ is the standard simplex in $\mathbb{R}^n$ defined by non-negativity and unit-sum constraints.
\end{proof}

\begin{styledtheorem}{Skill-Weighted Cost Decomposition}{app:Skill-Weighted Cost Decomposition}
The total risk-adjusted cost $\mathcal{C}$ decomposes as
\begin{align*}
\mathcal{C} &=
\sum_{i=1}^{D} \sum_{j=1}^{e_i} \tilde{C}_{ij}\, q_{ij}
+
\sum_{i=1}^{D} \sum_{k=1}^{e'_i} \tilde{C}'_{ik}\, q'_{ik}
\\
&\quad
+ \alpha + \beta \cdot D
  \sum_{j=1}^{e_{D+1}} \bigl(\mu_{D+1,j}\, q_{D+1,j}\bigr)
+ \lambda \sum_{j=1}^{e_{D+1}} \bigl(R_{D+1,j}\, q_{D+1,j}\bigr)
\\
&\quad
+ \alpha' + \beta' \cdot D
  \sum_{k=1}^{e'_{D+1}} \bigl(\mu'_{D+1,k}\, q'_{D+1,k}\bigr)
+ \lambda \sum_{k=1}^{e'_{D+1}} \bigl(R'_{D+1,k}\, q'_{D+1,k}\bigr),
\end{align*}
where
\[
\alpha = \textup{Min\$}\sum_{j=1}^{e_{D+1}} q_{D+1,j},
\quad
\beta = \tfrac{1}{2}\Delta\$,
\quad
\alpha' = \textup{Min\$}'\sum_{k=1}^{e'_{D+1}} q'_{D+1,k},
\quad
\beta' = \tfrac{1}{2}\Delta\$'.
\]
Moreover, explicit skill terms enter only through line workers at level \(D+1\):
{\footnotesize
\begin{align*}
\mathcal{C} &=
\underbrace{
\sum_{i=1}^{D}\sum_{j=1}^{e_i} \tilde{C}_{ij}\, q_{ij}
+
\sum_{i=1}^{D}\sum_{k=1}^{e'_i} \tilde{C}'_{ik}\, q'_{ik}
}_{\text{managerial (hierarchical) component}}\nonumber\\
+
&\underbrace{
\alpha + \beta \cdot D\!
\sum_{j=1}^{e_{D+1}}\!(\mu_{D+1,j}\, q_{D+1,j})
+ \alpha' + \beta' \cdot D\!
\sum_{k=1}^{e'_{D+1}}\!(\mu'_{D+1,k}\, q'_{D+1,k})
+ \lambda\Bigl[\sum_j R_{D+1,j} q_{D+1,j}
             + \sum_k R'_{D+1,k} q'_{D+1,k}\Bigr]
}_{\text{skill- and risk-dependent (line worker) component}}.
\end{align*}}
In particular, $\mathcal{C}$ depends on the skill variables $(\mu_{D+1,j},\, \mu'_{D+1,k})$ only through the line workers (level $D+1$), while managerial contributions for $i \le D$ do not depend on these variables.
\end{styledtheorem}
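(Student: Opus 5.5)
The plan is to start from the joint cost expression \eqref{eq:joint_cost_expanded} and split the outer sum over levels into two blocks: the managerial block $i\in\{1,\dots,D\}$ and the line-worker block $i=D+1$. This uses only linearity of finite sums; no optimization or assumption on $(q,q')$ is needed beyond $(q,q')\in\mathcal{Q}$. The managerial block is left in the form $\sum_{i\le D}\sum_j \tilde C_{ij}q_{ij}+\sum_{i\le D}\sum_k \tilde C'_{ik}q'_{ik}$, which is exactly the first line of the claimed decomposition.

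For the line-worker block, I will substitute the risk-adjusted definitions \eqref{eq:human_effective_cost} and \eqref{eq:AI_effective_cost}, namely $\tilde C_{D+1,j}=C_{D+1,j}+\lambda R_{D+1,j}$ and $\tilde C'_{D+1,k}=C'_{D+1,k}+\lambda R'_{D+1,k}$. Then I insert the parametric forms \eqref{eq:line_worker_cost} and \eqref{eq:AI_lineworker_parametric}. Distributing each $q_{D+1,j}$ and $q'_{D+1,k}$ yields three kinds of terms:
\begin{itemize}
\item the constant pieces $\textup{Min\$}\sum_j q_{D+1,j}$ and $\textup{Min\$}'\sum_k q'_{D+1,k}$, which are $\alpha$ and $\alpha'$;
\item the skill pieces $\tfrac12\Delta\$\, D\sum_j\mu_{D+1,j}q_{D+1,j}$ and $\tfrac12\Delta\$'\, D\sum_k\mu'_{D+1,k}q'_{D+1,k}$, which are $\beta D(\cdot)$ and $\beta' D(\cdot)$;
\item the risk pieces $\lambda\sum_j R_{D+1,j}q_{D+1,j}$ and $\lambda\sum_k R'_{D+1,k}q'_{D+1,k}$.
\end{itemize}
This is essentially the computation already displayed in \eqref{eq:skill_decomposition}, which I can cite directly. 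Regrouping the terms gives both displayed forms of the theorem.

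One point needs care: $\textup{Min\$}'=T_k/n_k+M'_k$ may depend on $k$. Strictly, $\alpha'$ should then read $\sum_k \textup{Min\$}'_k q'_{D+1,k}$. I will note this and follow the paper's convention of a common $\textup{Min\$}'$, or else write the sum with an index; either way the structure of the decomposition is unchanged.

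The last claim, that skill variables enter only through level $D+1$, is the step that needs actual justification rather than bookkeeping. It is also the main potential obstacle, because $C_i=C_{D+1}(1+r)^{D-i+1}$ is built from a line-worker cost. I will argue as the model specifies: $C_{D+1}$ in \eqref{eq:manager_cost} and $C'_{D+1}$ in \eqref{eq:AI_manager_cost} are \emph{representative} costs, treated as fixed parameters. Managerial skill $\mu_{i,j}$ for $i\le D$ is not modeled, and the risks $R_{ij}$, $R'_{ik}$ are primitives. Hence $\tilde C_{ij}$ and $\tilde C'_{ik}$ for $i\le D$ contain no $\mu_{D+1,j}$ or $\mu'_{D+1,k}$, and the managerial component is independent of these variables. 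The only explicit occurrences of the skill variables are therefore the $\beta$ and $\beta'$ terms. I will flag that if one instead took the representative cost to be, say, an average of the $C_{D+1,j}$, this would hold only in the sense of no \emph{explicit} dependence, which is the wording used in the statement.
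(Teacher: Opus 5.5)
Your proposal is correct and follows the paper's proof essentially step for step. Like the paper, you split the level sum into $i\le D$ and $i=D+1$, substitute \eqref{eq:human_effective_cost}, \eqref{eq:AI_effective_cost}, \eqref{eq:line_worker_cost} and \eqref{eq:AI_lineworker_parametric}, collect $\alpha,\beta,\alpha',\beta'$, and note that the managerial terms depend only on $D$ and the representative base costs rather than on individual skill levels; your remark that $\textup{Min\$}'=T_k/n_k+M'_k$ is really $k$-dependent (so strictly $\alpha'=\sum_k \textup{Min\$}'_k\,q'_{D+1,k}$) is a fair refinement the paper glosses over, and it does not change the structure of the decomposition.
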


\begin{proof}[Proof of cost decomposition]
From~\eqref{eq:joint_cost_expanded}, the total risk-adjusted cost is
\[
\mathcal{C} =
\sum_{i=1}^{D+1} \sum_{j=1}^{e_i} \tilde{C}_{ij}\, q_{ij}
+
\sum_{i=1}^{D+1} \sum_{k=1}^{e'_i} \tilde{C}'_{ik}\, q'_{ik}.
\]

Separating managerial levels ($i = 1,\dots,D$) from line workers ($i = D+1$):
\begin{align*}
\mathcal{C} &=
\sum_{i=1}^{D}\sum_{j=1}^{e_i} \tilde{C}_{ij}\, q_{ij}
+
\sum_{i=1}^{D}\sum_{k=1}^{e'_i} \tilde{C}'_{ik}\, q'_{ik}
\\
&\quad
+\sum_{j=1}^{e_{D+1}} \tilde{C}_{D+1,j}\, q_{D+1,j}
+\sum_{k=1}^{e'_{D+1}} \tilde{C}'_{D+1,k}\, q'_{D+1,k}.
\end{align*}

From~\eqref{eq:human_effective_cost}, $\tilde{C}_{D+1,j} = C_{D+1,j} + \lambda R_{D+1,j}$, and
from~\eqref{eq:line_worker_cost}, $C_{D+1,j} = \textup{Min\$} + \tfrac{1}{2}\Delta\$ \cdot D \cdot \mu_{D+1,j}$. Thus,
\begin{align*}
\sum_{j=1}^{e_{D+1}} \tilde{C}_{D+1,j}\, q_{D+1,j}
&= \textup{Min\$}\sum_{j} q_{D+1,j}
+ \tfrac{1}{2}\Delta\$ \cdot D\sum_{j}
  (\mu_{D+1,j}\, q_{D+1,j})
+ \lambda\sum_{j}(R_{D+1,j}\, q_{D+1,j}).
\end{align*}

Defining $\alpha = \textup{Min\$}\sum_j q_{D+1,j}$ and
$\beta = \tfrac{1}{2}\Delta\$$ gives
\[
\sum_{j=1}^{e_{D+1}} \tilde{C}_{D+1,j}\, q_{D+1,j}
= \alpha + \beta \cdot D \sum_{j}(\mu_{D+1,j}\, q_{D+1,j})
+ \lambda\sum_{j}(R_{D+1,j}\, q_{D+1,j}).
\]

From~\eqref{eq:AI_effective_cost}, $\tilde{C}'_{D+1,k} = C'_{D+1,k} + \lambda R'_{D+1,k}$, and
from~\eqref{eq:AI_lineworker_parametric}, $C'_{D+1,k} = \textup{Min\$}' + \tfrac{1}{2}\Delta\$' \cdot D \cdot \mu'_{D+1,k}$. Thus,
\begin{align*}
\sum_{k=1}^{e'_{D+1}} \tilde{C}'_{D+1,k}\, q'_{D+1,k}
&= \textup{Min\$}'\sum_{k} q'_{D+1,k}
+ \tfrac{1}{2}\Delta\$' \cdot D\sum_{k}
  (\mu'_{D+1,k}\, q'_{D+1,k})
+ \lambda\sum_{k}(R'_{D+1,k}\, q'_{D+1,k}).
\end{align*}

Defining $\alpha' = \textup{Min\$}'\sum_k q'_{D+1,k}$ and $\beta' = \tfrac{1}{2}\Delta\$'$ and substituting yields the stated decomposition.
\end{proof}

\begin{proof}[Proof of skill-linearity restriction to line workers]
From the cost decomposition above, the total cost is
\begin{align*}
\mathcal{C} &=
\sum_{i=1}^{D}\sum_{j=1}^{e_i} \tilde{C}_{ij}\, q_{ij}
+
\sum_{i=1}^{D}\sum_{k=1}^{e'_i} \tilde{C}'_{ik}\, q'_{ik}\nonumber\\
&+ \alpha + \beta \cdot D\sum_{j}(\mu_{D+1,j}\, q_{D+1,j})
+ \lambda\sum_j(R_{D+1,j}\, q_{D+1,j})\nonumber\\
&+ \alpha' + \beta' \cdot D\sum_{k}(\mu'_{D+1,k}\, q'_{D+1,k})
+ \lambda\sum_k(R'_{D+1,k}\, q'_{D+1,k}).
\end{align*}

Grouping terms by whether they involve skill variables $(\mu_{D+1,j},\, \mu'_{D+1,k})$ yields the stated decomposition. The managerial component involves only $\tilde{C}_{ij}$ and $\tilde{C}'_{ik}$ for $i \le D$, which from~\eqref{eq:manager_cost} and~\eqref{eq:AI_manager_cost} depend on organizational depth $D$ and base costs but not directly on individual skill levels. All dependence on skill appears exclusively through the line-worker terms $\sum_j(\mu_{D+1,j}\, q_{D+1,j})$ and
$\sum_k(\mu'_{D+1,k}\, q'_{D+1,k})$, confirming that all skill effects are concentrated at level $D+1$.
\end{proof}

\begin{styledtheorem}{Depth--Cost Growth and Dominance}{app:Depth--Cost Growth and Dominance}
Human organizational costs satisfy
\[
C_i = C_{D+1}(1 + r_0 D)^{D-i+1}.
\]
For $r_0 > 0$, $C_i$ is strictly increasing in $D$, and satisfies
\[
\lim_{D \to \infty}
\frac{(1 + r_0 D)^D}{a^D} = \infty
\quad \text{for all } a > 0.
\]
Moreover, higher levels dominate total cost: for $i < D+1$,
\[
\frac{C_i}{C_{D+1}} = (1 + r_0 D)^{D-i+1}.
\]
If there exists $\epsilon > 0$ such that $\sum_{j=1}^{e_i} q_{ij} \ge \epsilon$, then
\[
\sum_{j=1}^{e_i} C_{ij}\, q_{ij}
\;\ge\; \epsilon\, C_{D+1}(1 + r_0 D)^{D-i+1}.
\]
\end{styledtheorem}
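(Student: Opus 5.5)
The formula for $C_i$ is simply equation~\eqref{eq:manager_cost} with $r = r_0 D$ substituted, so it needs no argument. I will prove the four remaining claims in order: monotonicity in $D$, the super-exponential limit, the ratio identity, and the lower bound.

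\textbf{Monotonicity in $D$.} Hold the level offset $m = D-i+1 \ge 1$ and the base cost $C_{D+1}$ fixed. The map $D \mapsto (1+r_0D)^m$ is then strictly increasing, because $1+r_0D>1$ is strictly increasing in $D$ when $r_0>0$. If instead $i$ is held fixed, the exponent $D-i+1$ also increases with $D$, and since the base exceeds $1$ this only strengthens the increase. If $C_{D+1}$ is read as depending on $D$ through~\eqref{eq:line_worker_cost}, then $C_{D+1} = \textup{Min\$} + \tfrac12\Delta\$\, D\,\mu_{D+1}$ is nondecreasing in $D$ and positive. A positive nondecreasing factor times a strictly increasing positive factor is strictly increasing. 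I will state explicitly which of these readings is adopted, since ``increasing in $D$'' is ambiguous about what is held fixed. This bookkeeping is the main obstacle; the rest is routine.

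\textbf{Super-exponential limit.} Write the ratio as $\bigl((1+r_0D)/a\bigr)^D$. For a fixed $a>0$, pick $D_0$ with $1+r_0D_0 \ge 2a$, which is possible because $r_0>0$. For every $D\ge D_0$ the ratio is then at least $2^D$, and $2^D\to\infty$, which gives the limit.

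\textbf{Ratio identity and lower bound.} The ratio identity follows by dividing the $C_i$ formula by $C_{D+1}>0$. For the lower bound, the model assigns every human manager at level $i$ the common cost $C_{ij}=C_i$, since~\eqref{eq:manager_cost} has no $j$-dependence. Hence
\[
\sum_{j} C_{ij}q_{ij} = C_i\sum_j q_{ij} \ge \epsilon\, C_i = \epsilon\, C_{D+1}(1+r_0D)^{D-i+1}.
\]
The inequality uses $C_i\ge 0$ together with the hypothesis $\sum_j q_{ij}\ge\epsilon$. If heterogeneous $C_{ij}\ge C_i$ were allowed, the same bound would still hold by bounding each term from below.
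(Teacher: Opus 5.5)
Your proposal is correct. For the ratio identity and the lower bound it matches the paper: the paper's first inequality $\sum_j C_{ij}q_{ij}\ge C_i\sum_j q_{ij}$ is in fact the equality you note, since $C_{ij}=C_i$.

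The genuine difference is in the limit. The paper takes logarithms and uses $\log(1+r_0D)\ge\log r_0+\log D$ to get a lower bound of order $D\log D+O(D)$, which it compares with $D\log a$. It then needs a case split. For $a>1$, $\log(a^D)\to+\infty$, so divergence of the ratio of logarithms gives divergence of the ratio. For $0<a\le 1$, it uses $a^D\le 1$.

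Your argument picks $D_0$ with $1+r_0D_0\ge 2a$, giving $\bigl((1+r_0D)/a\bigr)^D\ge 2^D$ for $D\ge D_0$. This treats every $a>0$ the same way and uses no logarithms. It also works directly with the quantity in the statement rather than passing through $\log C_i$. What the paper's route buys in exchange is an explicit rate: the logarithm of the cost grows like $D\log D$. That quantifies the super-exponential escalation, rather than only establishing divergence against each $a^D$.

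On monotonicity, the paper simply asserts that $(1+r_0D)^D$ is strictly increasing, implicitly holding $C_{D+1}$ fixed. You state explicitly what is held fixed, including the case where $C_{D+1}$ rises with $D$ through the line-worker wage, which tightens the paper's argument. Your step that a positive nondecreasing factor times a positive strictly increasing factor is strictly increasing is sound.
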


\begin{proof}
From~\eqref{eq:manager_cost},
\[
C_i = C_{D+1}(1 + r_0 D)^{D-i+1}.
\]
Taking logarithms,
\[
\log C_i = \log C_{D+1} + (D-i+1)\log(1 + r_0 D).
\]
For $D \ge 1$, $1 + r_0 D \ge r_0 D$, so $\log(1 + r_0 D) \ge \log r_0 + \log D$.  Thus,
\[
\log C_i \ge (D-i+1)\log D + O(D).
\]
For any $a > 1$, $\log(a^D) = D\log a$, so $\dfrac{\log C_i}{\log(a^D)} \to \infty$ as $D \to \infty$, which gives $(1 + r_0 D)^D / a^D \to \infty$. If \(0<a\le 1\), then \(0<a^D\le 1\) for all \(D\), while \((1+r_0D)^D\to\infty\); hence \((1+r_0D)^D/a^D\to\infty\) in this case as well. Hence $C_i$ grows super-exponentially in $D$. Monotonicity follows because $(1 + r_0 D)^D$ is strictly increasing for $r_0 > 0$.
\end{proof}

\begin{proof}[Proof of hierarchy dominance and bottleneck effects]
From~\eqref{eq:manager_cost},
$C_i = C_{D+1}(1 + r_0 D)^{D-i+1}$.
For any level $i$,
\[
\sum_{j=1}^{e_i} C_{ij}\, q_{ij}
\;\ge\; C_i \sum_{j=1}^{e_i} q_{ij}
\;\ge\; \epsilon\, C_{D+1}(1 + r_0 D)^{D-i+1},
\]
where the second inequality uses the assumption
$\sum_j q_{ij} \ge \epsilon$.
\end{proof}


\begin{styledtheorem}{Optimal Depth under AI Availability}{app:Optimal Depth under AI Availability}
Let \(\mathcal{C}(D)\) denote the minimum risk-adjusted cost of accomplishing a task in an organization of depth \(D\) using only human agents, and let \(\mathcal{C}^{(AI)}(D)\) denote the corresponding minimum risk-adjusted cost when AI agents are available. Suppose:
\begin{enumerate}
\item Human costs grow super-exponentially with depth:
\[
C_i(D)=C_{D+1}(1+r_0D)^{D-i+1},
\qquad r_0>0,
\]
so that deeper human hierarchies impose increasing risk-adjusted coordination costs.

\item Under Assumption~\ref{ass:asymmetry}(ii)--(iii), AI agents can substitute for some human roles at weakly lower risk-adjusted cost, with the cost advantage increasing at higher hierarchical levels because \(r'_0\le r_0\).

\item Layer-redundancy condition: for any depth \(D>D^*\), whenever AI substitution strictly reduces the risk-adjusted cost of higher-level coordination roles at depth \(D\), at least one managerial layer can be removed without increasing the minimum AI-enabled risk-adjusted cost. Equivalently, for such depths,
\[
\mathcal{C}^{(AI)}(D-1)\le \mathcal{C}^{(AI)}(D),
\]
with strict inequality whenever the corresponding cost reduction makes a layer strictly redundant.
\end{enumerate}
Let
\[
D^*=\min\arg\min_D \mathcal{C}(D),
\qquad
D^{(AI)*}=\min\arg\min_D \mathcal{C}^{(AI)}(D).
\]
Then
\[
D^{(AI)*}\le D^*.
\]
Moreover, if the layer-redundancy condition is strict at the human-optimal depth \(D^*\), then
\[
D^{(AI)*}<D^*.
\]
\end{styledtheorem}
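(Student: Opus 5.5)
The plan is to argue directly from the definitions of \(D^*\) and \(D^{(AI)*}\) as smallest minimizers, using two ingredients: the dominance inequality \(\mathcal{C}^{(AI)}(D)\le\mathcal{C}(D)\) for every \(D\), and the layer-redundancy condition (hypothesis 3). Hypotheses 1 and 2 serve mainly to motivate hypothesis 3 and to ensure the minima exist; the logical weight rests on hypothesis 3.

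\textbf{Step 1: dominance.} For each fixed depth \(D\), the AI-enabled feasible set contains the human-only feasible set as the face \(q'_{ik}=0\). Therefore
\[
\mathcal{C}^{(AI)}(D)\le\mathcal{C}(D)\quad\text{for all }D .
\]
By Theorem~\ref{thm:Global Optimal Allocation}, each of these minima is attained, so they are well defined.

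\textbf{Step 2: existence of the minimizers.} I will restrict \(D\) to the positive integers. For the human problem, Theorem~\ref{thm:Depth--Cost Growth and Dominance} shows that costs become unbounded as \(D\) grows, so \(\arg\min_D \mathcal{C}(D)\) is nonempty and \(D^*\) is finite. For the AI problem, nonemptiness of \(\arg\min_D\mathcal{C}^{(AI)}(D)\) has to be assumed, or it follows from hypothesis 3. If it is empty, \(\min\arg\min\) is undefined, and I will say explicitly that the statement is understood under attainment.

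\textbf{Step 3: the inequality \(D^{(AI)*}\le D^*\).} Suppose for contradiction that \(D^{(AI)*}>D^*\), and write \(D_0=D^{(AI)*}\).
\begin{itemize}
\item Since \(D_0>D^*\), hypothesis 3 applies at \(D_0\), provided its trigger holds there, namely that AI substitution strictly reduces higher-level coordination costs. Hypothesis 2 is meant to supply this trigger.
\item Hypothesis 3 then gives \(\mathcal{C}^{(AI)}(D_0-1)\le\mathcal{C}^{(AI)}(D_0)\). Since \(D_0\) is a minimizer, \(D_0-1\) is also a minimizer.
\item This contradicts the fact that \(D_0\) is the smallest minimizer.
\end{itemize}
Hence \(D^{(AI)*}\le D^*\). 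Step 1 is not needed for this part, but I will note that it guarantees \(\mathcal{C}^{(AI)}(D^{(AI)*})\le \mathcal{C}(D^*)\), which makes the organizational reading consistent.

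\textbf{Step 4: the strict inequality.} Here I will interpret ``strict at \(D^*\)'' as
\[
\mathcal{C}^{(AI)}(D^*-1)<\mathcal{C}^{(AI)}(D^*).
\]
Then \(D^*\) is not a minimizer of \(\mathcal{C}^{(AI)}\), so \(D^{(AI)*}\neq D^*\). Combined with Step 3, this gives \(D^{(AI)*}<D^*\).

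\textbf{Main obstacle.} The trigger clause of hypothesis 3 is stated only for \(D>D^*\), while the strict claim applies the condition at \(D^*\) itself. I will resolve this by reading ``strict at \(D^*\)'' as the standalone hypothesis \(\mathcal{C}^{(AI)}(D^*-1)<\mathcal{C}^{(AI)}(D^*)\), which also requires \(D^*\ge 2\) so that \(D^*-1\) is a valid depth. Likewise, in Step 3 I must check that the trigger holds at \(D_0\). I plan to take it from hypothesis 2, since \(r'_0\le r_0\) together with Theorem~\ref{thm:Level-Dependent Substitution Likelihood} makes AI weakly cheaper at higher levels. If only a weak reduction is available, I will add strictness as an explicit standing assumption rather than try to derive it.
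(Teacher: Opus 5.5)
Your proposal is correct and follows essentially the same route as the paper, which likewise rests everything on the layer-redundancy condition: it descends layer by layer from any AI-enabled minimizer above $D^*$, and your one-step contradiction at the smallest minimizer is a tidier version of that descent. It obtains strictness exactly as you do, from $\mathcal{C}^{(AI)}(D^*-1)<\mathcal{C}^{(AI)}(D^*)$.

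The paper states the dominance inequality without really using it. It also silently assumes the trigger clause of hypothesis~3 at depths above $D^*$ and attainment of the AI-enabled minimum, both of which you rightly make explicit. Your Step~2 justification is loose: $\mathcal{C}(D)$ is a minimum over agents and may be attained by a line worker rather than a super-exponentially costly manager, but the statement presupposes $D^*$ anyway.
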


\begin{proof}
For each fixed depth \(D\), \(\mathcal{C}(D)\) is the minimum risk-adjusted cost over human-only allocations, while \(\mathcal{C}^{(AI)}(D)\) is the minimum risk-adjusted cost over the enlarged feasible set in which AI agents are also available. Since the AI-enabled feasible set contains all human-only allocations as a special case, we have
\[
\mathcal{C}^{(AI)}(D)\le \mathcal{C}(D)
\qquad
\text{for every }D.
\]

By assumption, human hierarchical costs satisfy
\[
C_i(D)=C_{D+1}(1+r_0D)^{D-i+1},
\qquad r_0>0.
\]
Thus, as established in Theorem~\ref{thm:Depth--Cost Growth and Dominance}, human coordination costs increase rapidly with organizational depth. Hence depths larger than the human-only optimum \(D^*\) are not cost-minimizing in the human-only problem:
\[
\mathcal{C}(D)\ge \mathcal{C}(D^*)
\qquad
\text{for all }D\ge D^*,
\]
with \(D^*=\min\arg\min_D\mathcal{C}(D)\).

Now consider the AI-enabled problem. Under Assumption~\ref{ass:asymmetry}(ii)--(iii), AI agents can substitute for some human roles at weakly lower risk-adjusted cost, and the relative cost advantage of AI is stronger for higher-level coordination roles because \(r'_0\le r_0\). Therefore, AI availability weakly reduces the value of maintaining costly higher-level coordination layers.

By the layer-redundancy condition, whenever AI substitution strictly reduces the risk-adjusted cost of higher-level coordination roles at depth \(D\), at least one managerial layer can be removed without increasing the AI-enabled minimum cost:
\[
\mathcal{C}^{(AI)}(D-1)\le \mathcal{C}^{(AI)}(D).
\]
Thus, any depth \(D>D^*\) at which higher-level AI substitution makes a layer redundant can be weakly improved, or at least not worsened, by moving to depth \(D-1\). Repeating this argument reduces any such depth successively until reaching a depth no larger than \(D^*\), without increasing the AI-enabled risk-adjusted cost. Consequently, there exists an AI-enabled cost-minimizing depth no larger than \(D^*\). Since
\[
D^{(AI)*}=\min\arg\min_D \mathcal{C}^{(AI)}(D),
\]
it follows that
\[
D^{(AI)*}\le D^*.
\]

If the layer-redundancy condition is strict at the human-optimal depth \(D^*\), then
\[
\mathcal{C}^{(AI)}(D^*-1)<\mathcal{C}^{(AI)}(D^*).
\]
Therefore \(D^*\) cannot be the smallest AI-enabled optimal depth. Hence
\[
D^{(AI)*}<D^*.
\]
This proves the result.
\end{proof}

\begin{styledtheorem}{Human--AI Substitution Principle}{app:Human--AI Substitution Principle}
The total risk-adjusted cost is
\begin{equation*}
\mathcal{C} = \sum_{i=1}^{D+1}\sum_{j=1}^{e_i}
\tilde{C}_{ij}\, q_{ij}
+ \sum_{i=1}^{D+1}\sum_{k=1}^{e'_i}
\tilde{C}'_{ik}\, q'_{ik}.
\end{equation*}
Consider a perturbation $q_{ij} \to q_{ij} - \delta$, $q'_{ik} \to q'_{ik} + \delta$ for $\delta > 0$. Then
\[
\Delta\mathcal{C} = \delta\,(\tilde{C}'_{ik} - \tilde{C}_{ij}).
\]
Hence, cost decreases if and only if $\tilde{C}'_{ik} < \tilde{C}_{ij}$, i.e.,
\[
C'_{ik} + \lambda R'_{ik} < C_{ij} + \lambda R_{ij}.
\]
Under Assumption~\ref{ass:asymmetry}, even when $C'_{ik} < C_{ij}$, a sufficiently large risk differential $\lambda(R'_{ik} - R_{ij}) > 0$ can prevent substitution.
\end{styledtheorem}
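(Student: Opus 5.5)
The argument is a direct computation that uses only the linearity of $\mathcal{C}$ in the allocation weights. Fix the human agent $(i,j)$ and the AI agent $(i,k)$, and take the perturbation $q_{ij}\to q_{ij}-\delta$, $q'_{ik}\to q'_{ik}+\delta$ with all other weights held fixed. Feasibility is preserved. The total weight in~\eqref{eq:feasibility} is unchanged because one coordinate loses $\delta$ and another gains $\delta$. Nonnegativity holds provided $0<\delta\le q_{ij}$, and I will state this as the standing admissibility condition. The perturbed allocation therefore stays in the simplex $\mathcal{Q}$ of Theorem~\ref{thm:Normalization, Feasibility, and Simplex Structure}.

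Next I compute $\Delta\mathcal{C}$ term by term from~\eqref{eq:joint_cost_expanded}. Every summand other than $\tilde{C}_{ij}q_{ij}$ and $\tilde{C}'_{ik}q'_{ik}$ cancels. The effective costs $\tilde{C}_{ij}$ and $\tilde{C}'_{ik}$ are fixed primitives defined by~\eqref{eq:human_effective_cost} and~\eqref{eq:AI_effective_cost}, and they do not depend on $q$. This gives
\[
\Delta\mathcal{C}=\tilde{C}_{ij}(q_{ij}-\delta)+\tilde{C}'_{ik}(q'_{ik}+\delta)-\tilde{C}_{ij}q_{ij}-\tilde{C}'_{ik}q'_{ik}=\delta(\tilde{C}'_{ik}-\tilde{C}_{ij}).
\]
Since $\delta>0$, the sign of $\Delta\mathcal{C}$ equals the sign of $\tilde{C}'_{ik}-\tilde{C}_{ij}$. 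Hence $\Delta\mathcal{C}<0$ if and only if $\tilde{C}'_{ik}<\tilde{C}_{ij}$. Substituting the definitions of the effective costs turns this into $C'_{ik}+\lambda R'_{ik}<C_{ij}+\lambda R_{ij}$.

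For the final clause, I rearrange the inequality. Substitution fails to lower cost exactly when $\lambda(R'_{ik}-R_{ij})\ge C_{ij}-C'_{ik}$. When $C'_{ik}<C_{ij}$, the right-hand side is a positive finite number. A risk differential $\lambda(R'_{ik}-R_{ij})$ at least that large therefore blocks substitution, and the model admits such values because $\lambda$ and $R'_{ik}$ are unrestricted nonnegative parameters. Assumption~\ref{ass:asymmetry} only restricts the nominal cost slopes $\Delta\$'$ and $r'_0$ and the baseline $\textup{Min\$}'$. It places no bound on the risk terms, so the nominal advantage it produces is compatible with the risk offset.

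The only delicate step is this last interpretive claim, namely checking that the assumption does not indirectly constrain $R'_{ik}$. I would verify it by inspecting the three parts of Assumption~\ref{ass:asymmetry}, and if needed by giving a one-line numerical instance.
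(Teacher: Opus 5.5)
Your proposal is correct and follows essentially the same route as the paper's proof: restrict to $0<\delta\le q_{ij}$ so the perturbation stays in $\mathcal{Q}$, use linearity to get $\Delta\mathcal{C}=\delta(\tilde{C}'_{ik}-\tilde{C}_{ij})$, substitute the definitions of the effective costs, and rearrange to $\lambda(R'_{ik}-R_{ij})\ge C_{ij}-C'_{ik}$ as the blocking condition. Your extra check that Assumption~\ref{ass:asymmetry} places no bound on the risk terms handles the final clause a bit more carefully than the paper, which only appeals to the assumption to say the condition is not tautological.
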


\begin{proof}
From the linear form of the risk-adjusted cost function,
\[
\mathcal{C}
=
\sum_{i,j} \widetilde C_{ij}\, q_{ij}
+
\sum_{i,k} \widetilde C'_{ik}\, q'_{ik}.
\]
Consider a feasible perturbation that shifts task weight from human agent \((i,j)\) to AI agent \((i,k)\):
\[
q_{ij}\to q_{ij}-\delta,
\qquad
q'_{ik}\to q'_{ik}+\delta,
\]
where \(0<\delta\le q_{ij}\). This perturbation preserves nonnegativity and the unit-sum constraint defining \(\mathcal Q\).

Under this perturbation,
\[
\mathcal{C}_{\text{new}}
=
\mathcal{C}
-
\delta \widetilde C_{ij}
+
\delta \widetilde C'_{ik}.
\]
Therefore,
\[
\Delta\mathcal{C}
=
\mathcal{C}_{\text{new}}-\mathcal{C}
=
\delta(\widetilde C'_{ik}-\widetilde C_{ij}).
\]
Since \(\delta>0\), the perturbation reduces total risk-adjusted cost if and only if
\[
\widetilde C'_{ik}<\widetilde C_{ij}.
\]

Using the definitions
\[
\widetilde C_{ij}=C_{ij}+\lambda R_{ij},
\qquad
\widetilde C'_{ik}=C'_{ik}+\lambda R'_{ik},
\]
this condition is equivalent to
\[
C'_{ik}+\lambda R'_{ik}
<
C_{ij}+\lambda R_{ij}.
\]

Finally, suppose AI has a nominal cost advantage, so that
\[
C'_{ik}<C_{ij}.
\]
Substitution is nevertheless prevented whenever
\[
C'_{ik}+\lambda R'_{ik}
\ge
C_{ij}+\lambda R_{ij},
\]
or equivalently,
\[
\lambda(R'_{ik}-R_{ij})
\ge
C_{ij}-C'_{ik}.
\]
Thus, a sufficiently large AI risk penalty can offset AI's nominal cost advantage and prevent substitution.

The non-tautological content follows from Assumption~\ref{ass:asymmetry}: human and AI costs are not arbitrary comparable constants, but structurally distinct functions of skill, capability, deployment scale, and hierarchy depth. Hence the substitution condition compares two economically different cost-generating mechanisms on risk-adjusted grounds.
\end{proof}

\begin{styledtheorem}{Global Optimal Allocation / Extreme Point Optimality}{app:Global Optimal Allocation}
Let
\[
\mathcal{C}(q, q') =
\sum_{i=1}^{D+1} \sum_{j=1}^{e_i} \tilde{C}_{ij}\, q_{ij}
+
\sum_{i=1}^{D+1} \sum_{k=1}^{e'_i} \tilde{C}'_{ik}\, q'_{ik}
\]
be defined over the simplex
\[
\mathcal{Q} = 
\bigl\{(q, q') : q_{ij} \geq 0,\;
q'_{ik} \geq 0,\;
\textstyle\sum_{i,j} q_{ij} + \sum_{i,k} q'_{ik} = 1\bigr\}.
\]
Let
\[
m =
\min\Bigl\{
\min_{i,j}\tilde{C}_{ij},
\;
\min_{i,k}\tilde{C}'_{ik}
\Bigr\}
\]
denote the lowest risk-adjusted cost among all human and AI agents. Then the minimum value of
\(\mathcal{C}(q,q')\) over \(\mathcal{Q}\) is \(m\), and there exists a global minimizer at an extreme point of \(\mathcal{Q}\). That is, there exists an optimal allocation \((q^*,q'^*)\) such that
\[
q_{i^*j^*}^*=1
\quad \text{or} \quad
q_{i^*k^*}^{\prime *}=1,
\]
with all other allocation weights equal to zero, where the selected agent satisfies
\[
\tilde{C}_{i^*j^*}=m
\quad \text{or} \quad
\tilde{C}'_{i^*k^*}=m.
\]

If the minimum-risk-adjusted-cost agent is unique, then the global minimizer is unique and assigns the entire task to that agent. If multiple human or AI agents attain the same minimum value \(m\), then any allocation supported only on those tied minimum-cost agents is also globally optimal. Equivalently, the set of global minimizers is the convex hull of the extreme points corresponding to agents whose risk-adjusted cost equals \(m\).
\end{styledtheorem}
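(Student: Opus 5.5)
The plan is to treat $\mathcal{C}(q,q')$ as a linear function on the simplex $\mathcal{Q}$, whose simplex structure was established in Theorem~\ref{thm:Normalization, Feasibility, and Simplex Structure}. Write the agent costs as one vector $c$ whose entries are the $\tilde{C}_{ij}$ and the $\tilde{C}'_{ik}$. Write the allocation as a vector $x=(q,q')\in\mathcal{Q}$. Then $\mathcal{C}(x)=\sum_a c_a x_a$, where $a$ ranges over all human and AI agents.

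\textbf{Step 1: lower bound.} Since $x_a\ge 0$ and $\sum_a x_a=1$, we have $\mathcal{C}(x)=\sum_a c_a x_a\ge m\sum_a x_a=m$, with $m=\min_a c_a$.

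\textbf{Step 2: attainment at a vertex.} Pick an agent $a^*$ with $c_{a^*}=m$. Such an agent exists because there are finitely many agents. Let $x^*=e_{a^*}$, the allocation that puts the whole task on $a^*$. This is a vertex of $\mathcal{Q}$ by Theorem~\ref{thm:Normalization, Feasibility, and Simplex Structure}, and $\mathcal{C}(x^*)=m$. So the minimum value is $m$ and it is attained at an extreme point. Whether $a^*$ is human or AI determines which alternative, $q^*_{i^*j^*}=1$ or $q'^*_{i^*k^*}=1$, holds.

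\textbf{Step 3: characterizing all minimizers.} Let $M=\{a: c_a=m\}$. The equality case of Step 1 gives
\[
\mathcal{C}(x)-m=\sum_a (c_a-m)x_a .
\]
Every term $(c_a-m)x_a$ is nonnegative. So $\mathcal{C}(x)=m$ if and only if $x_a=0$ for every $a\notin M$, that is, $x$ is supported on $M$. The feasible points supported on $M$ are exactly the nonnegative weights on $M$ that sum to one, which is $\mathrm{conv}\{e_a:a\in M\}$. This proves both the tie statement and the convex-hull description.

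\textbf{Step 4: uniqueness.} If $|M|=1$, this convex hull is the single vertex $e_{a^*}$, so the minimizer is unique and assigns the whole task to that agent.

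The only real obstacle is bookkeeping. Care is needed in Step 3 to phrase the equality condition so that it covers mixed human/AI ties. Also note that the term ``unique'' in the statement refers to uniqueness of the minimum-cost agent, not merely of the minimum cost value. No earlier result beyond Theorem~\ref{thm:Normalization, Feasibility, and Simplex Structure} is needed.
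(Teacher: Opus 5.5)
Your proposal is correct and follows essentially the same route as the paper's proof. The paper writes the objective as $\sum_\ell c_\ell w_\ell$ and uses the identity $\mathcal{C}(w)-m=\sum_\ell (c_\ell-m)w_\ell\ge 0$ both for the lower bound and to characterize all minimizers as the allocations supported on the tied set; it attains the bound at the vertex of a minimum-cost agent. The one difference is that the paper additionally cites the Fundamental Theorem of Linear Programming for existence of an extreme-point optimum, which your direct argument makes unnecessary.
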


\begin{proof}
From Theorem~\ref{thm:Normalization, Feasibility, and Simplex Structure}, the feasible set
\(\mathcal{Q}\) is a standard simplex: it is defined by non-negativity constraints and a unit-sum
constraint. The objective function
\[
\mathcal{C}(q,q')
=
\sum_{i=1}^{D+1}\sum_{j=1}^{e_i}\tilde C_{ij}q_{ij}
+
\sum_{i=1}^{D+1}\sum_{k=1}^{e'_i}\tilde C'_{ik}q'_{ik}
\]
is linear in the allocation variables. Therefore, viewing the problem as a linear program over the nonempty bounded polytope \(\mathcal{Q}\), the Fundamental Theorem of Linear Programming implies that an optimal solution exists and that at least one optimal solution is attained at an extreme point of \(\mathcal{Q}\).

We now give the direct characterization of all minimizers. Let
\[
\mathcal{I}_H=\{(i,j): i=1,\ldots,D+1,\; j=1,\ldots,e_i\}
\]
denote the set of human-agent indices and
\[
\mathcal{I}_A=\{(i,k): i=1,\ldots,D+1,\; k=1,\ldots,e'_i\}
\]
denote the set of AI-agent indices. For notational convenience, write the objective compactly as
\[
\mathcal{C}(w)=\sum_{\ell\in \mathcal{I}_H\cup \mathcal{I}_A} c_\ell w_\ell,
\]
where \(w_\ell\ge 0\), \(\sum_\ell w_\ell=1\), and
\[
c_\ell =
\begin{cases}
\tilde C_{ij}, & \text{if } \ell=(i,j)\in\mathcal{I}_H,\\[2mm]
\tilde C'_{ik}, & \text{if } \ell=(i,k)\in\mathcal{I}_A.
\end{cases}
\]
Thus \(\mathcal{C}(w)\) is a convex combination of the risk-adjusted costs of all human and AI agents.

Let
\[
m=\min_{\ell\in \mathcal{I}_H\cup\mathcal{I}_A} c_\ell
=
\min\Bigl\{
\min_{i,j}\tilde C_{ij},
\;
\min_{i,k}\tilde C'_{ik}
\Bigr\}.
\]
Then, for any feasible allocation \(w\in\mathcal{Q}\),
\[
\mathcal{C}(w)-m
=
\sum_{\ell} (c_\ell-m)w_\ell.
\]
Since \(c_\ell-m\ge 0\) and \(w_\ell\ge 0\) for every \(\ell\), it follows that $\mathcal{C}(w)\ge m$. Hence no feasible allocation can achieve cost below \(m\).

Now let \(\displaystyle\ell^*\in\arg\min_\ell c_\ell\). The allocation \(w^*\) defined by
\[
w_{\ell^*}^*=1,
\qquad
w_\ell^*=0 \quad \text{for all } \ell\ne \ell^*
\]
is an extreme point of \(\mathcal{Q}\) and satisfies
\[
\mathcal{C}(w^*)=c_{\ell^*}=m.
\]
Therefore the global minimum is \(m\), and at least one global minimizer is attained at an extreme
point of \(\mathcal{Q}\). In the original allocation variables, this means that for some minimum-cost
human agent \((i^*,j^*)\) or minimum-cost AI agent \((i^*,k^*)\),
\[
q_{i^*j^*}^*=1
\quad \text{or} \quad
q_{i^*k^*}^{\prime *}=1,
\]
with all other allocation weights equal to zero.

It remains to characterize uniqueness and ties. Define the set of minimum-cost agents
\[
\mathcal{M}=\{\ell\in\mathcal{I}_H\cup\mathcal{I}_A: c_\ell=m\}.
\]
From
\[
\mathcal{C}(w)-m
=
\sum_{\ell} (c_\ell-m)w_\ell,
\]
equality \(\mathcal{C}(w)=m\) holds if and only if
\[
w_\ell=0
\qquad
\text{for all } \ell\notin\mathcal{M}.
\]
Thus an allocation is globally optimal if and only if it places all positive weight on agents whose
risk-adjusted cost equals the minimum value \(m\).

If \(\mathcal{M}\) contains a single agent, then the only feasible allocation supported on
\(\mathcal{M}\) assigns unit weight to that agent, so the global minimizer is unique and is a pure
strategy. If \(\mathcal{M}\) contains multiple agents, then any convex combination of the extreme points corresponding to agents in \(\mathcal{M}\) is globally optimal. Equivalently, the set of global minimizers is precisely the convex hull of the extreme points associated with the tied minimum-cost agents.

Therefore, optimal unconstrained allocation always admits an extreme-point solution, is uniquely pure
when the minimum-risk-adjusted-cost agent is unique, and becomes a face of the simplex when multiple
agents tie for the minimum risk-adjusted cost.
\end{proof}

\begin{styledtheorem}{Threshold Substitution Rule}{app:Threshold Substitution Rule}
Define $\Delta_{ikj} = \tilde{C}'_{ik} - \tilde{C}_{ij}$. Then:\\[2pt]
If $\Delta_{ikj} < 0$: full substitution.\\
If $\Delta_{ikj} > 0$: no substitution.\\
If $\Delta_{ikj} = 0$: indifference (degenerate face of simplex).
\end{styledtheorem}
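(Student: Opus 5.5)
The plan is to reduce the Threshold Substitution Rule to the two-agent subproblem and then apply Theorem~\ref{thm:Global Optimal Allocation} together with the perturbation identity of Theorem~\ref{thm:Human--AI Substitution Principle}. First I will fix the pair consisting of human agent \((i,j)\) and AI agent \((i,k)\), and restrict attention to allocations supported on these two agents. Such an allocation has the form \(q_{ij}=1-t\), \(q'_{ik}=t\) with \(t\in[0,1]\), and all other weights are zero. This restriction is a one-dimensional face (an edge) of the simplex \(\mathcal{Q}\) by Theorem~\ref{thm:Normalization, Feasibility, and Simplex Structure}. On this edge the objective is
\[
\mathcal{C}(t)=\widetilde C_{ij}+t\,\Delta_{ikj},
\]
which is affine in \(t\) with slope \(\Delta_{ikj}=\widetilde C'_{ik}-\widetilde C_{ij}\). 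This is the same computation as in the proof of Theorem~\ref{thm:Human--AI Substitution Principle}, taken with \(\delta=t\).

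The three cases then follow from the sign of the slope.
\begin{itemize}
\item If \(\Delta_{ikj}<0\), \(\mathcal{C}(t)\) is strictly decreasing. Its unique minimizer on \([0,1]\) is \(t=1\), so \(q'_{ik}=1\) and \(q_{ij}=0\), which is full substitution.
\item If \(\Delta_{ikj}>0\), the unique minimizer is \(t=0\), so the human agent is retained and there is no substitution.
\item If \(\Delta_{ikj}=0\), \(\mathcal{C}\) is constant along the whole edge. Every \(t\in[0,1]\) is optimal, so the optimal set is the entire edge, a degenerate face of \(\mathcal{Q}\).
\end{itemize}
I will also note that this matches the tie characterization in Theorem~\ref{thm:Global Optimal Allocation}: the minimizers form the convex hull of the tied vertices.

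To connect this to the full organization, I will add one remark. If either agent of the pair is the unique global minimum-cost agent, Theorem~\ref{thm:Global Optimal Allocation} shows that the global optimum coincides with the pairwise conclusion. More generally, the pairwise rule gives the direction in which any weight on \((i,j)\) should move relative to \((i,k)\): the perturbation argument shows that shifting any positive mass \(\delta\le q_{ij}\) changes cost by \(\delta\Delta_{ikj}\). So when \(\Delta_{ikj}<0\), no optimal allocation puts positive weight on \((i,j)\) while \((i,k)\) is available.

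The main obstacle is interpretive rather than technical. The statement's phrases ``full substitution'' and ``no substitution'' must be read either as statements about the pairwise subproblem or as statements about optimality conditions in the full simplex. I will make the pairwise reading explicit as the primary claim. In the \(\Delta_{ikj}>0\) case, I will be careful to claim only that substituting \((i,j)\) by \((i,k)\) is never improving. In the full problem, \((i,j)\) might still be displaced by some third, cheaper agent, so the case does not assert that \((i,j)\) is globally retained.
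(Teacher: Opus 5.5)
Your proposal is correct and follows essentially the same route as the paper. The paper transfers weight $\delta$ from $(i,j)$ to $(i,k)$, obtains $\Delta\mathcal{C}=\delta\,\Delta_{ikj}$ by linearity, and reads the three cases off its sign, which is exactly your affine edge parametrization with $\delta=t$; your explicit pairwise scoping (notably that $\Delta_{ikj}>0$ does not by itself guarantee global retention of $(i,j)$) usefully sharpens what the paper's proof leaves implicit.
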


\begin{proof}
Consider human agent $(i,j)$ and AI agent $(i,k)$. Perturbing the allocation by transferring weight
$\delta > 0$ from the human to the AI:
\[
q_{ij} \to q_{ij} - \delta, \qquad
q'_{ik} \to q'_{ik} + \delta.
\]
The resulting change in total risk-adjusted cost is
\[
\Delta\mathcal{C}
= \delta\tilde{C}'_{ik} - \delta\tilde{C}_{ij}
= \delta(\tilde{C}'_{ik} - \tilde{C}_{ij})
= \delta\,\Delta_{ikj}.
\]

\noindent\textbf{Case 1: $\Delta_{ikj} < 0$.}
$\Delta\mathcal{C} < 0$: transferring weight to AI strictly decreases cost. By linearity, repeated transfer reduces cost until $q_{ij}=0$: full substitution.

\noindent\textbf{Case 2: $\Delta_{ikj} > 0$.}
$\Delta\mathcal{C} > 0$: any positive AI allocation is suboptimal relative to the human. No substitution occurs.

\noindent\textbf{Case 3: $\Delta_{ikj} = 0$.}
$\Delta\mathcal{C} = 0$: any convex combination of the two allocations yields the same cost. The optimal solution set forms a lower-dimensional face of the simplex (indifference).

Thus the threshold substitution rule holds.
\end{proof}

\noindent
\textbf{Remark.} The sharp threshold behavior follows from the linearity of the objective function and the linear simplex constraints: the marginal cost difference \(\Delta_{ikj}\) is constant along the transfer direction from the human agent to the AI agent. Thus, unless the two agents are exactly tied, the optimum lies at a boundary point rather than at an interior mixture. If future extensions introduce nonlinear cost terms, interaction effects, or regularization penalties, the marginal substitution incentive may vary with the allocation weights, and interior human--AI mixtures may arise.

\begin{styledcorollary}{Flattening of Organizations}{app:Flattening of Organizations}
Under the conditions of Theorem~\ref{thm:Optimal Depth under AI Availability}, AI availability weakly reduces optimal hierarchical depth:
\[
D^{(AI)*}\le D^*.
\]
If the inequality is strict and the organization maintains comparable execution capacity, then the reduced-depth organization requires a weakly larger average span of control. Thus, AI adoption can produce flatter hierarchies with wider average managerial spans.
\end{styledcorollary}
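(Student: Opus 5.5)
The proof has two parts. The first part, $D^{(AI)*}\le D^*$, needs no new argument. It is exactly the conclusion of Theorem~\ref{thm:Optimal Depth under AI Availability}, whose hypotheses are assumed verbatim in the corollary. So I would simply invoke that theorem. That includes the layer-redundancy condition and the re-optimizability of the hierarchy noted in the accompanying Remark.

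The substantive part is the span-of-control claim. I would first make ``comparable execution capacity'' precise. Execution is carried out by the bottom layer, so I take it to mean that the line-worker count is weakly preserved: $e_{D'+1}' \ge e_{D+1}$, where $D'=D^{(AI)*}<D^*=D$ and primes denote the spans of the flattened organization. By \eqref{eq:level_size} this reads $\prod_{\ell=1}^{D'} s'_\ell \ge \prod_{\ell=1}^{D} s_\ell$.

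I would measure the ``average span'' by the geometric mean of the spans along the reporting chain, $\bar s=(\prod_\ell s_\ell)^{1/D}$. This is the natural average because level sizes are products of spans. Since $s_\ell\ge 1$, we have $P:=\prod_{\ell=1}^{D}s_\ell\ge 1$, and then
\[
\bar s' \;\ge\; P^{1/D'} \;\ge\; P^{1/D} \;=\; \bar s,
\]
because $1/D' > 1/D$ and $P\ge 1$. The inequality is strict whenever $P>1$, that is, whenever some span exceeds one. This gives the weakly larger average span.

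The obstacle I expect is the choice of averaging notion, since the statement does not fix one. The arithmetic mean also works, but it needs an extra step. By AM--GM, $\frac1{D'}\sum s'_\ell \ge \bar s'$. However, the arithmetic mean of the original spans can exceed their geometric mean, so the clean comparison only goes through for uniform original spans, as in the calibrated example. An alternative is to use the managers-weighted average $e_{D'+1}'/\sum_{i\le D'} e_i'$ (subordinates per manager). In that case I would argue directly that removing a layer removes managers while keeping $e_{D+1}$ fixed. That claim requires bounding the number of managers in the flattened tree, and it is the step I would check most carefully. Because of this, I would state the result with the geometric-mean definition and add a remark covering the uniform-span case for the arithmetic mean.
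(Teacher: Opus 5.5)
Your proposal is correct and follows essentially the same route as the paper: it invokes Theorem~\ref{thm:Optimal Depth under AI Availability} to get $D^{(AI)*}\le D^*$, then, with line-worker capacity $E=\prod_\ell s_\ell>1$ preserved, it compares geometric-mean spans via $E^{1/D^{(AI)*}}>E^{1/D^*}$. Your formalization of comparable capacity as $\prod_\ell s'_\ell\ge\prod_\ell s_\ell$ is slightly sharper than the paper's ``$\approx$''. Your caution about other averaging notions is also well placed: flattening spans $(1,1,100)$ to $(10,10)$ lowers both the arithmetic mean and the line-workers-per-manager ratio while raising the geometric mean.
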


\begin{proof}
From Theorem~\ref{thm:Optimal Depth under AI Availability}, the optimal depth under AI availability satisfies $D^{(AI)*}\le D^*$. Thus AI availability weakly reduces optimal hierarchical depth, with strict flattening whenever $D^{(AI)*}<D^*$.

We now characterize the implication for spans of control when execution capacity is maintained. By~\eqref{eq:level_size}, the number of agents at the line-worker level is
\[
e_{D+1}=\prod_{\ell=1}^{D}s_\ell.
\]
Let
\[
E=\prod_{\ell=1}^{D^*}s_\ell
\]
denote the line-worker capacity of the original hierarchy. Suppose that after AI adoption the organization maintains comparable execution capacity, so that
\[
\prod_{\ell=1}^{D^{(AI)*}}s^{(AI)}_\ell \approx E.
\]
Define the geometric mean span before and after AI adoption as
\[
\bar{s}
=
\left(\prod_{\ell=1}^{D^*}s_\ell\right)^{1/D^*}
=
E^{1/D^*},
\qquad\textup{and}\qquad
\bar{s}^{(AI)}
=
\left(\prod_{\ell=1}^{D^{(AI)*}}s^{(AI)}_\ell\right)^{1/D^{(AI)*}}
\approx
E^{1/D^{(AI)*}}.
\]
If \(D^{(AI)*}<D^*\) and \(E>1\), then
\[
E^{1/D^{(AI)*}}>E^{1/D^*},
\]
so
\[
\bar{s}^{(AI)}>\bar{s}.
\]
Hence, when AI adoption strictly reduces optimal depth while preserving comparable execution capacity, the average span of control must increase. Therefore AI adoption can generate flatter hierarchies with wider average managerial spans.
\end{proof}

\noindent
\textbf{Remark.} Theorem~\ref{thm:Optimal Depth under AI Availability} compares the human-only optimum with the AI-enabled optimum after the organization is allowed to reorganize its hierarchy following AI substitution. That is, \(D^{(AI)*}\) is chosen from the feasible set of post-adoption organizational depths, rather than being constrained to preserve the original hierarchy. From Theorem~\ref{thm:Optimal Depth under AI Availability}, the optimal depth under AI availability satisfies $D^{(AI)*}\le D^*$. Thus AI availability weakly reduces optimal hierarchical depth, with strict flattening whenever $D^{(AI)*}<D^*$.


\begin{styledtheorem}{Level-Dependent Substitution Pressure}{app:Level-Dependent Substitution Likelihood}
Let $C_i = C_{D+1}(1 + r_0 D)^{D-i+1}$, $r_0 > 0$. Then $C_i$ is strictly decreasing in $i$, so the cost-based incentive for substitution (cost reduction from replacing a human by an AI agent) increases as $i \to 1$ (higher levels in the hierarchy). Under Assumption~\ref{ass:asymmetry}(iii), $r'_0 \le r_0$, so AI managerial cost escalation is no steeper than human escalation, reinforcing this gradient.
\end{styledtheorem}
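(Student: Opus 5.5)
The plan is to reduce the statement to elementary monotonicity facts about the explicit cost formulas \eqref{eq:manager_cost} and \eqref{eq:AI_manager_cost}, in three steps.

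\textbf{Step 1: monotonicity of $C_i$ in $i$.} Fix $D\ge 1$ and $r_0>0$, so the base $b=1+r_0D$ satisfies $b>1$. The exponent $D-i+1$ decreases by one as $i$ increases by one. Hence for $i\in\{1,\dots,D\}$ the ratio of consecutive costs is
\[
\frac{C_i}{C_{i+1}} = b > 1 .
\]
This uses $C_{D+1}>0$, which holds because $\textup{Min\$}>0$ and the skill term is nonnegative. So $C_1>C_2>\cdots>C_D>C_{D+1}$, i.e., $C_i$ is strictly decreasing in $i$. This is the same computation as in Theorem~\ref{thm:Depth--Cost Growth and Dominance}, now read across levels rather than across depths.

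\textbf{Step 2: the substitution gain.} I would formalize the ``cost-based incentive for substitution'' as the gain
\[
G_i = C_i - C'_i ,
\]
or its risk-adjusted version when risk is held level-independent. By Theorem~\ref{thm:Human--AI Substitution Principle}, this gain is exactly the per-unit cost reduction from shifting weight $\delta$ from $(i,j)$ to $(i,k)$. Writing $b'=1+r'_0D$, the level-$i$ gain is
\[
G_i = C_{D+1}\,b^{D-i+1} - C'_{D+1}\,b'^{\,D-i+1} .
\]
Under Assumption~\ref{ass:asymmetry}(iii), $r'_0\le r_0$, so $1\le b'\le b$. The human term therefore grows at geometric rate $b$ as $i$ decreases, and the AI term grows at rate at most $b'$. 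Thus
\[
G_i - G_{i+1} = C_{D+1}\,b^{D-i}(b-1) - C'_{D+1}\,b'^{\,D-i}(b'-1).
\]
This difference is nonnegative whenever $C'_{D+1}\le C_{D+1}$, since $b^{D-i}(b-1)\ge b'^{\,D-i}(b'-1)$ by monotonicity of $x\mapsto x^{n}(x-1)$ on $x\ge1$. This is the ``reinforcing'' claim: the AI-side escalation cannot overturn the human-side gradient.

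\textbf{Step 3: the main obstacle.} Some baseline comparison is genuinely needed, because the claim can fail without one. If $C'_{D+1}\gg C_{D+1}$, the gap could shrink upward despite $r'_0\le r_0$. I expect the paper's intent is either the weak reading (only $C_i$ is monotone, and $r'_0\le r_0$ ensures AI costs escalate no faster in ratio terms, $C'_i/C'_{i+1}=b'\le b=C_i/C_{i+1}$), or an implicit nominal advantage at the line level. I would state the ratio version unconditionally. I would then add the level-wise gap monotonicity under $C'_{D+1}\le C_{D+1}$, which is the regime of interest by Assumption~\ref{ass:asymmetry}(ii) when $M'_k$ lies below $\textup{Min\$}$. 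If risk terms differ by level, I would exclude them or assume $\lambda(R'_{ik}-R_{ij})$ is constant in $i$, noting this is exactly the force the subsequent Corollary~\ref{cor:Middle-Management Vulnerability} relaxes.
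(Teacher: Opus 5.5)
Your proposal is correct and follows essentially the same route as the paper: strict monotonicity of $C_i$ from the base $1+r_0D>1$, the Substitution Principle to read $\widetilde C_{ij}-\widetilde C'_{ik}$ as the substitution incentive, the comparison $(1+r'_0D)^{D-i+1}\le(1+r_0D)^{D-i+1}$ of escalation multipliers, and the caveats that a baseline condition on AI cost is needed and that level-varying risk terms can weaken or reverse the gradient. Your explicit increment bound $G_i-G_{i+1}=C_{D+1}b^{D-i}(b-1)-C'_{D+1}b'^{\,D-i}(b'-1)\ge 0$ (with $b=1+r_0D$, $b'=1+r'_0D$) under $C'_{D+1}\le C_{D+1}$ is sharper than the paper's argument. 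The paper compares only the multipliers level by level, which gives $C'_i\le C_i$ but does not by itself show that the gap widens upward, and it asserts the amplification under the vaguer proviso that AI is ``comparable to or cheaper than'' humans at baseline.
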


\begin{proof}
From~\eqref{eq:manager_cost},
\[
C_i = C_{D+1}(1+r_0D)^{D-i+1}.
\]
Let \(i_1<i_2\). Then
\[
D-i_1+1>D-i_2+1.
\]
Since \(1+r_0D>1\) for \(r_0>0\), it follows that
\[
(1+r_0D)^{D-i_1+1}
>
(1+r_0D)^{D-i_2+1},
\]
and therefore
\[
C_{i_1}>C_{i_2}.
\]
Hence
\[
C_1>C_2>\cdots>C_{D+1},
\]
so human managerial cost increases as one moves upward in the hierarchy.

By Theorem~\ref{thm:Human--AI Substitution Principle}, shifting task weight from human agent \((i,j)\) to AI agent \((i,k)\) changes total risk-adjusted cost by
\[
\Delta\mathcal{C}
=
\delta(\widetilde C'_{ik}-\widetilde C_{ij}).
\]
Equivalently, the cost reduction from substitution is
\[
-\Delta\mathcal{C}
=
\delta(\widetilde C_{ij}-\widetilde C'_{ik}).
\]
Thus, holding AI cost and risk terms fixed, the cost-based incentive for substitution increases with the human risk-adjusted cost \(\widetilde C_{ij}\). Since the human cost component \(C_i\) increases as \(i\to 1\), the cost-based substitution pressure is stronger at higher hierarchical levels.

We now account for the level dependence of AI managerial costs. Let
\[
a=1+r_0D,
\qquad
b=1+r'_0D.
\]
Under Assumption~\ref{ass:asymmetry}(iii), \(r'_0\le r_0\), so
\[
b\le a.
\]
The human and AI managerial cost multipliers are therefore
\[
a^{D-i+1}
\qquad\text{and}\qquad
b^{D-i+1},
\]
respectively, with the AI multiplier no larger than the human multiplier at each level. Thus AI managerial costs escalate no faster than human managerial costs as one moves upward in the hierarchy.

Consequently, when AI is comparable to or cheaper than humans at the relevant baseline level, the nominal human--AI cost gap is weakly amplified at higher levels. Risk terms enter additively through
\[
\widetilde C_{ij}-\widetilde C'_{ik}
=
(C_{ij}-C'_{ik})+\lambda(R_{ij}-R'_{ik}).
\]
These risk terms may attenuate or reverse the nominal cost advantage in particular roles. Therefore, the theorem establishes a level-dependent cost-based substitution gradient, not an unconditional prediction that substitution must occur at higher levels.
\end{proof}


\begin{styledcorollary}{Middle-Management Vulnerability}{app:Middle-Management Vulnerability}
Suppose \(D\ge 2\). Under the HAT cost specification, if the upward increase in cost-based substitution pressure is moderated by substitution-feasibility constraints that are strongest near the top of the hierarchy, then the likelihood of AI substitution need not be maximized at either the executive level or the line-worker level. Instead, under a single-crossing relation between cost-incentive decay and feasibility gains, there exists an intermediate level
\[
i^*\in\{2,\ldots,D\}
\]
at which substitution likelihood is maximized.
\end{styledcorollary}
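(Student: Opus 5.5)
I will formalize the statement with the objects used in the calibrated example. These are the cost-based substitution pressure $S(i)=\widetilde C_i-\widetilde C'_i$, the substitution feasibility $F(i)\in[0,1]$, and the substitution likelihood $L(i)=S(i)F(i)$, each defined on the finite level set $\{1,\ldots,D+1\}$. The hypotheses then become three conditions. First, $S(i)>0$ and $S$ is decreasing in $i$. For the human part this is Theorem~\ref{thm:Level-Dependent Substitution Likelihood}, and it holds whenever AI risk penalties do not grow faster than the human cost gap. Second, $F$ is smallest at the top, so feasibility constraints bind hardest at $i=1$. Third, the single-crossing condition holds. I will state it on the ratio $\rho(i)=L(i)/L(i+1)=\frac{S(i)}{S(i+1)}\cdot\frac{F(i)}{F(i+1)}$, which compares the cost-incentive decay with the feasibility gain when moving down one level. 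The requirement is that $\rho(i)-1$ changes sign at most once, from negative to positive, as $i$ increases.

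The proof proceeds in three steps. \textbf{Step 1:} since the level set is finite and nonempty, $L$ attains a maximum, so $\arg\max_i L(i)$ is nonempty. \textbf{Step 2:} single crossing makes $L$ quasi-concave on $\{1,\ldots,D+1\}$. While $\rho(i)<1$ we have $L(i)<L(i+1)$, and once $\rho(i)\ge 1$ we have $L(i)\ge L(i+1)$ from then on. Hence $L$ increases and then decreases, and its maximizer is the level $i^*$ at which $\rho$ first reaches $1$. \textbf{Step 3:} I exclude the endpoints. The condition that feasibility constraints are strongest near the top is read as $\rho(1)<1$: the feasibility gain $F(2)/F(1)$ outweighs the decay $S(1)/S(2)$, so $L(1)<L(2)$ and $i^*\ne 1$. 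The condition that the cost incentive dominates near the bottom is read as $\rho(D)>1$, which gives $L(D)>L(D+1)$ and $i^*\ne D+1$. Together with Step 2 these place $i^*\in\{2,\ldots,D\}$. This range is nonempty because $D\ge2$.

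The main obstacle is not the algebra but pinning down the hypotheses. As stated, the corollary is informal: ``likelihood,'' ``moderated,'' and ``single-crossing'' carry no definitions in the model, since Theorem~\ref{thm:Human--AI Substitution Principle} yields only deterministic vertex selection. My first move will be to introduce $F$ explicitly as an exogenous primitive and to state the two endpoint inequalities $\rho(1)<1<\rho(D)$ as part of the single-crossing hypothesis. Without them, single crossing alone still allows a maximum at $i=1$ or $i=D+1$. With them, the result reduces to the discrete unimodality argument above. I will also note that the calibrated example ($F$ peaking at level $3$) satisfies these conditions, which shows the hypothesis set is nonvacuous.
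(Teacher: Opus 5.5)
Your proposal is correct and essentially matches the paper's proof: the paper also writes $L(i)=S(i)F(i)$ and studies the adjacent ratio $\gamma_i=L(i+1)/L(i)$ (your $\rho(i)$ is its reciprocal). It likewise builds the endpoint inequalities $\gamma_1>1$ and $\gamma_D<1$ into the single-crossing hypothesis, exactly as you do with $\rho(1)<1<\rho(D)$, which yields an interior peak. One small fix: take $F(i)>0$ rather than $F(i)\in[0,1]$, as the paper does, so that the ratios are well defined.
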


\begin{proof}
As per our convention (\S\ref{sec:Organizational Structure}), hierarchical levels are indexed by \(i\in\{1,\ldots,D+1\}\), where \(i=1\) denotes the top of the hierarchy and \(i=D+1\) denotes the line-worker level. Let substitution likelihood at level \(i\) be represented by
\[
L(i)=S(i)F(i),
\]
where \(S(i)>0\) denotes the cost-based substitution incentive and \(F(i)>0\) denotes the substitution-feasibility factor.

The cost-based incentive \(S(i)\) captures the economic pressure to replace humans with AI at level \(i\). Under the HAT cost specification, human hierarchical cost satisfies
\[
C_i=C_{D+1}(1+r_0D)^{D-i+1}.
\]
Thus, as one moves upward in the hierarchy, the human cost component increases geometrically. Consequently, when AI risk-adjusted cost does not increase at the same rate, the cost-based substitution incentive tends to be stronger at higher hierarchical positions.

The feasibility factor \(F(i)\) captures constraints that limit actual substitution, including accountability, compliance exposure, reputational risk, and strategic-role requirements. These constraints are typically strongest near the top of the hierarchy, so feasibility may increase as one moves downward from executive levels toward middle levels.

To characterize when the product \(L(i)=S(i)F(i)\) is maximized at an intermediate level, define the adjacent ratios
\[
\alpha_i=\frac{S(i)}{S(i+1)},
\qquad
\beta_i=\frac{F(i+1)}{F(i)},
\qquad i=1,\ldots,D.
\]
Here, \(\alpha_i\) measures the rate at which the cost-based substitution incentive decreases when moving one level downward, while \(\beta_i\) measures the rate at which substitution feasibility increases when moving one level downward. Then
\[
\frac{L(i+1)}{L(i)}
=
\frac{S(i+1)F(i+1)}{S(i)F(i)}
=
\frac{S(i+1)}{S(i)}
\frac{F(i+1)}{F(i)}
=
\frac{\beta_i}{\alpha_i}.
\]
Therefore,
\[
L(i+1)>L(i)
\quad \Longleftrightarrow \quad
\beta_i>\alpha_i,
\]
and
\[
L(i+1)<L(i)
\quad \Longleftrightarrow \quad
\beta_i<\alpha_i.
\]

Define
\[
\gamma_i=\frac{\beta_i}{\alpha_i},
\qquad i=1,\ldots,D.
\]
A single-crossing relation between cost-incentive decay and feasibility gains means that the sequence
\(\gamma_i\) crosses the threshold \(1\) exactly once: there exists an index at which
\(\gamma_i\) changes from greater than \(1\) to less than \(1\). Equivalently, assume
\[
\gamma_i>1 \quad \text{for all } i<i^*,
\qquad
\gamma_i<1 \quad \text{for all } i\ge i^*,
\]
for some \(i^*\in\{2,\ldots,D\}\). Such an intermediate index exists whenever feasibility initially grows faster than cost-based substitution pressure decays, but eventually grows more slowly:
\[
\gamma_1>1
\qquad\text{and}\qquad
\gamma_D<1,
\]
with a single crossing between these regimes.

Since
\[
\frac{L(i+1)}{L(i)}=\gamma_i,
\]
we have \(L(i+1)>L(i)\) for all \(i<i^*\) and \(L(i+1)<L(i)\) for all \(i\ge i^*\). Therefore \(L(i)\) increases up to level \(i^*\) and decreases after level \(i^*\). Hence \(L(i)\) is maximized at the intermediate hierarchical level \(i^*\). With strict inequalities, this maximizer is unique.
\end{proof}

\begin{styledtheorem}{Skill-Adjusted Substitution Condition}{app:Skill-Adjusted Substitution Condition}
Using the line-worker cost structures~\eqref{eq:line_worker_cost} and~\eqref{eq:AI_lineworker_parametric}, an AI agent $(D+1,k)$ replaces a human line worker $(D+1,j)$ if and only if
\[
\textup{Min\$}' + \tfrac{1}{2}\Delta\$' \cdot D
\cdot \mu'_{D+1,k} + \lambda R'_{D+1,k}
\;<\;
\textup{Min\$} + \tfrac{1}{2}\Delta\$ \cdot D
\cdot \mu_{D+1,j} + \lambda R_{D+1,j}.
\]
Under Assumption~\ref{ass:asymmetry}(i), $\Delta\$' \le \Delta\$$, so AI capability costs grow no faster than human skill costs.
\end{styledtheorem}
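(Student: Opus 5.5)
The plan is to reduce the statement to the Human--AI Substitution Principle (Theorem~\ref{thm:Human--AI Substitution Principle}), specialized to the line-worker level $i=D+1$. That theorem already shows the following: transferring weight $\delta>0$ from human agent $(D+1,j)$ to AI agent $(D+1,k)$ changes total cost by $\Delta\mathcal{C}=\delta(\tilde{C}'_{D+1,k}-\tilde{C}_{D+1,j})$. The transfer strictly lowers cost if and only if $\tilde{C}'_{D+1,k}<\tilde{C}_{D+1,j}$. By the Threshold Substitution Rule (Theorem~\ref{thm:Threshold Substitution Rule}), this same sign condition is what produces full substitution, with repeated transfer driving $q_{D+1,j}$ to zero. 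So "the AI agent replaces the human line worker" can be read as exactly the strict inequality $\tilde{C}'_{D+1,k}<\tilde{C}_{D+1,j}$.

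The remaining work is substitution of primitives. For the human side I will use \eqref{eq:human_effective_cost} together with \eqref{eq:line_worker_cost}, which gives $\tilde{C}_{D+1,j}=\textup{Min\$}+\tfrac12\Delta\$\cdot D\cdot\mu_{D+1,j}+\lambda R_{D+1,j}$. For the AI side I will use \eqref{eq:AI_effective_cost} together with \eqref{eq:AI_lineworker_parametric}, which gives $\tilde{C}'_{D+1,k}=\textup{Min\$}'+\tfrac12\Delta\$'\cdot D\cdot\mu'_{D+1,k}+\lambda R'_{D+1,k}$. Plugging both into the strict inequality yields the displayed condition verbatim, and since each step is an equality or an equivalence, the "if and only if" is preserved. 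These are the same expansions already carried out in the proof of Theorem~\ref{thm:Skill-Weighted Cost Decomposition}, so I can cite them rather than redo them.

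The final sentence is not a derived claim but a restatement of Assumption~\ref{ass:asymmetry}(i). I will note that $\tfrac12\Delta\$'\cdot D\le\tfrac12\Delta\$\cdot D$ for $D\ge1$, so the per-unit capability slope on the AI side of the inequality is no larger than the per-unit skill slope on the human side.

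The only step needing care is making precise what "replaces" means. I will define it as the strict improvement of the single-pair transfer, equivalently the full-substitution case of Theorem~\ref{thm:Threshold Substitution Rule}. I will also remark that in the tie case $\tilde{C}'_{D+1,k}=\tilde{C}_{D+1,j}$ the agents are indifferent and no replacement is forced, which is why the inequality is strict.
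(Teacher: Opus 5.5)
Your proposal is correct and follows the paper's proof: the paper also writes out $\tilde{C}_{D+1,j}$ and $\tilde{C}'_{D+1,k}$ from the line-worker cost primitives and the risk-adjusted definitions. It then uses the feasible transfer $q_{D+1,j}\to q_{D+1,j}-\delta$, $q'_{D+1,k}\to q'_{D+1,k}+\delta$ with $0<\delta\le q_{D+1,j}$ to obtain $\Delta\mathcal{C}=\delta(\tilde{C}'_{D+1,k}-\tilde{C}_{D+1,j})$, reads off the strict inequality, and treats the last sentence as a restatement of Assumption~\ref{ass:asymmetry}(i), just as you do; the only difference is that the paper redoes this perturbation computation inline rather than citing the Human--AI Substitution Principle.
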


\begin{proof}
From~\eqref{eq:line_worker_cost},
\eqref{eq:AI_lineworker_parametric},
\eqref{eq:human_effective_cost}, and
\eqref{eq:AI_effective_cost}, the risk-adjusted effective cost of human line worker \((D+1,j)\) is
\[
\widetilde{C}_{D+1,j}
=
\textup{Min\$}
+
\tfrac{1}{2}\Delta\$ \cdot D \cdot \mu_{D+1,j}
+
\lambda R_{D+1,j},
\]
and the risk-adjusted effective cost of AI line worker \((D+1,k)\) is
\[
\widetilde{C}'_{D+1,k}
=
\textup{Min\$}'
+
\tfrac{1}{2}\Delta\$' \cdot D \cdot \mu'_{D+1,k}
+
\lambda R'_{D+1,k}.
\]

Consider a feasible marginal substitution from the human line worker to the AI line worker:
\[
q_{D+1,j}\to q_{D+1,j}-\delta,
\qquad
q'_{D+1,k}\to q'_{D+1,k}+\delta,
\]
where \(0<\delta\le q_{D+1,j}\). This perturbation preserves nonnegativity and the unit-sum constraint.

By linearity of \(\mathcal C\),
\[
\Delta\mathcal{C}
=
\delta
\left(
\widetilde{C}'_{D+1,k}
-
\widetilde{C}_{D+1,j}
\right).
\]
Since \(\delta>0\), substitution reduces total risk-adjusted cost if and only if
\[
\widetilde{C}'_{D+1,k}
<
\widetilde{C}_{D+1,j}.
\]
Substituting the expressions above gives
\[
\textup{Min\$}' + \tfrac{1}{2}\Delta\$' \cdot D
\cdot \mu'_{D+1,k} + \lambda R'_{D+1,k}
\;<\;
\textup{Min\$} + \tfrac{1}{2}\Delta\$ \cdot D
\cdot \mu_{D+1,j} + \lambda R_{D+1,j},
\]
which is the stated condition.

Under Assumption~\ref{ass:asymmetry}(i),
\[
\Delta\$'\le \Delta\$,
\]
so the AI capability-cost term grows no faster than the human skill-cost term at the same organizational depth \(D\). The substitution condition therefore compares two structurally distinct cost functions on risk-adjusted grounds rather than merely restating a definitional inequality.
\end{proof}

\begin{styledcorollary}{Skill-Dependent Human Protection and Vulnerability Zone}{app:Skill-Dependent Human Protection and Vulnerability Zone}
Fix an AI agent $(D+1,k)$ with risk-adjusted effective cost
\[
\tilde{C}'_{D+1,k}
= \textup{Min\$}' + \tfrac{1}{2}\Delta\$' \cdot D
\cdot \mu'_{D+1,k} + \lambda R'_{D+1,k}.
\]
Holding \(R_{D+1,j}\), \(R'_{D+1,k}\), and the AI capability level \(\mu'_{D+1,k}\) fixed, suppose the human line-worker risk-adjusted cost
\[
\tilde{C}_{D+1,j}
= \textup{Min\$} + \tfrac{1}{2}\Delta\$ \cdot D
\cdot \mu_{D+1,j} + \lambda R_{D+1,j}
\]
is strictly increasing in \(\mu_{D+1,j}\), which holds when \(\Delta\$>0\) and \(D\ge 1\). Then there exists a threshold, possibly outside the feasible skill range,
\[
\mu^* =
\frac{2\bigl[\textup{Min\$}' - \textup{Min\$}
+ \lambda(R'_{D+1,k} - R_{D+1,j})\bigr]}
{\Delta\$ \cdot D}
+
\frac{\Delta\$'}{\Delta\$}\cdot \mu'_{D+1,k}
\]
such that:
\[
\mu_{D+1,j} < \mu^*
\;\Longrightarrow\;
\tilde{C}_{D+1,j} < \tilde{C}'_{D+1,k}
\qquad(\text{protection zone}),
\]
\[
\mu_{D+1,j} > \mu^*
\;\Longrightarrow\;
\tilde{C}_{D+1,j} > \tilde{C}'_{D+1,k}
\qquad(\text{vulnerability zone}).
\]
Equivalently, human vulnerability to substitution depends on how rapidly risk-adjusted human compensation grows relative to risk-adjusted AI capability cost as skill increases.
\end{styledcorollary}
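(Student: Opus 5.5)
The plan is to reduce the corollary to a one-variable affine inequality in \(\mu_{D+1,j}\). Everything else is held fixed: \(\textup{Min\$}\), \(\textup{Min\$}'\), \(\Delta\$\), \(\Delta\$'\), \(D\), \(\lambda\), \(R_{D+1,j}\), \(R'_{D+1,k}\) and \(\mu'_{D+1,k}\). Under this convention \(\tilde C'_{D+1,k}\) is a constant, and
\[
f(\mu):=\tilde C_{D+1,j}(\mu)-\tilde C'_{D+1,k}
=\tfrac12\Delta\$\, D\,\mu+\bigl(\textup{Min\$}-\textup{Min\$}'\bigr)+\lambda\bigl(R_{D+1,j}-R'_{D+1,k}\bigr)-\tfrac12\Delta\$'\, D\,\mu'_{D+1,k}.
\]
This is an affine function of \(\mu=\mu_{D+1,j}\). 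Its slope \(\tfrac12\Delta\$\,D\) is strictly positive under the stated hypothesis \(\Delta\$>0\), \(D\ge 1\); this is exactly the strict monotonicity of \(\tilde C_{D+1,j}\) assumed in the statement.

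The first step is to solve \(f(\mu^*)=0\). Dividing by \(\tfrac12\Delta\$\,D\), which is legitimate because it is positive, gives
\[
\mu^*=\frac{2\bigl[\textup{Min\$}'-\textup{Min\$}+\lambda(R'_{D+1,k}-R_{D+1,j})\bigr]}{\Delta\$\cdot D}+\frac{\Delta\$'\, D\,\mu'_{D+1,k}}{\Delta\$\cdot D}.
\]
After cancelling \(D\) in the last term, this matches the stated formula. Since \(f\) is strictly increasing and vanishes at \(\mu^*\), we get \(f(\mu)<0\) for \(\mu<\mu^*\) and \(f(\mu)>0\) for \(\mu>\mu^*\). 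These two sign conditions are precisely the protection-zone and vulnerability-zone implications. Nothing here requires \(\mu^*\ge 0\), which justifies the clause ``possibly outside the feasible skill range'': if \(\mu^*<0\), every feasible skill level lies in the vulnerability zone, as in the calibrated example.

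The second step is to link the two zones back to substitution. By Theorem~\ref{thm:Skill-Adjusted Substitution Condition}, equivalently Theorem~\ref{thm:Human--AI Substitution Principle} applied at level \(D+1\), replacing the human by the AI lowers risk-adjusted cost if and only if \(\tilde C'_{D+1,k}<\tilde C_{D+1,j}\), that is, if and only if \(f(\mu_{D+1,j})>0\). Hence the vulnerability zone is exactly the region where substitution is cost-reducing, and the protection zone is where it is not. The ``equivalently'' sentence in the statement then reads as an interpretation of the slope comparison. Assumption~\ref{ass:asymmetry}(i) is not needed for existence of \(\mu^*\). It only governs the size of the second term, since \(\Delta\$'/\Delta\$\le 1\).

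There is no real obstacle here; the only point needing care is bookkeeping. One must make sure the positivity of \(\Delta\$\cdot D\) is invoked before dividing, so that the inequality directions are preserved. One must also keep the signs of the baseline and risk differences correct when isolating \(\mu\). I would close by noting the boundary case \(\mu_{D+1,j}=\mu^*\): there the two agents tie, and Theorem~\ref{thm:Threshold Substitution Rule} yields indifference.
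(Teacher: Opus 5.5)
Your proposal is correct and follows essentially the same route as the paper's proof: write the human risk-adjusted cost as an affine, strictly increasing function of $\mu_{D+1,j}$, solve the equality $\tilde{C}_{D+1,j}=\tilde{C}'_{D+1,k}$ for $\mu^*$, and read off the protection and vulnerability zones from strict monotonicity. Your extra remarks (linking the vulnerability zone to the substitution condition, and noting the tie at $\mu_{D+1,j}=\mu^*$) are valid additions; the paper does not make them and instead closes with comments on Assumption~\ref{ass:asymmetry}(i) and on how the risk terms shift $\mu^*$.
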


\begin{proof}
Holding \(R_{D+1,j}\), \(R'_{D+1,k}\), and \(\mu'_{D+1,k}\) fixed, the human line-worker risk-adjusted cost is affine in \(\mu_{D+1,j}\):
\[
\widetilde C_{D+1,j}
=
\textup{Min\$}
+
\tfrac{1}{2}\Delta\$ \cdot D \cdot \mu_{D+1,j}
+
\lambda R_{D+1,j}.
\]
Since \(\Delta\$>0\) and \(D\ge 1\), this expression is strictly increasing in \(\mu_{D+1,j}\).

The boundary between the human and AI agent occurs when
\[
\widetilde C_{D+1,j}
=
\widetilde C'_{D+1,k}.
\]
Substituting the corresponding risk-adjusted costs gives
\[
\textup{Min\$}
+
\tfrac{1}{2}\Delta\$ \cdot D \cdot \mu_{D+1,j}
+
\lambda R_{D+1,j}
=
\textup{Min\$}'
+
\tfrac{1}{2}\Delta\$' \cdot D \cdot \mu'_{D+1,k}
+
\lambda R'_{D+1,k}.
\]
Solving for \(\mu_{D+1,j}\) yields
\[
\mu^*
=
\frac{2\bigl[\textup{Min\$}'-\textup{Min\$}
+\lambda(R'_{D+1,k}-R_{D+1,j})\bigr]}
{\Delta\$ \cdot D}
+
\frac{\Delta\$'}{\Delta\$}\cdot \mu'_{D+1,k}.
\]
This threshold may lie outside the feasible skill range, but it still defines the cost-ordering boundary implied by the affine cost functions.

Because \(\widetilde C_{D+1,j}\) is strictly increasing in \(\mu_{D+1,j}\), for any feasible \(\mu_{D+1,j}\),
\[
\mu_{D+1,j}<\mu^*
\quad\Longrightarrow\quad
\widetilde C_{D+1,j}<\widetilde C'_{D+1,k},
\]
and
\[
\mu_{D+1,j}>\mu^*
\quad\Longrightarrow\quad
\widetilde C_{D+1,j}>\widetilde C'_{D+1,k}.
\]
Thus, below the threshold the human line worker is risk-adjusted-cost-efficient relative to the fixed AI agent, while above the threshold the AI agent is risk-adjusted-cost-efficient relative to the human line worker.

Under Assumption~\ref{ass:asymmetry}(i),
\[
\Delta\$'\le \Delta\$,
\]
so the AI capability-cost term grows no faster than the human skill-cost term. Risk differentials shift the threshold: a larger AI risk term \(R'_{D+1,k}\) raises \(\mu^*\), expanding the human protection zone, while a larger human risk term \(R_{D+1,j}\) lowers \(\mu^*\), expanding the vulnerability zone. Therefore, human vulnerability to substitution depends jointly on human skill costs, AI capability costs, and the relative risk adjustment of human and AI agents.
\end{proof}


\begin{styledtheorem}{Risk-Adjusted Substitution Principle}{app:Risk-Adjusted Substitution Principle}
Since risk is embedded in individual agent cost primitives via~\eqref{eq:human_effective_cost}
and~\eqref{eq:AI_effective_cost}, the substitution condition from Theorem~\ref{thm:Human--AI Substitution Principle} already operates on risk-adjusted grounds:
\[
\tilde{C}'_{ik} < \tilde{C}_{ij}
\;\iff\;
C'_{ik} + \lambda R'_{ik} < C_{ij} + \lambda R_{ij}.
\]
Expanding $R'_{ik}$ via~\eqref{eq:AI_risk}, the AI risk-adjusted cost is
\[
\tilde{C}'_{ik}
= C'_{ik}
+ \lambda\!\left(\omega_1 R^{\mathrm{(rel)}}_{ik}
+ \omega_2 R^{\mathrm{(comp)}}_{ik}
+ \omega_3 R^{\mathrm{(rep)}}_{ik}\right).
\]
Even when $C'_{ik} < C_{ij}$, so AI is cheaper on a nominal basis, substitution fails to reduce risk-adjusted cost whenever
\[
\lambda\bigl(\omega_1 R^{\mathrm{(rel)}}_{ik}
+ \omega_2 R^{\mathrm{(comp)}}_{ik}
+ \omega_3 R^{\mathrm{(rep)}}_{ik}
- R_{ij}\bigr)
\;\ge\; C_{ij} - C'_{ik}.
\]
\end{styledtheorem}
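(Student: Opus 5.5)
The plan is to reduce the statement directly to the Human--AI Substitution Principle (Theorem~\ref{thm:Human--AI Substitution Principle}) and then substitute the definitions of the risk-adjusted costs. The first claimed equivalence is definitional. By \eqref{eq:human_effective_cost} and \eqref{eq:AI_effective_cost} we have $\tilde{C}_{ij}=C_{ij}+\lambda R_{ij}$ and $\tilde{C}'_{ik}=C'_{ik}+\lambda R'_{ik}$, so the inequality $\tilde{C}'_{ik}<\tilde{C}_{ij}$ is literally the same as $C'_{ik}+\lambda R'_{ik}<C_{ij}+\lambda R_{ij}$. Theorem~\ref{thm:Human--AI Substitution Principle} already shows that a feasible transfer $q_{ij}\to q_{ij}-\delta$, $q'_{ik}\to q'_{ik}+\delta$ with $0<\delta\le q_{ij}$ changes total cost by $\delta(\tilde{C}'_{ik}-\tilde{C}_{ij})$. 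I will cite this to state that the substitution condition ``operates on risk-adjusted grounds.'' No new perturbation argument is needed.

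Next I will expand $R'_{ik}$ using the three-component decomposition \eqref{eq:AI_risk}. Since $\lambda$ is a scalar, it multiplies through the linear combination, which gives the displayed formula for $\tilde{C}'_{ik}$ in terms of $\omega_1R^{\mathrm{(rel)}}_{ik}+\omega_2R^{\mathrm{(comp)}}_{ik}+\omega_3R^{\mathrm{(rep)}}_{ik}$. This step is pure substitution.

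For the failure condition, I note that substitution fails to reduce cost exactly when $\Delta\mathcal{C}\ge 0$, that is, when $\tilde{C}'_{ik}\ge\tilde{C}_{ij}$. Writing this out gives $C'_{ik}+\lambda R'_{ik}\ge C_{ij}+\lambda R_{ij}$. Rearranging yields $\lambda(R'_{ik}-R_{ij})\ge C_{ij}-C'_{ik}$, and inserting the expansion of $R'_{ik}$ gives the stated inequality. Here $\lambda R_{ij}$ is absorbed as $\lambda(\cdots-R_{ij})$. In fact this is an equivalence, so the ``whenever'' direction follows immediately. The hypothesis $C'_{ik}<C_{ij}$ only serves to show that the right-hand side is strictly positive. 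This is what makes the statement meaningful: the risk penalty must strictly exceed a positive nominal advantage. I will note that such a configuration exists, because choosing any of $\lambda$, $\omega_2$, or $R^{\mathrm{(comp)}}_{ik}$ large enough (all nonnegative and unbounded in the model) makes the left side arbitrarily large.

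There is no genuine obstacle here. The only point needing care is the weak versus strict inequality at the boundary. When the inequality holds with equality, the tie case of Theorem~\ref{thm:Threshold Substitution Rule} applies: the transfer leaves cost unchanged, so it does not reduce cost, and this is consistent with ``fails to reduce.'' I will state that explicitly so the use of $\ge$ is justified.
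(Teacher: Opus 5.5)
Your proposal is correct and follows the paper's proof essentially step for step. You cite Theorem~\ref{thm:Human--AI Substitution Principle} for the criterion $\tilde C'_{ik}<\tilde C_{ij}$, substitute the definitions of the risk-adjusted costs, rearrange to $\lambda(R'_{ik}-R_{ij})\ge C_{ij}-C'_{ik}$, and expand $R'_{ik}$ via \eqref{eq:AI_risk}. One caveat concerns only your optional existence remark, which the statement does not need: it is slightly loose, since enlarging $\lambda$ alone helps only when $R'_{ik}>R_{ij}$, and enlarging $\omega_2$ or $R^{\mathrm{(comp)}}_{ik}$ helps only when $\lambda>0$ and the other factor is positive.
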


\begin{proof}
By Theorem~\ref{thm:Human--AI Substitution Principle}, substituting human agent \((i,j)\) with AI agent \((i,k)\) reduces total risk-adjusted cost if and only if
\[
\widetilde C'_{ik}<\widetilde C_{ij}.
\]
Using definitions~\eqref{eq:human_effective_cost} and~\eqref{eq:AI_effective_cost}, we have
\[
\widetilde C'_{ik}
=
C'_{ik}+\lambda R'_{ik},
\qquad
\widetilde C_{ij}
=
C_{ij}+\lambda R_{ij}.
\]
Therefore,
\[
\widetilde C'_{ik}<\widetilde C_{ij}
\quad\Longleftrightarrow\quad
C'_{ik}+\lambda R'_{ik}
<
C_{ij}+\lambda R_{ij}.
\]
Equivalently,
\[
C_{ij}-C'_{ik}
>
\lambda(R'_{ik}-R_{ij}).
\]
Thus, even if AI is cheaper on a nominal basis, so that
\[
C'_{ik}<C_{ij},
\]
substitution fails to reduce risk-adjusted cost whenever
\[
\lambda(R'_{ik}-R_{ij})
\ge
C_{ij}-C'_{ik}.
\]

Expanding the AI risk term using~\eqref{eq:AI_risk},
\[
R'_{ik}
=
\omega_1 R^{\mathrm{(rel)}}_{ik}
+
\omega_2 R^{\mathrm{(comp)}}_{ik}
+
\omega_3 R^{\mathrm{(rep)}}_{ik}.
\]
Substituting this expression into the preceding inequality gives
\[
\lambda\bigl(\omega_1 R^{\mathrm{(rel)}}_{ik}
+ \omega_2 R^{\mathrm{(comp)}}_{ik}
+ \omega_3 R^{\mathrm{(rep)}}_{ik}
- R_{ij}\bigr)
\ge
C_{ij}-C'_{ik}.
\]
Hence a sufficiently large reliability, compliance, or reputational risk penalty can offset AI's nominal cost advantage and prevent substitution on risk-adjusted grounds.
\end{proof}

\begin{styledtheorem}{Constrained Hybrid Optima}{app:Constrained Interior Optima}
Let \(\mathcal{C}\) be defined over the simplex \(\mathcal{Q}\) as in Theorem~\ref{thm:Global Optimal Allocation}. Suppose additional linear constraints are imposed, such as minimum human allocation, regulatory bounds, capacity constraints, or skill requirements, so that the feasible set becomes
\[
\mathcal{Q}_c
=
\{(q,q')\in\mathcal{Q}: A(q,q')\le b,\; E(q,q')=d\},
\]
where \(\mathcal{Q}_c\) is nonempty. Then the constrained problem
\[
\min_{(q,q')\in\mathcal{Q}_c}\mathcal{C}(q,q')
\]
has a global solution, and at least one global minimizer is an extreme point of \(\mathcal{Q}_c\). However, an extreme point of \(\mathcal{Q}_c\) need not be an extreme point of the original simplex \(\mathcal{Q}\). Consequently, when the additional constraints exclude the unconstrained minimum-cost vertex or impose positive lower or upper bounds across agent types, an optimal constrained allocation
may assign positive weight to multiple agents, including both human and AI agents.

\noindent\textbf{Remark.}
Under the AI risk decomposition \eqref{eq:AI_risk}, a large compliance, reliability, or reputational risk component can eliminate a nominal AI cost advantage:
\[
\tilde{C}'_{ik}=C'_{ik}+\lambda R'_{ik}
\ge
\tilde{C}_{ij}=C_{ij}+\lambda R_{ij}
\]
even when \(C'_{ik}<C_{ij}\). In the unconstrained single-task problem, this risk adjustment changes which vertex is optimal; by itself it does not generally create an interior mixture unless there is a tie. In constrained or heterogeneous multi-task settings, however, risk-adjusted cost heterogeneity can sustain hybrid human--AI organizational structures without imposing an exogenous minimum-human-fraction constraint.
\end{styledtheorem}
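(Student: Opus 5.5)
The plan is to treat the constrained problem as a linear program over a nonempty polytope and then establish the "need not" claims with explicit small examples.

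\emph{Existence and extreme-point attainment.} By Theorem~\ref{thm:Normalization, Feasibility, and Simplex Structure}, \(\mathcal{Q}\) is a standard simplex, hence compact and convex. The set \(\mathcal{Q}_c\) is the intersection of \(\mathcal{Q}\) with finitely many closed half-spaces and hyperplanes. It is therefore a closed, bounded polyhedron, i.e.\ a polytope, and it is nonempty by hypothesis. The objective \(\mathcal{C}(q,q')\) is linear, hence continuous, so Weierstrass gives a global minimizer. For the extreme-point claim, I will use the fact that a bounded polytope equals the convex hull of its finitely many vertices (Minkowski--Weyl). Any minimizer \(x^*\) can then be written as \(x^*=\sum_v \theta_v v\) with \(\theta_v\ge 0\) and \(\sum_v\theta_v=1\). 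Linearity gives \(\mathcal{C}(x^*)=\sum_v\theta_v\mathcal{C}(v)\ge\min_v\mathcal{C}(v)\). Hence some vertex attains the minimum, which is exactly the argument used in Theorem~\ref{thm:Global Optimal Allocation} with vertices of \(\mathcal{Q}_c\) replacing agent indicators. Equivalently, I can cite the Fundamental Theorem of Linear Programming as in that proof.

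\emph{Vertices of \(\mathcal{Q}_c\) need not be vertices of \(\mathcal{Q}\).} One concrete example suffices. Take one human and one AI agent, so \(\mathcal{Q}=\{(q,q')\ge 0: q+q'=1\}\). Impose \(q\ge\gamma\) with \(\gamma\in(0,1)\). Then \(\mathcal{Q}_c=\{(q,1-q):\gamma\le q\le 1\}\), whose vertices are \((\gamma,1-\gamma)\) and \((1,0)\). The first is not a vertex of \(\mathcal{Q}\).

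\emph{Hybrid optimum.} If \(\tilde C'<\tilde C\) in this example, then \(\mathcal{C}=\tilde C q+\tilde C'(1-q)\) is strictly increasing in \(q\). The unique optimum is therefore \(q=\gamma\), \(q'=1-\gamma\), with both weights positive. This shows that excluding the unconstrained minimum-cost vertex \((0,1)\) produces an optimal allocation mixing human and AI agents, consistent with Corollary~\ref{cor:Minimum Human Fraction Constraint}. An upper bound on AI allocation, such as \(q'\le u<1\), works symmetrically.

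\emph{Remark.} The risk-offset inequality follows verbatim from Theorem~\ref{thm:Risk-Adjusted Substitution Principle}. The claim that risk alone yields no interior mixture absent ties follows from the uniqueness part of Theorem~\ref{thm:Global Optimal Allocation}.

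The main obstacle is interpretive rather than technical. The conclusion "may assign positive weight" is existential, so I will establish it by the explicit example and not attempt a general characterization. The only care needed is verifying that \(\mathcal{Q}_c\) is bounded, which is inherited from \(\mathcal{Q}\), so that the vertex representation applies without recession directions.
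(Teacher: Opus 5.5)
Your proposal is correct and follows the paper's proof essentially step for step. In both, $\mathcal{Q}_c$ is a nonempty compact polytope, continuity plus the fundamental theorem of linear programming give an optimal vertex, and a minimum-human-fraction constraint exhibits vertices of $\mathcal{Q}_c$ that are not vertices of $\mathcal{Q}$. Your explicit two-agent instance, where $\tilde C'<\tilde C$ forces the unique optimum $q=\gamma$, is sharper than the paper's conditional ``if this constraint binds at the optimum.''

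The only claim you leave untouched is the Remark's last sentence, that risk heterogeneity can sustain hybrid structures in multi-task settings without a minimum-human constraint. The paper supports this through role-specific risk reversals across tasks. You can close it with two separable tasks: in one, $\lambda(R'_{ik}-R_{ij})\ge C_{ij}-C'_{ik}>0$ makes the human optimal, and in the other AI remains strictly cheaper. Applying Theorem~\ref{thm:AI-Dominant Extreme-Point Allocation} task by task then gives a hybrid organization with no minimum-human constraint.
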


\begin{proof}
The original feasible set \(\mathcal{Q}\) is a simplex and hence a compact convex polytope. The additional linear constraints define a closed convex polyhedron. Since
\[
\mathcal{Q}_c
=
\{(q,q')\in\mathcal{Q}: A(q,q')\le b,\; E(q,q')=d\}
\]
is the intersection of a compact polytope with closed linear equality and inequality constraints, \(\mathcal{Q}_c\) is also a compact convex polytope, provided it is nonempty.

The objective \(\mathcal{C}(q,q')\) is linear, hence continuous. Therefore, \(\mathcal{C}\) attains a global minimum over the compact feasible set \(\mathcal{Q}_c\). By the fundamental theorem of linear optimization, there exists at least one optimal solution at an extreme point of \(\mathcal{Q}_c\).

However, the extreme points of \(\mathcal{Q}_c\) need not coincide with the extreme points of \(\mathcal{Q}\). The vertices of \(\mathcal{Q}\) correspond to allocations placing all task weight on a single human or AI agent. Additional constraints can exclude these vertices or make them infeasible. In that case, the optimal extreme points of \(\mathcal{Q}_c\) arise from intersections of the simplex with active constraint hyperplanes. Such points may have multiple strictly positive allocation weights.

For example, if a constraint requires at least a fraction \(\gamma\in(0,1)\) of the task to be assigned to human agents, then an otherwise AI-dominant solution must satisfy
\[
\sum_{i,j}q_{ij}\ge \gamma.
\]
If this constraint binds at the optimum, then the allocation necessarily assigns positive weight to at least one human agent and positive weight to at least one AI agent. Thus the constrained optimum can be hybrid, even though the unconstrained linear objective admits an extreme-point solution over the original simplex.

Finally, consider the endogenous risk mechanism. By \eqref{eq:AI_risk},
\[
R'_{ik}
=
\omega_1 R^{(\mathrm{rel})}_{ik}
+
\omega_2 R^{(\mathrm{comp})}_{ik}
+
\omega_3 R^{(\mathrm{rep})}_{ik}.
\]
If
\[
\lambda R'_{ik}
\ge
C_{ij}-C'_{ik}+\lambda R_{ij},
\]
then
\[
\tilde C'_{ik}
=
C'_{ik}+\lambda R'_{ik}
\ge
C_{ij}+\lambda R_{ij}
=
\tilde C_{ij},
\]
even though \(C'_{ik}<C_{ij}\). Thus risk adjustment can reverse the nominal cost ordering and prevent AI substitution for that role. In the unconstrained single-task problem, this changes the identity of the optimal vertex. In constrained or heterogeneous multi-task settings, such role-specific reversals can cause some tasks or positions to remain human while others are assigned to AI, thereby providing
a risk-based mechanism for hybrid human--AI organizational structures.
\end{proof}

\begin{styledcorollary}{Minimum Human Fraction Constraint}{app:Minimum Human Fraction Constraint}
Let $\sum_{(i,j)\in\mathcal H} q_{ij}\ge \gamma$, $\gamma\in(0,1]$, where \(\mathcal H\) indexes all human agents. If every human agent has risk-adjusted cost strictly greater than at least one feasible AI agent, then every optimal allocation assigns human weight only up to the required boundary:
\[
\sum_{(i,j)\in\mathcal H} q_{ij}=\gamma.
\]
If the corresponding AI and human agents are tied in risk-adjusted cost, then there exists an optimal allocation satisfying the boundary condition, but non-boundary tied allocations may also be optimal.
\end{styledcorollary}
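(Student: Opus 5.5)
The plan is an exchange argument on the constrained feasible set
\[
\mathcal{Q}_\gamma=\Bigl\{(q,q')\in\mathcal{Q}:\ \sum_{(i,j)\in\mathcal H} q_{ij}\ge\gamma\Bigr\}.
\]
This set is a special case of $\mathcal{Q}_c$ in Theorem~\ref{thm:Constrained Interior Optima}, with one linear inequality. It is nonempty because $\gamma\le 1$: put weight $\gamma$ on any human and $1-\gamma$ on any AI agent. So Theorem~\ref{thm:Constrained Interior Optima} gives existence of a global minimizer, and I only need to describe its human mass.

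Let $m_H=\min_{i,j}\tilde C_{ij}$ and $m_A=\min_{i,k}\tilde C'_{ik}$, and fix an AI agent $(i_0,k_0)$ with $\tilde C'_{i_0k_0}=m_A$. The hypothesis that every human is strictly more expensive than some feasible AI agent gives $\tilde C_{ij}>m_A$ for all $(i,j)\in\mathcal H$, hence $m_H>m_A$.

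\textbf{Strict case.} Suppose $(q^*,q'^*)$ is optimal with $h^*:=\sum_{\mathcal H}q^*_{ij}>\gamma$. Then some human $(i_1,j_1)$ has $q^*_{i_1j_1}>0$. I transfer $\delta=\min\{q^*_{i_1j_1},\,h^*-\gamma\}>0$ from $(i_1,j_1)$ to $(i_0,k_0)$. This move keeps nonnegativity, unit sum, and the constraint $\sum_{\mathcal H}q_{ij}\ge\gamma$.

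The cost change is computed exactly as in the proof of Theorem~\ref{thm:Human--AI Substitution Principle}: by linearity, $\Delta\mathcal C=\delta(\tilde C'_{i_0k_0}-\tilde C_{i_1j_1})<0$. That proof never used the two agents sharing a level, so it applies across levels. This contradicts optimality, so every optimum has $h^*=\gamma$ (automatic if $\gamma=1$).

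As a check on the claim, I would also record the optimal value $\gamma m_H+(1-\gamma)m_A$. It follows from the same decomposition $\mathcal C-[\ldots]=\sum(c_\ell-m)w_\ell$ used for Theorem~\ref{thm:Global Optimal Allocation}, applied separately to the human block and the AI block.

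\textbf{Tie case.} Here the hypothesis weakens to $m_A\le m_H$. I start from any optimum, which exists by the argument above. If its human mass exceeds $\gamma$, I apply the same transfer repeatedly, one human at a time. Each transfer changes cost by $\delta(m_A-\tilde C_{i_1j_1})\le 0$, so the result is still optimal, and after finitely many steps the human mass equals $\gamma$. This gives an optimal boundary allocation.

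For the ``non-boundary may also be optimal'' clause I would give a witness. When $m_A=m_H$, put mass $\gamma'\in(\gamma,1]$ on a minimum-cost human and $1-\gamma'$ on $(i_0,k_0)$. Its cost is $m_A=m_H$, which equals the lower bound, so it is optimal.

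The main obstacle is interpretive rather than technical. The statement's phrase ``the corresponding AI and human agents are tied'' must be read as $m_A=m_H$, or more generally $m_A\le m_H$ with equality for the humans carrying weight. I would fix that reading explicitly before the tie case. I would also note that the strict case truly needs the strict inequality for every human, not just the cheapest one, since any positively weighted human is a candidate for the transfer.
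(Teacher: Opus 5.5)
Your proposal is correct and uses the same exchange argument as the paper. In the strict case you shift $\delta=\min\{q_{ij},\,h^*-\gamma\}$ from a positively weighted human to a strictly cheaper AI agent, which contradicts optimality. In the tie case you make cost-neutral shifts until the constraint binds, and your explicit reading $m_A\le m_H$ tightens the paper's looser tie argument.

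One small correction to your closing remark. Strict domination of every human is equivalent to strict domination of the cheapest human, since both say exactly $m_A<m_H$, so that caveat is unnecessary.
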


\begin{proof}
Minimize
\[
\mathcal{C}
=
\sum_{i,j}\tilde{C}_{ij}q_{ij}
+
\sum_{i,k}\tilde{C}'_{ik}q'_{ik}
\]
over the constrained feasible set
\[
\mathcal{Q}_c
=
\left\{(q,q')\in\mathcal{Q}:
\sum_{(i,j)\in\mathcal H} q_{ij}\ge \gamma
\right\}.
\]
The set \(\mathcal{Q}_c\) is a nonempty compact convex polytope whenever the constraint is feasible, and \(\mathcal{C}\) is linear. Hence a global optimum exists, and at least one optimum is attained at an extreme point of \(\mathcal{Q}_c\).

Suppose first that every human agent has risk-adjusted cost strictly greater than at least one feasible AI agent. Let \((q,q')\in\mathcal{Q}_c\) be an optimal allocation, and suppose for contradiction that
\[
\sum_{(i,j)\in\mathcal H} q_{ij}>\gamma.
\]
Then there exists at least one human agent \((i,j)\in\mathcal H\) with \(q_{ij}>0\). By assumption, there exists a feasible AI agent \((i',k)\) such that
\[
\tilde C'_{i'k}<\tilde C_{ij}.
\]
Choose
\[
0<\delta\le 
\min\left\{
q_{ij},
\sum_{(a,b)\in\mathcal H}q_{ab}-\gamma
\right\}.
\]
Define a perturbed allocation by
\[
q_{ij}\to q_{ij}-\delta,
\qquad
q'_{i'k}\to q'_{i'k}+\delta,
\]
with all other allocation weights unchanged. This perturbation preserves nonnegativity, preserves the unit-sum constraint, and keeps the human-fraction constraint feasible because the total human weight remains at least \(\gamma\).

The change in objective value is
\[
\Delta\mathcal{C}
=
\delta(\tilde C'_{i'k}-\tilde C_{ij})<0,
\]
which contradicts the optimality of \((q,q')\). Therefore every optimal allocation must satisfy
\[
\sum_{(i,j)\in\mathcal H}q_{ij}=\gamma.
\]

Now consider the tie case. Suppose a human agent and a feasible AI agent have equal risk-adjusted cost. If an optimal allocation places more than \(\gamma\) weight on tied human agents, shifting weight from such a human agent to the tied AI agent leaves \(\mathcal{C}\) unchanged. Repeating this operation until the human-fraction constraint binds produces an optimal allocation satisfying
\[
\sum_{(i,j)\in\mathcal H}q_{ij}=\gamma.
\]
However, because the shifted human and AI agents have equal risk-adjusted cost, the original allocation with human weight exceeding \(\gamma\) may also remain optimal. Thus, in the tie case, there exists an optimal allocation on the boundary, but non-boundary tied allocations may also be optimal.
\end{proof}

\begin{styledtheorem}{Condition for AI-Dominant Extreme-Point Allocation}{app:AI-Dominant Extreme-Point Allocation}
In the unconstrained HAT allocation problem for a given task \(\tau\in\mathcal{T}\), let
\[
m_H(\tau)=\min_{i,j}\widetilde C_{ij}(\tau),
\qquad
m_A(\tau)=\min_{i,k}\widetilde C'_{ik}(\tau).
\]
Then an AI-only extreme-point allocation is optimal if and only if
\[
m_A(\tau)\le m_H(\tau).
\]
Moreover, the optimum is AI-dominant, in the sense that no human agent is optimal, if and only if 
$m_A(\tau)<m_H(\tau)$. If \(m_A(\tau)=m_H(\tau)\), then an AI-only extreme-point allocation is optimal but not unique, since at least one human allocation also attains the same minimum risk-adjusted cost.
\end{styledtheorem}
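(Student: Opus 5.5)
The plan is to reduce the statement directly to Theorem~\ref{thm:Global Optimal Allocation}, applied to the single-task problem for \(\tau\). That theorem says the minimum of \(\mathcal{C}\) over \(\mathcal{Q}\) equals \(m=\min\{m_H(\tau),m_A(\tau)\}\). It also says the set of global minimizers is exactly the set of allocations supported on agents whose risk-adjusted cost equals \(m\). Everything below is a case analysis on the ordering of \(m_A(\tau)\) and \(m_H(\tau)\) using this characterization.

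\textbf{First claim.} Here ``AI-only extreme-point allocation'' means a vertex \(q'_{ik}=1\) with all other weights zero, whose cost is \(\widetilde C'_{ik}(\tau)\).

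Suppose \(m_A(\tau)\le m_H(\tau)\). Then \(m=m_A(\tau)\). Choose \((i^*,k^*)\in\arg\min_{i,k}\widetilde C'_{ik}(\tau)\); the argmin is nonempty because the index sets are finite. The corresponding vertex has cost \(m\), so it is optimal.

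Conversely, suppose some AI vertex \((i,k)\) is optimal. Then \(\widetilde C'_{ik}(\tau)=m\le m_H(\tau)\), and \(m_A(\tau)\le\widetilde C'_{ik}(\tau)\). Hence \(m_A(\tau)\le m_H(\tau)\).

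\textbf{Second claim.} Interpret ``no human agent is optimal'' as: every minimizer has \(q_{ij}=0\) for all human indices. Equivalently, no human index lies in the tie set \(\mathcal{M}\) from the proof of Theorem~\ref{thm:Global Optimal Allocation}.

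If \(m_A(\tau)<m_H(\tau)\), then every human has \(\widetilde C_{ij}\ge m_H>m\). So \(\mathcal{M}\) contains only AI indices, and every minimizer is supported on AI agents.

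If instead \(m_A(\tau)\ge m_H(\tau)\), then \(m=m_H(\tau)\). Some human attains \(m\), so that human's vertex is optimal and AI dominance fails.

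\textbf{Tie case.} If \(m_A=m_H\), the first claim gives an optimal AI vertex. Also \(m=m_H\) is attained by a human index, so the corresponding human vertex is a distinct optimal allocation, and non-uniqueness follows.

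The main obstacle is definitional rather than technical. The phrase ``AI-dominant'' must be read as ``every optimal allocation puts zero weight on humans'', not merely ``some AI vertex is optimal''. I will state this interpretation explicitly and invoke the exact minimizer characterization (support contained in \(\mathcal{M}\)) rather than just the existence part of the linear-programming argument.
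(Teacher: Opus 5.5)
Your proposal is correct and follows essentially the same route as the paper. Both arguments reduce to the minimizer characterization in Theorem~\ref{thm:Global Optimal Allocation} (minimum value $m=\min\{m_H(\tau),m_A(\tau)\}$, attained exactly by allocations supported on agents of cost $m$) and then do a case analysis on the ordering of $m_A(\tau)$ and $m_H(\tau)$. The only cosmetic difference is that you prove the converse of the strict-dominance claim in the single case $m_A(\tau)\ge m_H(\tau)$, whereas the paper splits it into $m_A(\tau)>m_H(\tau)$ and $m_A(\tau)=m_H(\tau)$.
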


\begin{proof}
Fix a task \(\tau\in\mathcal{T}\). The total risk-adjusted cost for this task is
\[
\mathcal{C}_\tau(q,q') =
\sum_{i,j} \widetilde C_{ij}(\tau)\, q_{ij}
+
\sum_{i,k} \widetilde C'_{ik}(\tau)\, q'_{ik},
\qquad (q,q')\in\mathcal{Q}.
\]
By Theorem~\ref{thm:Global Optimal Allocation}, the minimum value of
\(\mathcal{C}_\tau\) over \(\mathcal{Q}\) is
\[
m(\tau)=\min\{m_H(\tau),m_A(\tau)\},
\]
where
\[
m_H(\tau)=\min_{i,j}\widetilde C_{ij}(\tau),
\qquad
m_A(\tau)=\min_{i,k}\widetilde C'_{ik}(\tau).
\]

We first prove the condition for the existence of an optimal AI-only extreme-point
allocation. Such an allocation exists if and only if at least one AI agent attains the
global minimum \(m(\tau)\). Since the lowest AI risk-adjusted cost is \(m_A(\tau)\),
this is equivalent to
\[
m_A(\tau)=m(\tau)=\min\{m_H(\tau),m_A(\tau)\},
\]
which holds if and only if
\[
m_A(\tau)\le m_H(\tau).
\]
Therefore, an AI-only extreme-point allocation is optimal if and only if
\(m_A(\tau)\le m_H(\tau)\).

It remains to characterize strict AI dominance. If
\[
m_A(\tau)<m_H(\tau),
\]
then every human agent has risk-adjusted cost strictly greater than the global
minimum \(m_A(\tau)\). Hence no human agent is optimal, and every optimal allocation
must place positive weight only on AI agents attaining \(m_A(\tau)\). Thus the
optimum is AI-dominant.

Conversely, suppose the optimum is AI-dominant in the sense that no human agent is
optimal. If \(m_A(\tau)>m_H(\tau)\), then the minimum-cost human agent would have
strictly lower risk-adjusted cost than every AI agent, so an AI-only allocation could
not be optimal. If \(m_A(\tau)=m_H(\tau)\), then at least one human agent attains the
same minimum risk-adjusted cost as an AI agent and is therefore also optimal,
contradicting AI dominance. Hence AI dominance requires
\[
m_A(\tau)<m_H(\tau).
\]

Finally, if
\[
m_A(\tau)=m_H(\tau),
\]
then an AI-only extreme-point allocation is optimal because an AI agent attains the
global minimum. However, at least one human agent also attains the same minimum
risk-adjusted cost, so the AI-only allocation is not unique. By
Theorem~\ref{thm:Global Optimal Allocation}, any convex combination of the tied
minimum-cost human and AI extreme points is also optimal.
\end{proof}


\begin{styledtheorem}{Monotonic Automation Path}{app:Monotonic Automation Path}
Holding human risk-adjusted costs, organizational constraints, and switching costs fixed, let \(\tilde{C}'_{ik}(t)\) denote the risk-adjusted cost of AI agents at time \(t\), and suppose \(\tfrac{d}{dt}\tilde{C}'_{ik}(t) \le 0\) for all \((i,k)\), with strict inequality for at least one \((i,k)\). Then the optimal allocation \((q^*(t), q'^*(t))\) evolves weakly monotonically toward AI-dominated extreme points of \(\mathcal{Q}\).
\end{styledtheorem}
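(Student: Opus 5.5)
We make ``weakly monotonically toward AI-dominated extreme points'' precise through two quantities, and prove both by pointwise comparison. At each time \(t\), the unconstrained problem is the linear program of Theorem~\ref{thm:Global Optimal Allocation} with human costs \(\widetilde C_{ij}\) held fixed and AI costs \(\widetilde C'_{ik}(t)\) varying. Set
\[
m_H=\min_{i,j}\widetilde C_{ij},\qquad m_A(t)=\min_{i,k}\widetilde C'_{ik}(t).
\]
The two quantities are (a) the indicator that an AI-only extreme point is optimal, and (b) the optimal human weight \(\sum_{i,j}q^*_{ij}(t)\). The claim is that (a) is nondecreasing in \(t\) and that (b), for a suitable optimal selection, is nonincreasing.

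\textbf{Step 1.} Show that \(m_A(t)\) is nonincreasing in \(t\). Each \(\widetilde C'_{ik}(t)\) has nonpositive derivative, so for \(t_1\le t_2\) we have \(\widetilde C'_{ik}(t_2)\le \widetilde C'_{ik}(t_1)\). A pointwise minimum of nonincreasing functions is nonincreasing. Because the index set is finite, the minimum is well defined and no regularity beyond the stated differentiability is needed.

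\textbf{Step 2.} Apply Theorem~\ref{thm:AI-Dominant Extreme-Point Allocation}. An AI-only extreme point is optimal at time \(t\) iff \(m_A(t)\le m_H\), and the optimum is AI-dominant iff \(m_A(t)<m_H\). Since \(m_H\) is constant and \(m_A\) is nonincreasing, both sets of times are upward closed. Once an AI-only vertex becomes optimal it stays optimal, and once the optimum is AI-dominant it stays AI-dominant. This gives (a).

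For (b), use the selection rule from Theorem~\ref{thm:Global Optimal Allocation}, and break ties in favor of AI agents. The optimal human weight is then \(1\) while \(m_A(t)>m_H\) and \(0\) afterward, a single downward jump. This is a nonincreasing step function.

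\textbf{Step 3.} Use the strict-decrease hypothesis to show the movement is genuinely toward AI. Some \((i,k)\) has \(\frac{d}{dt}\widetilde C'_{ik}<0\). This agent's cost strictly falls, so the set of AI agents that are cost-competitive with the best human can only grow over time. Within the AI-dominated regime, the identity of the optimal AI vertex may change from one AI agent to another. This is still consistent with the claim, since every point in that regime is an AI-dominated extreme point.

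\textbf{Main obstacle.} The statement itself is informal: ``evolves weakly monotonically toward'' has to be given a definition. There is also a subtlety: if the strict decrease occurs only on a non-minimal AI agent, \(m_A\) may stay constant, so the strict hypothesis does not force a transition. I would state the result as weak monotonicity of (a) and (b) and note explicitly that this case is allowed.

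Extending the argument to fixed linear constraints \(\mathcal Q_c\) is harder. For that I would argue with the LP optimal value together with a lexicographic tie-breaking rule. Specifically, I would show that decreasing any cost coefficient of an AI variable weakly increases that variable's weight in the lexicographically chosen optimum. This is the standard monotonicity of LP solutions in their own cost coefficient, which follows from adding the optimality inequalities at the two times.
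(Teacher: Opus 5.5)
Your core argument (Steps 1--2) is correct and is essentially the paper's proof. $m_H$ is constant, and $m_A(t)$ is nonincreasing as a finite minimum of nonincreasing functions. By the AI-Dominant Extreme-Point Allocation theorem, the sets of times with $m_A(t)\le m_H$ and with $m_A(t)<m_H$ are therefore upward closed. Your AI-favoring tie-breaking rule, which makes the selected human weight a nonincreasing step function, is a sound refinement that the paper leaves implicit. So is your remark that the strict-decrease hypothesis is never used; the paper's proof does not use it either.

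The step that would fail is your proposed extension to a fixed constrained set $\mathcal Q_c$. Write $c(t)$ for the cost vector and $x(t_1), x(t_2)$ for optimal allocations at two times. Adding the two optimality inequalities gives $\sum_\ell\bigl(c_\ell(t_1)-c_\ell(t_2)\bigr)\bigl(x_\ell(t_1)-x_\ell(t_2)\bigr)\le 0$. This signs the change in a coordinate only when that coordinate's coefficient is the only one that moves. Under the theorem's hypothesis, several AI costs may fall at different rates, and then the inequality controls only a weighted combination.

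In fact the constrained analogue is false. Take one human $h$ and AI agents $a_1,a_2$, with coordinates $(q_h,q'_{a_1},q'_{a_2})$ and the oversight constraint $q'_{a_1}\le q_h$. The feasible set then has vertices $e_h$, $e_{a_2}$, and $(\tfrac12,\tfrac12,0)$.

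\begin{itemize}
\item With costs $(\widetilde C_h,\widetilde C'_{a_1},\widetilde C'_{a_2})=(10,8,6)$, the vertex values are $10,6,9$, so the unique optimum is the pure-AI vertex $e_{a_2}$.
\item After the AI costs fall to $(10,0,5.9)$, the vertex values are $10,5.9,5$, so the unique optimum is $(\tfrac12,\tfrac12,0)$.
\end{itemize}

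Human weight thus rises from $0$ to $\tfrac12$, even though human costs and constraints are fixed and every AI cost weakly fell. Here $q'_{a_1}$ rises while $q'_{a_2}$ falls by more. You should therefore confine the claim to the unconstrained simplex $\mathcal Q$. The paper's proof, which rests on the Global Optimal Allocation theorem, implicitly does the same.
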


\begin{proof}
At any fixed time \(t\), the risk-adjusted cost function is
\[
\mathcal{C}(t)
=
\sum_{i,j} \widetilde C_{ij}\,q_{ij}
+
\sum_{i,k} \widetilde C'_{ik}(t)\,q'_{ik},
\qquad (q,q')\in\mathcal{Q}.
\]
Human risk-adjusted costs are held fixed, while AI risk-adjusted costs satisfy
\[
\frac{d}{dt}\widetilde C'_{ik}(t)\le 0
\qquad
\text{for all }(i,k).
\]
Define
\[
m_H=\min_{i,j}\widetilde C_{ij},
\qquad
m_A(t)=\min_{i,k}\widetilde C'_{ik}(t).
\]
Since all human costs are fixed, \(m_H\) is constant. Since each AI cost
\(\widetilde C'_{ik}(t)\) is non-increasing in \(t\), the minimum AI cost
\(m_A(t)\) is also non-increasing in \(t\).

By Theorem~\ref{thm:Global Optimal Allocation}, the minimum value of
\(\mathcal{C}(t)\) over \(\mathcal{Q}\) is
\[
m(t)=\min\{m_H,m_A(t)\}.
\]
Moreover, optimal extreme points are precisely those corresponding to agents whose
risk-adjusted cost equals \(m(t)\).

An AI-only extreme point is optimal at time \(t\) if and only if
\[
m_A(t)\le m_H.
\]
Because \(m_A(t)\) is non-increasing and \(m_H\) is fixed, once this inequality
holds at some time \(t_0\), it continues to hold for all \(t\ge t_0\). Similarly,
strict AI dominance holds when
\[
m_A(t)<m_H.
\]
Once this strict inequality holds, it cannot be reversed as long as human
risk-adjusted costs, organizational constraints, and switching costs remain fixed.

Therefore, as AI risk-adjusted costs decline, the set of optimal allocations can
only move weakly toward AI-optimal extreme points: a human-dominant optimum may
become tied with an AI optimum and then become AI-dominant, but an AI-dominant
strict optimum cannot revert to a human-dominant optimum under the maintained
conditions. Ties or switches among AI agents may occur, but these do not reverse
the monotonic movement toward AI-dominated extreme points.
\end{proof}

\begin{styledcorollary}{Irreversibility of Automation}{app:Irreversibility of Automation}
Under the conditions of Theorem~\ref{thm:Monotonic Automation Path}, holding human risk-adjusted costs and organizational constraints fixed, and assuming switching costs are negligible, suppose
\[
\widetilde C'_{ik}(t_0)<\widetilde C_{ij}.
\]
Then for all \(t\ge t_0\), the optimal allocation does not revert to the human agent \((i,j)\).
\end{styledcorollary}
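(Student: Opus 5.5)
The argument is a monotonicity step followed by an appeal to the characterization of minimizers in Theorem~\ref{thm:Global Optimal Allocation}.

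\textbf{Step 1: the strict inequality persists.} Fix the human agent $(i,j)$, whose risk-adjusted cost $\widetilde C_{ij}$ is held constant. Under the hypotheses of Theorem~\ref{thm:Monotonic Automation Path}, $\frac{d}{dt}\widetilde C'_{ik}(t)\le 0$. Integrating from $t_0$ to $t$, or using the mean value theorem, gives $\widetilde C'_{ik}(t)\le \widetilde C'_{ik}(t_0)$ for all $t\ge t_0$. Combining this with the hypothesis yields
\[
\widetilde C'_{ik}(t)\le \widetilde C'_{ik}(t_0)<\widetilde C_{ij}\qquad\text{for all } t\ge t_0 .
\]
So the AI agent $(i,k)$ is strictly cheaper than the human $(i,j)$ throughout $[t_0,\infty)$.

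\textbf{Step 2: the human is never a minimizer.} Let $m(t)$ be the minimum risk-adjusted cost over all agents at time $t$. Then
\[
m(t)\le \widetilde C'_{ik}(t)<\widetilde C_{ij}.
\]
By Theorem~\ref{thm:Global Optimal Allocation}, an allocation is optimal if and only if it puts zero weight on every agent whose cost exceeds $m(t)$. Hence every optimal allocation at every $t\ge t_0$ has $q^*_{ij}(t)=0$. Equivalently, one can run the transfer argument of Theorem~\ref{thm:Human--AI Substitution Principle}: moving any positive weight from $(i,j)$ to $(i,k)$ strictly lowers cost, which contradicts optimality. This is exactly the statement that the optimum never reverts to the human agent $(i,j)$.

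\textbf{Step 3: the constrained case.} If fixed organizational constraints are present, as in Theorem~\ref{thm:Constrained Interior Optima}, I would argue as in Corollary~\ref{cor:Minimum Human Fraction Constraint}. Shifting weight from $(i,j)$ to $(i,k)$ strictly reduces cost whenever the shift preserves feasibility, so $(i,j)$ carries no weight beyond what the constraints force. In that setting the conclusion should be read as ``no reversion beyond what the fixed constraints require.'' Negligible switching costs guarantee that past allocations do not enter the current objective, so the problem at each $t$ is the static one.

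\textbf{Main obstacle.} The only delicate point is interpretive, not technical. One must make precise what ``revert to the human agent'' means: that no optimal allocation at any $t\ge t_0$ assigns positive weight to $(i,j)$. One must also make sure that the strict inequality, not merely the weak one, is what rules out ties. Monotonicity of $\widetilde C'_{ik}$ carries that strictness forward from $t_0$.
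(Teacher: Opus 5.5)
Your Steps 1--2 are exactly the paper's proof. Monotonicity of $\widetilde C'_{ik}$ gives $\widetilde C'_{ik}(t)\le\widetilde C'_{ik}(t_0)<\widetilde C_{ij}$ for all $t\ge t_0$; the minimizer characterization in the Global Optimal Allocation theorem then forces $q^*_{ij}(t)=0$, and negligible switching costs reduce each date to the static problem.

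The paper's proof works only in the unconstrained setting, so your Step 3 goes beyond it, and your hedged reading there is the right one. Under coupling constraints such as a supervision requirement $q'_{ik}\le q_{ij}$, with a cheaper second human present, a falling AI cost can move positive weight onto $(i,j)$ after $t_0$. So the clean conclusion $q^*_{ij}(t)=0$ genuinely needs the unconstrained problem.
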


\begin{proof}
Under the conditions of Theorem~\ref{thm:Monotonic Automation Path}, human risk-adjusted costs are fixed and AI risk-adjusted costs are non-increasing over time. Hence, for all \(t\ge t_0\),
\[
\widetilde C'_{ik}(t)
\le
\widetilde C'_{ik}(t_0)
<
\widetilde C_{ij}.
\]
Thus the AI agent \((i,k)\) remains strictly lower in risk-adjusted cost than the human agent \((i,j)\) for all \(t\ge t_0\).

At each time \(t\), by Theorem~\ref{thm:Global Optimal Allocation}, an unconstrained optimal allocation places positive weight only on agents attaining the minimum risk-adjusted cost. Since \((i,j)\) has risk-adjusted cost strictly greater than that of AI agent \((i,k)\), the human agent \((i,j)\) cannot be a minimum-cost agent at any \(t\ge t_0\). Therefore every optimal allocation satisfies
\[
q_{ij}^*(t)=0
\qquad
\text{for all } t\ge t_0.
\]

Because switching costs are negligible, there is no dynamic friction that would justify retaining or returning to the higher-cost human allocation. Hence, once AI becomes strictly more risk-adjusted-cost-efficient than human agent \((i,j)\), the optimal allocation does not revert to that human agent under the maintained conditions.
\end{proof}

\begin{styledtheorem}{Perturbation Stability}{app:Perturbation Stability}
Small perturbations in risk-adjusted costs do not change
the optimal allocation unless they cause a threshold
$\tilde{C}'_{ik} = \tilde{C}_{ij}$ to be crossed.
\end{styledtheorem}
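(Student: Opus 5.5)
The approach is to reduce the statement to the characterization of minimizers in Theorem~\ref{thm:Global Optimal Allocation}. That theorem identifies the optimal set with the face of $\mathcal{Q}$ spanned by the agents attaining the minimum risk-adjusted cost. Write the cost vector as $c=(c_\ell)_{\ell\in\mathcal{I}_H\cup\mathcal{I}_A}$, as in that proof. Let $\mathcal{M}(c)=\{\ell: c_\ell=m(c)\}$ with $m(c)=\min_\ell c_\ell$. The optimal set is then $\mathrm{conv}\{e_\ell:\ell\in\mathcal{M}(c)\}$. So ``the optimal allocation does not change'' will be made precise as ``$\mathcal{M}(c+\eta)=\mathcal{M}(c)$'' for a perturbation $\eta$. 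The claim becomes: if no pairwise ordering $c_\ell$ versus $c_{\ell'}$ changes sign under the perturbation, in particular no human--AI equality $\tilde{C}'_{ik}=\tilde{C}_{ij}$ is crossed, then the minimizing set is unchanged.

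First, I would treat the generic case where the minimizer $\ell^*$ is unique. Define the gap $g=\min_{\ell\ne\ell^*}(c_\ell-c_{\ell^*})>0$. For any perturbation with $\|\eta\|_\infty<g/2$ we have $c_\ell+\eta_\ell>c_{\ell^*}+\eta_{\ell^*}$ for all $\ell\ne\ell^*$. Hence $\ell^*$ remains the unique minimizer, and the optimal allocation, the vertex $e_{\ell^*}$, is unchanged. This gives an explicit neighbourhood on which the optimizer is locally constant. The complement of such neighbourhoods lies on the finite union of hyperplanes $\{c_\ell=c_{\ell'}\}$, which matches the ``piecewise constant'' interpretation.

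Second, I would state the contrapositive in the form the theorem uses. Suppose a perturbation (not necessarily small) changes the optimal allocation. Then $\mathcal{M}$ changes, so some agent $\ell\notin\mathcal{M}(c)$ must reach or fall below the perturbed minimum. By continuity of the segment $c+s\eta$, $s\in[0,1]$, there is some $s$ at which $c_\ell+s\eta_\ell=c_{\ell^*}+s\eta_{\ell^*}$, so an equality threshold is hit. When $\ell$ and $\ell^*$ are of different types, this is precisely a crossing of $\tilde{C}'_{ik}=\tilde{C}_{ij}$.

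The main obstacle is interpretive rather than technical. If the changing pair is of the same type (two humans or two AIs), the identity of the optimal agent changes without a human--AI threshold being crossed. I would handle this by reading ``threshold'' as any equality among the agents' risk-adjusted costs, noting that human--AI equalities are the case relevant for substitution. Alternatively, I would restrict the conclusion to the human/AI composition of the optimum, which changes only when $m_H-m_A$ changes sign, by Theorem~\ref{thm:AI-Dominant Extreme-Point Allocation}. The tie case, where $|\mathcal{M}(c)|>1$, is already on a threshold, so the statement places no constraint there. I would remark on this explicitly.
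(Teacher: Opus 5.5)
Your proposal is correct and takes essentially the same route as the paper. Both arguments use the Global Optimal Allocation theorem to reduce the question to which agents attain the minimum risk-adjusted cost: an ordering-preserving perturbation leaves that set unchanged, and any change forces some pair of costs to become equal. Your explicit $g/2$ neighbourhood and your caveat about same-type reorderings (two humans or two AIs swapping at the minimum, which changes the optimal vertex without crossing any $\tilde{C}'_{ik}=\tilde{C}_{ij}$ threshold) make precise what the paper's proof leaves implicit, since its ``for all pairs'' condition ranges only over human--AI pairs.
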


\begin{proof}
By Theorem~\ref{thm:Global Optimal Allocation}, the optimal
allocation is determined entirely by the ordering of
$\{\tilde{C}_{ij}, \tilde{C}'_{ik}\}$. Consider a
perturbation $\tilde{C}_{ij} \to \tilde{C}_{ij}+\epsilon_{ij}$,
$\tilde{C}'_{ik} \to \tilde{C}'_{ik}+\epsilon'_{ik}$
with $|\epsilon_{ij}|$ and $|\epsilon'_{ik}|$ small.

As long as the perturbation preserves the ordering ---
i.e., $\tilde{C}'_{ik} < \tilde{C}_{i'j'} \Rightarrow
\tilde{C}'_{ik}+\epsilon'_{ik} < \tilde{C}_{i'j'}+
\epsilon_{i'j'}$ for all pairs --- the minimizing index
is unchanged and the optimal allocation is unaffected.

A change can occur only when the perturbation reverses
the ordering for some pair, i.e., when
$\tilde{C}'_{ik}+\epsilon'_{ik} = \tilde{C}_{ij}+\epsilon_{ij}$,
crossing the threshold $\tilde{C}'_{ik} = \tilde{C}_{ij}$.
Hence optimal allocations are locally constant in cost
parameters and change only at cost-equality thresholds.
\end{proof}

\begin{styledcorollary}{Discontinuous Workforce Transitions}{app:Discontinuous Workforce Transitions}
Workforce composition changes not gradually but via discrete
jumps.
\end{styledcorollary}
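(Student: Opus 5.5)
The plan is to make the informal statement precise and then obtain it as a consequence of Theorem~\ref{thm:Perturbation Stability} and Theorem~\ref{thm:Global Optimal Allocation}. I will model \emph{workforce composition} as the vector of optimal allocation weights $(q^*(\theta),q'^*(\theta))$, viewed as a function of the cost parameters $\theta$. These parameters include $\textup{Min\$}$, $\Delta\$$, $\textup{Min\$}'$, $\Delta\$'$, $T_k/n_k$, $\lambda$, the risk terms, $r_0$ and $r'_0$. Each risk-adjusted cost $\tilde C_{ij}(\theta)$ and $\tilde C'_{ik}(\theta)$ is a continuous (indeed affine or polynomial) function of $\theta$. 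I will also consider a continuous parameter path $\theta(t)$, which covers the case of gradually evolving AI costs. The claim to prove is then the following. The parameter space splits into finitely many open regions $\Omega_\ell$, in each of which a unique agent attains the minimum risk-adjusted cost. On each $\Omega_\ell$ the optimal composition is constant and equal to a vertex of $\mathcal{Q}$. Composition can change only on the union $Z$ of the equality sets $\{\tilde C_a(\theta)=\tilde C_b(\theta)\}$, and when it does change it jumps between distinct vertices.

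\textbf{Step 1 (partition).} For each pair of agents $a\neq b$, the set $\{\theta:\tilde C_a(\theta)=\tilde C_b(\theta)\}$ is closed by continuity. Take $Z$ to be the union of these sets over the finitely many pairs. The complement $Z^c$ is open, and on each of its connected components the strict ordering of all costs is fixed, again by continuity. \textbf{Step 2 (constancy).} On such a component the minimizer is unique. By Theorem~\ref{thm:Global Optimal Allocation}, the optimum is therefore the single vertex $e_{a^*}$ that puts all weight on that agent. This is the piecewise-constant conclusion, and it is the content of Theorem~\ref{thm:Perturbation Stability} stated globally rather than locally. \textbf{Step 3 (jumps).} Along a path $\theta(t)$ that moves from a component with minimizer $a$ to one with minimizer $b\neq a$, the selected optimum switches from $e_a$ to $e_b$. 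Since $\|e_a-e_b\|_1=2$, there is no continuous selection of the optimum, except through the tie face at the crossing time. At that tie, Theorem~\ref{thm:Threshold Substitution Rule} shows that every convex combination is optimal, but this happens only on a set of times that is nowhere dense generically.

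I expect the main obstacle to be the word ``not gradually''. At a tie the optimal set is a whole face, so the argument must rule out the optimizer traversing that face continuously. I would handle this by stating the result for the single-valued selection on $Z^c$. I would then show that $Z$ has empty interior, so that on a generic path the crossing times form a discrete set. Equivalently, the argmin correspondence is upper hemicontinuous but admits no continuous single-valued selection across a crossing. To show that $Z$ has empty interior, I would assume that the functions $\tilde C_a-\tilde C_b$ are nonconstant affine (or real-analytic) functions in at least one parameter; for instance, the sensitivity in $\textup{Min\$}'$ is $1$ for AI agents and $0$ for humans. Under that assumption each zero set has measure zero, and each is crossed transversally by generic paths. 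This gives the corollary: workforce composition is constant over intervals and changes by discrete jumps between extreme points.
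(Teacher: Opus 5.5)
Your proposal is correct and follows essentially the paper's argument: the ordering of risk-adjusted costs is constant off the equality sets (Theorem~\ref{thm:Perturbation Stability}), so by Theorem~\ref{thm:Global Optimal Allocation} the optimal allocation is a locally constant vertex of $\mathcal{Q}$ that can change only by jumping to a different vertex at a cost crossing. Your extra care makes explicit what the paper's short proof assumes tacitly: you count human--human and AI--AI ties, require that a crossing actually change the minimizer, and impose a genericity condition so that ties occur only at isolated times (otherwise a path running along a tie set would admit a gradual, continuous selection through the tie face).
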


\begin{proof}
Let $\tilde{C}_{ij}(\theta)$ and $\tilde{C}'_{ik}(\theta)$
vary continuously in an external parameter $\theta \in
\mathbb{R}$ (e.g., time, technology level, or market
conditions). The optimal allocation
$(q^*(\theta), q'^*(\theta)) \in \arg\min_{(q,q')\in
\mathcal{Q}}\mathcal{C}(\theta)$
is determined by the ordering of
$\{\tilde{C}_{ij}(\theta), \tilde{C}'_{ik}(\theta)\}$.

By Perturbation Stability, this ordering is constant over
any interval of $\theta$ on which no equality
$\tilde{C}'_{ik}(\theta) = \tilde{C}_{ij}(\theta)$
holds. Hence $(q^*(\theta), q'^*(\theta))$ is piecewise
constant in $\theta$.

When $\theta$ reaches $\theta^*$ such that $\tilde{C}'_{ik}(\theta^*) = \tilde{C}_{ij}(\theta^*)$
for some $(i,j,k)$, the ordering changes and the minimizing index switches. Since optimal allocations lie at extreme points of $\mathcal{Q}$, the allocation jumps discontinuously from one vertex to another.
\end{proof}


\begin{styledtheorem}{Strategic Substitution Equilibrium}{app:Strategic Substitution Equilibrium}
Fix a human agent \((i,j)\) and an AI agent \((i,k)\) competing for task allocation at organizational level \(i\). The interaction has two stages: effort and allocation. In the effort stage, the organization induces or implements adaptation on both agents. The human adaptation level is \(u_j\in U_j=[0,\bar u_j]\) representing upskilling, task redesign, process improvement, or increased effort associated with the human agent. The AI adaptation level is \(u_k\in U_k=[0,\bar u_k]\) representing capability improvement, fine-tuning, monitoring, reliability enhancement, compliance controls, or other risk-mitigation investments associated with the AI agent. Here \(\bar u_j,\bar u_k>0\). The realized risk-adjusted effective costs are
\[
\widetilde C_{ij}(u_j)
=
\widetilde C^0_{ij}-h_j(u_j),
\qquad
\widetilde C'_{ik}(u_k)
=
\widetilde C^{\prime 0}_{ik}-a_k(u_k),
\]
where \(h_j\) and \(a_k\) are continuous, nondecreasing cost-reduction functions satisfying $h_j(0)=0$, $a_k(0)=0$.

Let the effort costs be \(\phi_j(u_j)\) and \(\psi_k(u_k)\), where \(\phi_j\) and \(\psi_k\) are continuous and convex, with $\phi_j(0)=0$, $\psi_k(0)=0$. Let \(\rho>0\) denote the sensitivity of the effort-stage contest to realized risk-adjusted cost differences. Define
\[
p_j(u_j,u_k)
=
\frac{\exp[-\rho \widetilde C_{ij}(u_j)]}
{\exp[-\rho \widetilde C_{ij}(u_j)]
+
\exp[-\rho \widetilde C'_{ik}(u_k)]},
\qquad
p'_k(u_j,u_k)=1-p_j(u_j,u_k).
\]
Let \(B_j\ge 0\) and \(B'_k\ge 0\) denote the benefits of being selected for the human and AI sides, respectively. The effort-stage payoffs are
\[
\Pi_j(u_j,u_k)
=
B_j p_j(u_j,u_k)-\phi_j(u_j),
\]
\[
\Pi'_k(u_j,u_k)
=
B'_k p'_k(u_j,u_k)-\psi_k(u_k).
\]

Assume that \(\Pi_j(\cdot,u_k)\) and \(\Pi'_k(u_j,\cdot)\) are quasiconcave in their own effort choices for every fixed rival effort. Then the effort game admits a Nash equilibrium
\[
u^*=(u_j^*,u_k^*).
\]

In the allocation stage, the organization applies the HAT allocation rule to the realized equilibrium costs $\widetilde C_{ij}(u_j^*)$ and $\widetilde C'_{ik}(u_k^*)$. Thus, in our two-agent allocation subproblem,
\[
q_{ij}^*=1,\quad q_{ik}^{\prime *}=0
\quad
\text{if}
\quad
\widetilde C_{ij}(u_j^*)<\widetilde C'_{ik}(u_k^*),
\]
\[
q_{ij}^*=0,\quad q_{ik}^{\prime *}=1
\quad
\text{if}
\quad
\widetilde C'_{ik}(u_k^*)<\widetilde C_{ij}(u_j^*).
\]

If $\widetilde C_{ij}(u_j^*)=\widetilde C'_{ik}(u_k^*)$, then both agents attain the same realized minimum risk-adjusted cost, and any allocation supported on the tied agents is optimal. If \(h_j,a_k,\phi_j,\psi_k\) are differentiable, with one-sided derivatives at the boundaries, then the human equilibrium effort satisfies one of the following necessary marginal conditions:
\[
u_j^*=0
\quad\Longrightarrow\quad
B_j\rho\,p_j(u^*)p'_k(u^*)\dot h_j(0)
\le
\dot\phi_j(0),
\]
\[
u_j^*=\bar u_j
\quad\Longrightarrow\quad
B_j\rho\,p_j(u^*)p'_k(u^*)\dot h_j(\bar u_j)
\ge
\dot\phi_j(\bar u_j),
\]
or
\[
u_j^*\in(0,\bar u_j)
\quad\Longrightarrow\quad
B_j\rho\,p_j(u^*)p'_k(u^*)\dot h_j(u_j^*)
=
\dot\phi_j(u_j^*).
\]
Similarly, the AI equilibrium investment satisfies one of
\[
u_k^*=0
\quad\Longrightarrow\quad
B'_k\rho\,p_j(u^*)p'_k(u^*)\dot a_k(0)
\le
\dot\psi_k(0),
\]
\[
u_k^*=\bar u_k
\quad\Longrightarrow\quad
B'_k\rho\,p_j(u^*)p'_k(u^*)\dot a_k(\bar u_k)
\ge
\dot\psi_k(\bar u_k),
\]
or
\[
u_k^*\in(0,\bar u_k)
\quad\Longrightarrow\quad
B'_k\rho\,p_j(u^*)p'_k(u^*)\dot a_k(u_k^*)
=
\dot\psi_k(u_k^*).
\]

Finally, for any fixed effort profile \((u_j,u_k)\),
\[
\lim_{\rho\to\infty}p_j(u_j,u_k)
=
\begin{cases}
1, & \widetilde C_{ij}(u_j)<\widetilde C'_{ik}(u_k),\\
0, & \widetilde C_{ij}(u_j)>\widetilde C'_{ik}(u_k),\\
\frac12, & \widetilde C_{ij}(u_j)=\widetilde C'_{ik}(u_k).
\end{cases}
\]
Thus, as \(\rho\to\infty\), the smooth effort-stage contest converges to the deterministic risk-adjusted cost comparison underlying the Human--AI Substitution Principle.
\end{styledtheorem}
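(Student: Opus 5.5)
The proof splits into four parts matching the four claims of Theorem~\ref{thm:Strategic Substitution Equilibrium}: existence of a Nash equilibrium in the effort stage, the allocation-stage rule, the first-order (KKT) characterization, and the $\rho\to\infty$ limit.

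\textbf{Existence.} I will invoke the Debreu--Glicksberg--Fan theorem for games with compact convex strategy sets.
\begin{itemize}
\item The strategy sets $U_j=[0,\bar u_j]$ and $U_k=[0,\bar u_k]$ are nonempty, compact and convex.
\item Continuity of the payoffs follows by composition: $\widetilde C_{ij}(u_j)$ and $\widetilde C'_{ik}(u_k)$ are continuous because $h_j$ and $a_k$ are. The logistic map $p_j$ is a continuous function of these two costs, since its denominator is strictly positive. Finally, $\phi_j$ and $\psi_k$ are continuous. Hence $\Pi_j$ and $\Pi'_k$ are jointly continuous on $U_j\times U_k$.
\item Quasiconcavity in own effort is assumed in the statement.
\end{itemize}
Under these hypotheses, each best-response correspondence is nonempty (Weierstrass), convex-valued (quasiconcavity makes upper level sets convex), and upper hemicontinuous (Berge's maximum theorem). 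Kakutani's fixed-point theorem on $U_j\times U_k$ then yields $u^*$.

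\textbf{Allocation stage.} Once $u^*$ is fixed, the realized costs are just two numbers. I will apply Theorem~\ref{thm:Global Optimal Allocation}, or equivalently Theorem~\ref{thm:Threshold Substitution Rule}, to the two-agent simplex.
\begin{itemize}
\item If one realized cost is strictly smaller, the unique minimizer puts full weight on that agent.
\item If the two costs tie, every point of the edge between the two vertices is optimal.
\end{itemize}

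\textbf{Marginal conditions.} I will differentiate $p_j$ with respect to $u_j$. Write $x=\widetilde C_{ij}(u_j)$ and $y=\widetilde C'_{ik}(u_k)$, so that $p_j=\sigma(\rho(y-x))$ with $\sigma$ the logistic function. Then
\[
\partial p_j/\partial x = -\rho\, p_j\, p'_k .
\]
Since $dx/du_j=-\dot h_j(u_j)$, the chain rule gives
\[
\partial \Pi_j/\partial u_j = B_j\rho\, p_j p'_k\, \dot h_j(u_j) - \dot\phi_j(u_j).
\]
The symmetric computation for the AI side gives
\[
\partial \Pi'_k/\partial u_k = B'_k\rho\, p_j p'_k\, \dot a_k(u_k) - \dot\psi_k(u_k).
\]
The three cases in the statement are then the standard necessary conditions for maximizing a differentiable function over an interval, using one-sided derivatives at the endpoints:
\begin{itemize}
\item at $0$ the derivative must be $\le 0$;
\item at the upper bound it must be $\ge 0$;
\item at an interior point it must equal $0$.
\end{itemize}
Each condition holds because $u_j^*$ (respectively $u_k^*$) maximizes its own payoff against the rival's equilibrium effort.

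\textbf{Limit.} I will rewrite
\[
p_j = \frac{1}{1+\exp[-\rho(y-x)]}.
\]
As $\rho\to\infty$, the exponential tends to $0$ if $y>x$, to $+\infty$ if $y<x$, and equals $1$ if $y=x$. This gives the limits $1$, $0$ and $\tfrac12$ respectively, matching the deterministic comparison.

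\textbf{Main obstacle.} The only delicate step is existence. Convex-valuedness of the best responses relies entirely on the assumed quasiconcavity, because $p_j$ is not concave in $u_j$ in general. I will therefore state explicitly that this hypothesis substitutes for any concavity derivation, and that without it only mixed equilibria are guaranteed (by Glicksberg's theorem).
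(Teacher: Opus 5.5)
Your proposal is correct and follows the paper's proof essentially step for step. The paper proves existence by citing the Debreu--Glicksberg--Fan theorem, which you unpack into Weierstrass, Berge and Kakutani. It then handles the allocation stage with the Global Optimal Allocation theorem, obtains the marginal conditions from the logistic chain-rule derivative $\rho\,p_j p'_k\,\dot h_j$ with one-sided endpoint conditions, and gets the limit pointwise from $p_j = 1/(1+\exp[-\rho\Delta])$; your closing caveat about quasiconcavity and Glicksberg's mixed-strategy fallback also matches the paper's own remark.
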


\noindent
\textbf{Remark (on the maintained quasiconcavity assumption).}
The requirement that \(\Pi_j(\cdot,u_k)\) and \(\Pi'_k(u_j,\cdot)\) be quasiconcave in own strategy is a substantive joint restriction on $(\rho,B_j,B'_k,h_j,a_k,\phi_j,\psi_k)$. It is not implied by \(h_j,a_k\) being continuous and nondecreasing and \(\phi_j,\psi_k\) being continuous and convex. For example, let \(h_j(u_j)=u_j\), \(\phi_j(u_j)=\tfrac12 u_j^2\), \(\rho=5\), \(B_j=100\), with \(\widetilde C^0_{ij}=10\) and a fixed opponent cost \(\widetilde C'_{ik}(u_k)=5\). Then
\[
p_j(u_j,u_k)
=
\frac{1}{1+\exp[5(5-u_j)]},
\]
and
\[
\Pi_j(u_j,u_k)
=
100p_j(u_j,u_k)-\tfrac12u_j^2.
\]
This payoff is not quasiconcave on economically plausible effort intervals. For instance,
\(\Pi_j(0,u_k)\approx 0\), \(\Pi_j(4.5,u_k)\approx -2.54\), and \(\Pi_j(4.7,u_k)\approx 7.20\). Hence the upper contour set at threshold \(t=0\) contains \(u_j=0\), excludes points near \(u_j=4.5\), and contains points again near \(u_j=4.7\). The upper contour set is therefore disconnected on any interval \(U_j=[0,\bar u_j]\) with \(\bar u_j\ge 4.7\), so quasiconcavity fails. Thus the maintained quasiconcavity assumption is not innocuous.

The issue becomes more important when \(\rho\) is large. A larger \(\rho\) makes \(p_j\) a steeper sigmoid in the realized cost gap \(\widetilde C'_{ik}(u_k)-\widetilde C_{ij}(u_j)\), so the marginal benefit term \(\rho\,p_jp'_k\,\dot h_j(u_j)\) can become concentrated near the cost-crossing point. If this local spike exceeds marginal effort cost after effort was previously unattractive, the payoff can lose single-peakedness and quasiconcavity can fail.

This does not affect the pointwise limit \(\rho\to\infty\) in the theorem: the contest-success probability still converges to the deterministic risk-adjusted cost comparison underlying the Human--AI Substitution Principle for any fixed effort profile. What it affects is the stronger interpretation that equilibrium effort choices \(u^*(\rho)\) remain well behaved as \(\rho\) grows. That interpretation requires equilibrium existence along the path, which is precisely what the quasiconcavity assumption supplies in the pure-strategy version of the theorem. Alternatively, one can drop quasiconcavity and invoke Glicksberg's (\citeyear{glicksberg1952}) mixed-strategy existence theorem for compact strategy spaces and continuous payoffs, at the cost of replacing a deterministic effort equilibrium with a mixed-strategy equilibrium.

\begin{proof}
\textbf{Existence.}
The strategy sets
\[
U_j=[0,\bar u_j],
\qquad
U_k=[0,\bar u_k]
\]
are nonempty, compact, and convex subsets of \(\mathbb R\). Since \(h_j,a_k,\phi_j,\psi_k\) are continuous, the realized risk-adjusted costs
\[
\widetilde C_{ij}(u_j)
=
\widetilde C^0_{ij}-h_j(u_j),
\qquad
\widetilde C'_{ik}(u_k)
=
\widetilde C^{\prime 0}_{ik}-a_k(u_k)
\]
are continuous in the effort choices. The exponential function is continuous and strictly positive, so
\[
\exp[-\rho \widetilde C_{ij}(u_j)]
+
\exp[-\rho \widetilde C'_{ik}(u_k)]
>
0
\]
for every \((u_j,u_k)\in U_j\cdot U_k\). Hence \(p_j\) and \(p'_k\) are continuous on \(U_j\cdot U_k\). Therefore the payoff functions
\[
\Pi_j(u_j,u_k)
=
B_j p_j(u_j,u_k)-\phi_j(u_j),
\]
and
\[
\Pi'_k(u_j,u_k)
=
B'_k p'_k(u_j,u_k)-\psi_k(u_k)
\]
are continuous. By assumption, each payoff is quasiconcave in the player's own strategy. By the Debreu--Glicksberg--Fan pure-strategy equilibrium existence theorem
\citep{debreu1952,glicksberg1952,fan1952}, the effort game admits at least one Nash equilibrium
\[
u^*=(u_j^*,u_k^*).
\]

\textbf{Allocation.}
At equilibrium, the realized risk-adjusted costs, $\widetilde C_{ij}(u_j^*)$ and $\widetilde C'_{ik}(u_k^*)$, are fixed numbers. The organization's two-agent allocation subproblem is therefore the HAT allocation problem evaluated at these realized costs. By Theorem~\ref{thm:Global Optimal Allocation}, the optimal allocation assigns the task to the agent with lower realized risk-adjusted cost. If the two realized costs are equal, both agents attain the same minimum cost, and any allocation supported on the tied agents is optimal.

\textbf{Marginal conditions.}
Assume now that \(h_j,a_k,\phi_j,\psi_k\) are differentiable, with one-sided derivatives at the boundaries. Write
\[
X(u_j)=-\rho\widetilde C_{ij}(u_j),
\qquad
Y(u_k)=-\rho\widetilde C'_{ik}(u_k).
\]
Then
\[
p_j=\frac{e^X}{e^X+e^Y}.
\]
A direct computation gives
\[
\frac{\partial p_j}{\partial X}
=
p_j(1-p_j)
=
p_jp'_k.
\]
Since
\[
\frac{d}{du_j}\widetilde C_{ij}(u_j)
=
-\dot h_j(u_j),
\]
we have
\[
\frac{dX}{du_j}
=
\rho\dot h_j(u_j).
\]
By the chain rule,
\[
\frac{\partial p_j}{\partial u_j}
=
\rho\,p_jp'_k\,\dot h_j(u_j).
\]
Therefore
\[
\frac{\partial \Pi_j}{\partial u_j}
=
B_j\rho\,p_jp'_k\,\dot h_j(u_j)-\dot\phi_j(u_j).
\]

If \(u_j^*\in(0,\bar u_j)\), optimality of \(u_j^*\) against \(u_k^*\) implies
\[
\frac{\partial \Pi_j}{\partial u_j}(u_j^*,u_k^*)=0,
\]
so
\[
B_j\rho\,p_j(u^*)p'_k(u^*)\dot h_j(u_j^*)
=
\dot\phi_j(u_j^*).
\]
If \(u_j^*=0\), a feasible increase in \(u_j\) cannot raise the payoff, so the right derivative must be nonpositive:
\[
B_j\rho\,p_j(u^*)p'_k(u^*)\dot h_j(0)
-
\dot\phi_j(0)
\le 0.
\]
Thus
\[
B_j\rho\,p_j(u^*)p'_k(u^*)\dot h_j(0)
\le
\dot\phi_j(0).
\]
If \(u_j^*=\bar u_j\), a feasible decrease in \(u_j\) cannot raise the payoff. Equivalently, the left derivative at \(\bar u_j\) must be nonnegative:
\[
B_j\rho\,p_j(u^*)p'_k(u^*)\dot h_j(\bar u_j)
-
\dot\phi_j(\bar u_j)
\ge 0.
\]
Thus
\[
B_j\rho\,p_j(u^*)p'_k(u^*)\dot h_j(\bar u_j)
\ge
\dot\phi_j(\bar u_j).
\]

The AI-side conditions are derived identically. Since \(p'_k=1-p_j\),
\[
\frac{\partial p'_k}{\partial u_k}
=
\rho\,p_jp'_k\,\dot a_k(u_k),
\]
and therefore
\[
\frac{\partial \Pi'_k}{\partial u_k}
=
B'_k\rho\,p_jp'_k\,\dot a_k(u_k)-\dot\psi_k(u_k).
\]
Applying the same interior and boundary optimality conditions on \(U_k=[0,\bar u_k]\) gives
\[
u_k^*=0
\quad\Longrightarrow\quad
B'_k\rho\,p_j(u^*)p'_k(u^*)\dot a_k(0)
\le
\dot\psi_k(0),
\]
\[
u_k^*=\bar u_k
\quad\Longrightarrow\quad
B'_k\rho\,p_j(u^*)p'_k(u^*)\dot a_k(\bar u_k)
\ge
\dot\psi_k(\bar u_k),
\]
and
\[
u_k^*\in(0,\bar u_k)
\quad\Longrightarrow\quad
B'_k\rho\,p_j(u^*)p'_k(u^*)\dot a_k(u_k^*)
=
\dot\psi_k(u_k^*).
\]

\textbf{Limit as \(\rho\to\infty\).}
Fix any effort profile \((u_j,u_k)\). Let
\[
\Delta(u_j,u_k)
=
\widetilde C'_{ik}(u_k)-\widetilde C_{ij}(u_j).
\]
Then
\[
p_j(u_j,u_k)
=
\frac{1}
{1+\exp[-\rho\Delta(u_j,u_k)]}.
\]
If \(\Delta(u_j,u_k)>0\), then \(\exp[-\rho\Delta(u_j,u_k)]\to 0\), so \(p_j\to 1\). If \(\Delta(u_j,u_k)<0\), then \(\exp[-\rho\Delta(u_j,u_k)]\to \infty\), so \(p_j\to 0\). If \(\Delta(u_j,u_k)=0\), then \(p_j=1/2\) for all \(\rho\). Hence
\[
\lim_{\rho\to\infty}p_j(u_j,u_k)
=
\begin{cases}
1, & \widetilde C_{ij}(u_j)<\widetilde C'_{ik}(u_k),\\
0, & \widetilde C_{ij}(u_j)>\widetilde C'_{ik}(u_k),\\
\frac12, & \widetilde C_{ij}(u_j)=\widetilde C'_{ik}(u_k).
\end{cases}
\]
Therefore the smooth effort-stage contest converges to the deterministic risk-adjusted cost comparison used in the HAT allocation rule and in the Human--AI Substitution Principle.
\end{proof}

\newpage
\section*{About the Authors}

\begin{wrapfigure}{l}{0.25\textwidth}
\vspace{-0.4cm}
\includegraphics[width=1\linewidth]{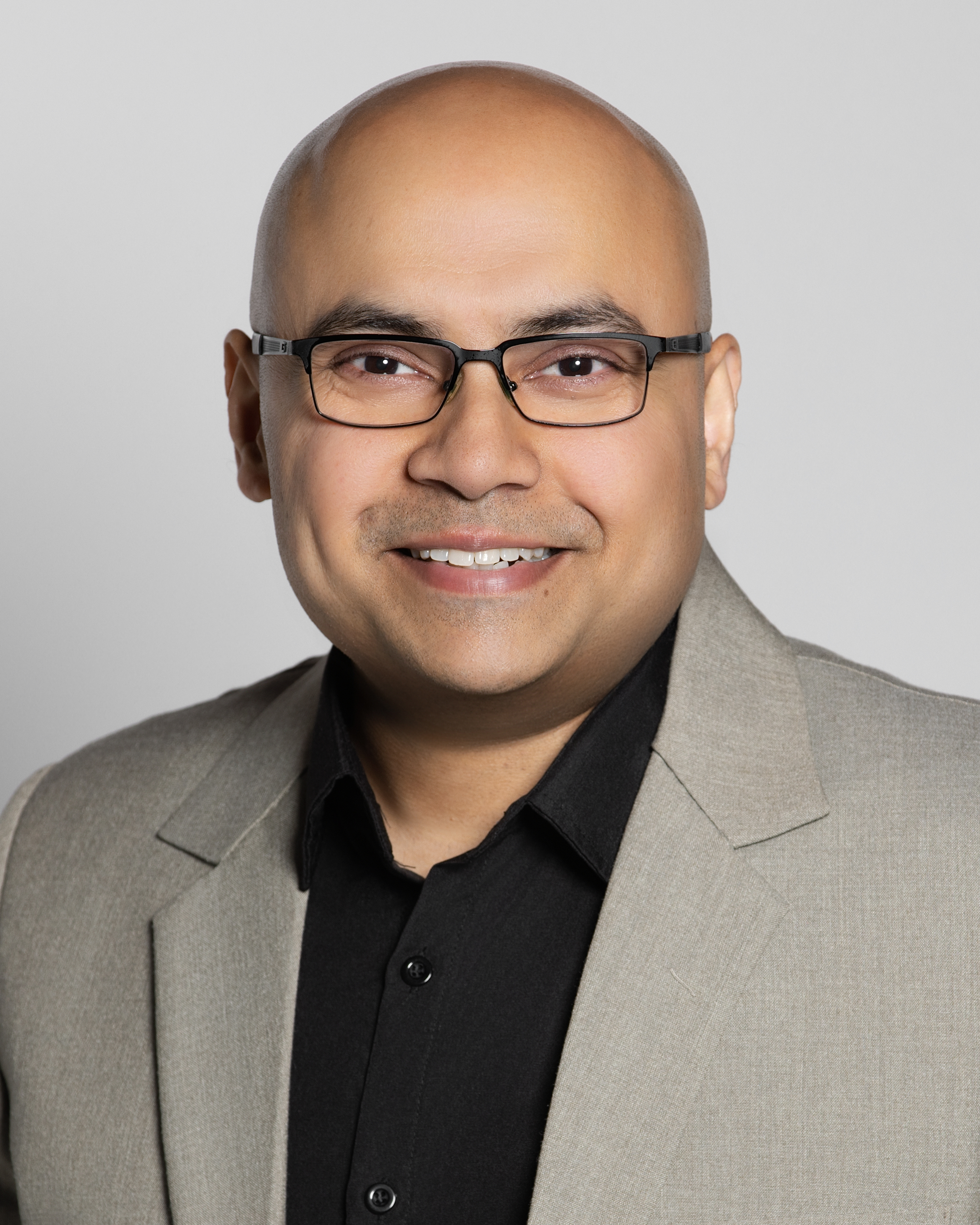} 
\end{wrapfigure}
\textbf{Bonny Banerjee} received M.S. in Electrical Engineering and Ph.D. in Computer Science (major AI, minor Cognitive Science) from the Ohio State University, USA. Just after graduating with Ph.D., he spent 3.5 years (2007-2011, through the Great Recession) leading the research at a startup, which resulted in a number of patents, substantial investor funding, and launch of a commercial product for the end-user which received wide media coverage. The intellectual property was acquired by the leading company in the field. For the last 15 years, he has served as a faculty at institutions of higher education, and as a consultant for small and large businesses in the U.S. and abroad. Dr. Banerjee holds 8 patents and has published over 75 peer-reviewed articles in reputed journals and conference proceedings on AI agents and related areas. He received the Best Paper Award at a NeurIPS 2023 Workshop. He has served as the Principal Investigator of research projects funded by the U.S. National Science Foundation, Department of Homeland Security, Army, St. Jude Children’s Research Hospital, and startup investors. He serves on the editorial boards of \textit{IEEE Transactions on Cybernetics} and \textit{Neural Networks} (Elsevier) journals. 
\\

\begin{wrapfigure}{l}{0.25\textwidth}
\vspace{-0.4cm}
\includegraphics[width=1\linewidth]{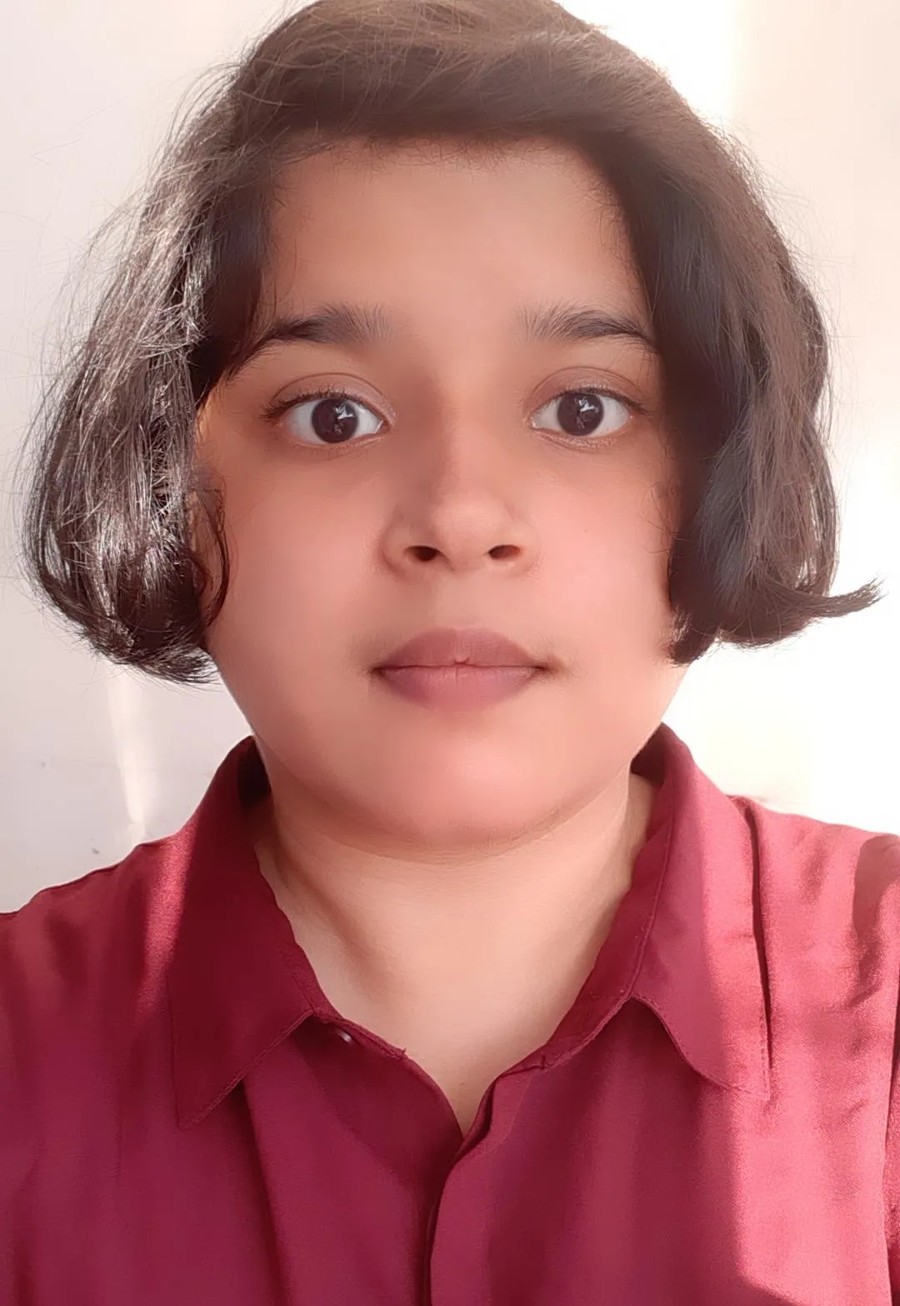} 
\end{wrapfigure}
\noindent\textbf{Shreya Singh} earned a Master's in Data Science from Chennai Mathematical Institute, India, and a Master's in Mathematics from Central University of Rajasthan, India. Her research interests include number theory, automatic sequences, stochastic processes, quantitative research, game theory, mathematical modeling, optimization, topological data analysis, and machine learning. She has qualified CSIR NET JRF and GATE Data Science.

\end{document}